\crefname{section}{Sec.}{Secs.}
\Crefname{section}{Section}{Sections}
\Crefname{table}{Table}{Tables}
\crefname{table}{Tab.}{Tabs.}
\newtheorem{theorem}{Theorem}
\newtheorem{lemma}[theorem]{Lemma}
\newtheorem{definition}[theorem]{Definition}
\newtheorem{proposition}[theorem]{Proposition}
\newtheorem{corollary}[theorem]{Corollary}
\DeclareMathOperator*{\E}{\mathbb{E}}
\begin{document}

\graphicspath{{./figures/schemes/}{./figures/results/}{./figures/images/}{./figures/results/}{./figures/images/davis/}{./figures/images/crvd/}}

\title{Patch-Craft Self-Supervised Training for Correlated Image Denoising}

\author{Gregory Vaksman and Michael Elad\\
CS Department - The Technion\\
Haifa, Israel\\
{\tt\small grishav@campus.technion.ac.il, elad@cs.technion.ac.il}
}
\maketitle

\begin{abstract}
   Supervised neural networks are known to achieve excellent results in various image restoration tasks. However, such training requires datasets composed of pairs of corrupted images and their corresponding ground truth targets. Unfortunately, such data is not available in many applications. For the task of image denoising in which the noise statistics is unknown, several self-supervised training methods have been proposed for overcoming this difficulty. Some of these require knowledge of the noise model, while others assume that the contaminating noise is uncorrelated, both assumptions are too limiting for many practical needs. This work proposes a novel self-supervised training technique suitable for the removal of unknown correlated noise. The proposed approach neither requires knowledge of the noise model nor access to ground truth targets. The input to our algorithm consists of easily captured bursts of noisy shots. Our algorithm constructs artificial patch-craft images from these bursts by patch matching and stitching, and the obtained crafted images are used as targets for the training. Our method does not require registration of the images within the burst. We evaluate the proposed framework through extensive experiments with synthetic and real image noise.
\end{abstract}

\section{Introduction}
\label{sec:intro}

Supervised neural networks have proven themselves powerful, achieving impressive results in solving image restoration problems~(e.g.,~\cite{zhang2017beyond,zhang2018ffdnet,Mao2016ImageRU,tai2017memnet,liu2018multi,liu2018non,zhang2020residual,Liang2021SwinIRIR}). In the commonly deployed supervised training for such  tasks, one needs a dataset consisting of pairs of corrupted and ground truth images. The degraded images are fed to the network input, while the ground truth counterparts are used as guiding targets. When the degradation model is known and easy to implement, one can construct such a dataset by applying the degradation to clean images. However, a problem arises when the degradation model is unknown. In such cases, while it is relatively easy to acquire distorted images, obtaining their ground truth counterparts can be challenging. For this reason, there is a need for self-supervised methods that use corrupted images only in the training phase. More on these methods is detailed in Section~\ref{sec:related_work}.

In this work we focus on the problem of image denoising with an unknown noise model. More specifically, we assume that the noise is additive, zero mean, but not necessarily Gaussian, and one that could be cross-channel and short-range spatially correlated\footnote{By short-term we refer to the case in which the auto-correlation function decays fast, implying that only nearby noise pixels may be highly correlated. The correlation range we consider is governed by the patch size in our algorithm - see Section~\ref{sec:framework} for more details.}. We additionally assume that the noise is (mostly) independent of the image and nearly homogeneous, i.e., having low to moderate spatially variant statistics. Examples of such noise could be Gaussian correlated noise or real image noise in digital cameras. Several recent papers propose  methods for self-supervised training under similar such  challenging conditions. However, they all assume an uncorrelated noise or a noise with a known model, thus limiting their coverage of the need posed.

This work proposes a novel self-supervised training framework for addressing the problem of image denoising of an unknown correlated noise. The proposed algorithm gets as input bursts of shots, where each frame in the burst captures nearly the same scene, up to moderate movements of the camera and objects. Such sequences of images are easily captured in many digital cameras. Our algorithm uses one image from the burst as the input, utilizing the rest of the frames of the same burst for constructing (noisy) targets for training. For creating these target images, we harness the concept of patch-craft frames introduced in PaCNet~\cite{Vaksman2021PatchCV}. Similar to PaCNet, we split the input shot into fully overlapping patches. For each patch, we find its nearest neighbor within the rest of the burst images. Note that, unlike PaCNet, we strictly omit the input shot from the neighbor search. We proceed by building $m$ patch-craft images by stitching the found neighbor patches, where $m$ is the patch size, and use these frames as denoising targets. The above can be easily extended by using more than one nearest neighbor per patch, this way enriching dramatically the number of patch-craft frames and their diversity.

The proposed technique for creating artificial target images is sensitive to the possibility of getting statistical dependency between the input and the target noise. 
To combat this flaw, we propose a method for statistical analysis of the target noise. This analysis suggests simple actions that reduce dependency between the target noise and the denoiser's input, leading to a significant boost in performance. We evaluate the proposed framework through extensive experiments with synthetic and real image noise, showing that the proposed framework  outperforms leading self-supervised methods. To summarize, the contributions of this work are the following:
\begin{itemize}
    \item We propose a novel self-supervised framework for training an image denoiser, where the noise may be  cross-channel and short-range spatially correlated. Our approach relies simply on the availability of bursts of noisy images; the ground truth is unavailable, and the noise model is unknown. 
    \item We suggest a method for statistical analysis of the target noise that leads to a boost in performance.
    \item We demonstrate superior denoising performance compared to leading alternative self-supervised denoising methods.
\end{itemize}

\section{Related Work}
\label{sec:related_work}
This paper focuses on denoising of images when the noise model is unknown. However, there are various levels in this lack of knowledge, and accordingly, different levels of corresponding solutions. The most simple case is when the noise is known to be zero-mean Gaussian i.i.d (independent and identically distributed), and the only unknown parameter is the standard deviation $\sigma$. In this case, training a single model for handling a range of $\sigma$ values can be an efficient and elegant solution~\cite{zhang2017beyond,Lefkimmiatis2018UniversalDN,Vaksman2020LIDIALL,Mohan2020RobustAI}. This approach is known as \emph{blind denoising}~\cite{Mohan2020RobustAI}. Unfortunately, such a network is likely to perform very poorly when applied to images contaminated by a correlative or a non-Gaussian noise.

An approach known as  \emph{Noise2noise}~\cite{Lehtinen2018Noise2NoiseLI} assumes that ground truth images are not available, but the training dataset consists of pairs of noisy images created by adding independent noise realizations to the same clean image. Noise2noise suggests to train on such image pairs, both being noisy but with different and independent noise realizations. This method has been shown to be quite effective, however, the missing ingredient is the lack of an accessible way for acquiring such perfectly aligned noisy pairs, rendering this method as challenging in real circumstances. 

A more common assumption is that the noise model is known, but ground truth images are not available. Several works~(e.g.,~\cite{Xu2020NoisyasCleanLS,Moran2020Noisier2NoiseLT,Pang2021RecorruptedtoRecorruptedUD}) utilize the idea of Noise2noise~\cite{Lehtinen2018Noise2NoiseLI} for handling these cases as well. They propose, each in its own way, to create noisy image pairs. For instance, the work in \cite{Xu2020NoisyasCleanLS,Moran2020Noisier2NoiseLT} suggests adding independent realizations of synthetic noise that follows the known model to the input images, then training a network using the noisier images as inputs and the original noisy ones as targets. Alternatively,~\cite{Pang2021RecorruptedtoRecorruptedUD} adds two different realizations of synthetic noise to the input images for training the denoising network.

A different idea proposed in~\cite{Prakash2021FullyUD,prakash2021interpretable} is harnessing a variational auto-encoder (VAE)~\cite{Kingma2014AutoEncodingVB} to solve the denoising task. These techniques assume that the noise distribution is known, either as a formula or as a histogram. These algorithms construct a VAE that gets noisy images at the input and produces reconstructed ones at the output. In the training stage, they maximize the log-likelihood probability of the noisy image given the reconstructed one when the probability is calculated using the provided noise distribution formula or the histogram. In the inference stage, they use the VAE to generate many candidate outputs and then obtain the reconstructed image by approximating the MMSE or MAP estimate.  

Another self-supervised technique suitable for these assumptions was introduced in~\cite{Vaksman2020LIDIALL} for lightweight architectures, and extended in~\cite{mohan2021adaptive} for more general networks. This technique, referred to as \emph{noise resampling}, suggests the following: First, train an initial denoiser somehow and apply it to a set of corrupted images to obtain initial reconstructions. Then, for each reconstructed image, create its noisy counterpart by adding a new synthetic noise. Finally, retrain the network using pairs of re-corrupted and reconstructed images. When the noise model is unknown, noise-to-noise and noise resampling methods can be applied by assuming Gaussianity and estimating the parameter~$\sigma$. However, such a strategy may lead to inferior performance when the noise is correlated or strongly deviates from Gaussianity.

A less strict assumption is that the noise model is unknown, but the noise is spatially independent. For such a case, several papers in recent literature have proposed to train networks that utilize the same noisy image both as input and output while applying various regularizations. For brevity of our discussion, we shall refer to these as \emph{image2itself} techniques. For instance, Noise2void~\cite{krull2019noise2void,krull2020probabilistic} suggests using a blind-spot architecture in which the receptive field of each processed pixel excludes the pixel itself. Such a strategy constrains the network by avoiding to learn the trivial identity operation. The work reported in~\cite{batson2019noise2self,xie2020noise2same,Quan2020Self2SelfWD,Laine2019HighQualitySD} and other papers take this idea forward by suggesting more sophisticated blind-spot methods, sometimes combining them with additional regularization terms. 

Blind-spotting is not the only regularization idea for these circumstances. For example, Neighbor2neighbor~\cite{Huang2021Neighbor2NeighborSD} proposes generating training pairs by random sub-sampling the same noisy image. The sub-sampling is conducted such that corresponding pixels of the same image pair are neighbors in the sampled image, thus having a very similar appearance.
Alternatively, the work reported in~\cite{Soltanayev2018TrainingDL} uses a regularizer based on the SURE~\cite{stein1981estimation} estimator as a replacement for the supervised targets. 

Unfortunately, all these image-to-itself methods strongly rely on a spatial independence property of the noise, and therefore are doomed to overfit when the contaminating noise is correlated. In such a case, the network may confuse the noise for content, impairing the denoising performance. Worth noting is the exception in which HDN~\cite{prakash2021interpretable} shows an ability to recover microscopy images from structured noise. However, as natural images are considerably more diverse than microscopy ones, their approach may find a challenge when applied to general denoising tasks.

\section{Proposed Framework -- Preliminaries}
\label{sec:framework}
This paper proposes a self-supervised framework for training denoisers using bursts of noisy images. Captured objects do not have to be static, yet we recommend avoiding bursts that contain sharp movements or severe lighting changes. One can obtain such sequences using burst mode in digital cameras or recording short videos. 

We start by introducing some notations. Denote a noisy burst as $\left\{\mathbf{y}_1, \ldots, \mathbf{y}_M\right\}$, where $M$ is the burst length and $\mathbf{y}_i$ is the $i$'th image. The clean image and the input noise corresponding to $\mathbf{y}_i$ are denoted by $\mathbf{x}_i$ and $\mathbf{z}_i$, respectively, and thus ${\mathbf{z}_i = \mathbf{y}_i - \mathbf{x}_i}$. The symbol $f(\cdot)$ stands for a denoiser and $\mathbf{\hat{x}}_i$ for the reconstructed counterpart of $\mathbf{y}_i$, i.e., ${\mathbf{\hat{x}}_i = f\left(\mathbf{y}_i\right)}$. Finally, we denote the artificial target image corresponding to $\mathbf{y}_i$  by $\mathbf{\tilde{x}}_i$, and the target noise by $\mathbf{w}_i$, ${\mathbf{w}_i = \mathbf{\tilde{x}}_i - \mathbf{x}_i}$. 
The proposed framework is shown schematically in Figure~\ref{fig:framework}. 
\begin{figure*}
    \centering
	\begin{subfigure}{\textwidth}
	    \captionsetup{justification=centering}
		\includegraphics[width=\textwidth]{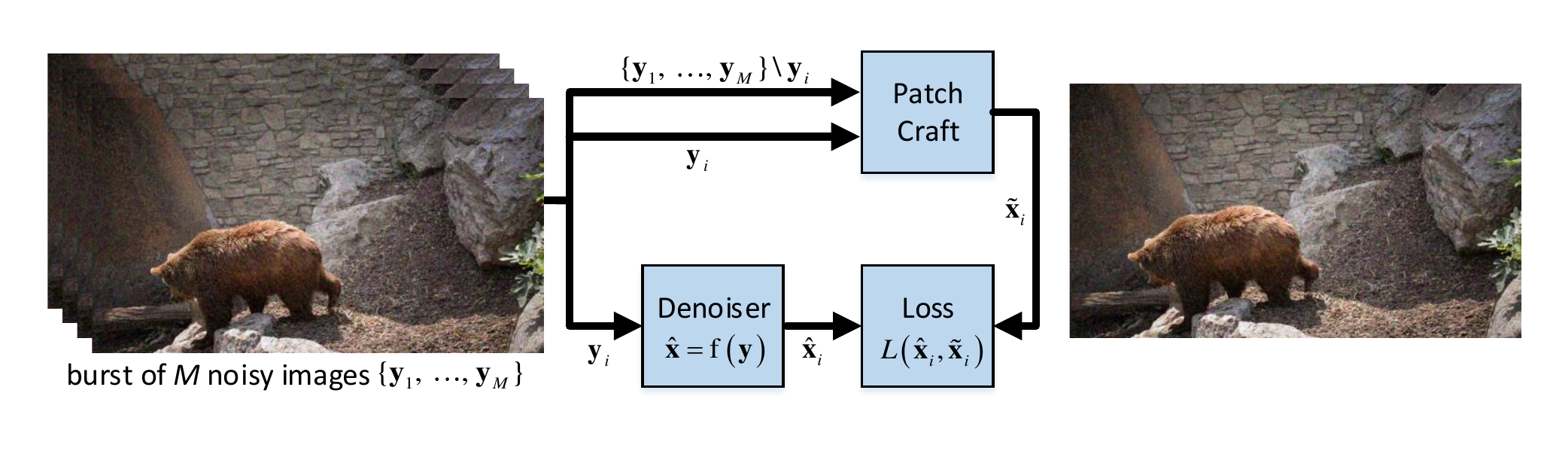}
	\end{subfigure}
	\caption{The proposed self-supervised training framework based on bursts of images and patch-craft created target images.}
	\label{fig:framework}
\end{figure*}
Our algorithm harnesses the concept of \emph{patch-craft} frames introduced in ~\cite{Vaksman2021PatchCV}. Each input burst $\left\{\mathbf{y}_1, \ldots, \mathbf{y}_M\right\}$ is split into two subsets: The first is $\mathbf{y}_i$, consisting of a single shot, and the the second, $\Gamma$, containing the rest of the images, $\Gamma = \left\{\mathbf{y}_1, \ldots, \mathbf{y}_M\right\} \backslash \mathbf{y}_i$. The image $\mathbf{y}_i$ is used as a denoiser's input, while the set $\Gamma$ is fed to the patch-craft block for creating an artificial target, $\mathbf{\tilde{x}}_i$. 

The patch-craft block operates as follows. We start from splitting the input shot $\mathbf{y}_i$ to fully overlapping patches of size $n \times n$, boundary pixels handled by mirror padding. As a result, we get $n^2$ sets of non-overlapping patches that cover the full support of the image, $\{\Upsilon_{k,l}\}_{k,l=0}^{n-1}$, to which we refer by their offsets from the left upper corner. Two examples of such sets are shown in Figure~\ref{fig:patch_set}. The offsets vary from $\left(0, 0\right)$, i.e., no offset, to $\left(n - 1, n - 1\right)$.
\begin{figure}
    \centering
	\begin{subfigure}{0.23\textwidth}
	    \captionsetup{justification=centering}
		\includegraphics[width=\textwidth]{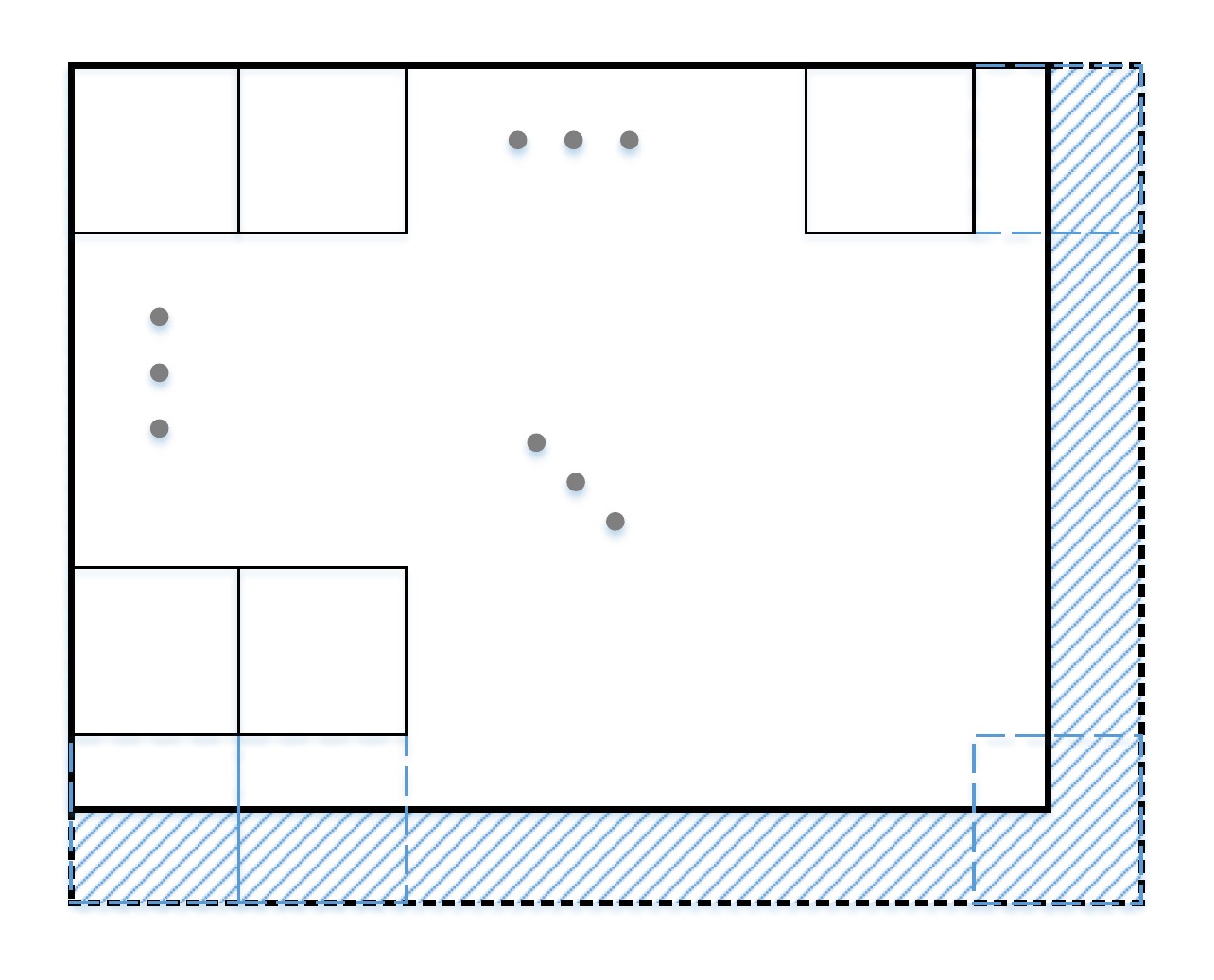}
		\caption{$\Upsilon_{0, 0}$}
		\label{fig:patch_set:00}
	\end{subfigure}
	\begin{subfigure}{0.23\textwidth}
	    \captionsetup{justification=centering}
		\includegraphics[width=\textwidth]{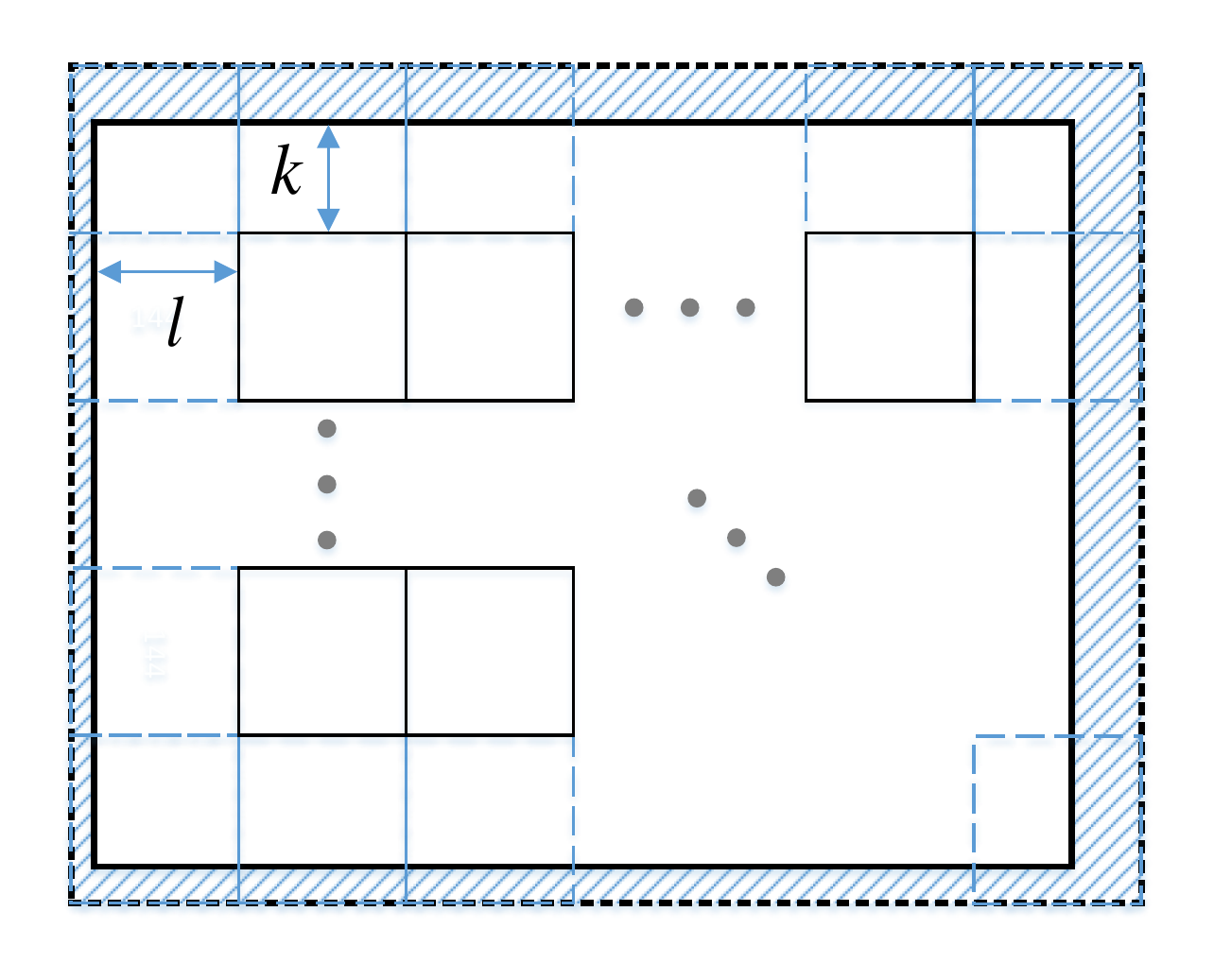}
		\caption{$\Upsilon_{k,l}$}
		\label{fig:patch_set:vh}
	\end{subfigure}
	\caption{Examples of sets of non-overlapping patches. The solid part of the rectangle represents the input image support, while the dashed part stands for the boundary effects. Figure~\ref{fig:patch_set:00} shows the case of an offset $(0, 0)$, while Figure~\ref{fig:patch_set:vh} refers to an offset $\left(k, l\right)$.}
	\label{fig:patch_set}
\end{figure}
Each of the images $\{\Upsilon_{k,l}\}$ can be converted to a patch-craft image $\{\mathbf{\tilde{y}}_{k,l}\}$ by replacing each patch in  $\Upsilon_{k,l}$ with its nearest neighbor from the set $\Gamma$ and cutting out pixels corresponding to the padding. For finding the neighbors, we use an $L_2$ distance while the search in each image of $\Gamma$ is restricted by a bounding box of size $B \times B$ centered at the patch location. Finally, at any iteration of the training, we randomly choose one of the $n^2$ available patch-craft images $\{\mathbf{\tilde{y}}_{k,l}\}$ to be a target $\mathbf{\tilde{x}}_i$. Figure~\ref{fig:pc_im} shows an example of a noisy image and one of the corresponding patch-craft images.
\begin{figure}
	\begin{subfigure}{0.23\textwidth}
	    \captionsetup{justification=centering}
		\includegraphics[width=\textwidth]{}
		\caption{Noisy image}
		\label{fig:pc_im:n}
	\end{subfigure}
	\begin{subfigure}{0.23\textwidth}
	    \captionsetup{justification=centering}
		\includegraphics[width=\textwidth]{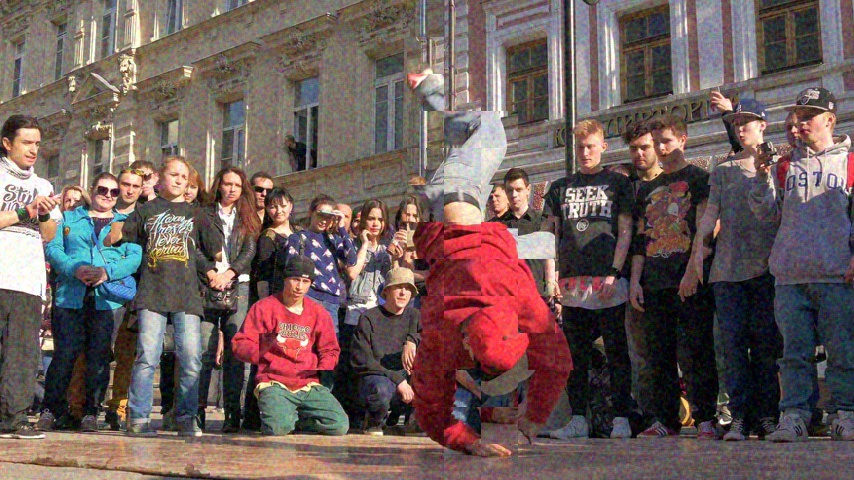}
		\caption{Patch-craft image}
		\label{fig:pc_im:pc}
	\end{subfigure}
	\caption{An example of a noisy shot and one of the corresponding patch-craft images.}
	\label{fig:pc_im}
\end{figure}

Few notes are in order: Since the neighbor patches come from different locations, all $n^2$ patch-craft images constructed from the same burst are similar but not identical. Each holds additional information enriching the training process. Furthermore, the proposed technique can be extended by finding $k$ nearest neighbors per patch and choosing one of them (e.g. randomly) in the patch-craft construction. This extension may increase the number of possible patch-craft images, substantially enriching their diversity.

\section{Proposed Framework - Analysis}
\label{sec:stat_analysis}
Consider a denoiser $f_{\mathbf{\theta}}(\cdot)$ parameterized by $\mathbf{\theta}$ that gets a noisy image $\mathbf{y}$ and produces it's reconstructed image $\mathbf{\hat{x}}$, i.e., $\mathbf{\hat{x}} = f_{\mathbf{\theta}}(\mathbf{y})$. The desired (ground truth) image is denoted by $\mathbf{x}$ and the input noise is $\mathbf{z}$, thus ${\mathbf{y}=\mathbf{x} + \mathbf{z}}$. In this section we discuss a training procedure in which instead of clean targets $\mathbf{x}$, one uses noisy ones $\mathbf{\tilde{x}}$ contaminated by a target noise $\mathbf{w}$, $\mathbf{w} = \mathbf{\tilde{x}} - \mathbf{x}$. 
Generally, training a denoiser is sensitive to the dependency between the input image $\mathbf{y}$ and the target noise $\mathbf{w}$. To illustrate this vulnerability, imagine the extreme case where $\mathbf{w}$ is equal to $\mathbf{z}$ for all pairs in the dataset. In such a case, the input and target images would be identical, and the denoiser would learn the useless identity operation. This example aligns with the intuition that 
lowering such dependency may result in better training and eventual denoising performance. In this section we propose a statistical analysis of the target noise and suggest a simple way to reduce it's dependency with the input noise. 

\subsection{Ideal Case}
\label{sec:ideal_case}
Results reported in Noise2Noise~\cite{Lehtinen2018Noise2NoiseLI} suggest that if the target noise is independent of the network's input, using these targets for training a denoiser may be almost as effective as using the ground truth images. We formalize this  hereafter, and start with few supporting notations. 

Let ${l = \frac{1}{2}\left\|f_{\mathbf{\theta}}(\mathbf{y}) - \mathbf{x}\right\|_2^2}$ be the supervised $L_2$ loss of the single image pair $\left\{f_{\mathbf{\theta}}\left(\mathbf{y}\right), \mathbf{x}\right\}$. Denote by $\nabla_{\mathbf{\theta}} l$ the gradient of $l$ w.r.t. $\mathbf{\theta}$, ${\nabla_{\mathbf{\theta}} l = \nabla _\theta^T f_{\mathbf{\theta}}(\mathbf{y})\left(f_{\mathbf{\theta}}(\mathbf{y}) - \mathbf{x}\right)}$ . Then the supervised MSE loss and its full gradient are given by ${L = \E\left[l\right]}$  and ${\nabla_{\mathbf{\theta}} L = \nabla_{\mathbf{\theta}}\left( \E\left[l\right]\right) = \E\left[\nabla_{\mathbf{\theta}} l\right]}$. Similarly, we denote the self-supervised $L_2$ loss of $\left\{f_{\mathbf{\theta}}\left(\mathbf{y}\right), \mathbf{\tilde{x}}\right\}$ by ${\tilde{l} = \frac{1}{2}\left\|f_{\mathbf{\theta}}(\mathbf{y}) - \mathbf{\tilde{x}}\right\|_2^2}$. Correspondingly, the gradient of $\tilde{l}$ is denoted by $\nabla_{\mathbf{\theta}} \tilde{l}$, being ${\nabla_{\mathbf{\theta}} \tilde{l} = \nabla_\theta^T f_{\mathbf{\theta}}(\mathbf{y})\left(f_{\mathbf{\theta}}(\mathbf{y}) - \mathbf{\tilde{x}}\right)}$.

\begin{lemma}
\label{lem:sgd}
If the target noise $\mathbf{w}$ is independent of the image $\mathbf{x}$ and noise $\mathbf{z}$, and admits a zero-mean ${\E\left[\mathbf{w}\right] = \mathbf{0}}$, then $\nabla_{\mathbf{\theta}} \tilde{l}$ is an unbiased estimator of $\nabla_{\mathbf{\theta}} L$, i.e., ${\E\left[\nabla_{\mathbf{\theta}} \tilde{l}\right] = \nabla_{\mathbf{\theta}} L}$.
\end{lemma}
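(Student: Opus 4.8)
The plan is to expand the self-supervised gradient $\nabla_{\mathbf{\theta}}\tilde{l}$ as the supervised gradient $\nabla_{\mathbf{\theta}} l$ plus a correction term driven by the target noise $\mathbf{w}$, and then show that this correction vanishes in expectation. Substituting $\mathbf{\tilde{x}} = \mathbf{x} + \mathbf{w}$ into the given formula for $\nabla_{\mathbf{\theta}}\tilde{l}$ yields
\begin{equation}
\nabla_{\mathbf{\theta}}\tilde{l} = \nabla_\theta^T f_{\mathbf{\theta}}(\mathbf{y})\left(f_{\mathbf{\theta}}(\mathbf{y}) - \mathbf{x} - \mathbf{w}\right) = \nabla_{\mathbf{\theta}} l - \nabla_\theta^T f_{\mathbf{\theta}}(\mathbf{y})\,\mathbf{w}.
\end{equation}
Taking expectations and using $\nabla_{\mathbf{\theta}} L = \E\left[\nabla_{\mathbf{\theta}} l\right]$ gives ${\E\left[\nabla_{\mathbf{\theta}}\tilde{l}\right] = \nabla_{\mathbf{\theta}} L - \E\left[\nabla_\theta^T f_{\mathbf{\theta}}(\mathbf{y})\,\mathbf{w}\right]}$, so it suffices to establish that $\E\left[\nabla_\theta^T f_{\mathbf{\theta}}(\mathbf{y})\,\mathbf{w}\right] = \mathbf{0}$.

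For that step I would invoke the independence hypothesis. Since $\mathbf{y} = \mathbf{x} + \mathbf{z}$ is a deterministic function of the pair $(\mathbf{x},\mathbf{z})$, and $\mathbf{w}$ is independent of $(\mathbf{x},\mathbf{z})$, it follows that $\mathbf{w}$ is independent of $\mathbf{y}$, and hence independent of the matrix-valued Jacobian $\nabla_\theta^T f_{\mathbf{\theta}}(\mathbf{y})$, which is a measurable function of $\mathbf{y}$ alone. Independence lets the expectation of the product factor entrywise, $\E\left[\nabla_\theta^T f_{\mathbf{\theta}}(\mathbf{y})\,\mathbf{w}\right] = \E\left[\nabla_\theta^T f_{\mathbf{\theta}}(\mathbf{y})\right]\E\left[\mathbf{w}\right]$, and the zero-mean assumption $\E\left[\mathbf{w}\right] = \mathbf{0}$ closes the argument, giving $\E\left[\nabla_{\mathbf{\theta}}\tilde{l}\right] = \nabla_{\mathbf{\theta}} L$.

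I expect the only delicate points to be bookkeeping rather than substance: first, the fact that $\mathbf{w} \perp (\mathbf{x},\mathbf{z})$ carries over to $\mathbf{w} \perp \nabla_\theta^T f_{\mathbf{\theta}}(\mathbf{y})$, which is just the statement that functions of independent random variables are independent; and second, the mild regularity needed to interchange expectation and differentiation (so that $\nabla_{\mathbf{\theta}}\E[l] = \E[\nabla_{\mathbf{\theta}} l]$, as already asserted in the setup) and to guarantee the relevant expectations are finite. These are standard and can simply be stated as running assumptions; everything else reduces to the one-line algebraic splitting above.
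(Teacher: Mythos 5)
Your proposal is correct and follows essentially the same route as the paper: the same decomposition $\nabla_{\mathbf{\theta}}\tilde{l} = \nabla_{\mathbf{\theta}} l - \nabla_\theta^T f_{\mathbf{\theta}}(\mathbf{y})\,\mathbf{w}$, followed by killing the correction term via independence and $\E[\mathbf{w}]=\mathbf{0}$. The only cosmetic difference is that the paper routes the independence argument through the tower property (conditioning on $(\mathbf{x},\mathbf{z})$ so that $\E[\mathbf{w}\mid\mathbf{x},\mathbf{z}]=\E[\mathbf{w}]$), whereas you factor the expectation of the product directly; these are equivalent.
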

The proof of the Lemma is given in appendix~\ref{app:proof_sgd}. The implication is that under appropriate assumptions, self-supervised training with noisy targets is equivalent to a variation of the SGD~\cite{Robbins2007ASA} algorithm with the regular supervised MSE loss. Thus, all guarantees and intuitions that are valid for a supervised training with SGD and an MSE loss are also correct for training with noisy targets. 

Returning to the proposed scheme, a conclusion from Lemma~\ref{lem:sgd} is that statistical independence between the input image $\mathbf{y}_i$ and the target noise $\mathbf{w}_i$ is desirable. Therefore, as a first and simple step for reducing this dependency, we omit $\mathbf{y}_i$ from the set $\Gamma$. A further method for reducing this dependency is discussed next.

\subsection{Dependency Reduction}
\label{sec:dep_red}
As mentioned above, the training procedure can be sensitive to a statistical dependency between the target noise and the network's input, and thus we seek ways to reduce it. We bring in this section the main points of the proposed method, leaving formal proofs and other details to appendix~\ref{app:dep_red}.

As the proposed method may seem counter-intuitive at first glance, we start by building the reader's intuition gradually. Let us discuss the two most common types of dependencies that may be introduced by patch matching: (I)~\emph{overfitting input noise} and (II)~\emph{underfitting ground truth images}. Dependency of type~(I) refers to cases when patch matching does a ``too good'' job, bringing target noise $\mathbf{w}$ that mimics the input noise, $\mathbf{z}$. This dependency is characterized by a positive correlation between $\mathbf{w}$ and $\mathbf{z}$. 

As for type~(II) dependency, it happens when the patch-craft, $\mathbf{\tilde{x}}$, and ground truth, $\mathbf{x}$, images tend to be dissimilar. It is less intuitive, but this dependency is manifested in a negative correlation between $\mathbf{w}$ and $\mathbf{x}$. Here is brief explanation of this phenomenon: Consider the following scalar covariances computed over pairs of images: $\sigma_{\mathbf{\tilde{x}},\mathbf{\tilde{x}}}$ $\sigma_{\mathbf{x},\mathbf{\tilde{x}}}$, $\sigma_{\mathbf{x},\mathbf{w}}$, and $\sigma_{\mathbf{x},\mathbf{x}}$. Clearly, ${\sigma_{\mathbf{x},\mathbf{\tilde{x}}} = \sigma_{\mathbf{x},\mathbf{x}} + \sigma_{\mathbf{x},\mathbf{w}}}$. 
Assuming that $\sigma_{\mathbf{\tilde{x}},\mathbf{\tilde{x}}}\approx \sigma_{\mathbf{x},\mathbf{x}}$, the dissimilarity between $\mathbf{\tilde{x}}$ and $\mathbf{x}$ reduces the value of $\sigma_{\mathbf{x},\mathbf{\tilde{x}}}$, which means that $\sigma_{\mathbf{x},\mathbf{w}}$ is necessarily negative (see more on this phenomenon in appendix~\ref{app:dep_red}).

Let us look at an empirical covariance between $\mathbf{y}$ and $\mathbf{r}$, denoted by $s_{\mathbf{y}, \mathbf{r}}$, where $\mathbf{r} = \mathbf{\tilde{x}} - \mathbf{y}$. This covariance is a scalar obtained for each possible image pair $\{\mathbf{y},\mathbf{r}\}$. By assessing many such pairs, we get a histogram of these covariance values, which we analyze next. Observe that these covariance values are accessible, easily computed from the data we have. Here are few facts regarding $s_{\mathbf{y}, \mathbf{r}}$:
\begin{itemize}
    \item If $\mathbf{x}$, $\mathbf{z}$, and $\mathbf{w}$ are mutually independent, $s_{\mathbf{y}, \mathbf{r}}$ converges in distribution to a Gaussian centered at $-\sigma_{z}^2$ and thus
    ${\E\left[s_{\mathbf{y}, \mathbf{r}}\right] = -\sigma_{z}^2}$.
    \item Type~(I) dependency implies that $\mathbf{z}$ and $\mathbf{w}$ are heavily correlated, thus  ${\E\left[s_{\mathbf{y}, \mathbf{r}}\right] > -\sigma_{z}^2}$. However, for large enough patch-sizes, and when discarding $\mathbf{y}_i$ from the set $\Gamma$, this behavior is expected to be rare and can be disregarded. 
    \item We have seen that type~(II) dependency leads to negative values of $\sigma_{\mathbf{x},\mathbf{w}}$. Thus, we get that ${\E\left[s_{\mathbf{y}, \mathbf{r}}\right] < -\sigma_{z}^2}$. 
\end{itemize}
A formal proof of these statements is given in appendix~\ref{app:dep_red}. 

Let us now return to the $s_{\mathbf{y}, \mathbf{r}}$ histogram, while assuming that the dependency of type~(I) is low. Figure~\ref{fig:s_yr_hist} presents two examples of such histograms for two types of noise - more details on these noise realizations is given in the next section. One can easily spot the expression of type~(II) dependencies in both -- the longer left tail. Note that the histograms are cropped, and the tail is longer than shown in the figures (especially in Figure~\ref{fig:s_yr_hist:davis}). As expected, due to this dependency, the histogram mean is shifted left relative to its peak. To reduce this dependency, we cut the left tail by excluding from the training set all image pairs for which $s_{\mathbf{y}, \mathbf{r}} < s_{min}$. The threshold $s_{min}$ is set such that the mean of the resulting histogram coincides with its peak. As shown in Figure~\ref{fig:valid_psnr}, the dependency reduction substantially boosts denoising performance.

\begin{figure}
    \centering
	\begin{subfigure}{0.23\textwidth}
	    \captionsetup{justification=centering}
		\includegraphics[width=\textwidth]{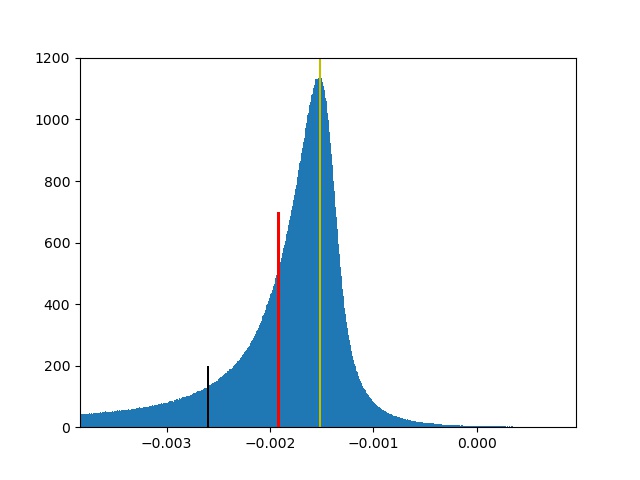}
		\caption{Correlated Gaussian noise}
		\label{fig:s_yr_hist:davis}
	\end{subfigure}
	\begin{subfigure}{0.23\textwidth}
	    \captionsetup{justification=centering}
		\includegraphics[width=\textwidth]{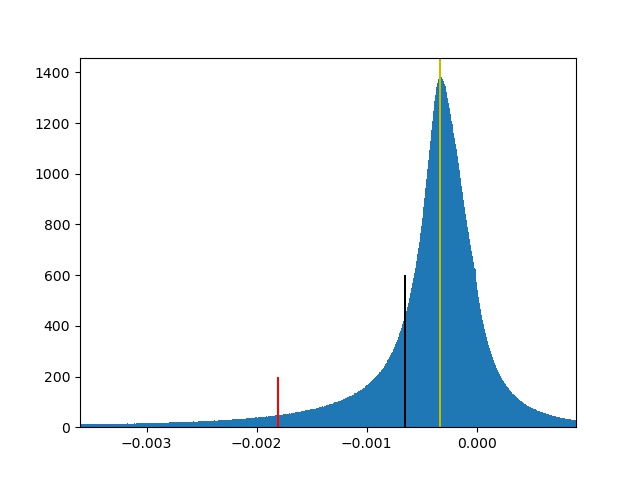}
		\caption{Real image noise}
		\label{fig:s_yr_hist:crvd}
	\end{subfigure}
	\caption{Examples of $s_{\mathbf{y}, \mathbf{r}}$ histograms in experiments with correlated Gaussian and real image noise. The yellow bar is located at the histogram peak, while the black bar shows the location of the mean. The red bar indicates the location of $s_{min}$.}
	\label{fig:s_yr_hist}
\end{figure}

\begin{figure}
    \centering
	\begin{subfigure}{0.23\textwidth}
	    \captionsetup{justification=centering}
		\includegraphics[width=\textwidth]{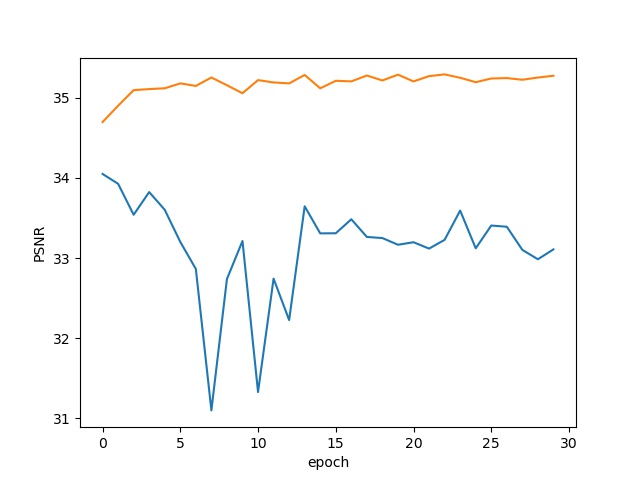}
		\caption{Correlated Gaussian noise}
		\label{fig:valid_psnr:davis}
	\end{subfigure}
	\begin{subfigure}{0.23\textwidth}
	    \captionsetup{justification=centering}
		\includegraphics[width=\textwidth]{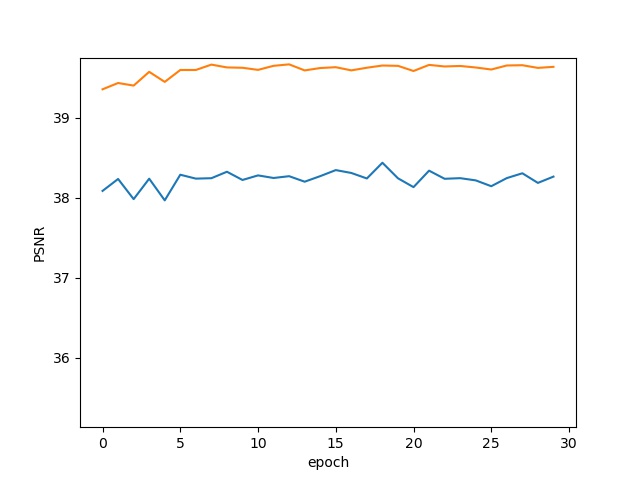}
		\caption{Real image noise}
		\label{fig:valid_psnr:crvd}
	\end{subfigure}
	\caption{Validation PSNR before and after dependency reduction. The blue line shows validation PSNR vs. epoch number during training on the full dataset, while the orange line indicates the PSNR after excluding image pairs for which $s_{\mathbf{y}, \mathbf{r}} < s_{min}$.}
	\label{fig:valid_psnr}
\end{figure}

\section{Experimental Results}
We turn to report the denoising performance of the proposed  framework and its comparison with leading self-supervised methods. Our framework is referred to as Patch-Craft (PC). We consider two experiments, one with correlated Gaussian noise and the second with real-world noise. In both experiments, we train networks in an adaptation manner~\cite{Vaksman2020LIDIALL}, beginning with bias-free networks~\cite{Mohan2020RobustAI} pre-trained for blind i.i.d. Gaussian denoising task, and retraining them using the proposed patch-craft method. We consider two different architectures for our scheme: DnCNN~\cite{zhang2017beyond} and U-Net~\cite{ronneberger2015u}. The first is similar to the one used in~\cite{kadkhodaie2021stochastic}, while U-Net is taken from~\cite{Mohan2020RobustAI}. We use bias-free versions of both networks. We denote by PC-DnCNN and PC-UNet the networks retrained using the PC framework, where B-DnCNN and B-UNet stand for their initial versions trained for blind i.i.d. Gaussian denoising. 

For comparison, we choose three latest state-of-the-art (SoTA) self-supervised training methods: Recorrupted-to-recorrupted (R2R)~\cite{Pang2021RecorruptedtoRecorruptedUD}, Neighbor2Neighbor (N2N)~\cite{Huang2021Neighbor2NeighborSD}, and Blind2Unblind (B2U)~\cite{wang2022blind2unblind}. In addition, we show a comparison with  BM3D~\cite{dabov2007image}, which gets as input parameter the standard deviation, $\sigma$, of the noise. Since the noise is not i.i.d. Gaussian, the actual standard deviation is not necessarily the optimal parameter for BM3D. Thus we apply BM3D with two configurations: a BM3D with the actual $\sigma$ of the noise, and a grid search to find the best performing parameter. We call this configuration oracle BM3D (O-BM3D).

Our algorithm requires bursts of images for training. Therefore, we use datasets containing short video sequences in all experiments. Our analysis suggests that the patch size, $n$, should be big. Moreover, it should grow with the standard deviation of the noise, $\sigma$, and correlation range. Following this, we increase the value of $n$ accordingly. 
For quantitative evaluation, we choose the commonly used PSNR and SSIM metrics. For more technical details regarding the training, we refer the reader to appendices~\ref{app:training_details}~and~\ref{app:additional_results}.

\subsection{Correlated Gaussian Denoising}
We start with additive correlated Gaussian noise, using the DAVIS dataset~\cite{pont20172017} at 480p resolution. We train the networks on 90 bursts of length 7 frames, each taken from a different training video sequence at an arbitrary location. In each training burst, the middle frame is used as the network input, whereas the rest 6 are utilized for building the patch-craft targets. For the test, we use frames taken from 30 test video sequences. From each sequence we take 3 nonconsecutive frames at arbitrary locations. Each of the obtained 90 test frames is denoised as a single image. 

The correlated noise is created by convolving an i.i.d. Gaussian noise with a rectangular flat kernel of size $k \times k$. The competing methods are trained using the code packages and parameters supplied by the authors. For R2R, we choose $\alpha = 2$ among the three options listed in the original paper (0.5, 2, 20) since it leads to the best denoising results.

Table~\ref{tab:corr_g_res} summarizes the denoising performance for various $\sigma$ and $k$ values. Figure~\ref{fig:davis_22_1_s20_k3_crvd_9_3_iso25600} and Figures~\ref{fig:davis_20_1_s10_k3_11_0_s10_k4},~\ref{fig:davis_2_0_s15_k4_8_1_s20_k4},~and~\ref{fig:davis_4_2_s15_k3_19_0_s5_k4} in appendix~\ref{app:additional_results} show  visual comparisons between the denoised images.
\begin{table*}
    \centering
    \begin{tabular}{|c|c|c|c|c|c|c|c|c|c|c|c|c|}
         \hline
        $\sigma$ & $k$ & Noisy & R2R & N2N & B2U & BM3D & O-BM3D & B-DnCNN & B-UNet & PC-UNet & PC-DnCNN \\
         \hline
         \multirow{6}{*}{5} & \multirow{2}{*}{2} & 34.15 & 38.88 & 35.20 & 29.30 & 38.28 & \textcolor{red}{39.69} & 37.83 & 36.73 & 39.27 & \textcolor{blue}{39.57} \\
         & & 0.852 & 0.960 & 0.886 & 0.720 & 0.951 & \textcolor{red}{0.969} & 0.945 & 0.923 & \textcolor{blue}{0.967} & \textcolor{red}{0.969} \\
         \hhline{|~|-|-|-|-|-|-|-|-|-|-|-|}
         & \multirow{2}{*}{3} & 34.15 & 37.29 & 34.64 & 28.74 & 36.50 & 38.19 & 36.02 & 35.25 & \textcolor{blue}{38.67} & \textcolor{red}{38.81} \\
         & & 0.859 & 0.943 & 0.879 & 0.719 & 0.926 & 0.957 & 0.916 & 0.896 & \textcolor{blue}{0.964} & \textcolor{red}{0.965} \\
         \hhline{|~|-|-|-|-|-|-|-|-|-|-|-|}
         & \multirow{2}{*}{4} & 34.16 & 36.22 & 34.48 & 30.46 & 35.83 & 37.13 & 35.33 & 34.83 & \textcolor{blue}{38.06} & \textcolor{red}{38.31} \\
         & & 0.868 & 0.930 & 0.885 & 0.765 & 0.920 & 0.948 & 0.908 & 0.894 & \textcolor{blue}{0.961} & \textcolor{red}{0.964} \\
         \hline
         \multirow{6}{*}{10} & \multirow{2}{*}{2} & 28.13 & 34.55 & 29.55 & 23.65 & 33.25 & 35.37 & 32.82 & 31.57 & \textcolor{blue}{35.89} & \textcolor{red}{36.10} \\
         & & 0.639 & 0.902 & 0.707 & 0.441 & 0.867 & 0.927 & 0.850 & 0.799 & \textcolor{blue}{0.937} & \textcolor{red}{0.939} \\
         \hhline{|~|-|-|-|-|-|-|-|-|-|-|-|}
         & \multirow{2}{*}{3} & 28.13 & 32.5 & 28.85 & 23.85 & 30.96 & 33.56 & 30.44 & 29.64 & \textcolor{blue}{35.16} & \textcolor{red}{35.32} \\
         & & 0.653 & 0.849 & 0.693 & 0.454 & 0.796 & 0.897 & 0.774 & 0.736 & \textcolor{blue}{0.932} & \textcolor{red}{0.934} \\
         \hhline{|~|-|-|-|-|-|-|-|-|-|-|-|}
         & \multirow{2}{*}{4} & 28.13 & 31.21 & 28.67 & 23.45 & 30.16 & 32.3 & 29.58 & 29.07 & \textcolor{blue}{34.69} & \textcolor{red}{34.79} \\
         & & 0.670 & 0.818 & 0.705 & 0.433 & 0.782 & 0.872 & 0.756 & 0.730 & \textcolor{blue}{0.931} & \textcolor{red}{0.932} \\
         \hline
         \multirow{6}{*}{15} & \multirow{2}{*}{2} & 24.61 & 31.81 & 26.27 & 22.31 & 30.28 & 32.99 & 29.72 & 28.56 & \textcolor{blue}{33.77} & \textcolor{red}{33.96} \\
         & & 0.489 & 0.828 & 0.567 & 0.374 & 0.776 & 0.886 & 0.747 & 0.688 & \textcolor{blue}{0.907} & \textcolor{red}{0.909} \\
         \hhline{|~|-|-|-|-|-|-|-|-|-|-|-|}
         & \multirow{2}{*}{3} & 24.61 & 29.59 & 25.43 & 24.44 & 27.71 & 31.11 & 27.12 & 26.41 & \textcolor{blue}{32.98} & \textcolor{red}{33.16} \\
         & & 0.503 & 0.747 & 0.547 & 0.491 & 0.671 & 0.842 & 0.645 & 0.606 & \textcolor{blue}{0.900} & \textcolor{red}{0.902} \\
         \hhline{|~|-|-|-|-|-|-|-|-|-|-|-|}
         & \multirow{2}{*}{4} & 24.61 & 28.26 & 25.26 & 22.26 & 26.83 & 29.79 & 26.22 & 25.75 & \textcolor{blue}{32.4} & \textcolor{red}{32.57} \\
         & & 0.521 & 0.709 & 0.562 & 0.398 & 0.653 & 0.806 & 0.623 & 0.596 & \textcolor{blue}{0.897} & \textcolor{red}{0.899} \\
         \hline
         \multirow{6}{*}{20} & \multirow{2}{*}{2} & 22.11 & 30.1 & 23.82 & 20.53 & 28.17 & 31.41 & 27.48 & 26.41 & \textcolor{blue}{32.28} & \textcolor{red}{32.43} \\
         & & 0.387 & 0.765 & 0.46 & 0.304 & 0.691 & 0.851 & 0.655 & 0.594 & \textcolor{blue}{0.876} & \textcolor{red}{0.879} \\
          \hhline{|~|-|-|-|-|-|-|-|-|-|-|-|}
         & \multirow{2}{*}{3} & 22.11 & 27.57 & 23.02 & 7.74 & 25.39 & 29.49 & 24.76 & 24.14 & \textcolor{blue}{31.44} & \textcolor{red}{31.63} \\
         & & 0.400 & 0.655 & 0.443 & 0.088 & 0.568 & 0.796 & 0.541 & 0.508 & \textcolor{blue}{0.869} & \textcolor{red}{0.872} \\
          \hhline{|~|-|-|-|-|-|-|-|-|-|-|-|}
         & \multirow{2}{*}{4} & 22.11 & 25.93 & 22.91 & 22.33 & 24.48 & 28.15 & 23.84 & 23.41 & \textcolor{blue}{30.78} & \textcolor{red}{30.97} \\
         & & 0.417 & 0.599 & 0.461 & 0.417 & 0.548 & 0.752 & 0.52 & 0.496 & \textcolor{blue}{0.863} & \textcolor{red}{0.866} \\
         \hline
         \multicolumn{2}{|c|}{\multirow{2}{*}{Average}} & 27.25 & 31.99 & 28.18 & 23.26 & 30.65 & 33.27 & 30.10 & 29.31 & \textcolor{blue}{34.62} & \textcolor{red}{34.80}\\
         \multicolumn{2}{|c|}{} & 0.605 & 0.809 & 0.650 & 0.467 & 0.762 & 0.875 & 0.740 & 0.706 & \textcolor{blue}{0.917} & \textcolor{red}{0.919} \\
         \hline
    \end{tabular}
    \caption{Denoising performance with correlated Gaussian noise. PC-UNet and PC-DnCNN are trained using the proposed patch-craft framework. The best PSNR and SSIM results are marked Red. The second-best results are marked blue.}
    \label{tab:corr_g_res}
\end{table*}
\begin{table*}
    \centering
    \begin{tabular}{|c|c|c|c|c|c|c|c|c|c|c|c|c|}
         \hline
         ISO & $\sigma$ & Noisy & R2R & N2N & B2U & BM3D & O-BM3D & B-UNet & B-DnCNN & PC-UNet & PC-DnCNN \\
         \hline
         \multirow{2}{*}{1600} & \multirow{2}{*}{3.3} & 37.67 & 39.58 & 37.71 & 36.88 & 38.61 & 41.12 & 37.71 & 37.71 & \textcolor{blue}{41.25} & \textcolor{red}{41.33} \\
         & & 0.925 & 0.962 & 0.925 & 0.915 & 0.946 & \textcolor{blue}{0.979} & 0.926 & 0.926 & \textcolor{red}{0.981} & \textcolor{red}{0.981} \\
         \hline
         \multirow{2}{*}{3200} & \multirow{2}{*}{4.5} & 35.03 & 37.18 & 35.10 & 4.94 & 36.08 & 38.99 & 35.08 & 35.10 & \textcolor{blue}{39.50} & \textcolor{red}{39.64} \\
         & & 0.874 & 0.937 & 0.876 & 0.011 & 0.910 & 0.969 & 0.876 & 0.877 & \textcolor{red}{0.975} & \textcolor{blue}{0.974} \\
         \hline
         \multirow{2}{*}{6400} & \multirow{2}{*}{6.3} & 32.10 & 34.67 & 32.19 & 4.98 & 33.20 & 36.57 & 32.14 & 32.17 & \textcolor{blue}{37.18} & \textcolor{red}{37.32} \\
         & & 0.794 & 0.892 & 0.798 & 0.002 & 0.851 & 0.956 & 0.795 & 0.797 & \textcolor{blue}{0.964} & \textcolor{red}{0.965} \\
         \hline
         \multirow{2}{*}{12800} & \multirow{2}{*}{8.8} & 29.25 & 31.71 & 29.33 & 23.79 & 30.46 & 34.30 & 29.28 & 29.33 & \textcolor{blue}{34.89} & \textcolor{red}{35.15} \\
         & & 0.690 & 0.833 & 0.695 & 0.451 & 0.771 & 0.939 & 0.691 & 0.694 & \textcolor{blue}{0.951} & \textcolor{red}{0.952} \\
         \hline
         \multirow{2}{*}{25600} & \multirow{2}{*}{13.1} & 25.77 & 28.26 & 25.86 & 20.11 & 27.10 & 31.35 & 25.80 & 25.85 & \textcolor{red}{32.45} & \textcolor{blue}{32.38} \\
         & & 0.506 & 0.701 & 0.512 & 0.326 & 0.620 & 0.910 & 0.507 & 0.511 & \textcolor{red}{0.933} & \textcolor{blue}{0.932} \\
         \hline
         \multicolumn{2}{|c|}{\multirow{2}{*}{Average}} & 31.96 & 34.28 & 32.04 & 18.14 & 33.09 & 36.47 & 32.00 & 32.03 & \textcolor{blue}{37.05} & \textcolor{red}{37.16} \\
         \multicolumn{2}{|c|}{} & 0.758 & 0.865 & 0.761 & 0.341 & 0.820 & \textcolor{blue}{0.951} & 0.759 & 0.761 & \textcolor{red}{0.961} & \textcolor{red}{0.961} \\
         \hline

    \end{tabular}
    \caption{Denoising performance with real-world image noise. PC-UNet and PC-DnCNN are trained with the proposed patch-craft framework ($\sigma$ is the STD using ground truth images). The best PSNR and SSIM results are marked red, and the second-best marked blue.}
    \label{tab:real_im_n_res}
\end{table*}
As can be seen from the table and figures, the current SoTA self-supervised methods with which we compare face difficulties
\footnote{B2U~\cite{wang2022blind2unblind} training sometimes loses stability, getting extremely low PSNR/SSIM on images contaminated with spatially correlated noise.} 
in train networks when the noise is correlated, when the difficulty increases with the correlation range and the intensity of the noise. The classical, signal processing oriented, O-BM3D method achieves relatively high PSNR (typically 1-3 dB below networks trained using our framework). However, as can be seen from the figures, in the case of moderate to severe noise, the visual quality of the O-BM3D outputs leaves much to be desired since the method tends to produce blurred images or leave a noticeable amount of low-frequency noise unfiltered. Not to mention that finding the optimal parameter $\sigma$ for BM3D when the ground truth targets are unavailable may not be easy.
\begin{figure*}[htbp]
    \centering
	\begin{subfigure}{0.18\textwidth}
	    \captionsetup{justification=centering}
		\includegraphics[width=\textwidth]{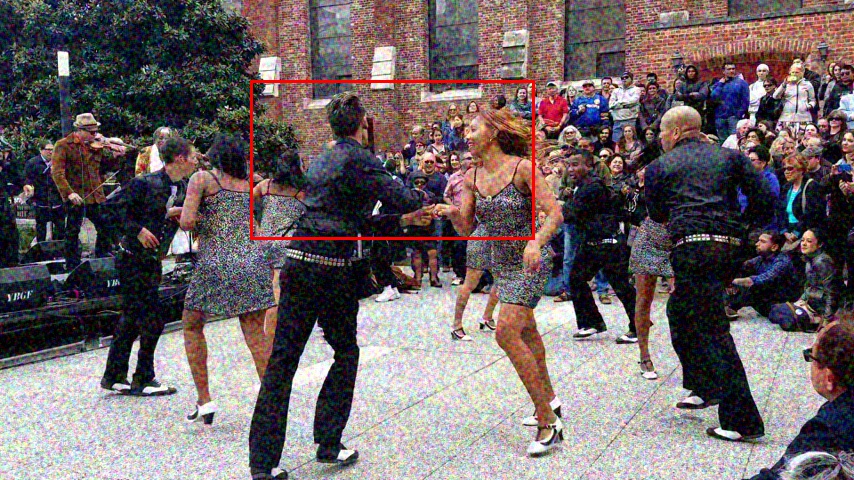}
		\caption{Noisy \\ 22.13 / 0.623}
		\label{fig:davis_22_1_s20_k3:noisy_rect}
	\end{subfigure}
	\begin{subfigure}{0.18\textwidth}
	    \captionsetup{justification=centering}
		\includegraphics[width=\textwidth]{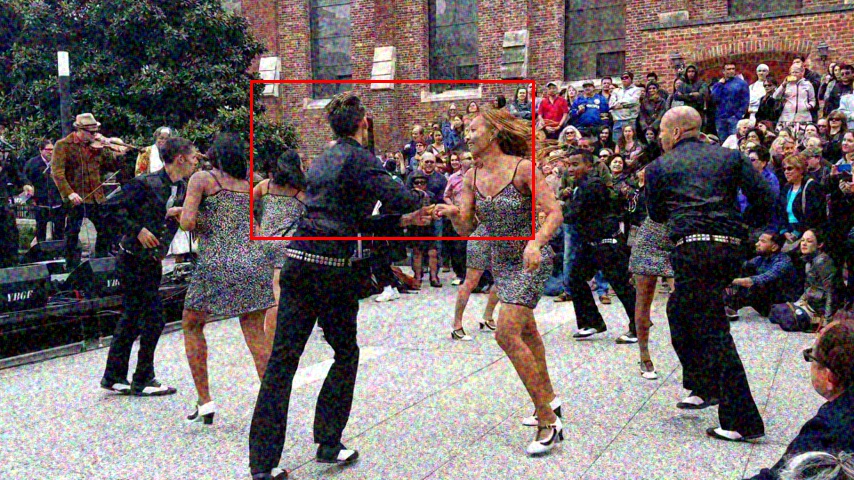}
		\caption{N2N \\ 23.05 / 0.657}
		\label{fig:davis_22_1_s20_k3:n2n_rect}
	\end{subfigure}
	\begin{subfigure}{0.18\textwidth}
	    \captionsetup{justification=centering}
		\includegraphics[width=\textwidth]{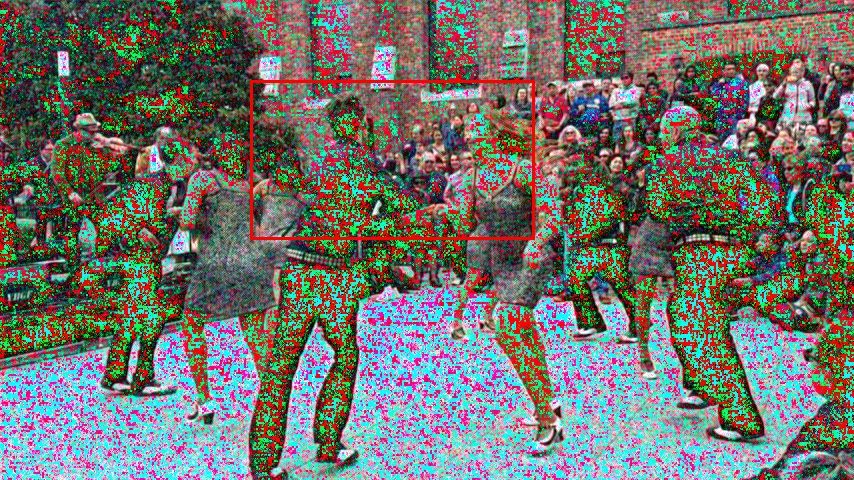}
		\caption{B2U \\ 9.55 / 0.242}
		\label{fig:davis_22_1_s20_k3:b2u_rect}
	\end{subfigure}
	\begin{subfigure}{0.18\textwidth}
	    \captionsetup{justification=centering}
		\includegraphics[width=\textwidth]{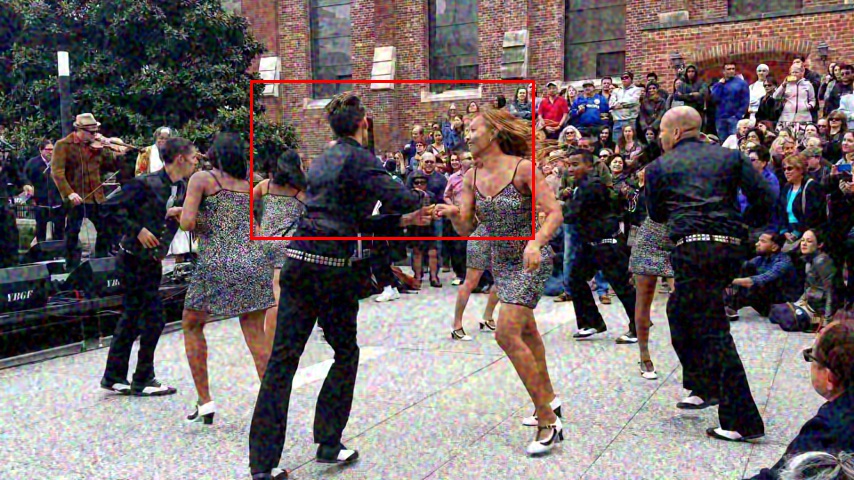}
		\caption{BM3D \\ 24.70 / 0.711}
		\label{fig:davis_22_1_s20_k3:bm3d_rect}
	\end{subfigure}
	\begin{subfigure}{0.18\textwidth}
	    \captionsetup{justification=centering}
		\includegraphics[width=\textwidth]{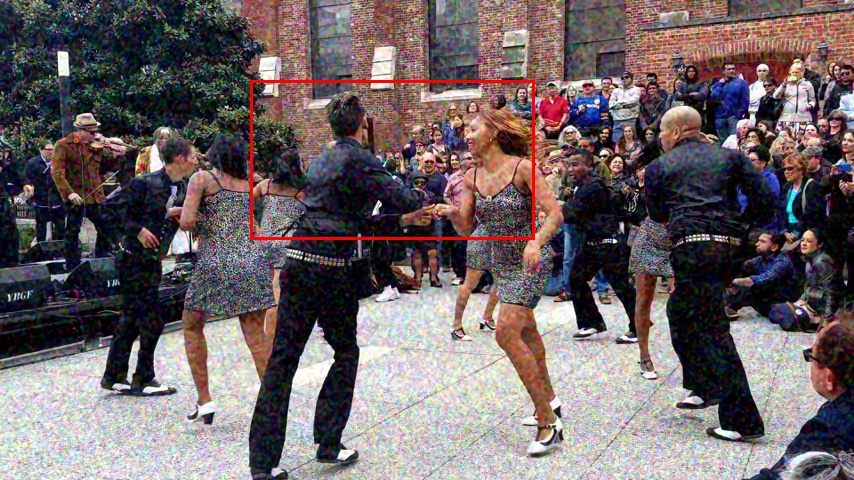}
		\caption{B-DnCNN \\ 24.37 / 0.720}
		\label{fig:davis_22_1_s20_k3:b_dncnn_rect}
	\end{subfigure}
	\begin{subfigure}{0.18\textwidth}
	    \captionsetup{justification=centering}
		\includegraphics[width=\textwidth]{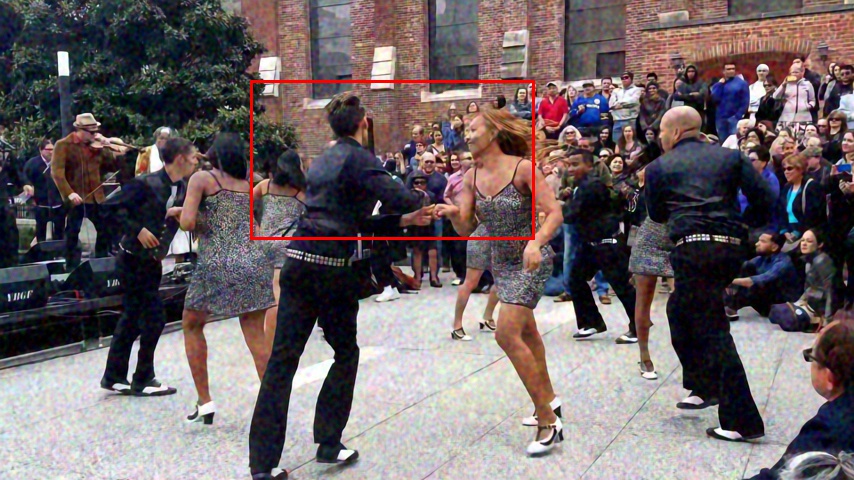}
		\caption{R2R \\ 25.53 / 0.712}
		\label{fig:davis_22_1_s20_k3:r2r_rect}
	\end{subfigure}
	\begin{subfigure}{0.18\textwidth}
	    \captionsetup{justification=centering}
		\includegraphics[width=\textwidth]{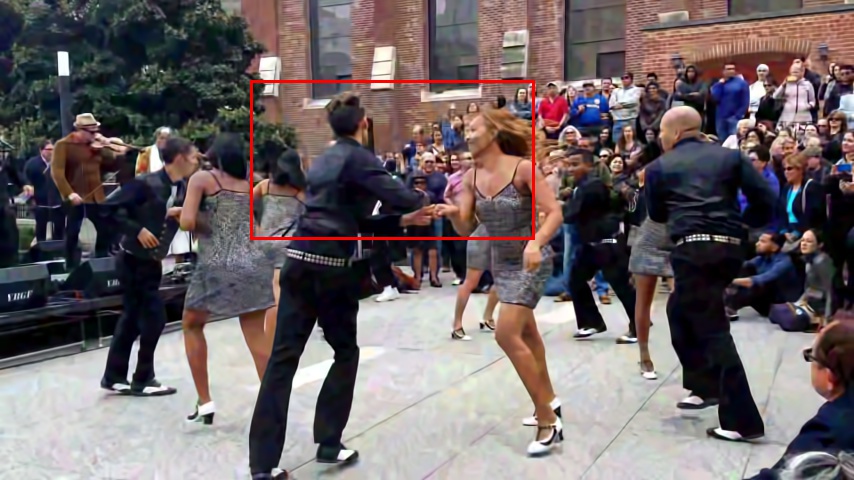}
		\caption{BM3D-O \\ 25.13 / 0.703}
		\label{fig:davis_22_1_s20_k3:bm3d_opt_rect}
	\end{subfigure}
	\begin{subfigure}{0.18\textwidth}
	    \captionsetup{justification=centering}
		\includegraphics[width=\textwidth]{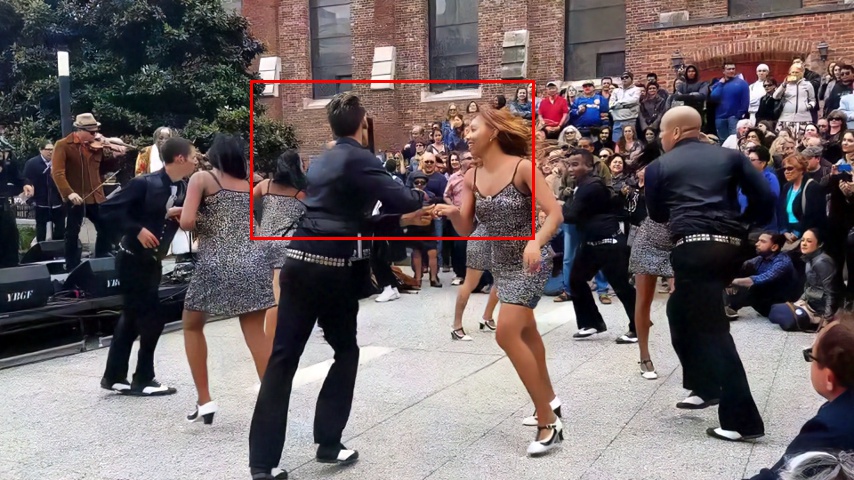}
		\caption{PC-UNet \\ 28.80 / 0.880}
		\label{fig:davis_22_1_s20_k3:pc_unet_rect}
	\end{subfigure}
	\begin{subfigure}{0.18\textwidth}
	    \captionsetup{justification=centering}
		\includegraphics[width=\textwidth]{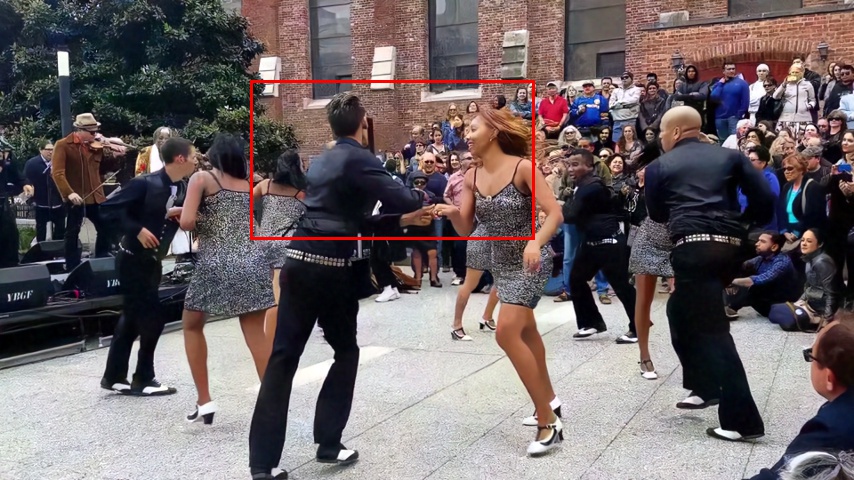}
		\caption{PC-DnCNN \\ 29.01 / 0.884}
		\label{fig:davis_22_1_s20_k3:pc_dncnn_rect}
	\end{subfigure}
	\begin{subfigure}{0.18\textwidth}
	    \captionsetup{justification=centering}
		\includegraphics[width=\textwidth]{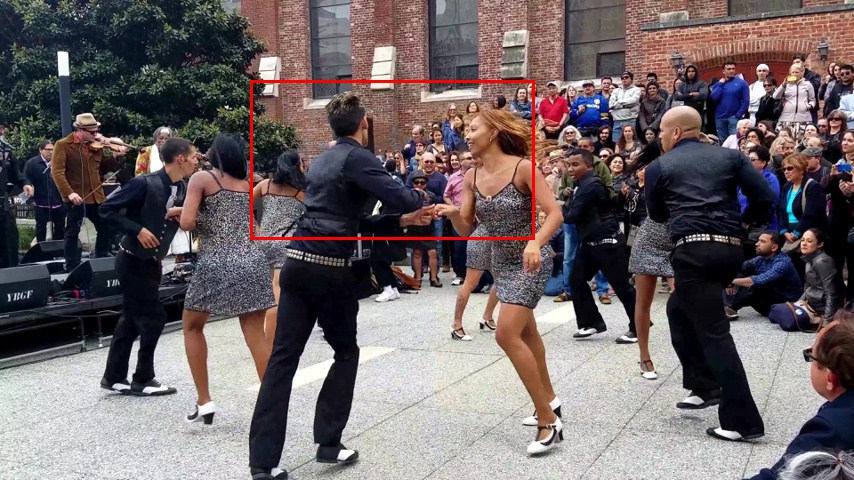}
		\caption{Clean \newline }
		\label{fig:davis_22_1_s20_k3:clean_rect}
	\end{subfigure}
	\begin{subfigure}{0.18\textwidth}
	    \captionsetup{justification=centering}
		\includegraphics[width=\textwidth]{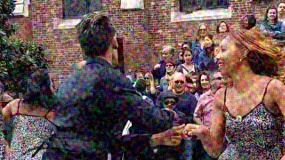}
		\caption{Noisy}
		\label{fig:davis_22_1_s20_k3:noisy_crop}
	\end{subfigure}
	\begin{subfigure}{0.18\textwidth}
	    \captionsetup{justification=centering}
		\includegraphics[width=\textwidth]{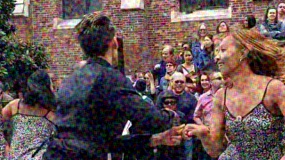}
		\caption{N2N}
		\label{fig:davis_22_1_s20_k3:n2n_crop}
	\end{subfigure}
	\begin{subfigure}{0.18\textwidth}
	    \captionsetup{justification=centering}
		\includegraphics[width=\textwidth]{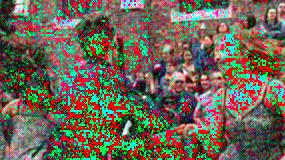}
		\caption{B2U}
		\label{fig:davis_22_1_s20_k3:b2u_crop}
	\end{subfigure}
	\begin{subfigure}{0.18\textwidth}
	    \captionsetup{justification=centering}
		\includegraphics[width=\textwidth]{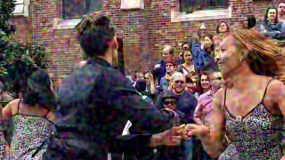}
		\caption{BM3D}
		\label{fig:davis_22_1_s20_k3:bm3d_crop}
	\end{subfigure}
	\begin{subfigure}{0.18\textwidth}
	    \captionsetup{justification=centering}
		\includegraphics[width=\textwidth]{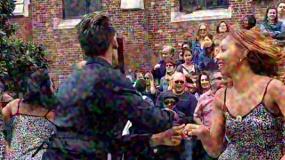}
		\caption{B-DnCNN}
		\label{fig:davis_22_1_s20_k3:b_dncnn_crop}
	\end{subfigure}
	\begin{subfigure}{0.18\textwidth}
	    \captionsetup{justification=centering}
		\includegraphics[width=\textwidth]{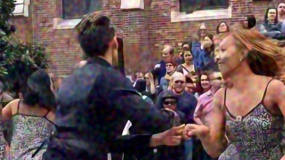}
		\caption{R2R}
		\label{fig:davis_22_1_s20_k3:r2r_crop}
	\end{subfigure}
	\begin{subfigure}{0.18\textwidth}
	    \captionsetup{justification=centering}
		\includegraphics[width=\textwidth]{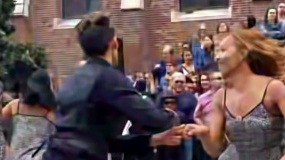}
		\caption{BM3D-O}
		\label{fig:davis_22_1_s20_k3:bm3d_opt_crop}
	\end{subfigure}
	\begin{subfigure}{0.18\textwidth}
	    \captionsetup{justification=centering}
		\includegraphics[width=\textwidth]{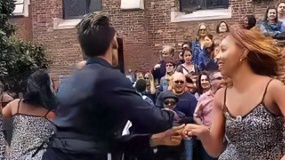}
		\caption{PC-UNet}
		\label{fig:davis_22_1_s20_k3:pc_unet_crop}
	\end{subfigure}
	\begin{subfigure}{0.18\textwidth}
	    \captionsetup{justification=centering}
		\includegraphics[width=\textwidth]{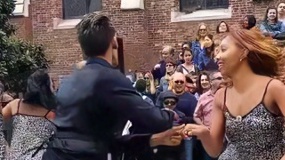}
		\caption{PC-DnCNN}
		\label{fig:davis_22_1_s20_k3:pc_dncnn_crop}
	\end{subfigure}
	\begin{subfigure}{0.18\textwidth}
	    \captionsetup{justification=centering}
		\includegraphics[width=\textwidth]{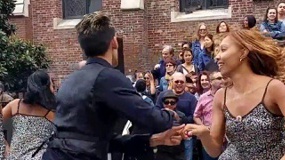}
		\caption{Clean}
		\label{fig:davis_22_1_s20_k3:clean_crop}
	\end{subfigure}
	\begin{subfigure}{0.18\textwidth}
	    \captionsetup{justification=centering}
		\includegraphics[width=\textwidth]{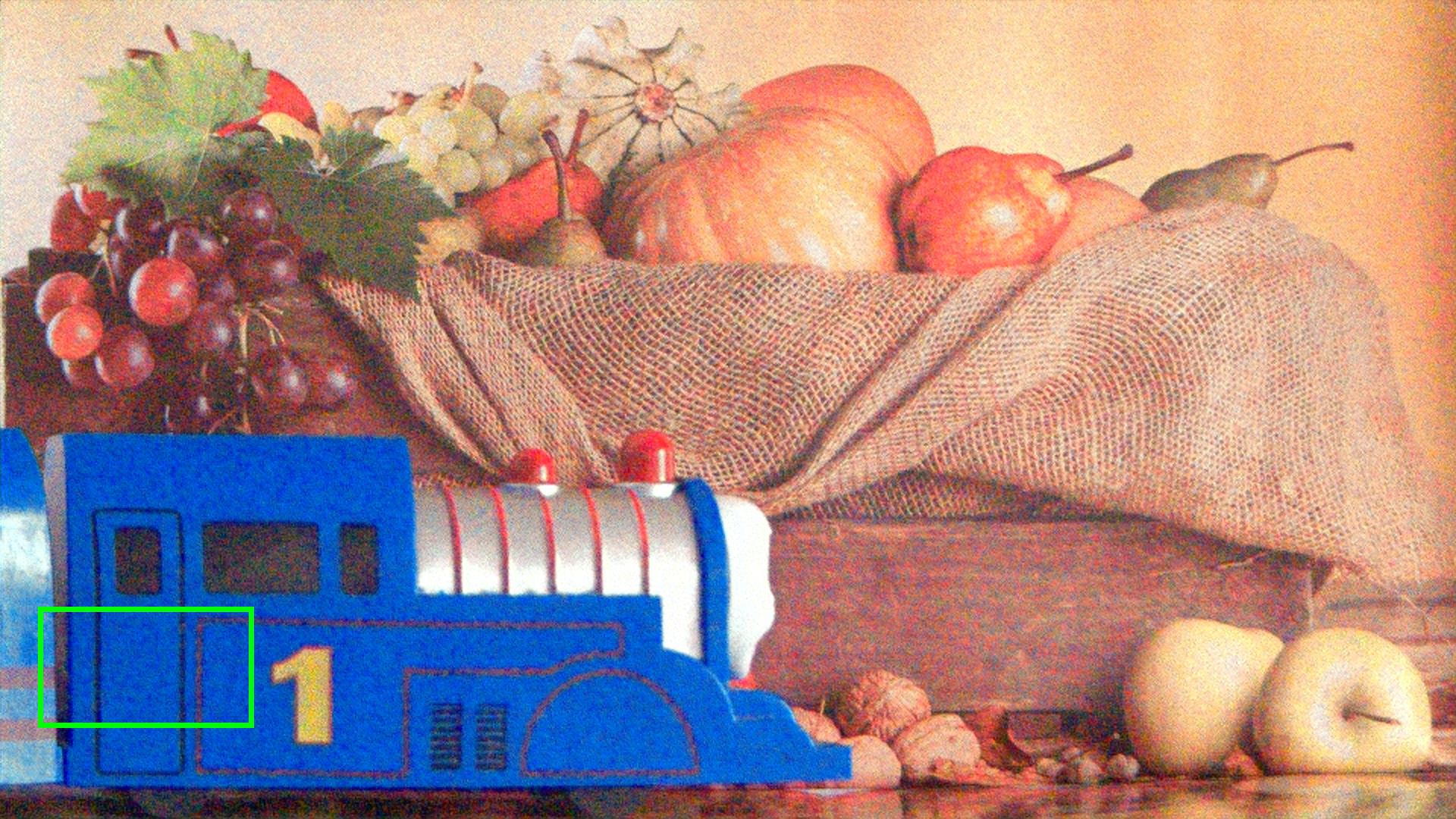}
		\caption{Noisy \\ 26.84 / 0.633}
		\label{fig:crvd_9_3_iso25600:noisy_rect}
	\end{subfigure}
	\begin{subfigure}{0.18\textwidth}
	    \captionsetup{justification=centering}
		\includegraphics[width=\textwidth]{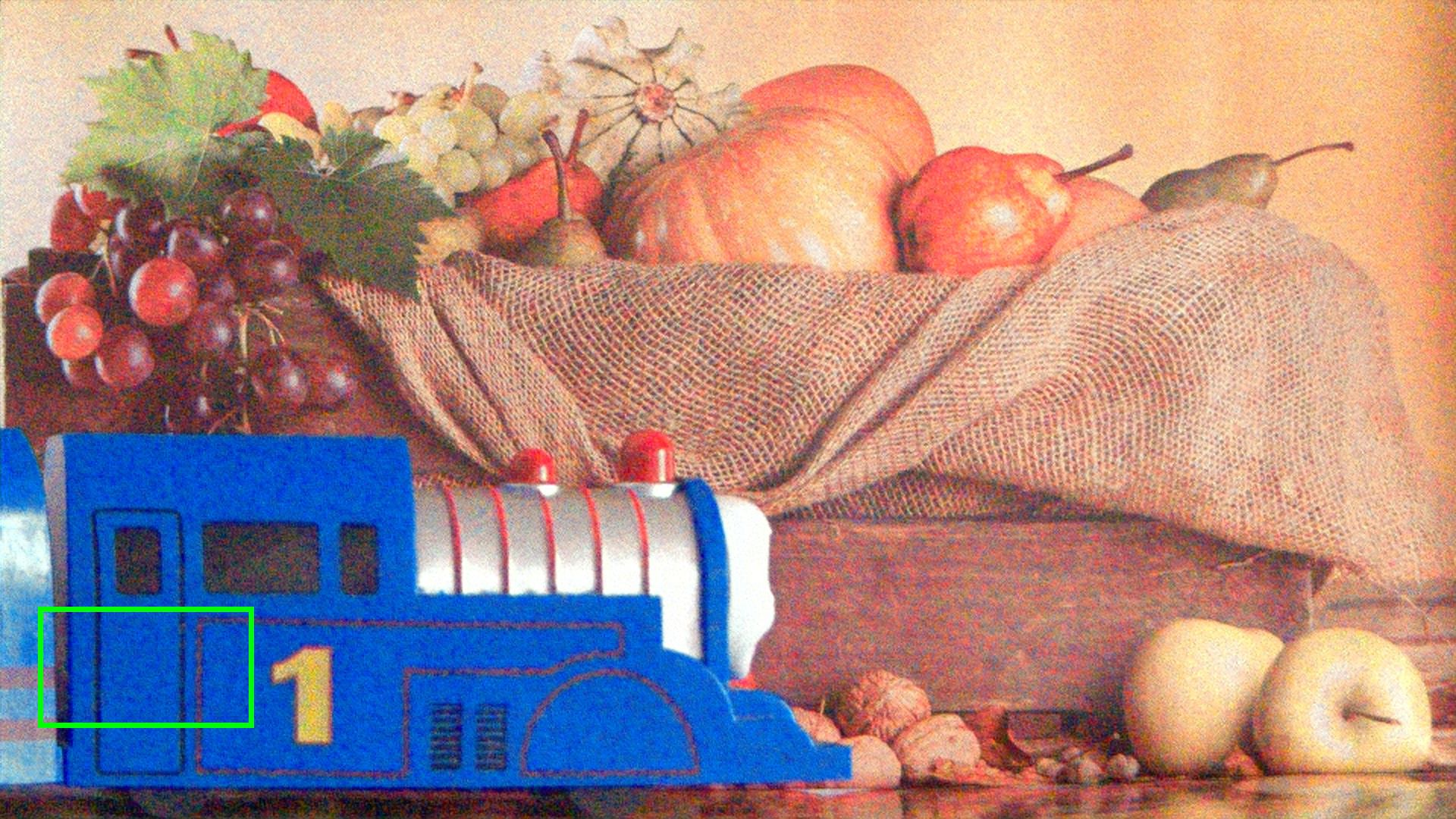}
		\caption{N2N \\ 26.94 / 0.637}
		\label{fig:crvd_9_3_iso25600:n2n_rect}
	\end{subfigure}
	\begin{subfigure}{0.18\textwidth}
	    \captionsetup{justification=centering}
		\includegraphics[width=\textwidth]{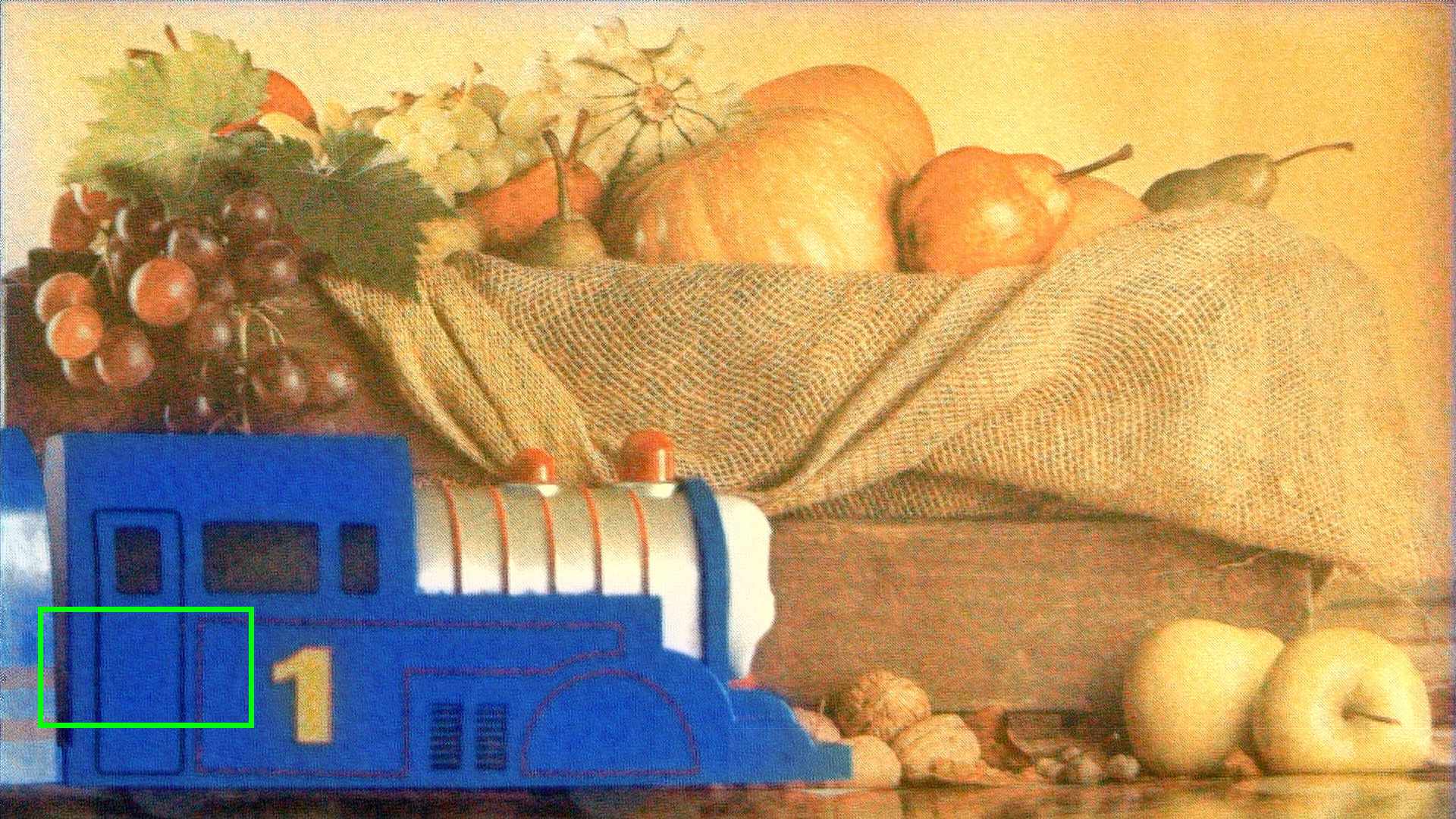}
		\caption{B2U \\ 19.27 / 0.387}
		\label{fig:crvd_9_3_iso25600:b2u_rect}
	\end{subfigure}
	\begin{subfigure}{0.18\textwidth}
	    \captionsetup{justification=centering}
		\includegraphics[width=\textwidth]{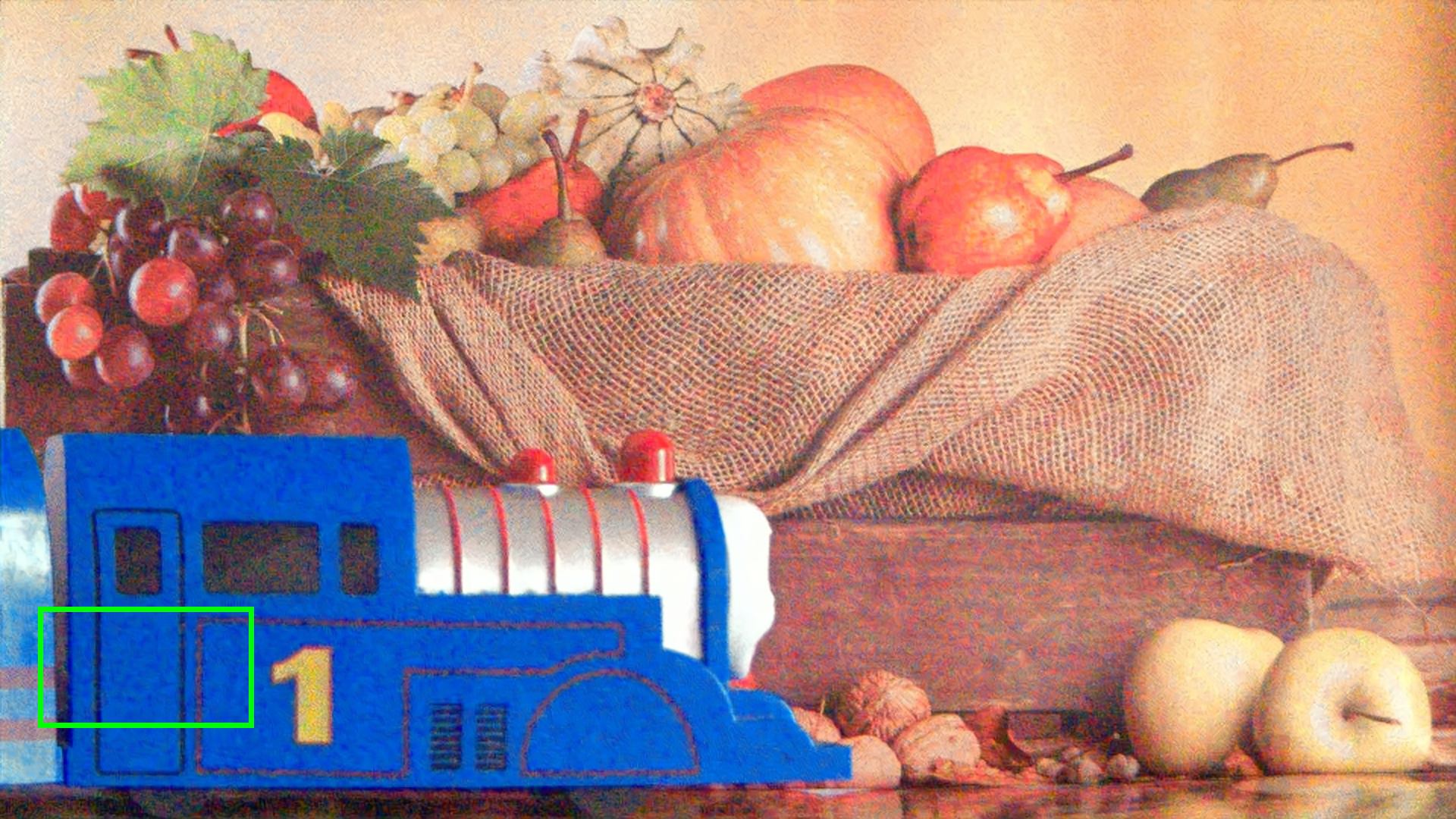}
		\caption{BM3D \\ 28.40 / 0.756}
		\label{fig:crvd_9_3_iso25600:bm3d_rect}
	\end{subfigure}
	\begin{subfigure}{0.18\textwidth}
	    \captionsetup{justification=centering}
		\includegraphics[width=\textwidth]{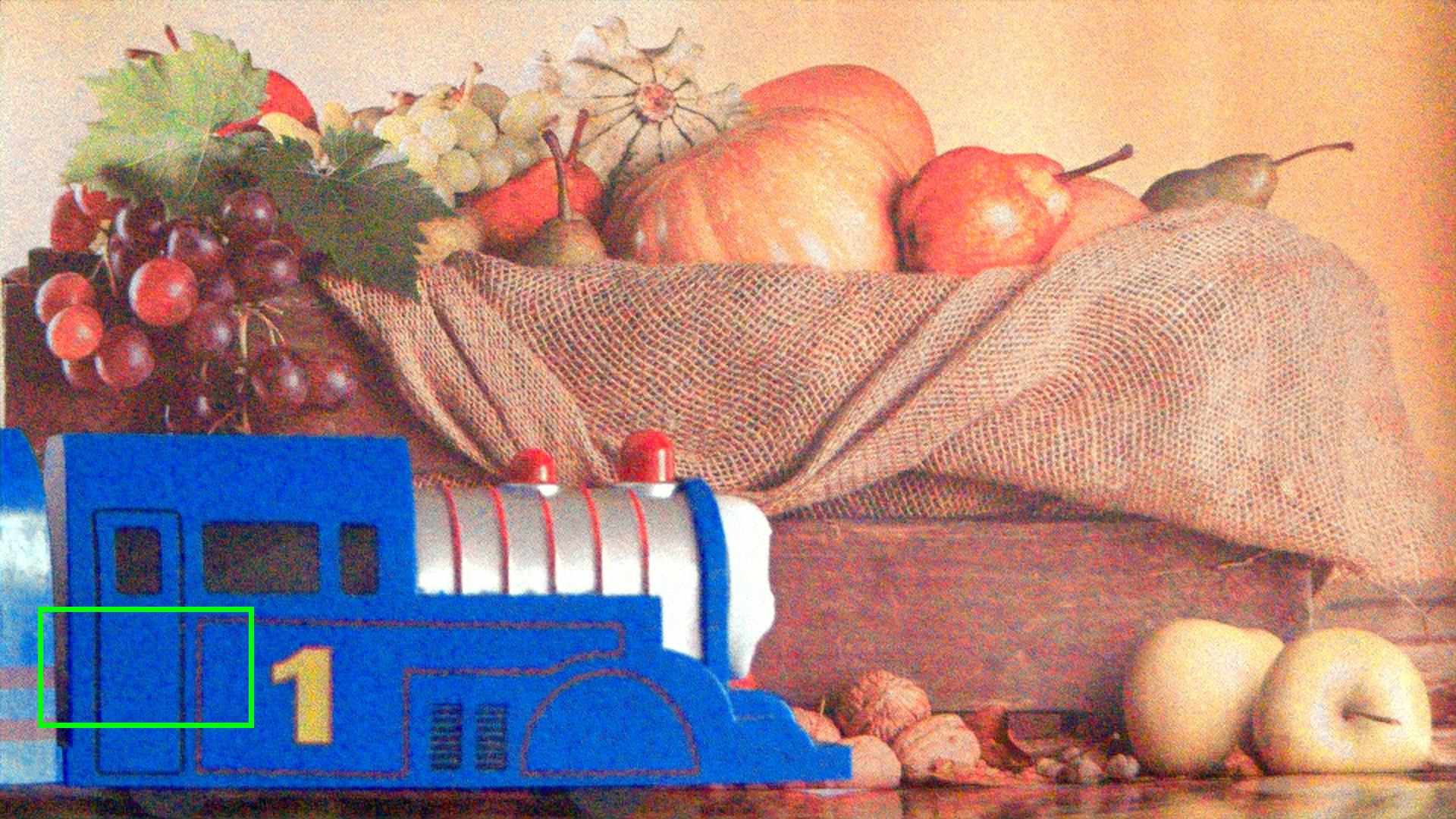}
		\caption{B-DnCNN \\ 26.93 / 0.637}
		\label{fig:crvd_9_3_iso25600:b_dncnn_rect}
	\end{subfigure}
	\begin{subfigure}{0.18\textwidth}
	    \captionsetup{justification=centering}
		\includegraphics[width=\textwidth]{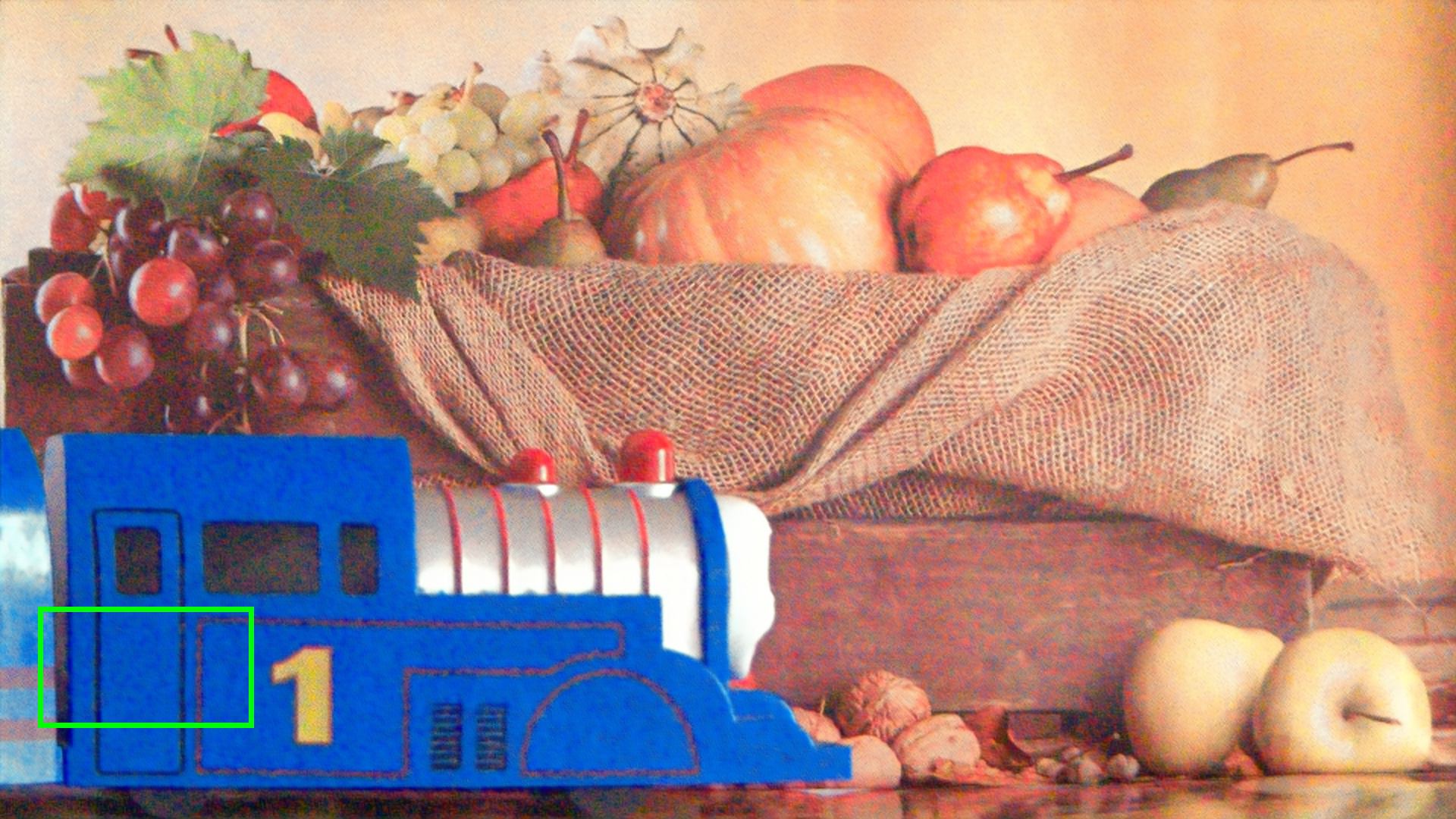}
		\caption{R2R \\ 29.64 / 0.820}
		\label{fig:crvd_9_3_iso25600:r2r_rect}
	\end{subfigure}
	\begin{subfigure}{0.18\textwidth}
	    \captionsetup{justification=centering}
		\includegraphics[width=\textwidth]{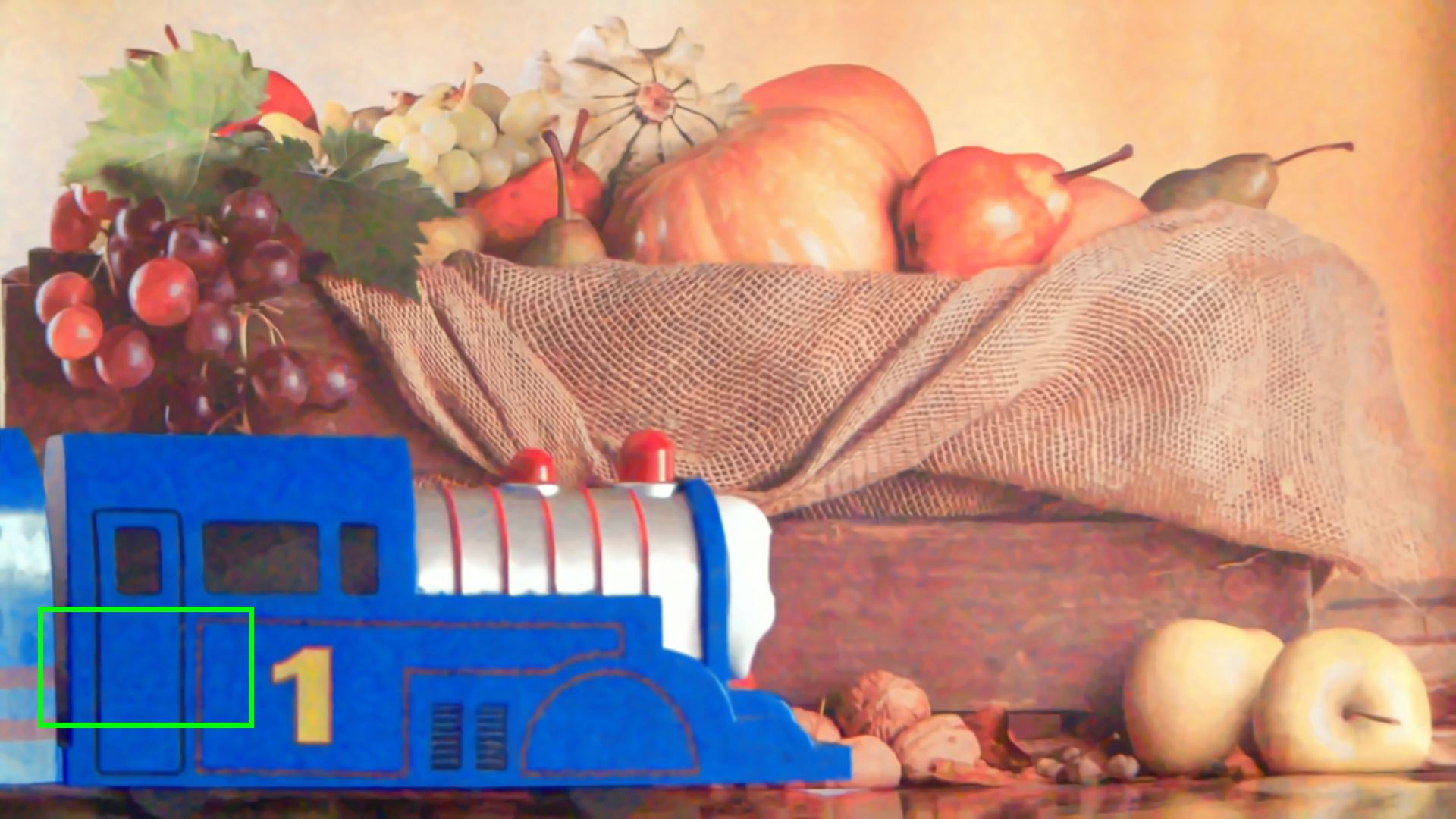}
		\caption{BM3D-O \\ 31.18 / 0.902}
		\label{fig:crvd_9_3_iso25600:bm3d_opt_rect}
	\end{subfigure}
	\begin{subfigure}{0.18\textwidth}
	    \captionsetup{justification=centering}
		\includegraphics[width=\textwidth]{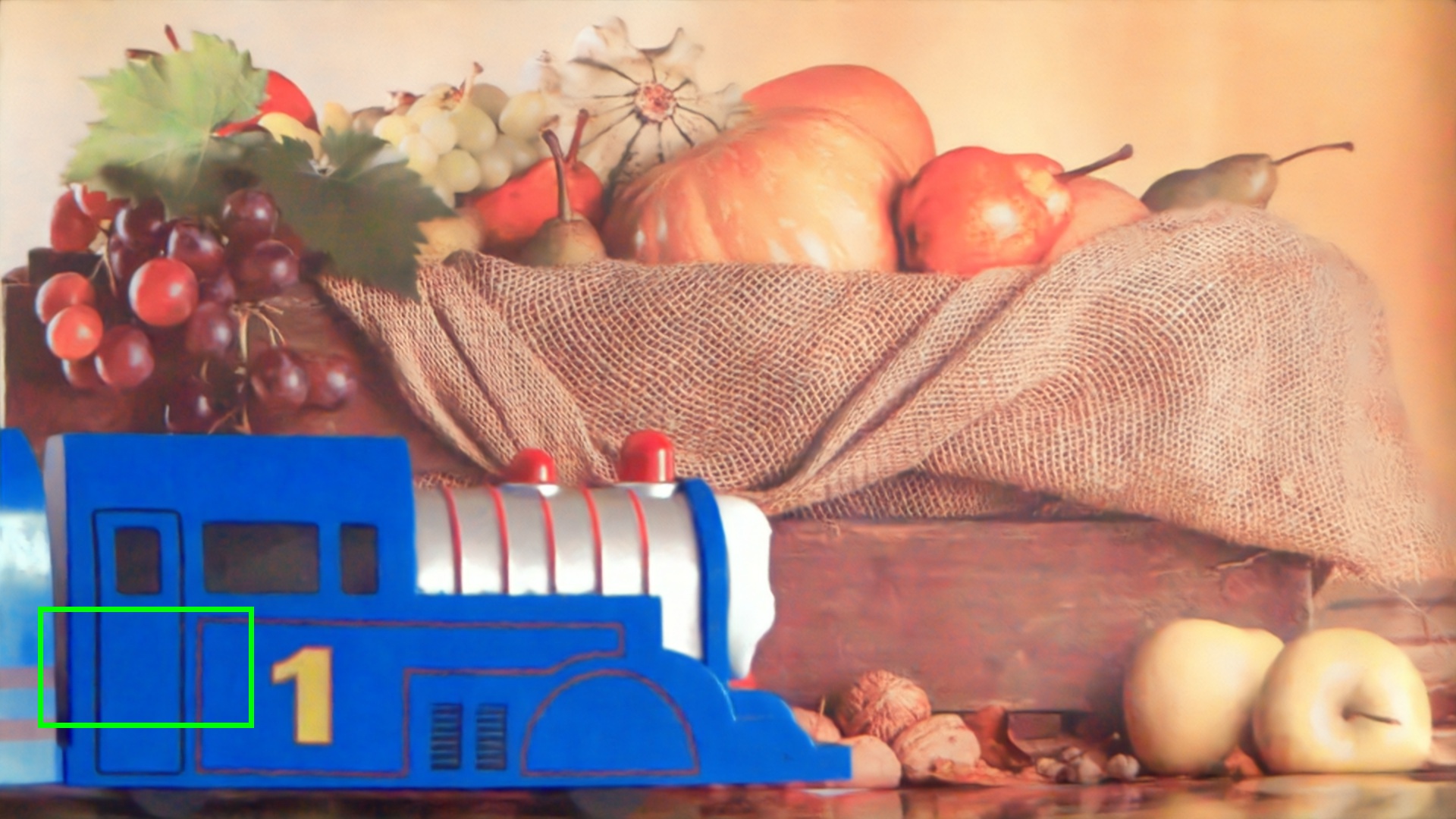}
		\caption{PC-UNet \\ 32.20 / 0.928}
		\label{fig:crvd_9_3_iso25600:pc_unet_rect}
	\end{subfigure}
	\begin{subfigure}{0.18\textwidth}
	    \captionsetup{justification=centering}
		\includegraphics[width=\textwidth]{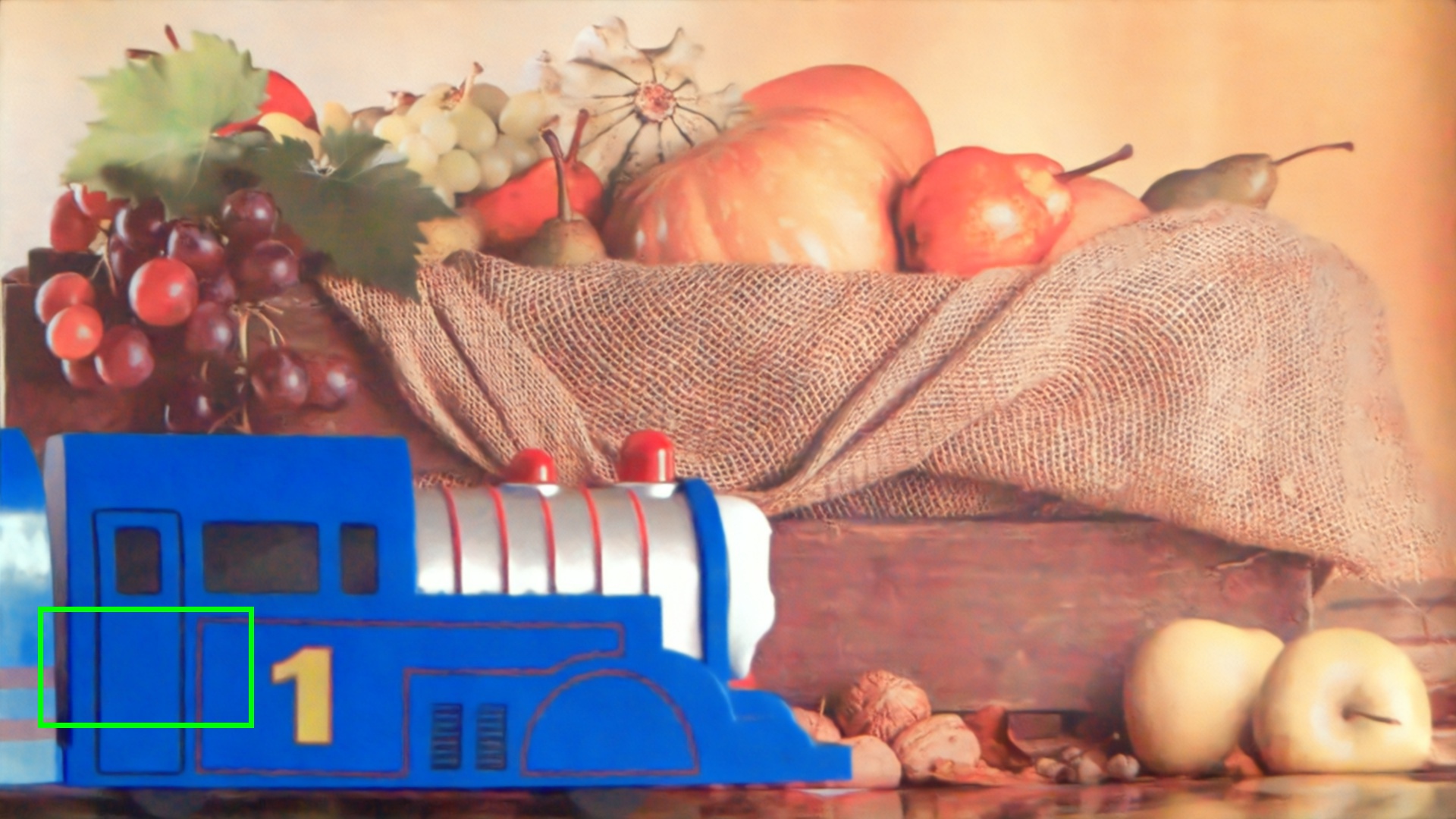}
		\caption{PC-DnCNN \\ 32.04 / 0.927}
		\label{fig:crvd_9_3_iso25600:pc_dncnn_rect}
	\end{subfigure}
	\begin{subfigure}{0.18\textwidth}
	    \captionsetup{justification=centering}
		\includegraphics[width=\textwidth]{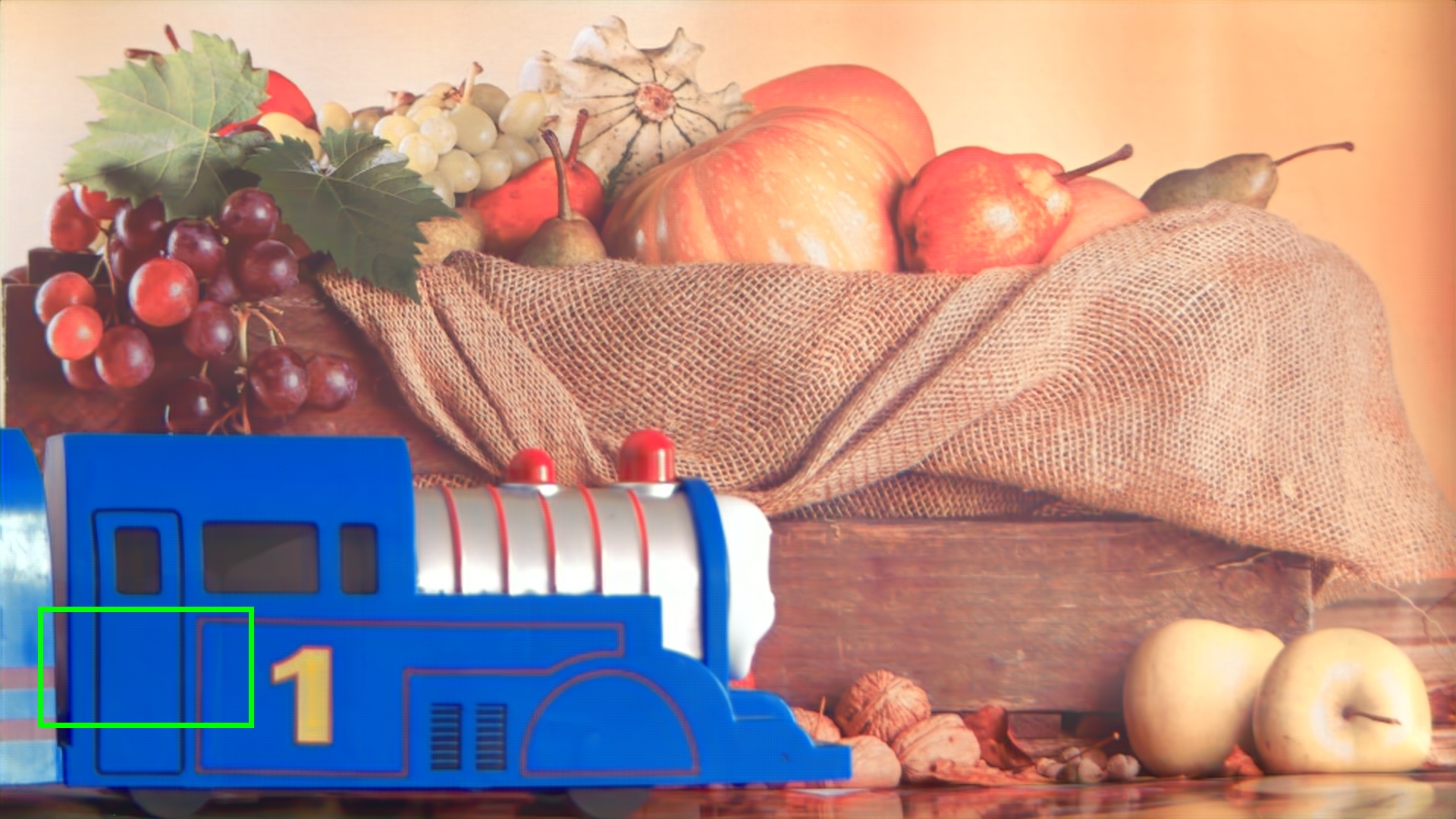}
		\caption{Clean \newline }
		\label{fig:crvd_9_3_iso25600:clean_rect}
	\end{subfigure}
	\begin{subfigure}{0.18\textwidth}
	    \captionsetup{justification=centering}
		\includegraphics[width=\textwidth]{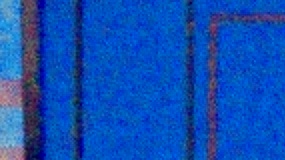}
		\caption{Noisy}
		\label{fig:crvd_9_3_iso25600:noisy_crop}
	\end{subfigure}
	\begin{subfigure}{0.18\textwidth}
	    \captionsetup{justification=centering}
		\includegraphics[width=\textwidth]{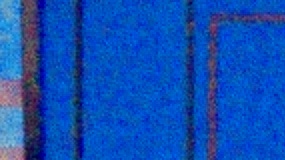}
		\caption{N2N}
		\label{fig:crvd_9_3_iso25600:n2n_crop}
	\end{subfigure}
	\begin{subfigure}{0.18\textwidth}
	    \captionsetup{justification=centering}
		\includegraphics[width=\textwidth]{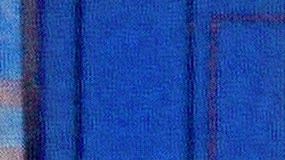}
		\caption{B2U}
		\label{fig:crvd_9_3_iso25600:b2u_crop}
	\end{subfigure}
	\begin{subfigure}{0.18\textwidth}
	    \captionsetup{justification=centering}
		\includegraphics[width=\textwidth]{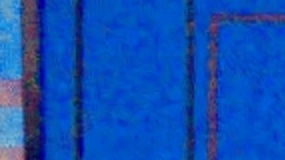}
		\caption{BM3D}
		\label{fig:crvd_9_3_iso25600:bm3d_crop}
	\end{subfigure}
	\begin{subfigure}{0.18\textwidth}
	    \captionsetup{justification=centering}
		\includegraphics[width=\textwidth]{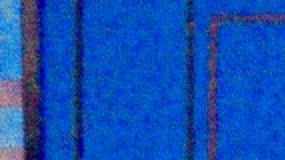}
		\caption{B-DnCNN}
		\label{fig:crvd_9_3_iso25600:b_dncnn_crop}
	\end{subfigure}
	\begin{subfigure}{0.18\textwidth}
	    \captionsetup{justification=centering}
		\includegraphics[width=\textwidth]{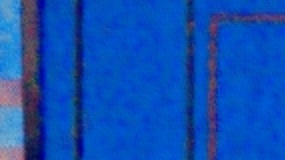}
		\caption{R2R}
		\label{fig:crvd_9_3_iso25600:r2r_crop}
	\end{subfigure}
	\begin{subfigure}{0.18\textwidth}
	    \captionsetup{justification=centering}
		\includegraphics[width=\textwidth]{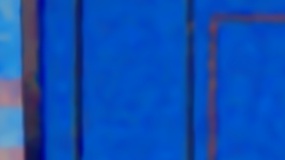}
		\caption{BM3D-O}
		\label{fig:crvd_9_3_iso25600:bm3d_opt_crop}
	\end{subfigure}
	\begin{subfigure}{0.18\textwidth}
	    \captionsetup{justification=centering}
		\includegraphics[width=\textwidth]{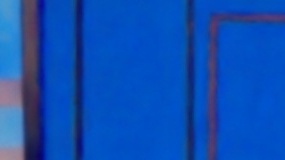}
		\caption{PC-UNet}
		\label{fig:crvd_9_3_iso25600:pc_unet_crop}
	\end{subfigure}
	\begin{subfigure}{0.18\textwidth}
	    \captionsetup{justification=centering}
		\includegraphics[width=\textwidth]{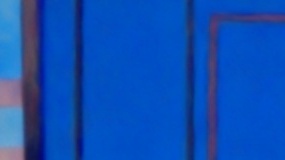}
		\caption{PC-DnCNN}
		\label{fig:crvd_9_3_iso25600:pc_dncnn_crop}
	\end{subfigure}
	\begin{subfigure}{0.18\textwidth}
	    \captionsetup{justification=centering}
		\includegraphics[width=\textwidth]{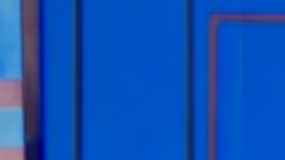}
		\caption{Clean}
		\label{fig:crvd_9_3_iso25600:clean_crop}
	\end{subfigure}
	\caption{Denoising examples with two types of noise. The first four rows show frame 13 of the sequence \emph{salsa} in the DAVIS dataset with correlated Gaussian noise with $\sigma = 20$ and $k = 3$. The last four rows present frame 4 of scene 10 in the CRVD dataset with IS0 25600. As can be seen, in both experiments, oracle BM3D leaves a substantial amount of low-frequency noise unfiltered. In addition, it produces blurred output for the DAVIS frame. Other algorithms, except ours (PC-UNet and PC-DnCNN), fail to remove the noise, while B2U loses stability during the training.}
	\label{fig:davis_22_1_s20_k3_crvd_9_3_iso25600}
\end{figure*}

\subsection{Real-World Noise Removal}
Real-world noise refers to a particular sensor whose model is unknown, and it's distribution may vary with sensor parameters such as ISO, aperture, exposure time, etc.. Finding a dataset for this evaluation is a challenging task. To the best of our knowledge, there are no burst or video datasets with such noise that include ground truth images. For example, it is impossible to use the popular SIDD~\cite{abdelhamed2018high}, DND~\cite{plotz2017benchmarking}, CC~\cite{nam2016holistic}, and PolyU~\cite{xu2018real} datasets, as they only contain single images and not bursts. 

For conducting a real-world image denoising experiment, we use the CRVD~\cite{yue2020supervised} dataset, which consists of 11 groups of noisy pictures taken in a photo laboratory and their ground truth counterparts. Each group captures a different scene, each scene is captured 7 times, there is some movement between each capture.
Each of these groups of 7 images can be considered as an artificial video sequence.
However, the movements of objects and changes in lighting captured in these artificial sequences are incomparably sharper than in a typical video or an image burst. Examples of such artificial video sequences are presented in figure~\ref{fig:crvd_example_s5_s7} in Appendix~\ref{app:additional_results}. 

When applying the patch-craft framework, it is better to avoid sequences with sharp movements and severe lighting changes since the latter makes patch matching difficult. With that in mind, it is interesting that our framework achieves favorable results even when trained on a small and challenging dataset. Thus, among other things, this experiment indicates the robustness of the proposed method.

Since the CRVD dataset is small, we augment it by replicating each sequence 7 times, 
where in each replica a different image is used as a middle frame.
Then, similarly to the correlated Gaussian denoising experiment, we use the middle frames as network inputs, and use the 6 surrounding frames for building the patch-craft targets. 
We test the network using the same 77 CRVD frames by comparing their output with the ground truth images.
Note that the network can not overfit the ground truth images, as they are not available during  training.  

Since the competing methods are not designed for training on CRVD, we adapt the CRVD dataset 
in such a way that each method trains in the conditions close to the ones described in its paper.
R2R uses a set containing 400 images of size $180 \times 180$ pixels, augmenting it with scaling by 4 factors (1, 0.9, 0.8, 0.7). Thus, for training R2R, we use $4 \times 400$ random crops of CRVD images, each crop of size $180 \times 180$. Note that we disable the scaling augmentation since scaling may affect the noise statistics. Also, we choose $\alpha = 2$ which leads to the best denoising results. Unlike R2R, N2N and B2U methods train their networks using $44,328$ images from ImageNet~\cite{deng2009imagenet}. For N2N and B2U we create $44,328$ random crops of the CRVD images, each of size $256 \times 256$. 

Table~\ref{tab:real_im_n_res} summarizes the denoising performance for different ISO values. A visual comparison of the denoised images is shown in Figure~\ref{fig:davis_22_1_s20_k3_crvd_9_3_iso25600} and Figures~\ref{fig:crvd_8_5_iso25600_2_3_iso25600},~\ref{fig:crvd_5_2_iso12800_10_5_iso12800},~and~\ref{fig:crvd_1_5_iso6400_0_4_iso3200} in Appendix~\ref{app:additional_results}. These results lead to a similar conclusion as in the correlated Gaussian denoising experiment: The current SoTA self-supervised methods that we compare with face difficulties in training when the contaminating noise is correlated, and this difficulty strengthens with ISO. For high ISO values, our framework outperforms the O-BM3D in terms of PSNR and SSIM, where the latter tends to leave a noticeable amount of low-frequency noise unfiltered. 

\section{Conclusion}
Recent literature pays relatively little attention to the problem of correlated noise reduction in images, probably due to its toughness. Such methods can be used, for instance, for real-world noise reduction in sRGB color space\footnote{Note that some self-supervised learning methods, including R2R, B2U, and N2N, do succeed and show good denoising performance in \emph{raw-RGB} color space, since the noise in this space has lower spatial and cross-channel correlations~\cite{nam2016holistic,yue2020supervised}. However, this is not the case in sRGB.}, since such noise is usually correlated. This paper proposes a novel self-supervised framework for training a denoiser where the contaminating noise is spatially and cross-channel correlated. The proposed framework relies on the availability of bursts or short video sequences of noisy frames. Our method applies patch matching for building patch-craft images and employs them as training targets. We present a statistical analysis of the target noise that leads to excluding of faulty image pairs from the training set, thereby boosting the obtained denoising performance. 
The proposed framework shows outstanding denoising results compared with the recent SoTA self-supervised training algorithms.\footnote{The code reproducing the results of this paper is available at \href{https://github.com/grishavak/pcst}{https://github.com/grishavak/pcst}.}


{\small
\bibliographystyle{ieee_fullname}
\bibliography{egbib}
}

\appendix

\section{Proof of Lemma~\ref{lem:sgd}}
\label{app:proof_sgd}
\begin{proof}
We begin with rewriting the expression for $\nabla_{\mathbf{\theta}} \tilde{l}$, 
\begin{equation}
    \begin{split}
        \nabla_{\mathbf{\theta}} \tilde{l} & = \nabla_\theta^Tf_{\mathbf{\theta}}\left(\mathbf{y}\right)\left(f_{\mathbf{\theta}}\left(\mathbf{y}\right) - \mathbf{\tilde{x}}\right) = \\
        & = \nabla_\theta^Tf_{\mathbf{\theta}}\left(\mathbf{y}\right)\left(f_{\mathbf{\theta}}\left(\mathbf{y}\right) - \left(\mathbf{x} + \mathbf{w}\right)\right) = \\
        & = \nabla_\theta^Tf_{\mathbf{\theta}}\left(\mathbf{y}\right)\left(f_{\mathbf{\theta}}\left(\mathbf{y}\right) - \mathbf{x}\right) - \nabla_\theta^Tf_{\mathbf{\theta}}\left(\mathbf{y}\right)\mathbf{w} = \\
        & = \nabla_{\mathbf{\theta}} l - \nabla_\theta^Tf_{\mathbf{\theta}}\left(\mathbf{y}\right)\mathbf{w} \;.
    \end{split}
\end{equation}
We proceed to calculating the expectation of $\nabla_{\mathbf{\theta}} \tilde{l}$, 
\begin{equation}
\label{eq:l_tilde_exp}
    \begin{split}
        \E\left[\nabla_{\mathbf{\theta}} \tilde{l}\right] & = \E\left[\E\left[\left.\nabla_{\mathbf{\theta}} \tilde{l}\right\vert\mathbf{x},\mathbf{z}\right]\right] = \\
        & = \E\left[\E\left[\left.\nabla_{\mathbf{\theta}} l - \nabla_\theta^Tf\left(\mathbf{y}\right)\mathbf{w}\right\vert\mathbf{x},\mathbf{z}\right]\right] = \\
        & = \E\left[\E\left[\left.\nabla_{\mathbf{\theta}} l \right\vert\mathbf{x},\mathbf{z}\right] -  \E\left[\left.\nabla_\theta^Tf\left(\mathbf{y}\right)\mathbf{w}\right\vert\mathbf{x},\mathbf{z}\right]\right] = \\
        & = \E\left[\nabla_{\mathbf{\theta}} l -  \nabla_\theta^Tf\left(\mathbf{y}\right)\E\left[\left.\mathbf{w}\right\vert\mathbf{x},\mathbf{z}\right]\right] = \\
        & = \E\left[\nabla_{\mathbf{\theta}} l -  \nabla_\theta^Tf\left(\mathbf{y}\right)\E\left[\mathbf{w}\right]\right] = \\
        & = \E\left[\nabla_{\mathbf{\theta}} l -  \nabla_\theta^Tf\left(\mathbf{y}\right)\cdot 0\right] = \\
        & = \E\left[\nabla_{\mathbf{\theta}} l\right] = \nabla_{\mathbf{\theta}} L\;.
    \end{split}
\end{equation}
The first equality, $\E\left[\nabla_{\mathbf{\theta}} \tilde{l}\right] = \E\left[\E\left[\left.\nabla_{\mathbf{\theta}} \tilde{l}\right\vert\mathbf{x},\mathbf{z}\right]\right]$, is correct by the law of total expectation, and $\E\left[\left.\mathbf{w}\right\vert\mathbf{x},\mathbf{z}\right] = \E\left[\mathbf{w}\right]$ is true due to the independence of $\mathbf{w}$ in $\mathbf{x}$ and $\mathbf{z}$. 
Finally, we get
\begin{equation}
    \begin{split}
        Bias\left[\nabla_{\mathbf{\theta}} \tilde{l}\right] & = \E\left[\nabla_{\mathbf{\theta}} \tilde{l}\right] - \nabla_{\mathbf{\theta}} L = 0 \;,
    \end{split}
\end{equation}
which completes the proof.
\end{proof}

\section{Dependency Reduction -- Mathematical Analysis}
\label{app:dep_red}
In this section we bring proofs of statements used in Section~\ref{sec:dep_red}, organized as follows. In Section~\ref{app:clt} we prove the convergence of $s_{\mathbf{y}, \mathbf{r}}$ to a Gaussian distribution, and in Section~\ref{app:mean_drift} we prove our statements regarding the shift of the mean in the case of dependencies of types~(I)~and~(II).

\subsection{Convergence to Gaussian}
\label{app:clt}
Our goal is to prove that $s_{\mathbf{y}, \mathbf{r}}$ convergences in distribution to a Gaussian. We start with the assumption that the ground truth image is a sum of two statistically independent random variables, $\mathbf{x} = \mathbf{\bar{x}} + \mu_x$, where $\mu_x$ is an image mean (a scalar) and $\mathbf{\bar{x}}$ is a zero-mean vector. Similarly, we assume that $\mathbf{w} = \mathbf{\bar{w}} + \mu_w$. Our additional assumption is that $\mathbf{\bar{x}}$, $\mathbf{z}$, and $\mathbf{\bar{w}}$ are $m$\emph{-dependent} in any dimension and follow zero-mean and \emph{stationary} distributions. 
\begin{definition} [$m$-dependent sequences~\cite{billingsley2008probability}]
\ \newline Let $X_1, X_2, \dots$ be a sequence of random variables. The sequence is $\boldsymbol{m}$\textbf{-dependent} if $\left(X_1, \dots, X_i\right)$ and $\left(X_{i+k}, \dots, X_{i+k+l}\right)$ are independent whenever $k>m$.
\end{definition}
By $m$-dependency in any dimension, we mean that $\left(X_{1,1}, \dots, X_{i,j}\right)$ and $\left(X_{i+k, j+r}, \dots, X_{i+k+l, j+r+p}\right)$ are independent if ${\max\{k, r\}>m}$.
For natural images distribution, stationarity and $m$-dependency are common assumptions. Stationarity means translation invariance, and the $m$-dependency assumption implies that each pixel is dependent only on its local neighborhood of radius $m/2$. Indeed, $m$-dependency is the property that allows effective denoisers to have relatively small respective fields. To proceed with the proof, we need the following definition and theorem.

\begin{definition} [Strongly mixing sequences~\cite{billingsley2008probability}]
\ \newline Let $X_1, X_2, \dots$ be a sequence of random variables, and let $\alpha_k$ be a number such that
\begin{equation*}
    \sup \left|P\left(A \cap B\right) - P\left(A\right)P\left(B\right)\right| \le \alpha_k
\end{equation*}
for $A \in \sigma\left(X_1, \dots, X_i\right)$, $B \in \sigma\left(X_{i+k}, X_{i+k+1}, \dots \right)$, and $i \ge 1$, $k \ge 1$. The sequence is said to be 
\textbf{strongly mixing} if ${\alpha_k \longrightarrow 0}$ as ${k \longrightarrow \infty}$.
\end{definition}

\begin{theorem} [Central limit theorem for strongly mixing sequences, Theorem 1.7 in~\cite{ibragimov1962some}]
\label{th:clt_str_mix}
\ \newline Let $X_1, X_2, \dots$ be a stationary and strongly mixing sequence with $\E\left[X_i\right] = 0$ such that for some $\delta > 0$
\begin{equation*}
    \begin{split}
        \E\left[\left|X_i\right|^{2+\delta}\right] < \infty \quad \text{and} \quad \sum_{i=1}^{\infty}\left(\alpha_i\right)^{\frac{\delta}{2 + \delta}} < \infty \;.
    \end{split}
\end{equation*}
Denote $S_n = X_1 + X_2 + \dots + X_n$, then 
\begin{equation*}
    \lim_{n \rightarrow \infty} \frac{\E\left[S_n^2\right]}{n} = \sigma^2 = \E\left[X_1^2\right] + 2\sum_{i = 1}^{\infty}\E\left[X_1X_i\right] < \infty \;.
\end{equation*}
If $\sigma \ne 0$, then $\frac{S_n}{\sigma\sqrt{n}}$ converges in distribution to $\mathcal{N}\left(0, 1\right)$ as $n$ approaches infinity.
\end{theorem}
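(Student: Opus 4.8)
The plan is to prove the theorem by \emph{Bernstein's big-block/small-block method}, the classical route to central limit theorems under weak dependence, splitting the argument into two essentially independent parts: (i) convergence of the normalized variance $\E[S_n^2]/n \to \sigma^2$ together with absolute convergence of the defining series, and (ii) asymptotic normality of $S_n/(\sigma\sqrt n)$. Two quantitative consequences of strong mixing drive everything: a \emph{covariance inequality} of Davydov type, bounding $\mathrm{Cov}(U,V)$ for $U,V$ measurable with respect to $\sigma$-fields separated by a gap $k$, and a \emph{characteristic-function decoupling bound} stating that for such $U,V$ one has $|\E[e^{isU+itV}]-\E[e^{isU}]\E[e^{itV}]| \le C\alpha_k$. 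The moment hypothesis $\E[|X_i|^{2+\delta}]<\infty$ and the summability $\sum_k \alpha_k^{\delta/(2+\delta)}<\infty$ are precisely what is needed to feed these two tools.

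For part (i), I would first apply Davydov's inequality with exponents $p=q=2+\delta$, yielding $|\E[X_1 X_{1+k}]| \le C\,\alpha_k^{1-2/(2+\delta)}\|X_1\|_{2+\delta}^2 = C\,\alpha_k^{\delta/(2+\delta)}\|X_1\|_{2+\delta}^2$; note the exponent matches the summability hypothesis verbatim, so $\sum_{k\ge1}|\E[X_1 X_{1+k}]|<\infty$ and $\sigma^2 = \E[X_1^2] + 2\sum_{k\ge1}\E[X_1 X_{1+k}]$ is well defined and finite. Then by stationarity $\E[S_n^2]/n = \E[X_1^2] + 2\sum_{k=1}^{n-1}(1-k/n)\E[X_1 X_{1+k}]$, and since the summable bound above dominates each summand, dominated convergence gives $\E[S_n^2]/n \to \sigma^2$.

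For part (ii), assuming $\sigma>0$, I would choose block lengths $p_n\to\infty$ (big blocks) and $q_n\to\infty$ (small blocks) with $q_n/p_n\to0$, $p_n/n\to0$, and $k_n\alpha_{q_n}\to0$, where $k_n\approx n/(p_n+q_n)$; such a choice exists because $\alpha_{q}\to0$, so one fixes $q_n$ growing slowly enough for $\alpha_{q_n}$ to decay and then sets $p_n$ in between. Writing $S_n = \sum_{j=1}^{k_n}\xi_j + \sum_j \eta_j + (\text{remainder})$, where $\xi_j$ sums a big block and $\eta_j$ a small block, I would argue three points. First, using part (i) applied to block sums ($\E[\eta_j^2]\sim q_n\sigma^2$) together with the covariance bound across blocks, the small-block and remainder contributions have variance $O(k_nq_n)+O(p_n)=o(n)$, hence vanish in $L^2$ after dividing by $\sqrt n$. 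Second, the characteristic-function decoupling bound, telescoped over the $k_n$ big blocks separated by gaps $q_n$, gives $|\E[e^{it\sum_j\xi_j/\sqrt n}] - \prod_j \E[e^{it\xi_j/\sqrt n}]| \le C k_n\alpha_{q_n}\to0$, so the big blocks are asymptotically independent. Third, for the triangular array of near-independent summands $\xi_j/\sqrt n$ I would verify the Lindeberg--Feller conditions: the variance condition $k_n\E[\xi_1^2]/n\to\sigma^2$ follows from part (i) and $k_np_n/n\to1$, while the Lindeberg condition follows from a Rosenthal-type moment bound $\E|\xi_j|^{2+\delta}=O(p_n^{1+\delta/2})$ via $\sum_j \E[(\xi_j/\sqrt n)^2\mathbf{1}_{\{|\xi_j|>\epsilon\sqrt n\}}] \le \epsilon^{-\delta}(k_np_n/n)(p_n/n)^{\delta/2}\to0$. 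Combining these, the characteristic function of $\sum_j\xi_j/\sqrt n$ converges to $e^{-\sigma^2t^2/2}$, and Slutsky's theorem absorbs the negligible small-block and remainder terms, giving $S_n/(\sigma\sqrt n)\Rightarrow\mathcal N(0,1)$.

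The main obstacle is the simultaneous balancing of the two block scales: the gaps $q_n$ must be large enough that the accumulated decoupling error $k_n\alpha_{q_n}$ vanishes, yet small enough ($q_n/p_n\to0$) that the discarded small blocks stay negligible, while the big blocks $p_n$ must grow fast enough to host their own internal CLT but still satisfy $p_n/n\to0$. Making these requirements consistent under a merely \emph{summable} mixing rate, rather than an explicit polynomial or geometric decay, is the delicate technical heart of the argument; the supporting Davydov covariance inequality, the characteristic-function decoupling estimate, and the Rosenthal-type moment bound for block sums are each standard but nontrivial consequences of strong mixing that must be established or cited to make the blocking scheme go through.
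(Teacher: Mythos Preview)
The paper does not prove this theorem. It is quoted verbatim as ``Theorem 1.7 in \cite{ibragimov1962some}'' and used as a black box: the only in-paper work touching it is the two-line derivation of Corollary~\ref{col:clt_m_dep}, which observes that $m$-dependence forces $\alpha_k=0$ for $k>m$ so the summability hypothesis is trivially met. There is therefore nothing in the paper to compare your proposal against.

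That said, your sketch is the standard Bernstein big-block/small-block route and is essentially correct as an outline of how one proves Ibragimov's theorem. The Davydov covariance bound with exponents $p=q=2+\delta$ is exactly the right tool for part (i), and the exponent match you note is the reason the hypotheses are stated as they are. For part (ii) your three steps are the right ones; the only place where real care is needed is the step you flag yourself, namely obtaining a usable $(2+\delta)$-moment bound for the big-block sums under merely summable mixing. In Ibragimov's original argument this is handled by a truncation of the $X_i$ rather than a direct Rosenthal inequality, but either route works. If your intent was to reproduce the paper's reasoning, you have gone well beyond what the paper does: simply citing the theorem would suffice.
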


It follows from Theorem~\ref{th:clt_str_mix} follows that sum of $m$-dependent stationary sequence converges in distribution to a Gaussian, as stated in the following corollary.
\begin{corollary}
\label{col:clt_m_dep}
Let $X_1, X_2, \dots$ be a stationary and $m$-dependent sequence with $\E\left[X_i\right] = 0$ such that for some ${\delta > 0}$,
${\E\left[\left|X_i\right|^{2+\delta}\right] < \infty}$.
\\Denote ${S_n = X_1 + X_2 + \dots + X_n}$, then 
\begin{equation*}
    \lim_{n \rightarrow \infty} \frac{\E\left[S_n^2\right]}{n} = \sigma^2 = \E\left[X_1^2\right] + 2\sum_{i = 1}^{m}\E\left[X_1X_i\right] < \infty \;.
\end{equation*}
If $\sigma \ne 0$, then $\frac{S_n}{\sigma\sqrt{n}}$ converges in distribution to $\mathcal{N}\left(0, 1\right)$ as $n$ approaches infinity.
\end{corollary}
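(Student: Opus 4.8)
The plan is to obtain Corollary~\ref{col:clt_m_dep} as a direct specialization of Theorem~\ref{th:clt_str_mix}; the only work is to verify that an $m$-dependent stationary sequence meets all of that theorem's hypotheses. The guiding observation is that $m$-dependence is a particularly strong form of strong mixing: the mixing coefficients are not merely decaying but are \emph{eventually identically zero}, which trivializes every tail condition appearing in Theorem~\ref{th:clt_str_mix}.

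First I would show that the sequence is strongly mixing with $\alpha_k = 0$ for every $k > m$. Fix $i \ge 1$ and $k \ge 1$, and take $A \in \sigma(X_1,\dots,X_i)$ and $B \in \sigma(X_{i+k}, X_{i+k+1},\dots)$. When $k > m$, the $m$-dependence hypothesis says the blocks $(X_1,\dots,X_i)$ and $(X_{i+k},X_{i+k+1},\dots)$ are independent, so $P(A\cap B) = P(A)P(B)$ and the supremum defining $\alpha_k$ vanishes; hence we may take $\alpha_k = 0$. For $k \le m$ the trivial bound $|P(A\cap B)-P(A)P(B)| \le 1$ lets us take $\alpha_k \le 1$. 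In particular $\alpha_k \to 0$ as $k\to\infty$, so $\{X_i\}$ is strongly mixing. Next I would check the remaining conditions: stationarity and $\E[X_i]=0$ are assumed, the moment bound $\E[|X_i|^{2+\delta}] < \infty$ for some $\delta>0$ is exactly the corollary's hypothesis, and the summability condition is immediate from the previous step, since
\[
\sum_{i=1}^{\infty}\bigl(\alpha_i\bigr)^{\frac{\delta}{2+\delta}} = \sum_{i=1}^{m}\bigl(\alpha_i\bigr)^{\frac{\delta}{2+\delta}} \le m < \infty .
\]
Thus Theorem~\ref{th:clt_str_mix} applies and yields both $\lim_{n\to\infty}\E[S_n^2]/n = \sigma^2 = \E[X_1^2] + 2\sum_{i=1}^{\infty}\E[X_1 X_i]$ and, when $\sigma\ne 0$, convergence of $S_n/(\sigma\sqrt n)$ to $\mathcal{N}(0,1)$. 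Finally I would truncate the variance series: by $m$-dependence, $X_1$ and $X_i$ become independent once $i$ is large enough, so $\E[X_1 X_i]=\E[X_1]\E[X_i]=0$ for those indices, and the infinite series collapses to the finite sum stated in the corollary.

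I do not anticipate a genuine obstacle here; the argument is a clean unwinding of definitions, and every hard analytic step is borrowed wholesale from Theorem~\ref{th:clt_str_mix}. The one place that needs a little care is index bookkeeping — reconciling the one-sided $\sigma$-algebras in the definition of $\alpha_k$ with the block form of the $m$-dependence definition, and matching the exact cutoff of the limiting variance series to the indexing convention used in Theorem~\ref{th:clt_str_mix} — but this is routine once the eventual vanishing of both $\alpha_k$ and $\E[X_1 X_i]$ is established.
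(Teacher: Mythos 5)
Your proof is correct and follows essentially the same route as the paper: observe that $m$-dependence forces $\alpha_k=0$ for $k>m$, so the summability condition $\sum_k \alpha_k^{\delta/(2+\delta)}<\infty$ of Theorem~\ref{th:clt_str_mix} holds trivially, and the limiting variance series truncates because $\E[X_1X_i]=0$ for large $i$. If anything you are slightly more explicit than the paper (which omits the verification of strong mixing and the truncation of the series), and your remark about index bookkeeping is well taken, since $m$-dependence strictly gives $\E[X_1X_i]=0$ only for $i>m+1$, so the cutoff in the stated sum is off by one under the paper's convention.
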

\begin{proof}
To prove it, it suffices to show that from the $m$-dependency, it follows that for any $\delta > 0$,
\begin{equation}
    \sum_{i=1}^{\infty}\left(\alpha_i\right)^{\frac{\delta}{2 + \delta}} < \infty \;.
\end{equation}
Note that for $m$-dependent sequences, $\alpha_k = 0$ for any ${k > m}$. Thus, the sum becomes finite and thereby bounded,
\begin{equation}
    \begin{split}
        \sum_{k=1}^{\infty}\left(\alpha_k\right)^{\frac{\delta}{2 + \delta}} = \sum_{k=1}^{m}\left(\alpha_k\right)^{\frac{\delta}{2 + \delta}} < \infty \;.
    \end{split}
\end{equation}
\end{proof}

Armed with Corollary~\ref{col:clt_m_dep}, we return to the proof of $s_{\mathbf{y}, \mathbf{r}}$ convergence. Assuming that $\mu_x$ and $\mu_w$ are known, $s_{\mathbf{y}, \mathbf{r}}$ is defined as
\begin{equation}
\label{eq:s_yr}
    s_{\mathbf{y}, \mathbf{r}} = \frac{1}{n^2}\sum_{i, j = 1}^{n} \left(y_{i,j} - \mu_x\right)\left(r_{i,j} - \mu_w\right).
\end{equation}
Note that in practice, $\mu_x$ and $\mu_w$ are unavailable, but they can be estimated empirically,
\begin{equation}
    \mu_{x} \approx \frac{1}{n^2}\sum_{i, j = 1}^{n} y_{i,j} \;, \quad \mu_w \approx \frac{1}{n^2}\sum_{i, j = 1}^{n} r_{i,j}.
\end{equation}
\begin{proposition}
Let $s_{\mathbf{y}, \mathbf{r}}$ be as defined in equation~\ref{eq:s_yr}.
Then, as $n \longrightarrow \infty$, 
\begin{equation*}
    \frac{n}{\sigma}\left(s_{\mathbf{y}, \mathbf{r}} + \sigma_z^2\right) \longrightarrow \mathcal{N}\left(0, 1\right) \;,
\end{equation*}
where $\sigma_z^2 = \E\left[z_{i,j}^2\right]$ and 
\begin{equation*}
    \sigma^2 = \lim_{n \rightarrow \infty} \E\left[s_{\mathbf{y}, \mathbf{w}}^2\right] \;.
\end{equation*}
\end{proposition}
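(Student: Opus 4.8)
The plan is to express $s_{\mathbf{y},\mathbf{r}}+\sigma_z^2$ as the suitably normalized sum of a zero-mean, stationary, $m$-dependent random field, and then apply Corollary~\ref{col:clt_m_dep} in its two-dimensional form. First I would rewrite the summands in terms of the elementary fields. Since $\mathbf{y}=\mathbf{x}+\mathbf{z}$ and $\mathbf{r}=\mathbf{\tilde{x}}-\mathbf{y}=\mathbf{w}-\mathbf{z}$, while $\mathbf{x}=\mathbf{\bar{x}}+\mu_x$, $\mathbf{w}=\mathbf{\bar{w}}+\mu_w$ and $\E[\mathbf{z}]=\mathbf{0}$, we have $y_{i,j}-\mu_x=\bar{x}_{i,j}+z_{i,j}$ and $r_{i,j}-\mu_w=\bar{w}_{i,j}-z_{i,j}$. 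Setting
\begin{equation*}
X_{i,j} := \left(\bar{x}_{i,j}+z_{i,j}\right)\left(\bar{w}_{i,j}-z_{i,j}\right)+\sigma_z^2,
\end{equation*}
we get $s_{\mathbf{y},\mathbf{r}}+\sigma_z^2=\frac{1}{n^2}\sum_{i,j=1}^{n}X_{i,j}$, so that $\frac{n}{\sigma}\left(s_{\mathbf{y},\mathbf{r}}+\sigma_z^2\right)=\frac{1}{\sigma\sqrt{n^2}}\sum_{i,j=1}^{n}X_{i,j}$ --- exactly the object governed by Corollary~\ref{col:clt_m_dep} with $N=n^2$ summands.

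Next I would check the hypotheses for $\{X_{i,j}\}$. Stationarity is inherited, since $X_{i,j}$ is a fixed polynomial in the jointly stationary triple $(\bar{x}_{i,j},z_{i,j},\bar{w}_{i,j})$ plus a constant. The $m$-dependence in both dimensions follows because $X_{i,j}$ is a \emph{pointwise} function of those three mutually independent, each $m$-dependent fields; hence $X_{i,j}$ and $X_{i',j'}$ are independent whenever $\max\{|i-i'|,|j-j'|\}>m$. Expanding the product and using mutual independence and zero means of $\mathbf{\bar{x}},\mathbf{z},\mathbf{\bar{w}}$, the three cross terms $\bar{x}_{i,j}\bar{w}_{i,j}$, $\bar{x}_{i,j}z_{i,j}$, $z_{i,j}\bar{w}_{i,j}$ have zero expectation, while the term $-z_{i,j}^2$ contributes $-\sigma_z^2$ in expectation, which cancels the additive constant, so $\E[X_{i,j}]=0$ (this also re-derives $\E[s_{\mathbf{y},\mathbf{r}}]=-\sigma_z^2$). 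Finally, $X_{i,j}$ is a degree-two polynomial in the elementary variables, so $\E[|X_{i,j}|^{2+\delta}]<\infty$ for some $\delta>0$ as soon as $\mathbf{\bar{x}},\mathbf{z},\mathbf{\bar{w}}$ possess finite moments of order $4+2\delta$, which holds for images with bounded pixel range and for the noise models considered here.

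With the hypotheses verified I would apply the CLT and identify the variance. By stationarity and $m$-dependence, writing $c(a,b)=\E[X_{0,0}X_{a,b}]$ (supported on $|a|,|b|\le m$),
\begin{equation*}
\E\left[\left(\sum_{i,j=1}^{n}X_{i,j}\right)^{2}\right]=\sum_{|a|\le m}\sum_{|b|\le m}(n-|a|)(n-|b|)\,c(a,b)=n^{2}\sum_{|a|,|b|\le m}c(a,b)+O(n),
\end{equation*}
so $\sigma^2:=\lim_{n\to\infty}n^{-2}\,\E\big[(\sum_{i,j}X_{i,j})^2\big]$ exists, is finite, and equals $\sum_{|a|,|b|\le m}c(a,b)$; this is the quantity denoted $\sigma^2$ in the statement (up to the evident rescaling of $s_{\mathbf{y},\mathbf{r}}$). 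Assuming $\sigma\ne 0$, Corollary~\ref{col:clt_m_dep} then yields $\frac{1}{\sigma\sqrt{n^2}}\sum_{i,j}X_{i,j}\longrightarrow\mathcal{N}(0,1)$, i.e.\ $\frac{n}{\sigma}\left(s_{\mathbf{y},\mathbf{r}}+\sigma_z^2\right)\longrightarrow\mathcal{N}(0,1)$.

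The main obstacle is that Corollary~\ref{col:clt_m_dep} is stated for a one-dimensional sequence, whereas our index set is the square $\{1,\dots,n\}^2$, and no total order on $\mathbb{Z}^2$ keeps the reindexed sequence $m$-dependent with $m$ fixed as $n\to\infty$ (e.g.\ in a row-major or boustrophedon order, cells adjacent across rows end up $\Theta(n)$ apart). The clean remedy is a blocking (Bernstein) argument: partition the $X_{i,j}$ into large square blocks separated by corridors of width $>m$, so that block-sums over non-adjacent blocks are exactly independent; show the corridor contribution is negligible in $L^2$; and apply the one-dimensional $m$-dependent CLT to the block-sums. This is precisely the standard route by which Theorem~\ref{th:clt_str_mix} and Corollary~\ref{col:clt_m_dep} extend to stationary $m$-dependent random fields, so it may alternatively be invoked as a known multidimensional $m$-dependent CLT. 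A minor add-on --- not needed for the stated version, which uses the true $\mu_x,\mu_w$ --- is that replacing $\mu_x,\mu_w$ by their empirical means perturbs $s_{\mathbf{y},\mathbf{r}}$ by an $o_P(n^{-1})$ term, leaving the limit unchanged by Slutsky's theorem.
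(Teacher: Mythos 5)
Your algebraic reduction is exactly the paper's: you write $\mathbf{r}=\mathbf{w}-\mathbf{z}$, form the summands $X_{i,j}=\left(\bar{x}_{i,j}+z_{i,j}\right)\left(\bar{w}_{i,j}-z_{i,j}\right)+\sigma_z^2$ (the paper's $\xi_{i,j}$), and verify zero mean, stationarity, $m$-dependence in both coordinates, and the $(2+\delta)$-moment bound in the same way. Where you diverge is the passage from the two-dimensional field to a one-dimensional CLT. The paper collapses each row into $\zeta_i=\frac{1}{n}\sum_{j}\xi_{i,j}$, observes that $\left(\zeta_1,\dots,\zeta_n\right)$ is an $m$-dependent sequence, and applies Corollary~\ref{col:clt_m_dep} (via Theorem~\ref{th:clt_str_mix}) directly to it; you instead note that no fixed linear reindexing of the square keeps the dependence range bounded and run a Bernstein blocking argument (equivalently, invoke a multidimensional $m$-dependent CLT). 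Your route is the more careful one: the paper's $\zeta_i$ are averages of $n$ terms, so their marginal law and variance change with $n$, making the family a triangular array to which the theorem as stated (a fixed stationary sequence with $S_n/\left(\sigma\sqrt{n}\right)$ normalization) applies only after an implicit rescaling that the paper does not spell out; the blocking argument sidesteps this entirely. You also add two items the paper omits: an explicit identification of the limiting variance as the absolutely summable covariance sum $\sum_{|a|,|b|\le m}c(a,b)$, which clarifies the somewhat loosely stated $\sigma^2=\lim_n\E\left[s_{\mathbf{y},\mathbf{w}}^2\right]$ in the proposition, and a Slutsky argument covering the replacement of $\mu_x,\mu_w$ by empirical means. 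The paper's approach buys brevity by reusing the one-dimensional corollary verbatim; yours buys rigor at the cost of carrying out (or citing) the standard blocking construction.
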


\begin{proof}
We begin by rewriting the expression for $\mathbf{r}$,
\begin{equation}
\label{eq:r_w_m_z}
    \mathbf{r} = \mathbf{\tilde{x}} - \mathbf{y} = \mathbf{x} + \mathbf{w} - \mathbf{x} - \mathbf{z} = \mathbf{w} - \mathbf{z} \;.
\end{equation}
Substituting $\mathbf{r}$ into the expression for $s_{\mathbf{y}, \mathbf{r}}$ we get
\begin{equation}
    \begin{split}
        & s_{\mathbf{y}, \mathbf{r}} + \sigma_z^2 = \frac{1}{n^2}\sum_{i, j = 1}^{n} \left(y_{i,j} - \mu_x\right)\left(r_{i,j} - \mu_w\right) + \sigma_z^2 = \\
        & = \frac{1}{n^2}\sum_{i, j = 1}^{n} \left(\left(\bar{x}_{i,j} + z_{i,j}\right)\left(\bar{w}_{i,j} - z_{i,j}\right) + \sigma_z^2\right) = \\
        & = \frac{1}{n^2}\sum_{i, j = 1}^{n} \xi_{i,j} = \frac{1}{n}\sum_{i = 1}^{n} \left(\frac{1}{n}\sum_{j = 1}^{n}\xi_{i,j}\right) = \frac{1}{n}\sum_{i = 1}^{n} \zeta_{i}\;,
    \end{split}
\end{equation}
where ${\xi_{i,j}  = \left(\bar{x}_{i,j} + z_{i,j}\right)\left(\bar{w}_{i,j} - z_{i,j}\right) + \sigma_z^2}$ and  ${\zeta_{i} = \frac{1}{n}\sum_{j = 1}^{n} \xi_{i,j}}$. Clearly, sequence $\left(\xi_{1,1}, \dots, \xi_{n,n}\right)$ is $m$-dependent in any dimension. 
Therefore $\left(\zeta_{1}, \dots, \zeta_{n}\right)$ is $m$-dependent. In addition, $\E\left[\zeta_{i}\right] = 0$ as $\bar{x}_{i,j}$, $z_{i,j}$, and $\bar{w}_{i,j}$ are zero-mean and independent. Also, $\E\left[\left|\zeta_{i}\right|^{2+\delta}\right] < \infty$ since $\bar{x}_{i,j}$, $z_{i,j}$, and $\bar{w}_{i,j}$ are bounded. Thus, according to Theorem~\ref{th:clt_str_mix}
\begin{equation*}
    \frac{n}{\sigma}\left(s_{\mathbf{y}, \mathbf{r}} + \sigma_z^2\right) \longrightarrow \mathcal{N}\left(0, 1\right) \;.
\end{equation*}
\end{proof}

\subsection{Dependencies of Type~(I)~and~(II)}
\label{app:mean_drift}
We now turn to we prove our statements regarding the type~(I) and type~(II) dependencies. Through the section, we denote by $\sigma_{\alpha,\beta}$ the scalar covariances computed over pairs of images $\alpha, \beta \in \left\{\mathbf{x}, \mathbf{\tilde{x}}, \mathbf{y}, \mathbf{w}, \mathbf{z}\right\}$, where $\sigma_{\alpha}^2$ stands for the scalar variances ($\sigma_{\alpha} = \sqrt{\sigma_{\alpha,\alpha}}$). We start with the proof that type~(I) dependency implies ${\E\left[s_{\mathbf{y}, \mathbf{r}}\right] > -\sigma_z^2}$. Recall that type~(I) dependency is characterized by a positive correlation between $\mathbf{z}$ and $\mathbf{w}$. In addition, assume that there is no dependency of type~(II), i.e., $\mathbf{x}$ and $\mathbf{w}$ are independent.
\begin{proposition}
Suppose that $\mathbf{w}$ is independent of $\mathbf{x}$, but there is a dependency between $\mathbf{w}$ and $\mathbf{z}$ such that $\sigma_{z,w} > 0$. Then  ${\E\left[s_{\mathbf{y}, \mathbf{r}}\right] > -\sigma_z^2}$.
\end{proposition}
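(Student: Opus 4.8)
The plan is to evaluate $\E\!\left[s_{\mathbf{y},\mathbf{r}}\right]$ directly from the definition in equation~\eqref{eq:s_yr}, using the identity $\mathbf{r} = \mathbf{w} - \mathbf{z}$ established in equation~\eqref{eq:r_w_m_z} together with the zero-mean and stationarity conventions adopted in this appendix ($\mathbf{x} = \bar{\mathbf{x}} + \mu_x$, $\mathbf{w} = \bar{\mathbf{w}} + \mu_w$, and $\mathbf{z}$ zero-mean). First I would substitute $y_{i,j} - \mu_x = \bar{x}_{i,j} + z_{i,j}$ and $r_{i,j} - \mu_w = \bar{w}_{i,j} - z_{i,j}$ into the sum, so that
\begin{equation*}
    \E\!\left[s_{\mathbf{y},\mathbf{r}}\right] = \frac{1}{n^2}\sum_{i, j = 1}^{n}\E\!\left[\left(\bar{x}_{i,j}+z_{i,j}\right)\left(\bar{w}_{i,j}-z_{i,j}\right)\right].
\end{equation*}

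Next I would expand each product into its four terms $\bar{x}_{i,j}\bar{w}_{i,j}$, $-\bar{x}_{i,j}z_{i,j}$, $z_{i,j}\bar{w}_{i,j}$ and $-z_{i,j}^2$, and take expectations one at a time. The first two terms vanish: $\E[\bar{x}_{i,j}\bar{w}_{i,j}] = 0$ because $\mathbf{w}$ (hence $\bar{\mathbf{w}}$) is independent of $\mathbf{x}$ (hence $\bar{\mathbf{x}}$) by hypothesis and both are zero-mean, and $\E[\bar{x}_{i,j}z_{i,j}] = 0$ because the noise is (essentially) independent of the clean image — the standing assumption of Sections~\ref{sec:framework}--\ref{sec:stat_analysis} — and again both are zero-mean. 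The third term equals $\E[z_{i,j}w_{i,j}] = \sigma_{z,w}$, independent of $(i,j)$ by stationarity, and the fourth equals $-\sigma_z^2$. Averaging over the $n^2$ identical summands then gives $\E[s_{\mathbf{y},\mathbf{r}}] = \sigma_{z,w} - \sigma_z^2$. Finally, the hypothesis $\sigma_{z,w} > 0$ yields $\E[s_{\mathbf{y},\mathbf{r}}] = \sigma_{z,w} - \sigma_z^2 > -\sigma_z^2$, which is exactly the claim.

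There is no genuinely hard step here; this is essentially a bookkeeping computation. The two points worth stating carefully are: (i) one must invoke the ambient assumption $\mathbf{x}\perp\mathbf{z}$ \emph{in addition} to the proposition's hypothesis $\mathbf{x}\perp\mathbf{w}$, so that every cross term between image and noise drops out; and (ii) in practice $\mu_x,\mu_w$ are replaced by empirical averages, but since the proposition is phrased with the true means (as in equation~\eqref{eq:s_yr}) this plug-in error does not enter, and in any case it is of order $1/n$ and vanishes in the asymptotic regime used for the Gaussian approximation. The companion type~(II) statement $\E[s_{\mathbf{y},\mathbf{r}}] < -\sigma_z^2$ follows from the very same expansion, now with $\mathbf{w}\perp\mathbf{z}$ killing the term $\E[z_{i,j}\bar{w}_{i,j}]$ and the surviving term $\E[\bar{x}_{i,j}\bar{w}_{i,j}] = \sigma_{x,w} < 0$ producing the downward shift.
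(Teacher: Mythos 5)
Your proof is correct and is essentially the paper's own argument: the paper computes $\E\left[s_{\mathbf{y},\mathbf{r}}\right]=\sigma_{y,r}=\sigma_{x+z,\,w-z}=\sigma_{z,w}-\sigma_z^2>-\sigma_z^2$ in one line via bilinearity of covariance, which is exactly your term-by-term expansion of the pixel sum written in shorthand. Your explicit remark that the ambient independence of $\mathbf{x}$ and $\mathbf{z}$ must be invoked alongside the stated hypothesis is a point the paper leaves implicit, but it changes nothing substantive.
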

\begin{proof}
Substituting equations \ref{eq:r_w_m_z} into $\E\left[s_{\mathbf{y}, \mathbf{r}}\right]$ we get
\begin{equation}
    \begin{split}
        & \E\left[s_{\mathbf{y}, \mathbf{r}}\right] = \sigma_{y,r} = \sigma_{x + z, w - z} = \sigma_{z,w} - \sigma_z^2 > -\sigma_z^2 \;.
    \end{split}
\end{equation}
\end{proof}

We proceed to a discussion of type~(II) dependency, which occurs when $\mathbf{\tilde{x}}$ and $\mathbf{x}$ tend to be dissimilar. Through it, we assume that there is no dependency of type~(I), i.e., $\mathbf{w}$ and $\mathbf{z}$ are independent. First, we show that this dependency is manifested in a negative correlation between $\mathbf{w}$ and $\mathbf{x}$, ${\sigma_{x,w} < 0}$. 
As mentioned in section~\ref{sec:dep_red},
\begin{equation}
    \label{eq:sig_x_xt_eq}
    \sigma_{x,\tilde{x}} = \sigma_x^2 + \sigma_{x,w} \;.
\end{equation}
Recall that $\mathbf{\tilde{x}}$ is built of patches taken from noisy images. Thus, we can write
\begin{equation}
    \mathbf{\tilde{x}} = \mathbf{x}_2 + \mathbf{z}_2 \;,
\end{equation}
where $\mathbf{x}_2$ is a clean image, which may differ from $\mathbf{x}$, and $\mathbf{z}_2$ is an instantiation of input noise, which is independent of $\mathbf{x}$ and $\mathbf{x}_2$. Thus,
\begin{equation}
    \label{eq:sig_x_xt_ineq}
    \sigma_{x,\tilde{x}} = \sigma_{x, x_2 + z_2} = \sigma_{x, x_2} \le \sigma_x\sigma_{x_2} = \sigma_x^2 \;.
\end{equation}
A conclusion of equations~\ref{eq:sig_x_xt_eq}~and~\ref{eq:sig_x_xt_ineq} is that the dissimilarity between $\mathbf{\tilde{x}}$ and $\mathbf{x}$ reduces the value of $\sigma_{x, \tilde{x}}$, which means that $\sigma_{x,w}$ is necessarily negative.
Finally, we prove that for the dependency of type~(II), we get ${\E\left[s_{\mathbf{y}, \mathbf{r}}\right] < -\sigma_z^2}$.
\begin{proposition}
\label{prop:underfit}
Suppose that $\mathbf{w}$ is independent on $\mathbf{z}$, but there is a dependency between $\mathbf{w}$ and $\mathbf{x}$ such that $\sigma_{x,w} < 0$. Then  ${\E\left[s_{\mathbf{y}, \mathbf{r}}\right] < -\sigma_z^2}$.
\end{proposition}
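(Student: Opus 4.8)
The plan is to mimic the argument used for the type~(I) proposition, replacing the role of $\sigma_{z,w}$ by $\sigma_{x,w}$. The starting point is equation~\ref{eq:r_w_m_z}, which gives $\mathbf{r} = \mathbf{w} - \mathbf{z}$, so that, just as in the type~(I) case, $\E\left[s_{\mathbf{y},\mathbf{r}}\right] = \sigma_{y,r} = \sigma_{x+z,\,w-z}$. The only real content is to expand this scalar covariance by bilinearity into its four constituent terms and then discard the ones that vanish under the stated independence assumptions.

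Concretely, I would write
\begin{equation*}
    \sigma_{x+z,\,w-z} = \sigma_{x,w} - \sigma_{x,z} + \sigma_{z,w} - \sigma_z^2 \;.
\end{equation*}
Then I would justify two cancellations: $\sigma_{x,z} = 0$ because the input noise $\mathbf{z}$ is (assumed) independent of the clean image $\mathbf{x}$, and $\sigma_{z,w} = 0$ because the hypothesis of the proposition is precisely that $\mathbf{w}$ is independent of $\mathbf{z}$ (this is the ``no type~(I) dependency'' assumption). What survives is $\E\left[s_{\mathbf{y},\mathbf{r}}\right] = \sigma_{x,w} - \sigma_z^2$, and invoking the hypothesis $\sigma_{x,w} < 0$ immediately yields $\E\left[s_{\mathbf{y},\mathbf{r}}\right] < -\sigma_z^2$, completing the proof.

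There is essentially no hard step here; the argument is a two-line covariance manipulation once equation~\ref{eq:r_w_m_z} is in hand. The only point requiring a modicum of care is bookkeeping of which independence assumptions are being used where: one should be explicit that $\sigma_{x,z}=0$ comes from the standing modeling assumption (noise independent of the underlying image), whereas $\sigma_{z,w}=0$ is the special hypothesis isolating type~(II) behavior. If one wanted to be fully rigorous about ``$\E\left[s_{\mathbf{y},\mathbf{r}}\right]$ equals the scalar covariance $\sigma_{y,r}$,'' that identity follows directly from the definition of $s_{\mathbf{y},\mathbf{r}}$ in equation~\ref{eq:s_yr} together with stationarity, exactly as already used implicitly in the type~(I) proposition, so I would simply reference it rather than re-derive it.
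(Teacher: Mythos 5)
Your proposal is correct and follows essentially the same route as the paper: both reduce $\E\left[s_{\mathbf{y},\mathbf{r}}\right]$ to $\sigma_{x+z,\,w-z}$ via $\mathbf{r}=\mathbf{w}-\mathbf{z}$, expand by bilinearity, kill $\sigma_{x,z}$ and $\sigma_{z,w}$ by the independence assumptions, and conclude from $\sigma_{x,w}<0$. You are merely more explicit than the paper about which independence hypothesis justifies each cancellation, which is a welcome clarification rather than a deviation.
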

\begin{proof}
Substituting equation \ref{eq:r_w_m_z} into $\E\left[s_{\mathbf{y}, \mathbf{r}}\right]$ we have
\begin{equation}
    \begin{split}
        & \E\left[s_{\mathbf{y}, \mathbf{r}}\right] = \sigma_{y,r} = \sigma_{x + z, w - z} = \sigma_{x, w} - \sigma_z^2 < -\sigma_z^2 \;.
    \end{split}
\end{equation}
\end{proof}

\section{Robustness of the Patch Matching}
\label{sec:robustness}
In section~\ref{sec:dep_red}, we mention that we use large patches for patch matching. This section shows that patches must be large when dealing with correlated noise. The goal of patch matching is to find similar clean patches by checking the $L_2$ distance between their noisy versions. However, does the similarity between the noisy patches imply the similarity between their clean counterparts? Assuming that noise is independent of the image, the answer is yes, provided the patches are large enough. Intuitively, patch size should grow with the noise power, but do the noise correlations matter? In this section, we show that,  in addition to the power of the noise, the patch size is heavily dependent on the noise correlation range. The distance between the noisy patches can be viewed as an estimator of the distance between their clean counterparts. We show that the variance of this estimator can increase dramatically with the noise correlation range. More specifically, we provide a lower bound for the estimator's variance, which depends on the patch size and the noise autocovariance.

In this section, we use the following notations and assumptions. Let $\mathbf{y}^{(1)}, \mathbf{y}^{(2)} \in \mathbb{R}^{n \times n}$ be two noisy patches, and $\mathbf{x}^{(1)}, \mathbf{x}^{(2)} \in \mathbb{R}^{n \times n}$ be their clean versions, such that
\begin{equation}
    \begin{split}
        & \mathbf{y}^{(1)} = \mathbf{x}^{(1)} +  \mathbf{z}^{(1)} \\
        & \mathbf{y}^{(2)} = \mathbf{x}^{(2)} + \mathbf{z}^{(2)} \;.
    \end{split}
\end{equation}
We assume that $\mathbf{z}^{(1)}$ and $\mathbf{z}^{(2)}$ are independent realizations of 2D zero-mean random process $\left\{Z_{i,j}\right\}$ with autocovariance $R_{ZZ}\left(\tau_1, \tau_2\right)$ and denote $\sigma^2_z = R_{ZZ}\left(0, 0\right)$. For this section only, we assume that $\left\{Z_{i,j}\right\}$ is a Gaussian process. We denote by $\delta_x$ the normalized squared $L_2$ distance between $\mathbf{x}^{(1)}$ and $\mathbf{x}^{(2)}$.
Similarly, $\delta_y$ stands for the normalized squared $L_2$ distance between $\mathbf{y}^{(1)}$ and $\mathbf{y}^{(2)}$,
\begin{equation}
    \begin{split}
        & \delta_x = \frac{1}{n^2}\sum_{i,j = 1}^n \left(x^{(2)}_{i,j} - x^{(1)}_{i,j}\right)^2 \\
        & \delta_y = \frac{1}{n^2}\sum_{i,j = 1}^n \left(y^{(2)}_{i,j} - y^{(1)}_{i,j}\right)^2 \;.
    \end{split}
\end{equation}

\begin{theorem}
\label{th:bias_var}
Let $\delta_y$ be an estimator of $\delta_x$, then $\delta_y$ has a constant bias $2\sigma_z^2$, and its variance is bounded from below by
\begin{equation*}
    \begin{split}
        var\left[\delta_y\right] \ge \frac{8}{n^2}\sigma_z^4 \rho \;,
    \end{split}
\end{equation*}
where
\begin{equation*}
    \rho = \frac{1}{n^2}\sum_{{i_1, j_1, i_2, j_2 = 1}}^n \left(\frac{R_{ZZ}\left(i_1 - j_1, i_2 - j_2\right)}{\sigma_z^2}\right)^2 \;.
\end{equation*}
The bound is sharp since equality holds for $\delta_x = 0$.
\end{theorem}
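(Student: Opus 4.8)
\emph{Proof plan.} The plan is to split the pixel-wise difference of the two noisy patches into a deterministic part and a Gaussian part and to treat the resulting linear and quadratic contributions to $\delta_y$ separately. First I would write, for each pixel $(i,j)$, $y^{(2)}_{i,j}-y^{(1)}_{i,j}=a_{i,j}+b_{i,j}$ with $a_{i,j}=x^{(2)}_{i,j}-x^{(1)}_{i,j}$ deterministic and $b_{i,j}=z^{(2)}_{i,j}-z^{(1)}_{i,j}$. Because $\mathbf{z}^{(1)}$ and $\mathbf{z}^{(2)}$ are independent realizations of the zero-mean Gaussian process $\left\{Z_{i,j}\right\}$, the field $\left\{b_{i,j}\right\}$ is zero-mean Gaussian with autocovariance $2R_{ZZ}(\cdot,\cdot)$; in particular $\E\left[b_{i,j}^2\right]=2\sigma_z^2$. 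Expanding the square gives $\delta_y=\delta_x+U+V$ with $U=\frac{2}{n^2}\sum_{i,j}a_{i,j}b_{i,j}$ and $V=\frac{1}{n^2}\sum_{i,j}b_{i,j}^2$. The bias is then immediate: $\E\left[U\right]=0$ since the $b_{i,j}$ are centered, while $\E\left[V\right]=\frac{1}{n^2}\sum_{i,j}\E\left[b_{i,j}^2\right]=2\sigma_z^2$, so $\E\left[\delta_y\right]-\delta_x=2\sigma_z^2$.

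For the variance I would first observe that $U$ and $V$ are uncorrelated: $U$ is a linear form and $V-\E\left[V\right]$ a centered quadratic form in the Gaussian vector $\left(b_{i,j}\right)$, so $cov\left(U,V\right)=\E\left[UV\right]$ is a combination of third-order moments of a mean-zero Gaussian and hence vanishes. Therefore $var\left[\delta_y\right]=var\left[U\right]+var\left[V\right]\ge var\left[V\right]$, and it remains to compute $var\left[V\right]$ exactly. Writing $var\left[V\right]=\frac{1}{n^4}\sum_{i,j}\sum_{k,l}cov\left(b_{i,j}^2,b_{k,l}^2\right)$ and applying the Gaussian fourth-moment (Isserlis) identity $cov\left(b_{i,j}^2,b_{k,l}^2\right)=2\left(cov\left(b_{i,j},b_{k,l}\right)\right)^2=8\left(R_{ZZ}(i-k,j-l)\right)^2$, one obtains $var\left[V\right]=\frac{8\sigma_z^4}{n^4}\sum_{i,j}\sum_{k,l}\left(R_{ZZ}(i-k,j-l)/\sigma_z^2\right)^2=\frac{8}{n^2}\sigma_z^4\rho$, which is the claimed lower bound. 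Sharpness is immediate: if $\delta_x=0$ then $\sum_{i,j}a_{i,j}^2=0$, so every $a_{i,j}=0$, whence $U\equiv0$, $var\left[U\right]=0$, and the inequality becomes an equality.

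Most of the argument is routine bookkeeping once this decomposition is in place. The step that demands care is the evaluation of $var\left[V\right]$: one must carry the factor $2$ coming from the autocovariance of $b=z^{(2)}-z^{(1)}$ and match the resulting quadruple sum over pairs of pixel positions to the definition of $\rho$ in the statement, and one must correctly invoke the Gaussian fourth-moment identity --- this, together with the vanishing of $cov\left(U,V\right)$, is exactly where the section's standing Gaussianity assumption is used. I do not expect a genuine obstacle beyond this.
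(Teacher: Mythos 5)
Your proposal is correct and takes essentially the same route as the paper's proof: the same decomposition $\delta_y = \delta_x + 2\delta_{x,z} + \delta_z$ (your $U$ and $V$), the same bias computation, the vanishing of the cross term via zero third moments of the Gaussian field, discarding the nonnegative term $var\left[U\right]=4\E\left[\delta_{x,z}^2\right]$, and the same sharpness argument at $\delta_x = 0$. The only difference is cosmetic: you evaluate $var\left[V\right]$ in one step via the Isserlis identity $cov\left(b_{i,j}^2, b_{k,l}^2\right) = 2\left(cov\left(b_{i,j}, b_{k,l}\right)\right)^2 = 8\left(R_{ZZ}\right)^2$, whereas the paper expands the nine Gaussian fourth-moment terms explicitly and arrives at the same quantity.
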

\noindent The proof of the theorem is given in appendix~\ref{app:proof_bias_var}. The theorem shows that the bound is proportional to $\rho$, where$\rho$ can grow fast with the noise correlation range. We illustrate this by the example of $R_{ZZ}$ with bilinear decay,

\begin{equation}
\label{eq:rzz_example}
    \begin{split}
        & R_{ZZ}\left(\tau_1, \tau_2\right) = g\left(\tau_1\right)g\left(\tau_2\right) \\
        & g\left(\tau\right) = \sigma_z \cdot \max \left\{1 - \frac{\left|\tau\right|}{\theta}, 0\right\} \;,
    \end{split}
\end{equation}
where $\frac{1}{\theta}$ is the decay incline.
\begin{proposition}
\label{prop:rho_ge}
Suppose $R_{ZZ}$ as defined in equation~\ref{eq:rzz_example}. Then
\begin{equation*}
    \rho \ge \frac{1}{4}\left(r + \frac{1}{r}\right)^2 \;, \quad r = \min\left\{n, \lfloor\theta\rfloor\right\} \;.
\end{equation*}
The bound is sharp, the equality holds for $\theta = n$ and $\theta = 1$. 
\end{proposition}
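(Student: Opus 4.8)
\emph{Plan.} The idea is to exploit the product (separable) structure of $R_{ZZ}$ to collapse $\rho$ into the square of a one–dimensional quantity, and then to bound that quantity by piecewise–linear estimates that are tight at both ends of the range that matters. Write $\phi(t)=\max\{1-|t|/\theta,0\}=g(t)/\sigma_z$. Since $R_{ZZ}(\tau_1,\tau_2)=g(\tau_1)g(\tau_2)$, substituting into the definition of $\rho$ factors the quadruple sum into a product of two identical double sums, so that
\begin{equation}
\rho=\frac{1}{n^{2}}\Bigl(\sum_{i,j=1}^{n}\phi(i-j)^{2}\Bigr)^{2}=\Bigl(\frac{S}{n}\Bigr)^{2},\qquad S:=\sum_{i,j=1}^{n}\phi(i-j)^{2}.
\end{equation}
Hence it suffices to prove the scalar inequality $S\ge\frac{n}{2}\bigl(r+\frac1r\bigr)$, and the proposition follows by squaring. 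Grouping the terms of $S$ according to the lag $t=i-j$, which is attained by exactly $n-|t|$ ordered pairs, gives $S=\sum_{t=-(n-1)}^{n-1}(n-|t|)\,\phi(t)^{2}$.

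Next I would lower–bound $S$ by replacing both factors with linear interpolants anchored at $r=\min\{n,\lfloor\theta\rfloor\}$. Because $r\le\theta$, for every $|t|\le r$ one has $1-|t|/\theta\ge 1-|t|/r\ge 0$, hence $\phi(t)^{2}\ge (r-|t|)^{2}/r^{2}$; because $r\le n$, for every $|t|\le r$ one has $n-|t|\ge \frac{n}{r}(r-|t|)$. Discarding the nonnegative terms with $|t|\ge r$ and multiplying the two bounds,
\begin{equation}
S\ \ge\ \sum_{|t|\le r-1}\frac{n}{r}(r-|t|)\cdot\frac{(r-|t|)^{2}}{r^{2}}\ =\ \frac{n}{r^{3}}\sum_{|t|\le r-1}(r-|t|)^{3}.
\end{equation}
Faulhaber's identity $\sum_{k=1}^{m}k^{3}=\bigl(\tfrac{m(m+1)}{2}\bigr)^{2}$ evaluates the remaining sum as $r^{3}+2\sum_{k=1}^{r-1}k^{3}=r^{3}+\tfrac12 r^{2}(r-1)^{2}$. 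Substituting and simplifying gives $S\ge n\bigl(1+\tfrac{(r-1)^{2}}{2r}\bigr)=\tfrac{n}{2}\bigl(r+\tfrac1r\bigr)$, whence $\rho=(S/n)^{2}\ge\tfrac14\bigl(r+\tfrac1r\bigr)^{2}$.

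For sharpness I would verify the two claimed cases directly. If $\theta=n$ then $r=n=\theta$: the bound $\phi(t)^{2}\ge (r-|t|)^{2}/r^{2}$ is an equality (it uses $r=\theta$), the bound $n-|t|\ge\frac{n}{r}(r-|t|)$ is an equality (it uses $n=r$), and no terms are discarded, so equality propagates through the whole chain and $\rho=\tfrac14\bigl(n+\tfrac1n\bigr)^{2}$. If $\theta=1$ the process is white, $\phi(t)^{2}=\mathbf{1}\{t=0\}$, so $S=n$ exactly; since $r=\min\{n,1\}=1$ this equals $\tfrac{n}{2}(r+\tfrac1r)$, again with equality.

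\emph{The main obstacle.} The only delicate point is the choice of the two lower bounds in the middle step. A crude estimate such as $\phi(t)^{2}\ge(1-r/\theta)^{2}$ degenerates to $0$ precisely in the extremal regime $\theta=r$ and is therefore useless; one must use the interpolants $1-|t|/r$ and $\tfrac{n}{r}(r-|t|)$, which are simultaneously tight at $|t|=0$ and $|t|=r$, so that after multiplication the cubic sum telescopes (via Faulhaber) into exactly $\tfrac{n}{2}\bigl(r+\tfrac1r\bigr)$ rather than a weaker expression. Throughout one assumes $\theta\ge 1$, so that $r\ge 1$ and the right–hand side is well defined.
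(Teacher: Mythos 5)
Your proof is correct and follows essentially the same route as the paper's: factor the quadruple sum via the separable form of $R_{ZZ}$, regroup by lag (your counting of $n-|t|$ pairs is exactly the paper's Lemma on $\sum_{i,j}f(i-j)$), lower-bound both factors by interpolants anchored at $r$, and evaluate the resulting cubic sum with Faulhaber's identity to get $\frac{n}{2}(r+\frac1r)$. The only differences are cosmetic — you skip the paper's intermediate pass through $\lfloor\theta\rfloor$ by bounding $1-|t|/\theta\ge 1-|t|/r$ directly, and you add an explicit verification of sharpness at $\theta=n$ and $\theta=1$, which the paper states but does not check.
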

\noindent The proof is given in appendix~\ref{app:rho_ge}. Proposition 10 shows that for $R_{ZZ}$ with bilinear decay, $var\left[\delta_y\right]$ exhibits quadratic growth with respect to the correlation range (for $\theta \le n$).

\subsection{Proof of Theorem~\ref{th:bias_var}}
\label{app:proof_bias_var}
\begin{proof}
We begin with introducing notations. Denote three difference images by $\mathbf{d}^{(x)}$, $\mathbf{d}^{(y)}$, and $\mathbf{d}^{(z)}$, where
\begin{equation*}
        \mathbf{d}^{(x)} = \mathbf{x}^{(2)} - \mathbf{x}^{(1)}, \mathbf{d}^{(y)} = \mathbf{y}^{(2)} - \mathbf{y}^{(1)}, \mathbf{d}^{(z)} = \mathbf{z}^{(2)} - \mathbf{z}^{(1)}.
\end{equation*}
In addition, we define $\delta_x$, $\delta_y$, $\delta_z$, and $\delta_{xz}$ as follows
\begin{equation}
    \begin{split}
            & \delta_x = \frac{1}{n^2}\sum_{i,j = 1}^n \left(d^{(x)}_{i,j}\right)^2, \quad \delta_y = \frac{1}{n^2}\sum_{i,j = 1}^n \left(d^{(y)}_{i,j}\right)^2 \;, \\
            & \delta_z = \frac{1}{n^2}\sum_{i,j = 1}^n \left(d^{(z)}_{i,j}\right)^2, \quad \delta_{x,z} = \frac{1}{n^2}\sum_{i, j = 1}^n d^{(x)}_{i,j}d^{(z)}_{i,j} \;, 
    \end{split}
\end{equation}
where $\delta_x$, $\delta_y$, and $\delta_z$ are the normalized $L_2$ norms of $\mathbf{d}^{(x)}$, $\mathbf{d}^{(y)}$, and $\mathbf{d}^{(z)}$, respectively, and $\delta_{xz}$ stands for a mixed expression. 
Then
\begin{equation*}
    \mathbf{d}^{(y)} = \mathbf{y}^{(2)} - \mathbf{y}^{(1)} = \mathbf{x}^{(2)} - \mathbf{x}^{(1)} + \mathbf{z}^{(2)} - \mathbf{z}^{(1)} = \mathbf{d}^{(x)} + \mathbf{d}^{(z)} \;.
\end{equation*}
Rewriting $\delta_y$, we get
\begin{equation}
\label{eq:delta_y}
    \begin{split}
      \delta_y = & \frac{1}{n^2}\sum_{i,j = 1}^n \left(d^{(y)}_{i,j}\right)^2 = \frac{1}{n^2}\sum_{i,j = 1}^n \left(d^{(x)}_{i,j} + d^{(z)}_{i,j}\right)^2 = \\
      = & \frac{1}{n^2}\sum_{i, j = 1}^n \left(d^{(x)}_{i,j}\right)^2 + \frac{2}{n^2}\sum_{i, j = 1}^n d^{(x)}_{i,j}d^{(z)}_{i,j} + \\
      & \frac{1}{n^2}\sum_{i, j = 1}^n \left(d^{(z)}_{i,j}\right)^2 = \delta_x + 2\delta_{x,z} + \delta_z \;.
    \end{split}
\end{equation}
Therefore,
\begin{equation}
\label{eq:bias_dy}
    \begin{split}
        Bias\left[\delta_y\right] & = \E\left[\delta_y\right] - \delta_x = \\
        & = \E\left[\left(\delta_x + 2\delta_{x,z} + \delta_z\right)\right] - \delta_x = \\
        & = 2\E\left[\delta_{x,z}\right] + \E\left[\delta_z\right] \;,
    \end{split}
\end{equation}
where
\begin{equation}
\label{eq:e_dxz}
    \begin{split}
        2\E\left[\delta_{x,z}\right] & = \frac{2}{n^2}\sum_{i, j = 1}^n \E\left[d^{(x)}_{i,j}d^{(z)}_{i,j}\right] = \\
        & = \frac{2}{n^2}\sum_{i, j = 1}^n d^{(x)}_{i,j}\E\left[d^{(z)}_{i,j}\right] = \\
        & = \frac{2}{n^2}\sum_{i, j = 1}^n d^{(x)}_{i,j}\E\left[z^{(2)}_{i,j} - z^{(1)}_{i,j}\right] = \\
        & = \frac{2}{n^2}\sum_{i, j = 1}^n d^{(x)}_{i,j}\left(0 - 0\right) = 0 \;.
    \end{split}
\end{equation}
Since $\mathbf{z}^{(1)}$ and $\mathbf{z}^{(2)}$ are independent,
\begin{equation}
\label{eq:e_dz2}
    \E\left[\left(d^{(z)}_{i,j}\right)^2\right] = \E\left[\left(z^{(2)}_{i,j} - z^{(1)}_{i,j}\right)^2\right] = 2\sigma^2_z \;.
\end{equation}
Then, 
\begin{equation}
    \begin{split}
        \E\left[\delta_z\right] & = \frac{1}{n^2}\sum_{i,j = 1}^n \E\left[\left(d^{(z)}_{i,j}\right)^2\right] = \frac{1}{n^2}\sum_{i,j = 1}^n 2\sigma_z^2 = 2\sigma_z^2 \;.
    \end{split}
\end{equation}
Substituting equations~\ref{eq:e_dxz}~and~\ref{eq:e_dz2} into \ref{eq:bias_dy}, we have
\begin{equation}
     Bias\left[\delta_y\right] = 2\E\left[\delta_{x,z}\right] + \E\left[\delta_z\right] = 0 + 2\sigma^2_z = 2\sigma^2_z \;.
\end{equation}
We proceed to calculate the variance.
\begin{equation}
\label{eq:var_dy}
    \begin{split}
        var\left[\delta_y\right] & = var\left[\left(\delta_y - \delta_x\right)\right] = \\
        & = \E\left[\left(\delta_y - \delta_x\right)^2\right] - \left(\E\left[\left(\delta_y - \delta_x\right)\right]\right)^2 = \\
        & = \E\left[\left(\delta_y - \delta_x\right)^2\right] - \left(Bias\left[\delta_y\right]\right)^2 = \\
        & = \E\left[\left(\delta_y - \delta_x\right)^2\right] - 4\sigma_z^4 \;.
    \end{split}
\end{equation}
Substituting equation~\ref{eq:delta_y} into~\ref{eq:var_dy}, we get
\begin{equation}
    \begin{split}
        var\left[\delta_y\right] & = \E\left[\left(\delta_y - \delta_x\right)^2\right] - 4\sigma_z^4 = \\
        & = \E\left[\left(2\delta_{x,z} + \delta_z\right)^2\right] - 4\sigma_z^4 = \\
        & = 4\E\left[\delta_{x,z}^2\right] + 4\E\left[\delta_{x,z}\delta_z\right] + \E\left[\delta_z^2\right] - 4\sigma_z^4 \;.
    \end{split}
\end{equation}
Since $\E\left[\delta_{x,z}^2\right] \ge 0$, the variance can be bounded from below using the following inequality,
\begin{equation}
\label{eq:var_dy_ineq}
    var\left[\delta_y\right] \ge 4\E\left[\delta_{x,z}\right] + \E\left[\delta_z^2\right] - 4\sigma_z^4 \;,
\end{equation}
where the bound is strict since provided ${\mathbf{d}^{(x)} = \mathbf{0}}$, we have
\begin{equation}
    \begin{split}
        \E\left[\delta_{x,z}^2\right] & = \E\left[\left(\frac{1}{n^2}\sum_{i, j = 1}^n d^{(x)}_{i,j}d^{(z)}_{i,j}\right)^2\right] = \\
        & = \E\left[\left(\frac{1}{n^2}\sum_{i, j = 1}^n 0\cdot d^{(z)}_{i,j}\right)^2\right] = \E\left[0\right] = 0 \;.
    \end{split}
\end{equation}
Next, we show that $\E\left[\delta_{x,z}\delta_z\right] = 0$.
\begin{equation}
\label{eq:e_dxz_dz}
    \begin{split}
        \E\left[\delta_{x,z}\delta_z\right] & = \frac{1}{n^4}\E\left[\left(\sum_{i,j}^n d^{(x)}_{i, j}d^{(z)}_{i,j}\right)\left(\sum_{i,j}^n \left(d^{(z)}_{i,j}\right)^2\right)\right] = \\
        & = \frac{1}{n^4}\E\left[\sum_{i_1,j_1,i_2,j_2}^n d^{(x)}_{i_1,j_1}d^{(z)}_{i_1,j_1}\left(d^{(z)}_{i_2,j_2}\right)^2\right] = \\
        & = \frac{1}{n^4}\sum_{i_1,j_1,i_2,j_2}^n \E\left[d^{(x)}_{i_1,j_1}d^{(z)}_{i_1,j_1}\left(d^{(z)}_{i_2,j_2}\right)^2\right] = \\
        & = \frac{1}{n^4}\sum_{i_1,j_1,i_2,j_2}^n d^{(x)}_{i_1,j_1}\E\left[d^{(z)}_{i_1,j_1}\left(d^{(z)}_{i_2,j_2}\right)^2\right] = \\
        & = \frac{1}{n^4}\sum_{i_1,j_1,i_2,j_2}^n d^{(x)}_{i_1,j_1}\cdot 0 = 0 \;,
    \end{split}
\end{equation}
where $\E\left[d^{(z)}_{i_1,j_1}\left(d^{(z)}_{i_2,j_2}\right)^2\right] = 0$ as it is the third moment of multivariate Gaussian distribution. Substituting equation~\ref{eq:e_dxz_dz} into~\ref{eq:var_dy_ineq}, we get
\begin{equation}
\label{eq:var_inequality}
    var\left[\hat{\delta}_x\right] \ge \E\left[\delta_z^2\right] - 4\sigma_z^4 \;.
\end{equation}
It remains to calculate the value of $\E\left[\delta_z^2\right]$.  
\begin{equation}
\label{eq:exp_delta_2}
    \begin{split}
        \E\left[\delta_z^2\right] & = \E\left[\delta_z\delta_z\right] = \\
        & = \frac{1}{n^4}\E\left[\left(\sum_{i, j = 1}^n \left(d^{(z)}_{i,j}\right)^2\right)\left(\sum_{i, j = 1}^n \left(d^{(z)}_{i,j}\right)^2\right)\right] = \\
        & = \frac{1}{n^4}\E\left[\sum_{i_1, j_1, i_2, j_2 = 1}^n \left(d^{(z)}_{i_1, j_1}\right)^2 \left(d^{(z)}_{i_2,j_2}\right)^2\right] = \\
        & = \frac{1}{n^4}\sum_{i_1, j_1, i_2, j_2 = 1}^n E\left[\left(d^{(z)}_{i_1, j_1}\right)^2 \left(d^{(z)}_{i_2,j_2}\right)^2\right] \;.
    \end{split}
\end{equation}
\begin{equation}
\label{eq:di2_dj2}
    \begin{split}
        & \E\left[\left(d^{(z)}_{i_1, j_1}\right)^2 \left(d^{(z)}_{i_2,j_2}\right)^2\right] = \\
        & \;\; = \E\left[\left(z^{(1)}_{i_1,j_1} - z^{(2)}_{i_i,j_i}\right)^2 \left(z^{(1)}_{i_2,j_2} - z^{(2)}_{i_2,j_2}\right)^2\right] = \\
        & \;\; = \E\left[\left(\left(z^{(1)}_{i_1,j_1}\right)^2 - 2z^{(1)}_{i_1,j_1} z^{(2)}_{i_1,j_1} + \left(z^{(2)}_{i_1,j_1}\right)^2\right) \times \right. \\
        & \qquad\quad\left.\left(\left(z^{(1)}_{i_2,j_2}\right)^2 - 2z^{(1)}_{i_2,j_2} z^{(2)}_{i_2,j_2} + \left(z^{(2)}_{i_2,j_2}\right)^2\right)\right] = \\
        & \;\; = \E\left[\left(z^{(1)}_{i_1,j_1}\right)^2 \left(z^{(1)}_{i_2,j_2}\right)^2\right] - \\
        & \qquad\qquad\qquad 2\E\left[\left(z^{(1)}_{i_1,j_1}\right)^2 z^{(1)}_{i_2,j_2} z^{(2)}_{i_2,j_2}\right] + \\
        & \qquad\qquad\qquad\;\; \E\left[\left(z^{(1)}_{i_1,j_1}\right)^2 \left(z^{(2)}_{i_2,j_2}\right)^2\right] - \\
        & \qquad\qquad\qquad 2\E\left[z^{(1)}_{i_1,j_1} z^{(2)}_{i_1,j_1} \left(z^{(1)}_{i_2,j_2}\right)^2\right] + \\
        & \qquad\qquad\qquad 4\E\left[z^{(1)}_{i_1,j_1} z^{(2)}_{i_1,j_1} z^{(1)}_{i_2,j_2} z^{(2)}_{i_2,j_2}\right] - \\
        & \qquad\qquad\qquad 2\E\left[z^{(1)}_{i_1,j_1} z^{(2)}_{i_1,j_1} \left(z^{(2)}_{i_2,j_2}\right)^2\right] + \\
        & \qquad\qquad\qquad\;\; \E\left[\left(z^{(2)}_{i_1,j_1}\right)^2 \left(z^{(1)}_{i_2,j_2}\right)^2\right] - \\
        & \qquad\qquad\qquad 2\E\left[\left(z^{(2)}_{i_1,j_1}\right)^2 z^{(1)}_{i_2,j_2} z^{(2)}_{i_2,j_2}\right] + \\
        & \qquad\qquad\qquad\;\; \E\left[\left(z^{(2)}_{i_1,j_1}\right)^2 \left(z^{(2)}_{i_2,j_2}\right)^2\right] \;.
    \end{split}
\end{equation}
All summands in equation~\ref{eq:di2_dj2} are the fourth moments of multivariate Gaussian distribution. Using formula for the Gaussian moment, we get
\begin{equation}
\label{eq:4_moment1}
    \begin{split}
        & \E\left[\left(z^{(1)}_{i_1,j_1}\right)^2 \left(z^{(1)}_{i_2,j_2}\right)^2\right] = \\
        & \qquad\qquad = \sigma_z^4 + 2\left(R_{ZZ}(i_1 - j_1, i_2 - j_2)\right)^2
    \end{split}
\end{equation}
\begin{equation}
\label{eq:4_moment2}
    \begin{split}
        & 2\E\left[\left(z^{(1)}_{i_1,j_1}\right)^2 z^{(1)}_{i_2,j_2} z^{(2)}_{i_2,j_2}\right] = \\
        & \qquad\qquad = 2\E\left[\left(z^{(1)}_{i_1,j_1}\right)^2 z^{(1)}_{i_2,j_2}\right] \E\left[z^{(2)}_{i_2,j_2}\right] = 0
    \end{split}
\end{equation}
\begin{equation}
\label{eq:4_moment3}
    \begin{split}
        & \E\left[\left(z^{(1)}_{i_1,j_1}\right)^2 \left(z^{(2)}_{i_2,j_2}\right)^2\right] = \\
        & \qquad\qquad = \E\left[\left(z^{(1)}_{i_1,j_1}\right)^2\right] \E\left[\left(z^{(2)}_{i_2,j_2}\right)^2\right] = \sigma_z^4
    \end{split}
\end{equation}
\begin{equation}
\label{eq:4_moment4}
    \begin{split}
        & 2\E\left[z^{(1)}_{i_1,j_1} z^{(2)}_{i_1,j_1} \left(z^{(1)}_{i_2,j_2}\right)^2\right] = \\
        & \qquad\qquad = 2\E\left[z^{(1)}_{i_1,j_1} \left(z^{(1)}_{i_2,j_2}\right)^2\right] \E\left[z^{(2)}_{i_1,j_1}\right] = 0
    \end{split}
\end{equation}
\begin{equation}
\label{eq:4_moment5}
    \begin{split}
        & 4\E\left[z^{(1)}_{i_1,j_1} z^{(2)}_{i_1,j_1} z^{(1)}_{i_2,j_2} z^{(2)}_{i_2,j_2}\right] = \\
        & \qquad\qquad = 4\E\left[z^{(1)}_{i_1,j_1} z^{(1)}_{i_2,j_2}\right] \E\left[z^{(2)}_{i_1,j_1} z^{(2)}_{i_2,j_2}\right] = \\
        & \qquad\qquad = 4\left(R_{ZZ}(i_1 - j_1, i_2 - j_2)\right)^2
    \end{split}
\end{equation}
\begin{equation}
\label{eq:4_moment6}
    \begin{split}
        & 2\E\left[z^{(1)}_{i_1,j_1} z^{(2)}_{i_1,j_1} \left(z^{(2)}_{i_2,j_2}\right)^2\right] = \\
        & \qquad\qquad = 2\E\left[z^{(1)}_{i_1,j_1}\right] \E\left[z^{(2)}_{i_1,j_1} \left(z^{(2)}_{i_2,j_2}\right)^2\right] = 0
    \end{split}
\end{equation}
\begin{equation}
\label{eq:4_moment7}
    \begin{split}
        & \E\left[\left(z^{(2)}_{i_1,j_1}\right)^2 \left(z^{(1)}_{i_2,j_2}\right)^2\right] = \\
        & \qquad\qquad = \E\left[\left(z^{(2)}_{i_1,j_1}\right)^2\right] \E\left[\left(z^{(1)}_{i_2,j_2}\right)^2\right] = \sigma_z^4
    \end{split}
\end{equation}
\begin{equation}
\label{eq:4_moment8}
    \begin{split}
        & 2\E\left[\left(z^{(2)}_{i_1,j_1}\right)^2 z^{(1)}_{i_2,j_2} z^{(2)}_{i_2,j_2}\right] = \\
        & \qquad\qquad = 2\E\left[\left(z^{(2)}_{i_1,j_1}\right)^2 z^{(2)}_{i_2,j_2}\right] \E\left[z^{(1)}_{i_2,j_2}\right] = 0
    \end{split}
\end{equation}
\begin{equation}
\label{eq:4_moment9}
    \begin{split}
        & \E\left[\left(z^{(2)}_{i_1,j_1}\right)^2 \left(z^{(2)}_{i_2,j_2}\right)^2\right] = \\
        & \qquad\qquad = \sigma_z^4 + 2\left(R_{ZZ}(i_1 - j_1, i_2 - j_2)\right)^2.
    \end{split}
\end{equation}
Summarizing the expressions in equations~\ref{eq:4_moment1}-\ref{eq:4_moment9}, we get
\begin{equation}
\label{eq:e_dz2_dz2}
    \begin{split}
        & \E\left[\left(d^{(z)}_{i_1, j_1}\right)^2 \left(d^{(z)}_{i_2,j_2}\right)^2\right] = \\
        & \qquad\qquad \sigma_z^4 + 2\left(R_{ZZ}(i_1 - j_1, i_2 - j_2)\right)^2 + \\
        & \qquad\qquad \sigma_z^4 + 4\left(R_{ZZ}(i_1 - j_1, i_2 - j_2)\right)^2 + \sigma_z^4 + \\
        & \qquad\qquad \sigma_z^4 + 2\left(R_{ZZ}(i_1 - j_1, i_2 - j_2)\right)^2 = \\
        & \quad = 4\sigma_z^4 + 8\left(R_{ZZ}(i_1 - j_1, i_2 - j_2)\right)^2 \;.
    \end{split}
\end{equation}
Substituting equation~\ref{eq:e_dz2_dz2} into~\ref{eq:exp_delta_2}, we get
\begin{equation}
\label{eq:e_dz2_2}
    \begin{split}
        & \E\left[\delta_z^2\right] = \frac{1}{n^4}\sum_{i_1, j_1, i_2, j_2 = 1}^n E\left[\left(d^{(z)}_{i_1, j_1}\right)^2 \left(d^{(z)}_{i_2,j_2}\right)^2\right] = \\
        & = \frac{1}{n^4}\sum_{i_1, j_1, i_2, j_2 = 1}^n \left(4\sigma_z^4 + 8\left(R_{ZZ}(i_1 - j_1, i_2 - j_2)\right)^2\right) = \\ 
        & = 4\sigma_z^4 + \frac{8}{n^4}\sum_{i_1, j_1, i_2, j_2 = 1}^n \left(R_{ZZ}(i_1 - j_1, i_2 - j_2)\right)^2
    \end{split}
\end{equation}
Substituting equation~\ref{eq:e_dz2_2} into~\ref{eq:var_inequality}, we have
\begin{equation}
    \begin{split}
        & var\left[\delta_y\right] \ge \E\left[\delta_z^2\right] - 4\sigma_z^4 = \\
        & = \frac{8}{n^4}\sum_{i_1, j_1, i_2, j_2 = 1}^n \left(R_{ZZ}\left(i_1 - j_1, i_2 - j_2\right)\right)^2 = \\
        & = \frac{8}{n^2}\sigma_z^4\left(\frac{1}{n^2}\sum_{i_1, j_1, i_2, j_2 = 1}^n \left(\frac{R_{ZZ}\left(i_1 - j_1, i_2 - j_2\right)}{\sigma_z^2}\right)^2\right) \;.
    \end{split}
\end{equation}
Finally, we get
\begin{equation}
    var\left[\delta_y\right] \ge \frac{8}{n^2}\sigma_z^4\rho \;,
\end{equation}
where 
\begin{equation*}
    \rho = \frac{1}{n^2}\sum_{i_1, j_1, i_2, j_2 = 1}^n \left(\frac{R_{ZZ}\left(i_1 - j_1, i_2 - j_2\right)}{\sigma_z^2}\right)^2 \;. 
\end{equation*}
\end{proof}

\subsection{Proof of Proposition~\ref{prop:rho_ge}}
\label{app:rho_ge}
\begin{proof}
Substituting equation~\ref{eq:rzz_example} into $\rho$ in Theorem~\ref{th:bias_var}, we get
\begin{equation}
\label{eq:rzz_rho}
    \begin{split}
        \rho & = \frac{1}{n^2}\sum_{i_1, j_1, i_2, j_2 = 1}^n \left(\frac{R_{ZZ}\left(i_1 - j_1, i_2 - j_2\right)}{\sigma_z^2}\right)^2 = \\
        & = \frac{1}{n^2\sigma_z^4}\sum_{i_1, j_1, i_2, j_2 = 1}^n g^2\left(i_1 - j_1\right)g^2\left(i_2 - j_2\right) = \\
        & = \frac{1}{n^2\sigma_z^4}\sum_{i_1, j_1 = 1}^n g^2\left(i_1 - j_1\right)\sum_{i_2, j_2 = 1}^n g^2\left(i_2 - j_2\right) = \\
        & = \frac{1}{n^2\sigma_z^4}\left(\sum_{i, j = 1}^n g^2\left(i - j\right)\right)^2 \;.
    \end{split}
\end{equation}
It is easy to see that if $a \ge b > 0$, then for any $\tau$,
\begin{equation}
\label{eq:frac_inequality}
    1 - \frac{\left|\tau\right|}{a} \ge 1 - \frac{\left|\tau\right|}{b} \;
\end{equation}
and 
\begin{equation}
\label{eq:max_inequality}
    \max\left\{1 - \frac{\left|\tau\right|}{a}, 0\right\} \ge \max\left\{1 - \frac{\left|\tau\right|}{b}, 0\right\} \;.
\end{equation}

We denote $r = \min\left\{\lfloor\theta\rfloor, n\right\}$, where $\lfloor\cdot\rfloor$ stands for the floor function. Applying Lemma~\ref{lem:scalar_func} on equation~\ref{eq:rzz_rho}, substituting the expression for $g\left(\tau\right)$ in equation~\ref{eq:rzz_example}, and using inequalities~\ref{eq:frac_inequality}~and~\ref{eq:max_inequality}, we get
\begin{equation}
    \begin{split}
        \rho & = \frac{1}{n^2\sigma_z^4}\left(\sum_{i, j = 1}^n g^2\left(i - j\right)\right)^2 = \\
        & = \frac{1}{n^2\sigma_z^4}\left(n\sum_{\tau = -(n - 1)}^{n - 1} \left(1 - \frac{\left|\tau\right|}{n}\right)g^2\left(\tau\right)\right)^2 = \\
        & = \frac{1}{\sigma_z^4}\left(\sum_{\tau = -(n - 1)}^{n - 1} \left(1 - \frac{\left|\tau\right|}{n}\right) \times\right. \\
        & \qquad\qquad\qquad\left.\left(\sigma_z\cdot\max\left\{1 - \frac{\left|\tau\right|}{\theta}, 0\right\}\right)^2\right)^2 \ge \\
        & \ge \left(\sum_{\tau = -(n - 1)}^{n - 1} \left(1 - \frac{\left|\tau\right|}{n}\right) \times\right. \\
        & \qquad\qquad\qquad\left.\left(\max\left\{1 - \frac{\left|\tau\right|}{\lfloor\theta\rfloor}, 0\right\}\right)^2\right)^2 = \\
        & = \left(\sum_{\tau = -(r - 1)}^{r - 1} \left(1 - \frac{\left|\tau\right|}{n}\right)\left(1 - \frac{\left|\tau\right|}{\lfloor\theta\rfloor}\right)^2\right)^2 \ge \\
        & \ge \left(\sum_{\tau = -(r - 1)}^{r - 1} \left(1 - \frac{\left|\tau\right|}{r}\right)\left(1 - \frac{\left|\tau\right|}{r}\right)^2\right)^2 = \\
        & = \left(\sum_{\tau = -(r - 1)}^{r - 1} \left(1 - \frac{\left|\tau\right|}{r}\right)^3\right)^2 = \\
        & = \left(1 + 2\sum_{\tau = 1}^{r - 1} \left(1 - \frac{\tau}{r}\right)^3\right)^2 = \\
        & = \left(1 + \frac{2}{r^3}\sum_{\tau = 1}^{r - 1} \left(r - \tau\right)^3\right)^2 \;.
    \end{split}
\end{equation}
Applying variable change $\tau = r - \tau$ and using the formula for the sum of cubes, we get
\begin{equation}
    \begin{split}
        \rho & \ge \left(1 + \frac{2}{r^3}\sum_{\tau = 1}^{r - 1} \left(r - \tau\right)^3\right)^2 = \\
        & = \left(1 + \frac{2}{r^3}\sum_{\tau = 1}^{r - 1} \tau^3\right)^2 = \\ 
        & = \left(1 + \frac{2}{r^3}\frac{\left(r - 1\right)^2r^2}{4}\right)^2 = \\
        & = \left(1 + \frac{\left(r - 1\right)^2}{2r}\right)^2 = \\
        & = \left(\frac{2r + r^2 - 2r + 1}{2r}\right)^2 = \\
        & = \frac{1}{4}\left(r + \frac{1}{r}\right)^2\;.
    \end{split}
\end{equation}
\end{proof}

\begin{figure}[b]
    \centering
	\begin{subfigure}{0.13\textwidth}
	    \captionsetup{justification=centering}
		\includegraphics[width=\textwidth]{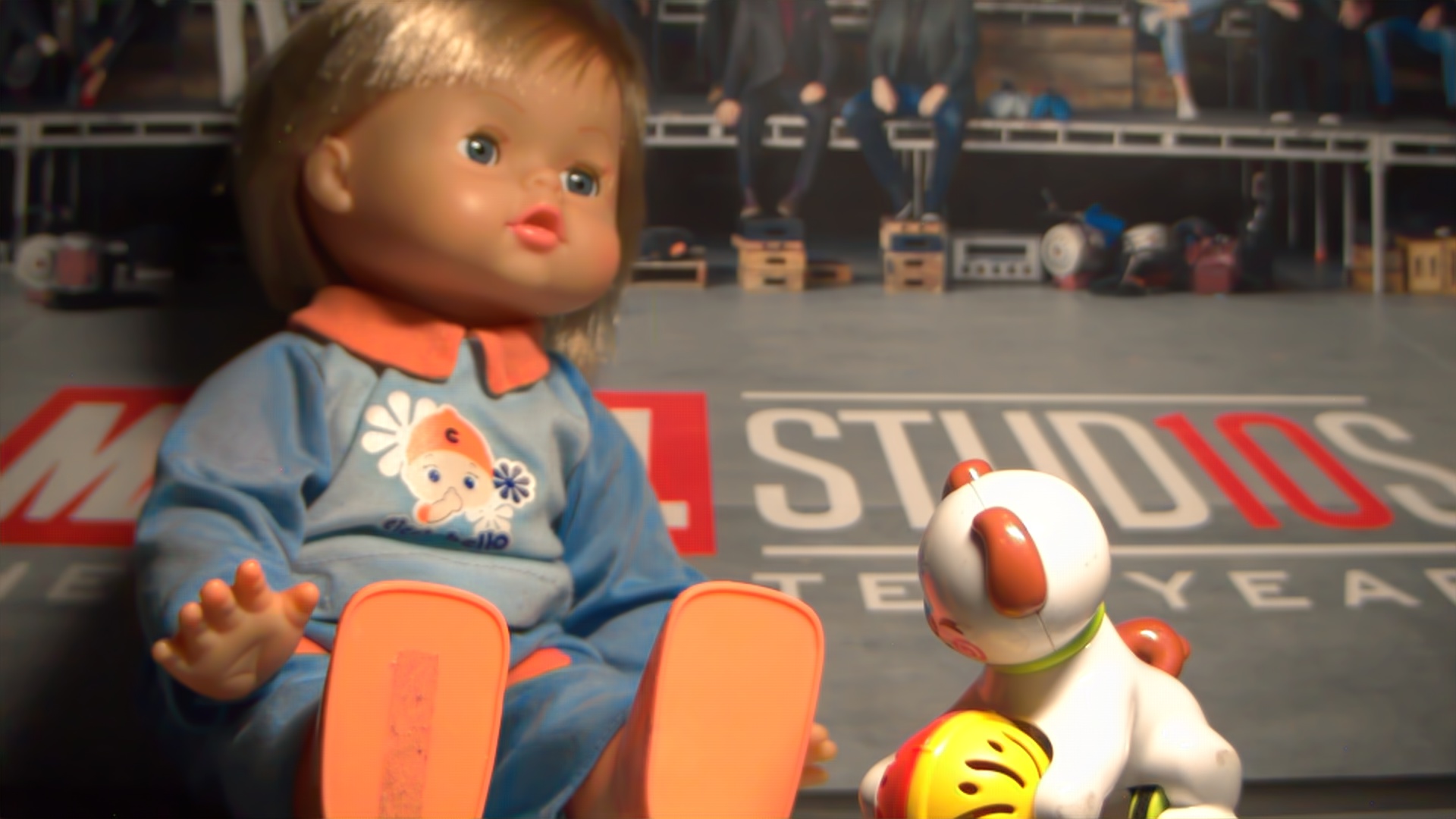}
		\caption{Frame 1}
		\label{fig:crvd_example_s5:f0}
	\end{subfigure}
	\begin{subfigure}{0.13\textwidth}
	    \captionsetup{justification=centering}
		\includegraphics[width=\textwidth]{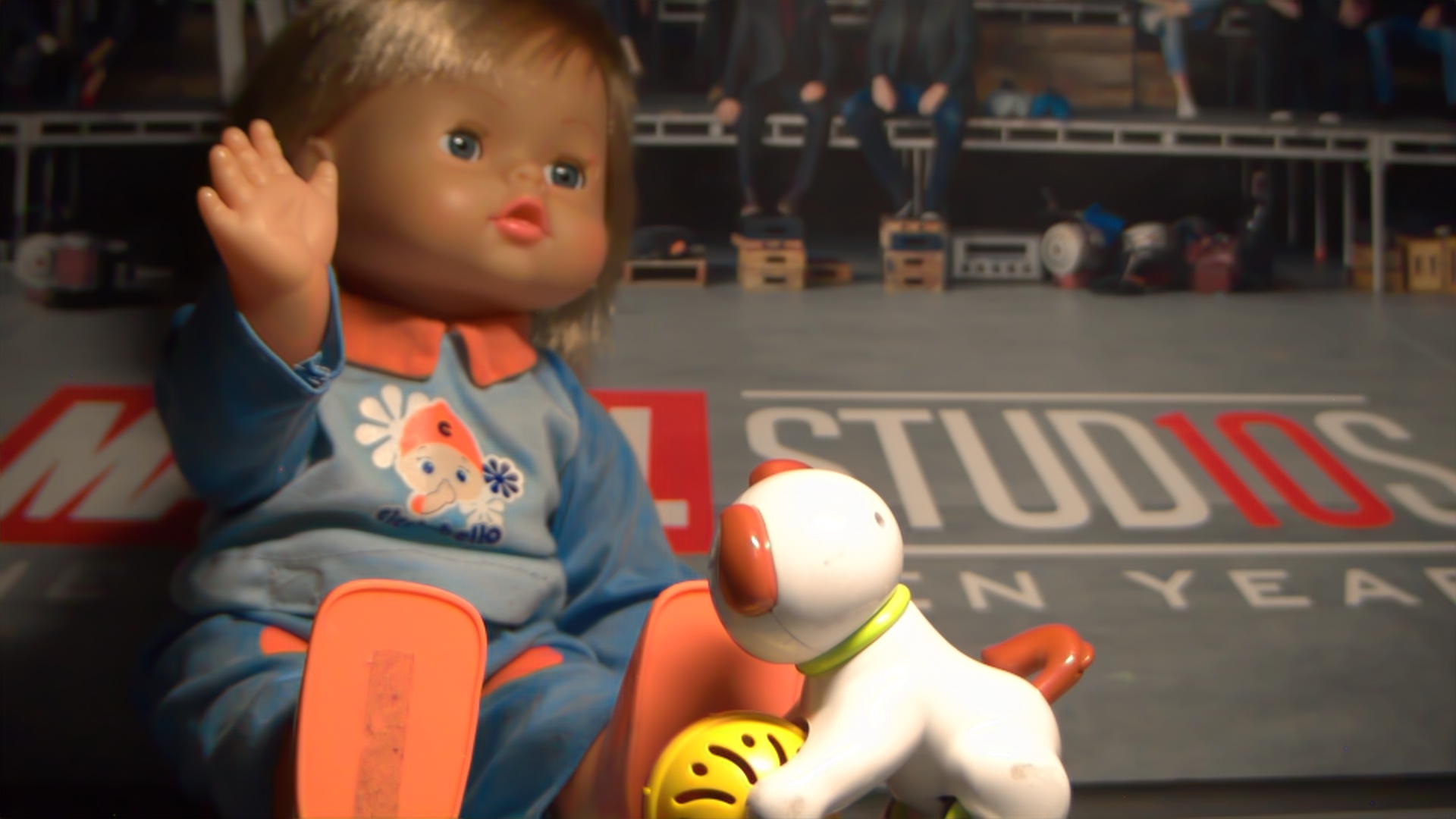}
		\caption{Frame 2}
		\label{fig:crvd_example_s5:f1}
	\end{subfigure}
	\begin{subfigure}{0.13\textwidth}
	    \captionsetup{justification=centering}
		\includegraphics[width=\textwidth]{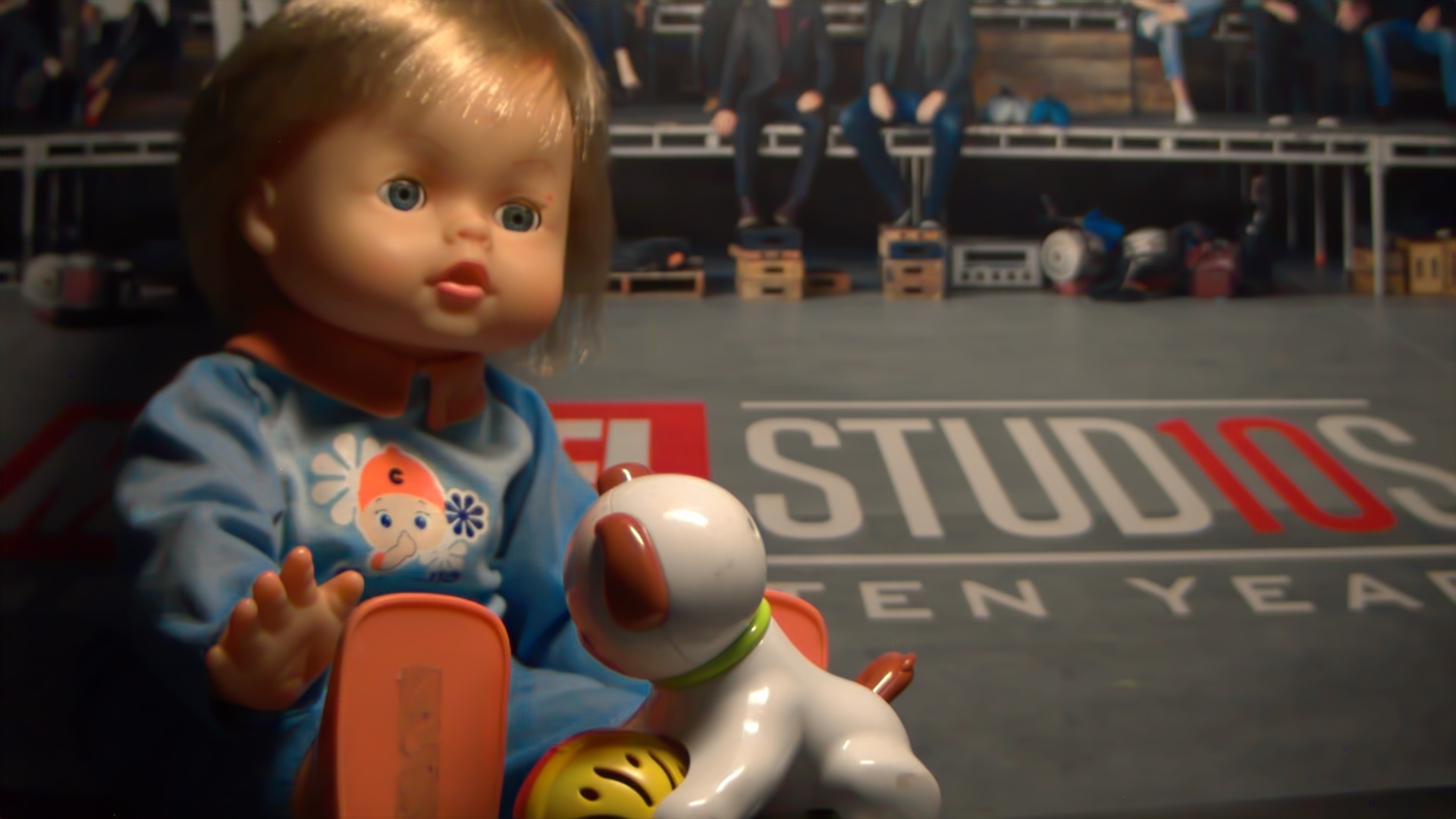}
		\caption{Frame 3}
		\label{fig:crvd_example_s5:f2}
	\end{subfigure}
	\begin{subfigure}{0.13\textwidth}
	    \captionsetup{justification=centering}
		\includegraphics[width=\textwidth]{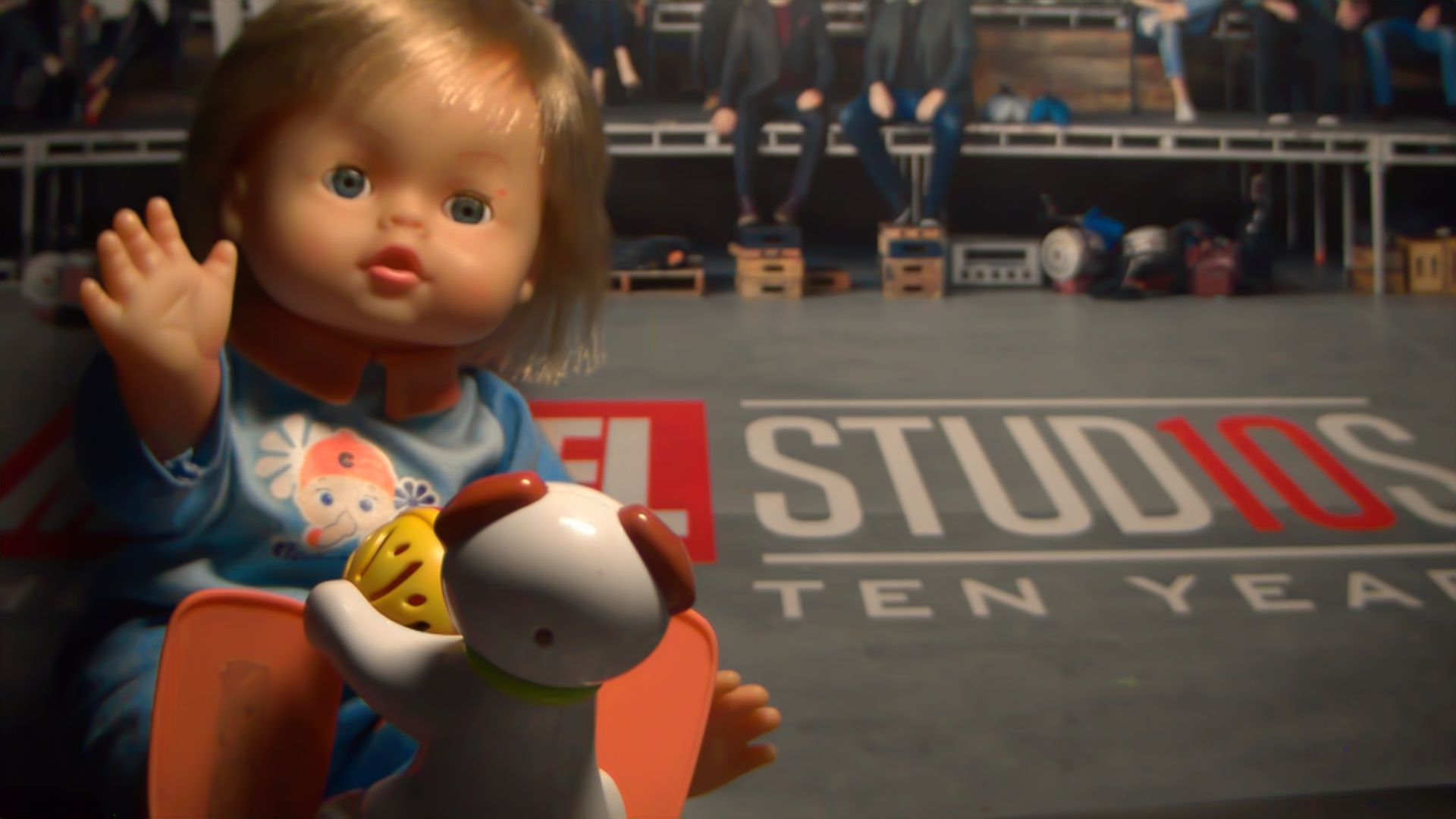}
		\caption{Frame 4}
		\label{fig:crvd_example_s5:f3}
	\end{subfigure}
	\begin{subfigure}{0.13\textwidth}
	    \captionsetup{justification=centering}
		\includegraphics[width=\textwidth]{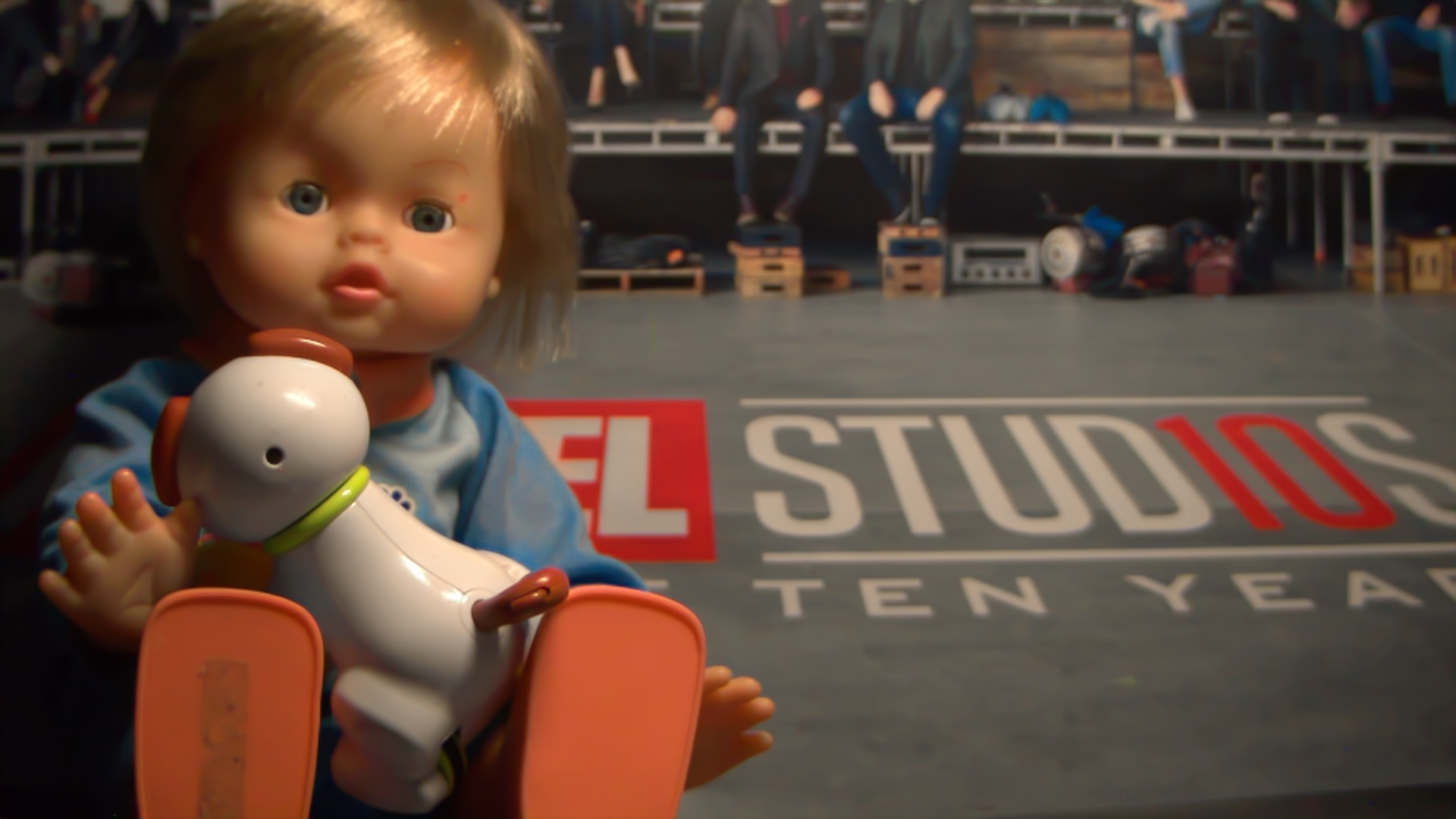}
		\caption{Frame 5}
		\label{fig:crvd_example_s5:f4}
	\end{subfigure}
	\begin{subfigure}{0.13\textwidth}
	    \captionsetup{justification=centering}
		\includegraphics[width=\textwidth]{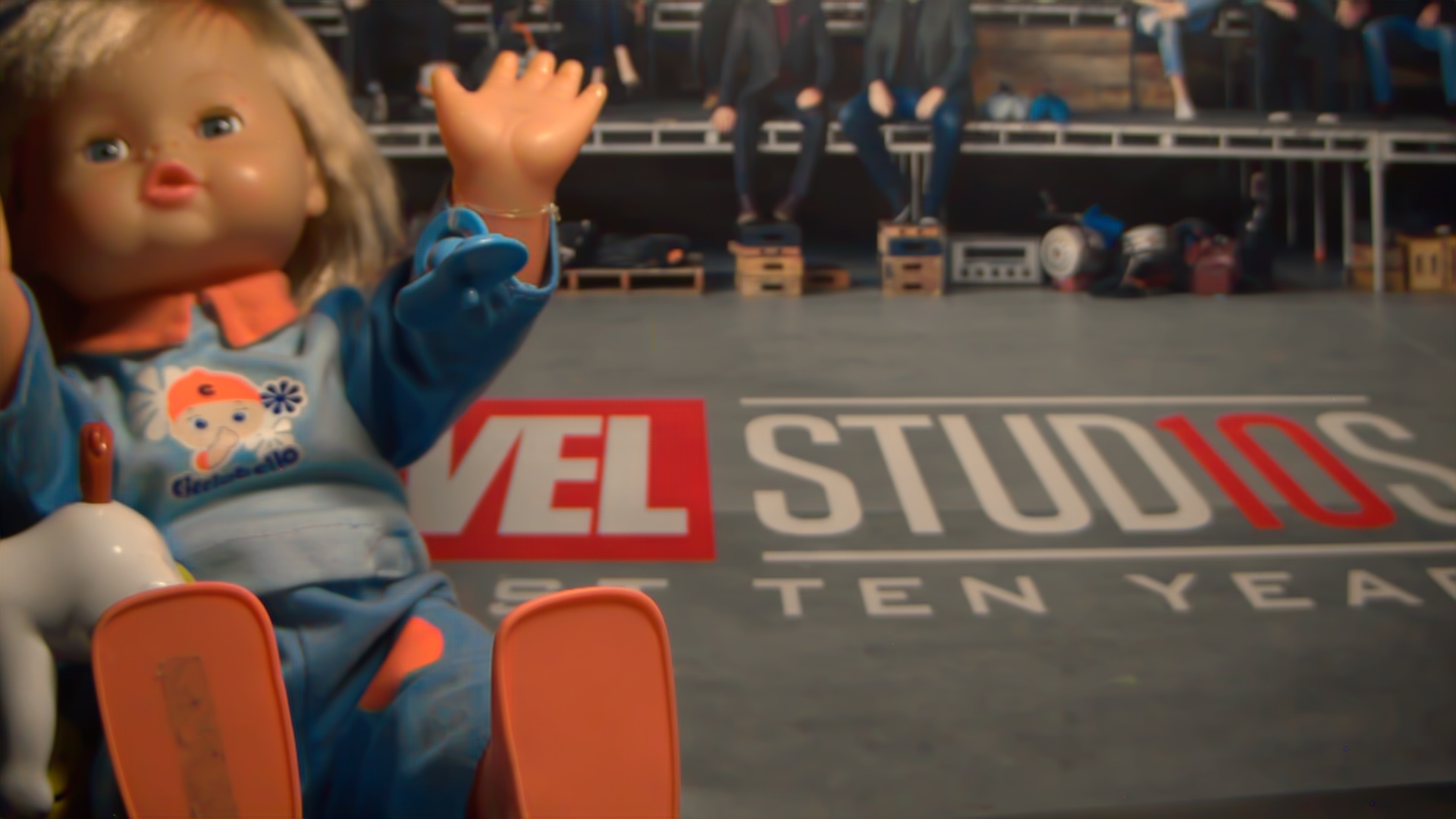}
		\caption{Frame 6}
		\label{fig:crvd_example_s5:f5}
	\end{subfigure}
	\begin{subfigure}{0.13\textwidth}
	    \captionsetup{justification=centering}
		\includegraphics[width=\textwidth]{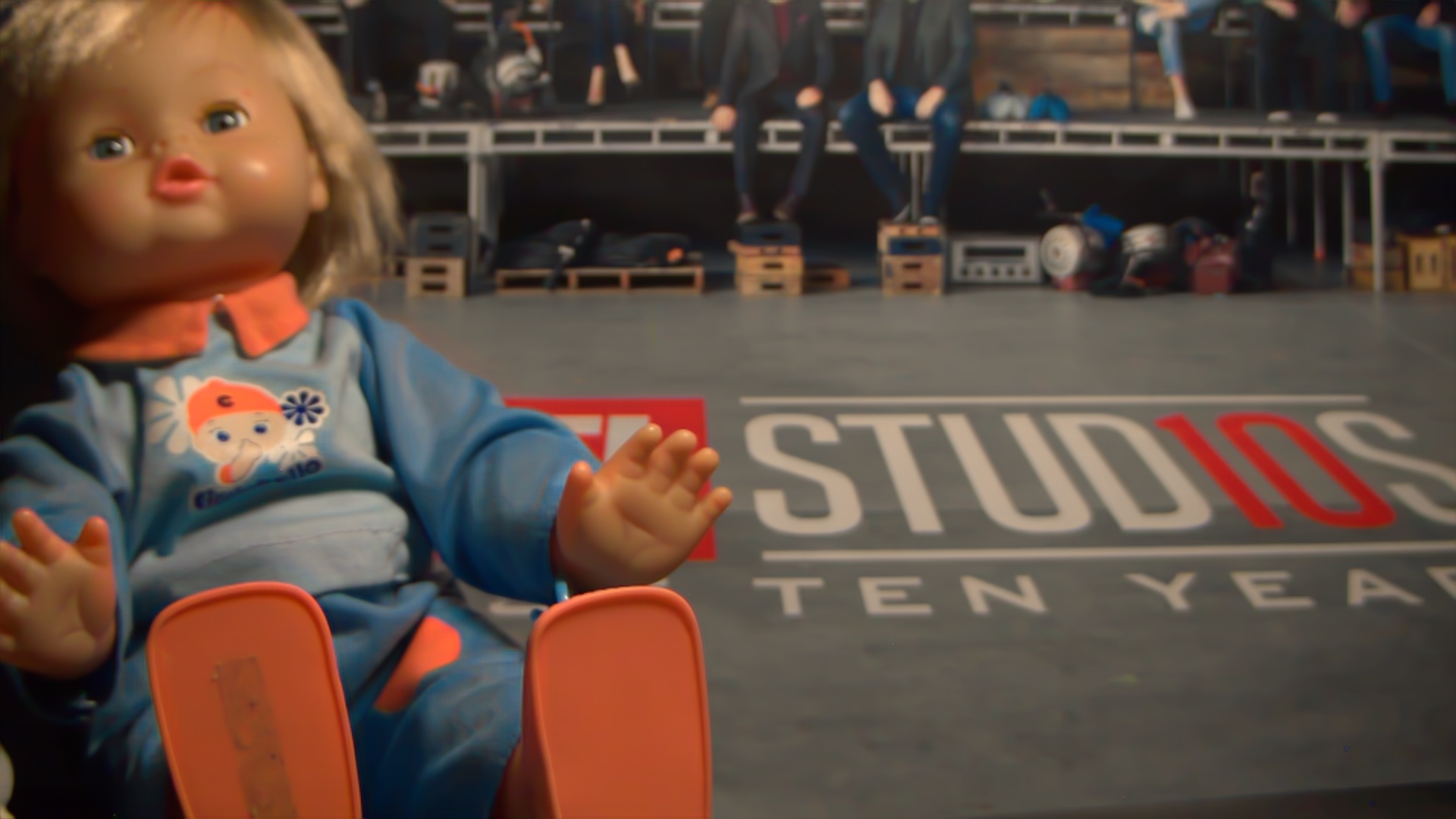}
		\caption{Frame 7}
		\label{fig:crvd_example_s5:f6}
	\end{subfigure}
	\begin{subfigure}{0.13\textwidth}
	    \captionsetup{justification=centering}
		\includegraphics[width=\textwidth]{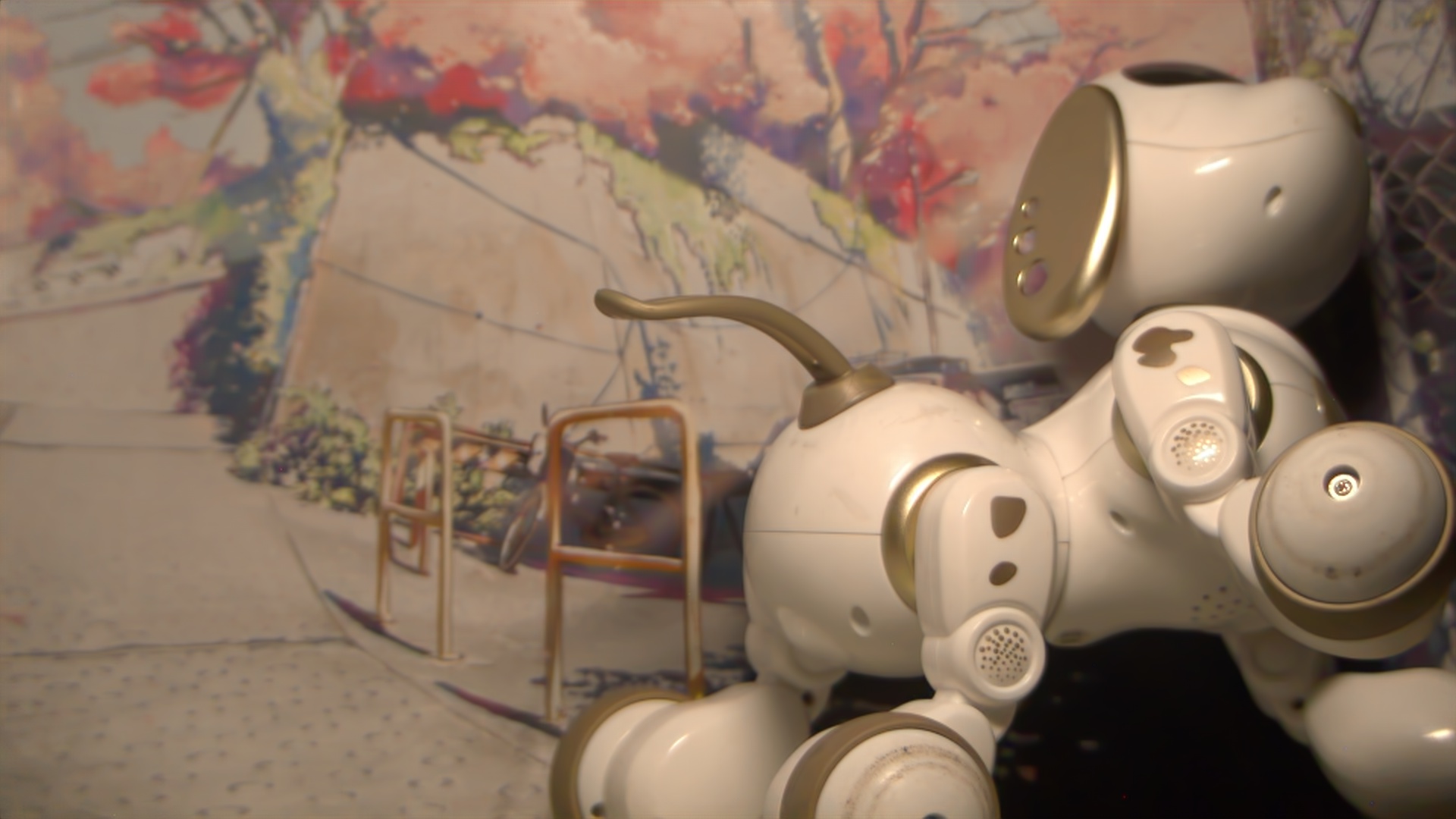}
		\caption{Frame 1}
		\label{fig:crvd_example_s7:f0}
	\end{subfigure}
	\begin{subfigure}{0.13\textwidth}
	    \captionsetup{justification=centering}
		\includegraphics[width=\textwidth]{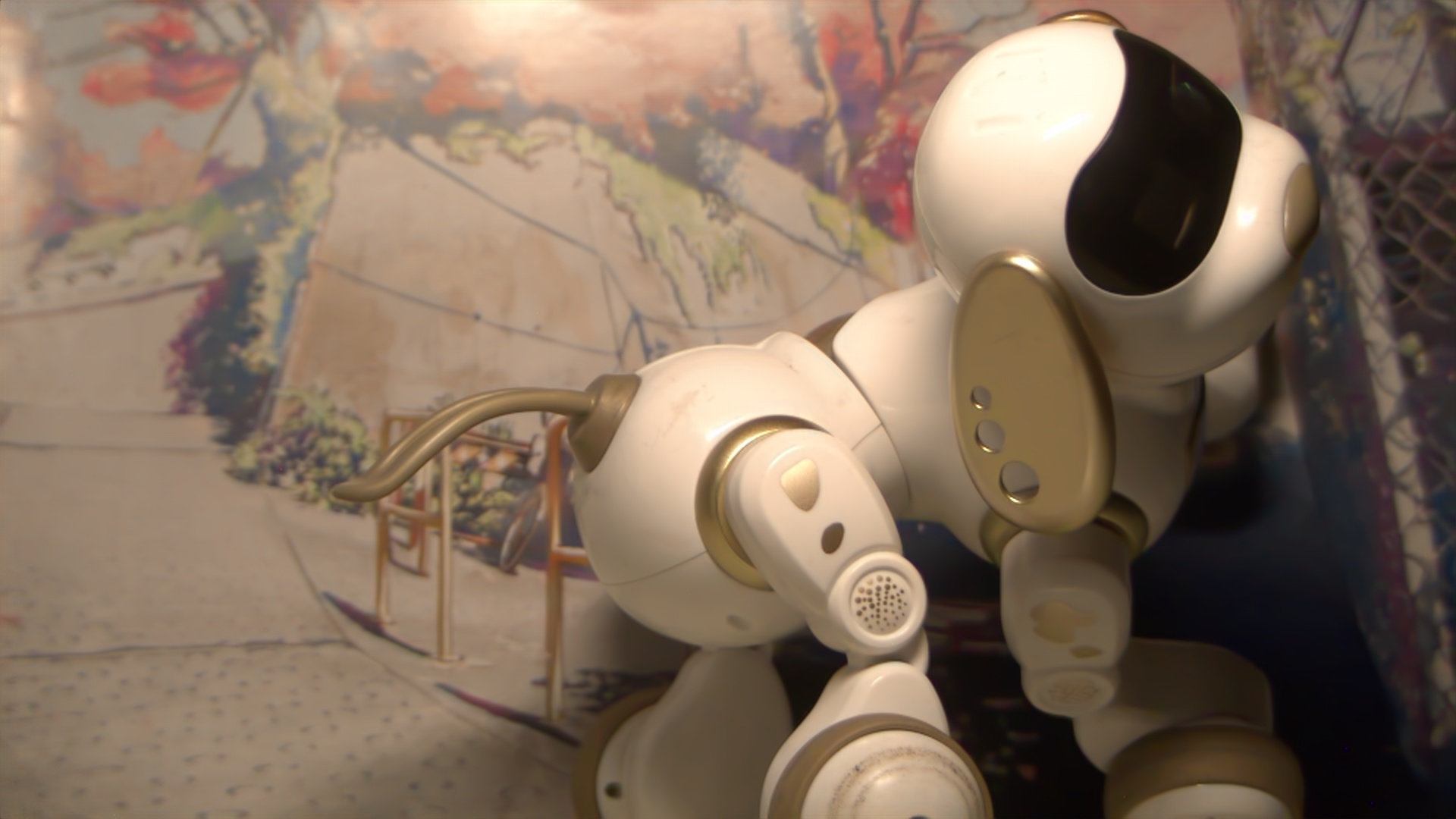}
		\caption{Frame 2}
		\label{fig:crvd_example_s7:f1}
	\end{subfigure}
	\begin{subfigure}{0.13\textwidth}
	    \captionsetup{justification=centering}
		\includegraphics[width=\textwidth]{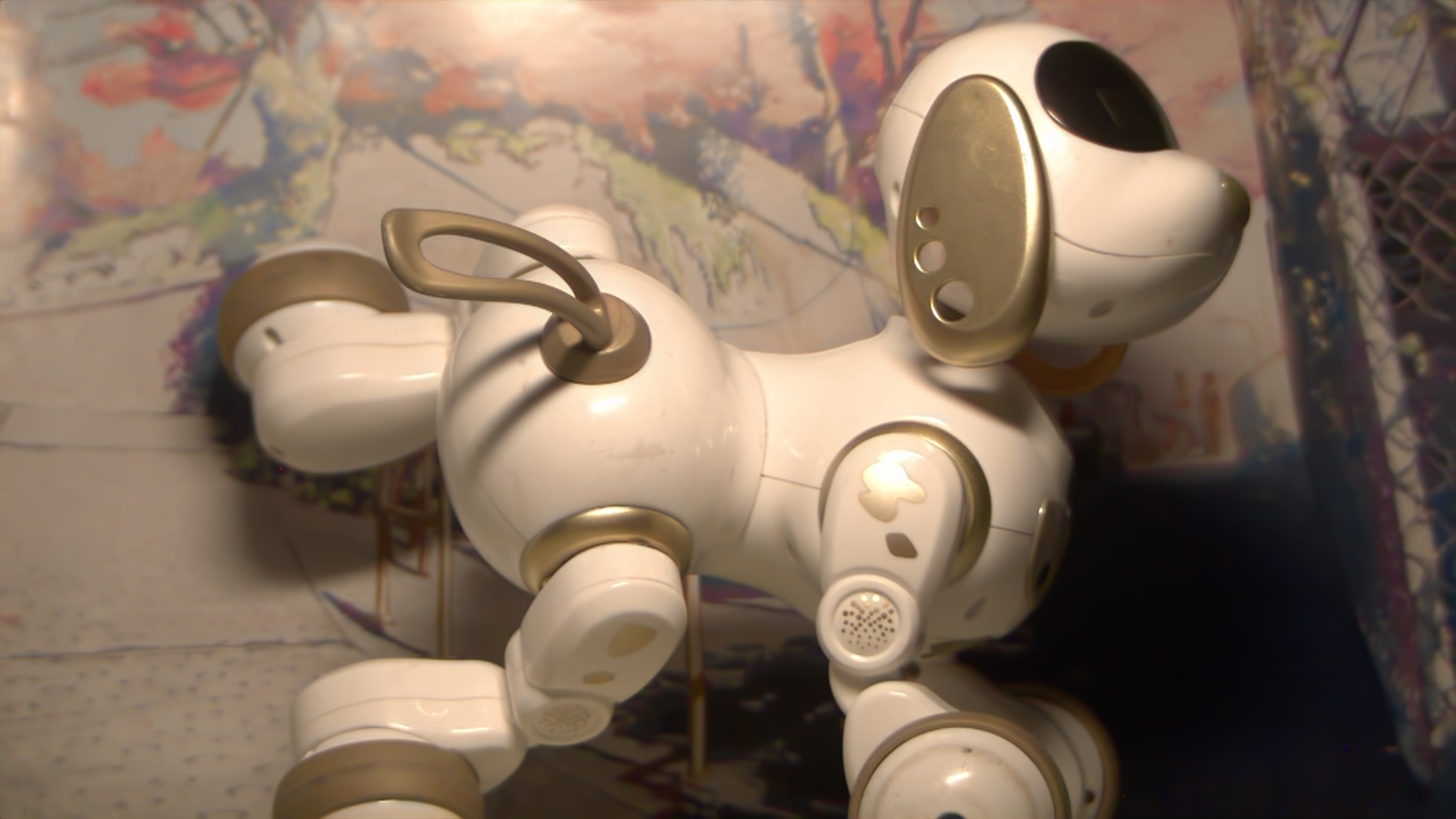}
		\caption{Frame 3}
		\label{fig:crvd_example_s7:f2}
	\end{subfigure}
	\begin{subfigure}{0.13\textwidth}
	    \captionsetup{justification=centering}
		\includegraphics[width=\textwidth]{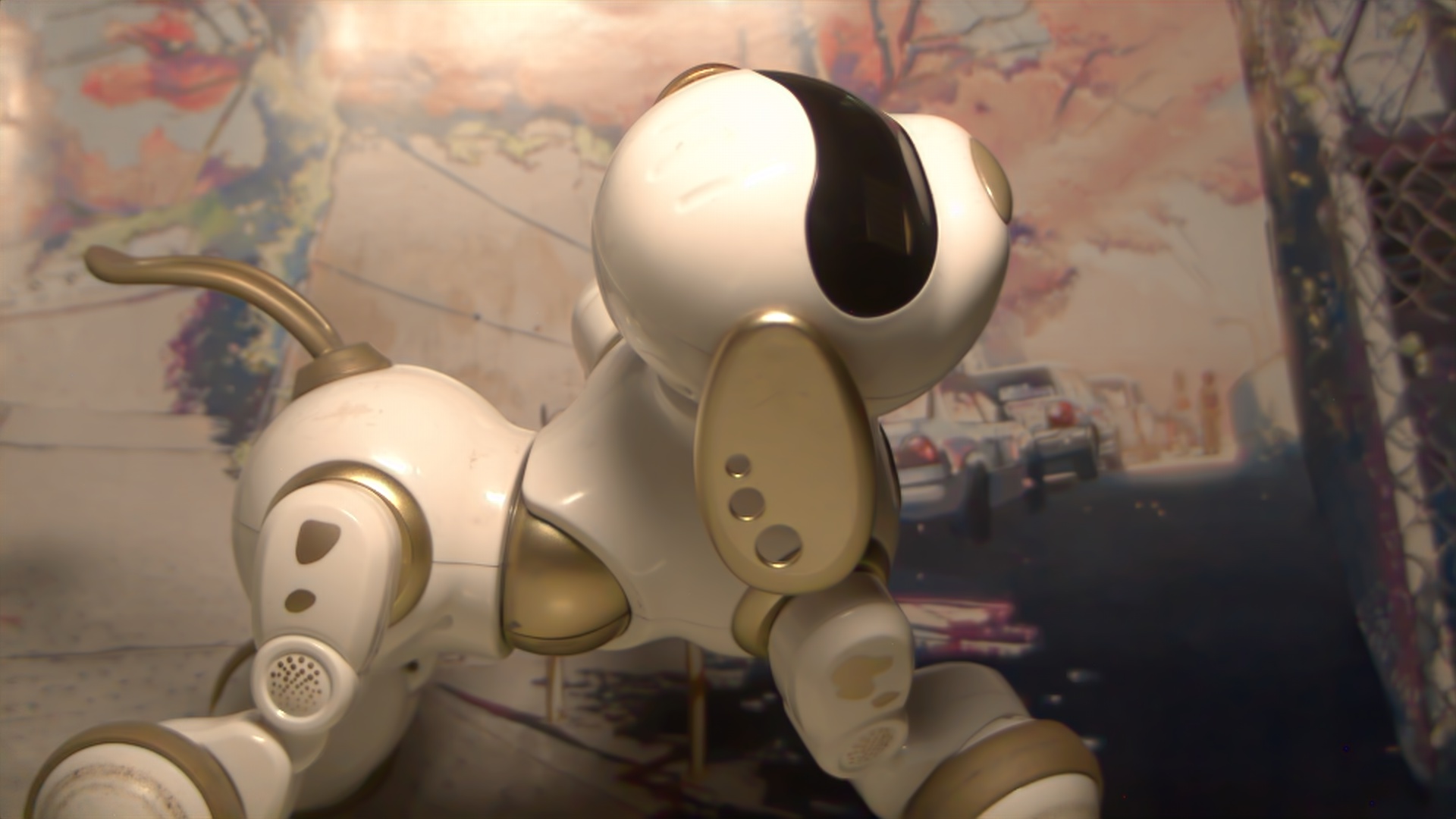}
		\caption{Frame 4}
		\label{fig:crvd_example_s7:f3}
	\end{subfigure}
	\begin{subfigure}{0.13\textwidth}
	    \captionsetup{justification=centering}
		\includegraphics[width=\textwidth]{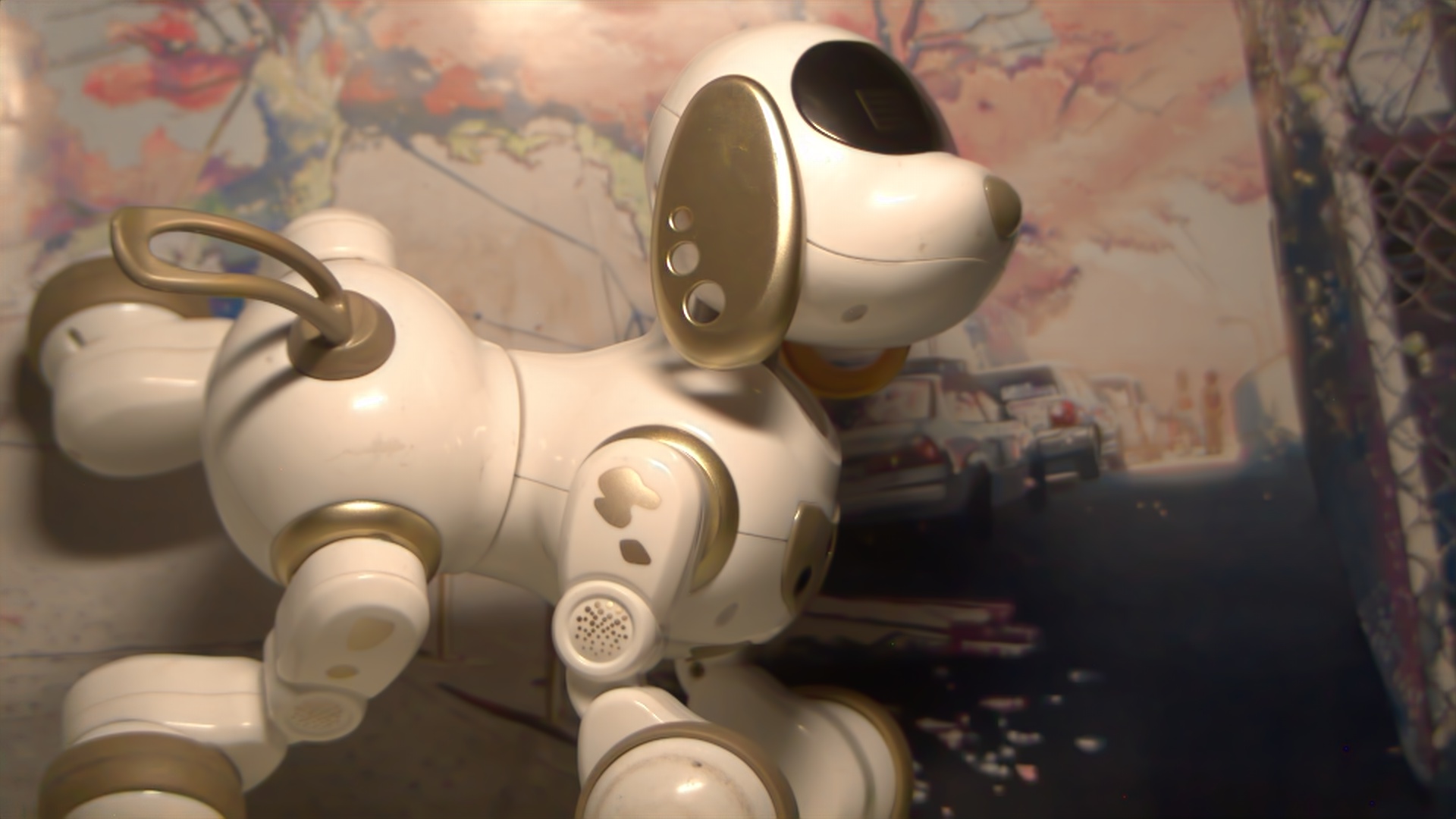}
		\caption{Frame 5}
		\label{fig:crvd_example_s7:f4}
	\end{subfigure}
	\begin{subfigure}{0.13\textwidth}
	    \captionsetup{justification=centering}
		\includegraphics[width=\textwidth]{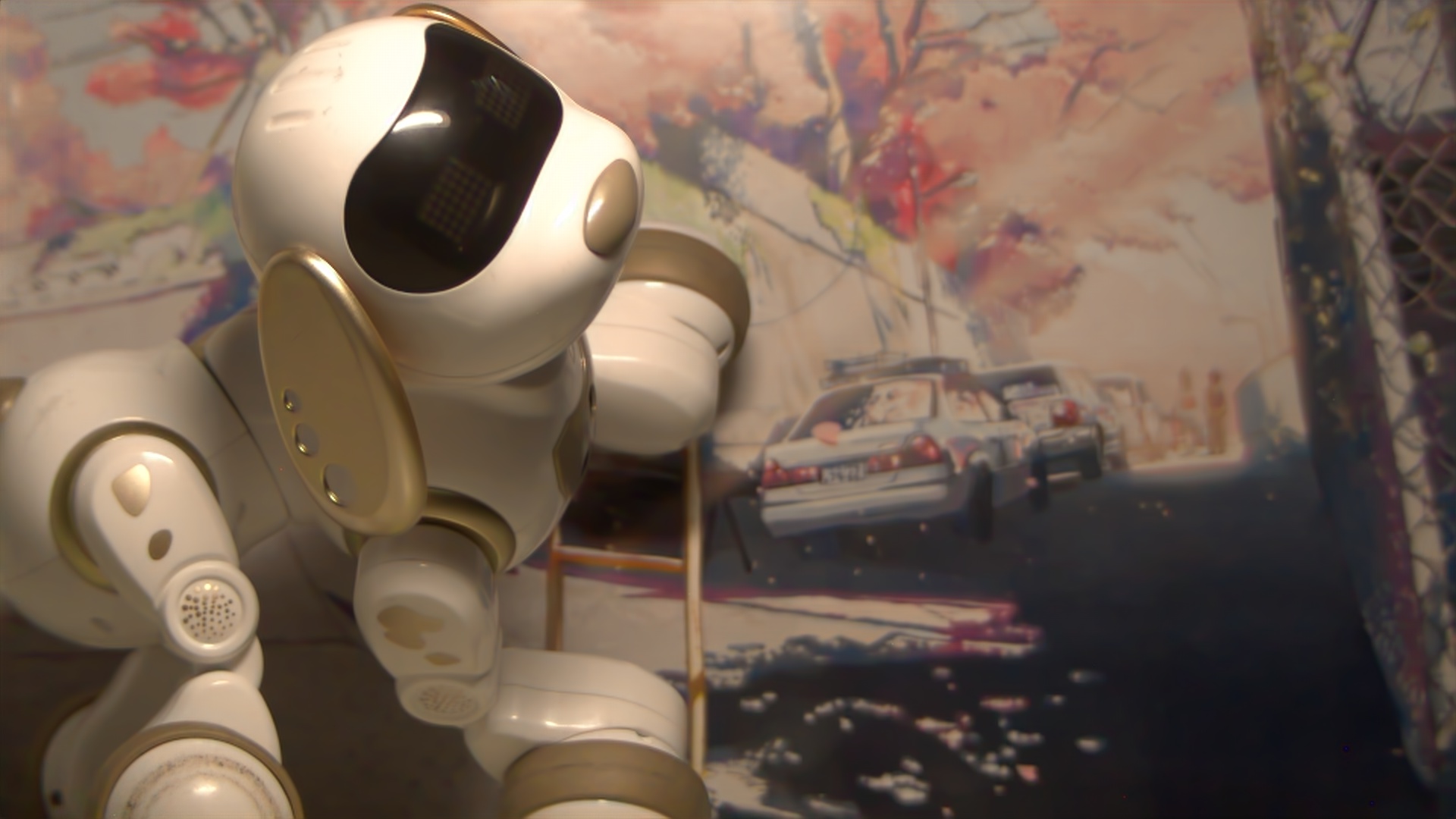}
		\caption{Frame 6}
		\label{fig:crvd_example_s7:f5}
	\end{subfigure}
	\begin{subfigure}{0.13\textwidth}
	    \captionsetup{justification=centering}
		\includegraphics[width=\textwidth]{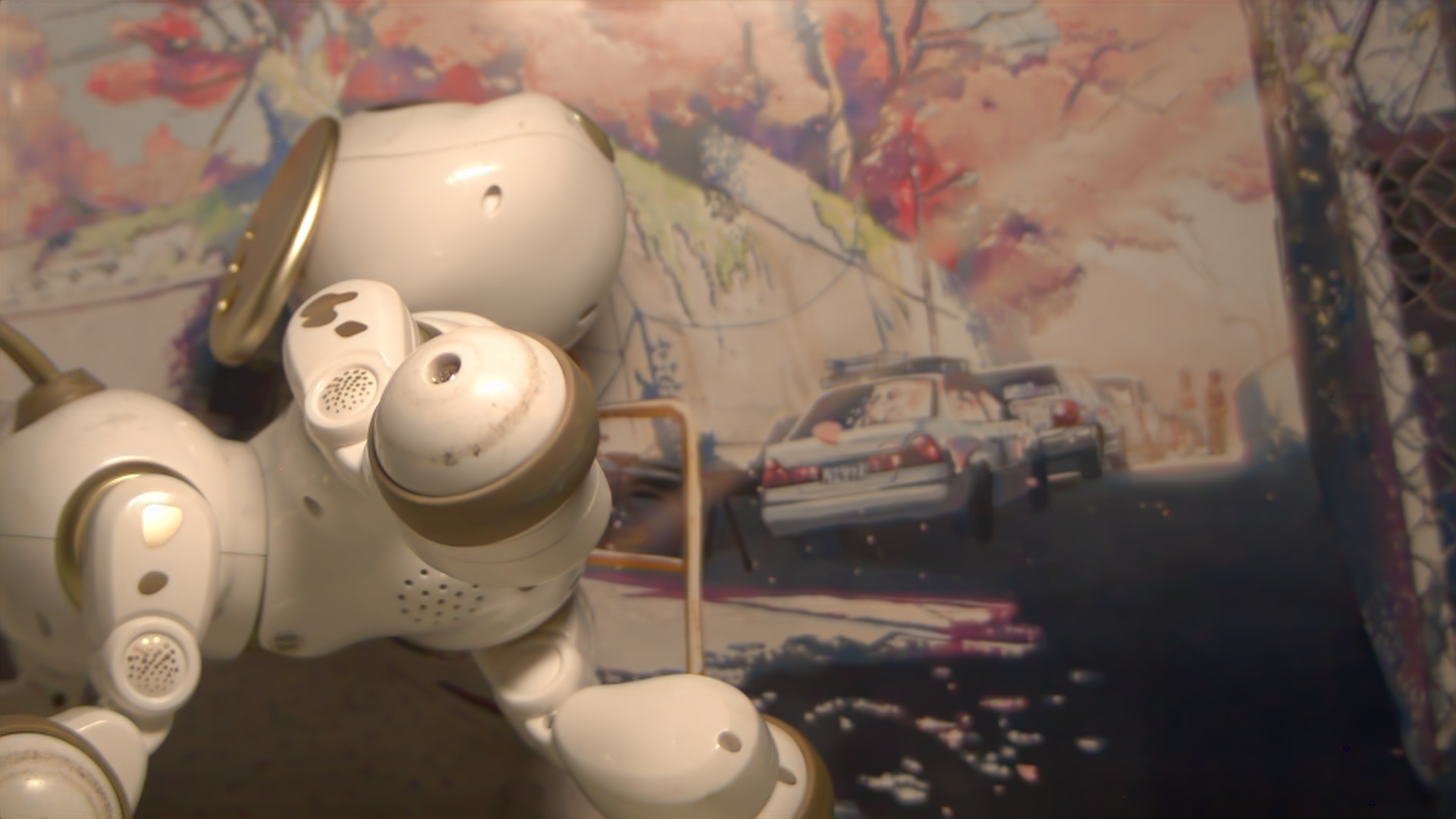}
		\caption{Frame 7}
		\label{fig:crvd_example_s7:f6}
	\end{subfigure}
	\caption{Examples of CRVD sequences. Figures~\ref{fig:crvd_example_s5:f0} to~\ref{fig:crvd_example_s5:f6} present sequence 5, while figures~\ref{fig:crvd_example_s7:f0} to~\ref{fig:crvd_example_s7:f6} shown sequence 7.}
	\label{fig:crvd_example_s5_s7}
\end{figure}

\begin{lemma}
\label{lem:scalar_func}
\begin{equation*}
    \sum_{i, j = 1}^n f\left(i - j\right) = n\sum_{\tau = -\left(n - 1\right)}^{n - 1} \left(1 - \frac{\left|\tau\right|}{n}\right)f\left(\tau\right) \;.
\end{equation*}
\end{lemma}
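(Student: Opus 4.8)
The plan is to reorganize the double sum according to the value of the difference $\tau = i-j$, exploiting the fact that the summand $f(i-j)$ depends on $i$ and $j$ only through this difference. First I would partition the index set $\{(i,j):1\le i,j\le n\}$ into the ``diagonals'' $D_\tau=\{(i,j):1\le i,j\le n,\ i-j=\tau\}$, where $\tau$ ranges over the integers from $-(n-1)$ to $n-1$ (these are precisely the values that $i-j$ can take). Since the summand is constant, equal to $f(\tau)$, on each $D_\tau$, we immediately get $\sum_{i,j=1}^n f(i-j)=\sum_{\tau=-(n-1)}^{n-1}|D_\tau|\,f(\tau)$.

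The next step is to count $|D_\tau|$. For a fixed $\tau$, a pair $(i,j)$ lies in $D_\tau$ exactly when $j=i-\tau$ with both $i$ and $i-\tau$ in $\{1,\dots,n\}$, i.e.\ when $i\in\{1,\dots,n\}\cap\{1+\tau,\dots,n+\tau\}$. For $\tau\ge 0$ this intersection is $\{1+\tau,\dots,n\}$ and for $\tau<0$ it is $\{1,\dots,n+\tau\}$; in either case it contains exactly $n-|\tau|$ integers, so $|D_\tau|=n-|\tau|$.

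Substituting this count and pulling a factor of $n$ out of each term then yields
\begin{equation*}
\sum_{i,j=1}^n f(i-j)=\sum_{\tau=-(n-1)}^{n-1}\bigl(n-|\tau|\bigr)f(\tau)=n\sum_{\tau=-(n-1)}^{n-1}\left(1-\frac{|\tau|}{n}\right)f(\tau),
\end{equation*}
which is the asserted identity. There is no genuine obstacle here: the only points requiring care are the bookkeeping of the diagonal lengths at the two extremes $\tau=\pm(n-1)$, where $|D_\tau|=1$, and checking that the range of $\tau$ is exactly $[-(n-1),\,n-1]$ so that every nonzero contribution to the original sum is accounted for and none is double counted.
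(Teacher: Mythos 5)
Your proof is correct. It rests on the same underlying fact as the paper's — reindexing the double sum by the difference $\tau=i-j$ and identifying the multiplicity of each value of $\tau$ as $n-|\tau|$ — but your execution is more direct: you partition the index set into the diagonals $D_\tau$ and count $|D_\tau|$ in one step, whereas the paper splits the inner sum into the two triangular ranges $1\le j\le i$ and $i\le j\le n$, subtracts the double-counted term $f(0)$, and performs two separate index substitutions to reach $\sum_{\tau}(n-|\tau|)f(\tau)$. Your version buys brevity and makes the combinatorial content (the diagonal lengths) explicit; the paper's version is longer but mechanical, requiring no counting argument beyond evaluating $\sum_{i=\tau}^{n}1$. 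Both are complete; your bookkeeping of the range $\tau\in[-(n-1),n-1]$ and the endpoint cases $|D_{\pm(n-1)}|=1$ is exactly the care needed.
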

\begin{proof}
We start by splitting the inner sum into two ranges, $1 \le j \le i$ and $i \le j \le n$
\begin{equation}
\label{eq:lem_dsum_split}
    \begin{split}
      & \sum_{i, j = 1}^n f\left(i - j\right) = \\
      & = \sum_{i = 1}^n\left(\sum_{j = 1}^i f\left(i - j\right) + \sum_{j = i}^n f\left(i - j\right) - f\left(i - i\right)\right) = \\
      & = \sum_{i = 1}^n \sum_{j = 1}^i f\left(i - j\right) + \sum_{i = 1}^n \sum_{j = i}^n f\left(i - j\right) - nf\left(0\right) \;.
    \end{split}
\end{equation}
We calculate the first double sum. Substituting $\tau = i - j + 1$ and changing the order of the double summation, we have
\begin{equation}
\label{eq:1_d_sum}
    \begin{split}
        \sum_{i = 1}^n \sum_{j = 1}^i f\left(i - j\right) & = \sum_{i = 1}^n \sum_{\tau = 1}^{i} f\left(\tau - 1\right) = \\
        & = \sum_{1 \le \tau \le i \le n} f\left(\tau - 1\right) = \\
        & = \sum_{\tau = 1}^n \sum_{i = \tau}^{n} f\left(\tau - 1\right) = \\ 
        & = \sum_{\tau = 1}^n f\left(\tau - 1\right)\sum_{i = \tau}^{n}1 = \\
        & = \sum_{\tau = 1}^n f\left(\tau - 1\right)\left(n - \left(\tau - 1\right)\right) = \\
        & = \sum_{\tau = 0}^{n - 1} \left(n - \tau\right)f\left(\tau\right) = \\
        & = \sum_{\tau = 0}^{n - 1} \left(n - \left|\tau\right|\right)f\left(\tau\right) \;.
    \end{split}
\end{equation}
Substituting variable change $t = n - j + 1$, $k = n - i + 1$ into the second double summation in equation~\ref{eq:lem_dsum_split}, we get
\begin{equation}
\label{eq:d_sum_2}
    \begin{split}
        \sum_{i = 1}^n \sum_{j = i}^n f\left(i - j\right) = & \sum_{k = 1}^n \sum_{t = 1}^k f\left(t - k\right) = \\
        = & \sum_{k = 1}^n \sum_{t = 1}^k g\left(k - t\right) \;,
    \end{split}
\end{equation}
where $g\left(\tau\right) = f\left(-\tau\right)$.
Substituting equation~\ref{eq:1_d_sum} into~\ref{eq:d_sum_2}, we have
\begin{equation}
\label{eq:2_d_sum}
    \begin{split}
            \sum_{i = 1}^n \sum_{j = i}^n f\left(i - j\right) = & \sum_{\tau = 0}^{n - 1} \left(n - \left|\tau\right|\right)g\left(\tau\right) = \\
            = & \sum_{\tau = -\left(n - 1\right)}^{0} \left(n - \left|\tau\right|\right)f\left(\tau\right) \;.
    \end{split}
\end{equation}
Substituting equations~\ref{eq:1_d_sum}~and~\ref{eq:2_d_sum} into equation~\ref{eq:lem_dsum_split} finalizes the proof
\begin{equation}
    \begin{split}
        & \sum_{i, j = 1}^n f\left(i - j\right) = \\
        & \quad = \sum_{i = 1}^n \sum_{j = 1}^i f\left(i - j\right) + \sum_{i = 1}^n \sum_{j = i}^n f\left(i - j\right) - nf\left(0\right) \\
        & \quad = \sum_{\tau = 0}^{n - 1} \left(n - \left|\tau\right|\right)f\left(\tau\right) + \\
        & \qquad\qquad\qquad\sum_{\tau = -\left(n - 1\right)}^{0} \left(n - \left|\tau\right|\right)f\left(\tau\right) - nf\left(0\right) = \\
        & \quad = \sum_{\tau = -\left(n - 1\right)}^{n - 1} \left(n - \left|\tau\right|\right)f\left(\tau\right) \;.
    \end{split}
\end{equation}
\end{proof}

\section{Training Details}
\label{app:training_details}
In all experiments, we train the denoisers for 30 epochs using the Adam~\cite{Kingma2015AdamAM} optimizer while decreasing the learning rate by 0.5 every 5 epochs. The initial learning rate is 0.001 for the correlated Gaussian experiment and 0.00001 for the experinet with real-world noise. During the training, we extract random patches of size $50 \times 50$ from the training data and use a batch size of 32 for correlated Gaussian denoising and 128 for the experiment with real-world noise. The values of $n$ used in the patch search are summarized in tables~\ref{tab:davis_n}~and~\ref{tab:crvd_n}. Examples of CRVD sequences are presented in figure~\ref{fig:crvd_example_s5_s7}.
\begin{table}
    \centering
    \begin{tabular}{|c|c|c|c|c|}
         \hline
         & $\sigma = 5$ & $\sigma = 10$ & $\sigma = 15$ & $\sigma = 20$ \\
         \hline
         $k = 2$ &  19 & 27 & 31 & 33 \\
         \hline
         $k = 3$ &  19 & 37 & 41 & 43 \\
         \hline
         $k = 4$ &  25 & 43 & 43 & 45 \\
         \hline
    \end{tabular}
    \caption{Patch size $n$ for correlated Gaussian experiments.}
    \label{tab:davis_n}
\end{table}
\begin{table}
    \centering
    \begin{tabular}{|c|c|c|c|c|c|}
         \hline
         ISO & 1600 & 3200 & 6400 & 12800 & 25600 \\
         \hline
         $n$ &  15 & 25 & 27 & 35 & 37 \\
         \hline
    \end{tabular}
    \caption{Patch size $n$ for experiments with real-world noise.}
    \label{tab:crvd_n}
\end{table}

\section{Additional Results}
\label{app:additional_results}
This section provides additional visual comparisons of our framework versus leading competitors. Figures~\ref{fig:davis_20_1_s10_k3_11_0_s10_k4},~\ref{fig:davis_2_0_s15_k4_8_1_s20_k4},~and~\ref{fig:davis_4_2_s15_k3_19_0_s5_k4} show denoising examples from the correlated Gaussian experiments, while figures~\ref{fig:crvd_8_5_iso25600_2_3_iso25600},~\ref{fig:crvd_5_2_iso12800_10_5_iso12800},~and~\ref{fig:crvd_1_5_iso6400_0_4_iso3200} present real-world denoising examples. 
\begin{figure*}
    \centering
	\begin{subfigure}{0.18\textwidth}
	    \captionsetup{justification=centering}
		\includegraphics[width=\textwidth]{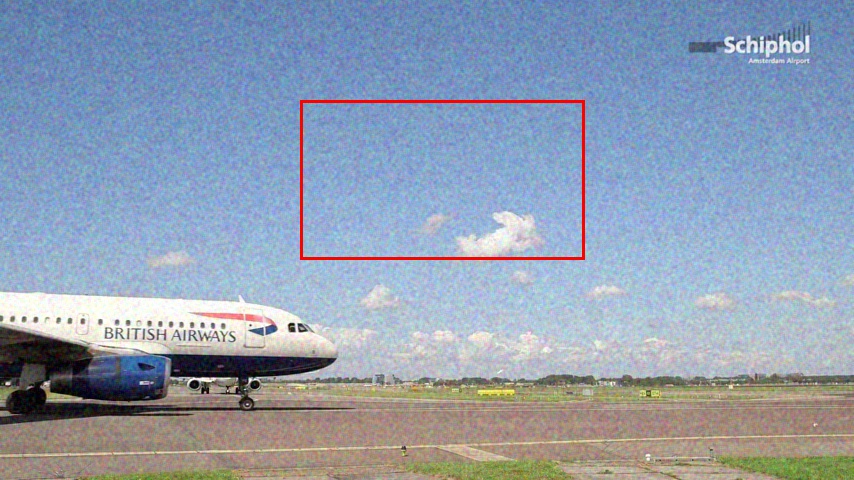}
		\caption{Noisy \\ 28.12 / 0.522}
		\label{fig:davis_20_1_s10_k3:noisy_rect}
	\end{subfigure}
	\begin{subfigure}{0.18\textwidth}
	    \captionsetup{justification=centering}
		\includegraphics[width=\textwidth]{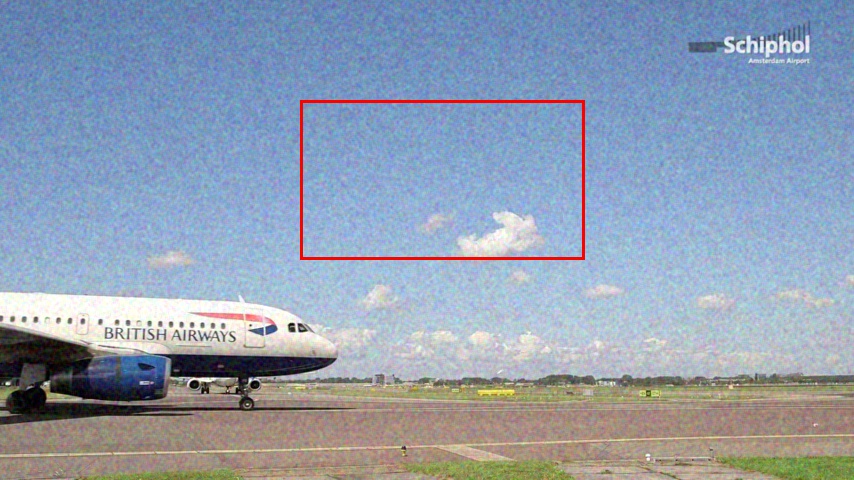}
		\caption{N2N \\ 28.70 / 0.555}
		\label{fig:davis_20_1_s10_k3:n2n_rect}
	\end{subfigure}
	\begin{subfigure}{0.18\textwidth}
	    \captionsetup{justification=centering}
		\includegraphics[width=\textwidth]{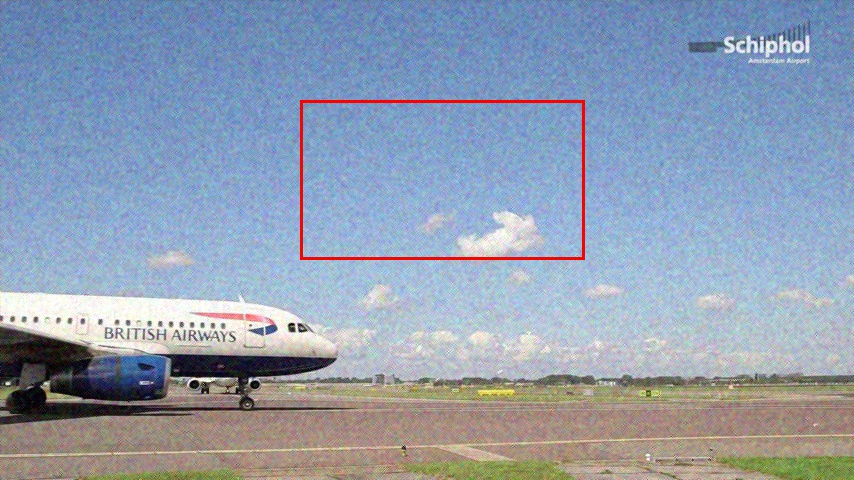}
		\caption{B2U \\ 24.27 / 0.335}
		\label{fig:davis_20_1_s10_k3:b2u_rect}
	\end{subfigure}
	\begin{subfigure}{0.18\textwidth}
	    \captionsetup{justification=centering}
		\includegraphics[width=\textwidth]{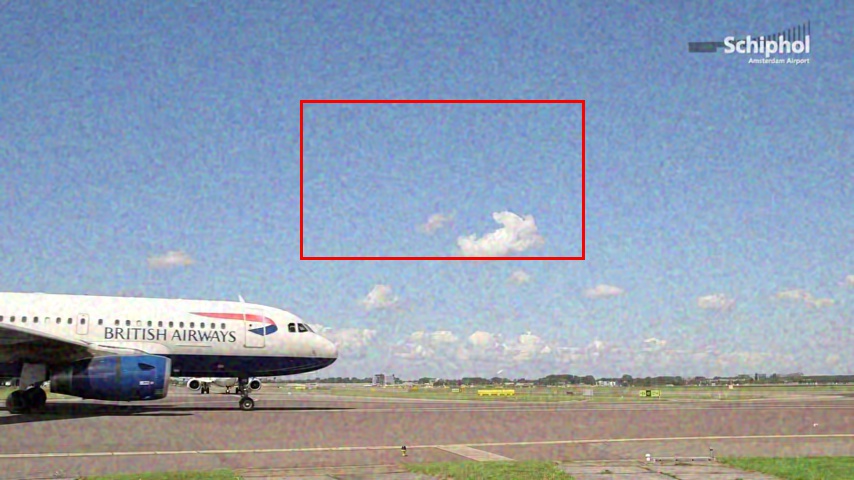}
		\caption{BM3D \\ 31.41 / 0.726}
		\label{fig:davis_20_1_s10_k3:bm3d_rect}
	\end{subfigure}
	\begin{subfigure}{0.18\textwidth}
	    \captionsetup{justification=centering}
		\includegraphics[width=\textwidth]{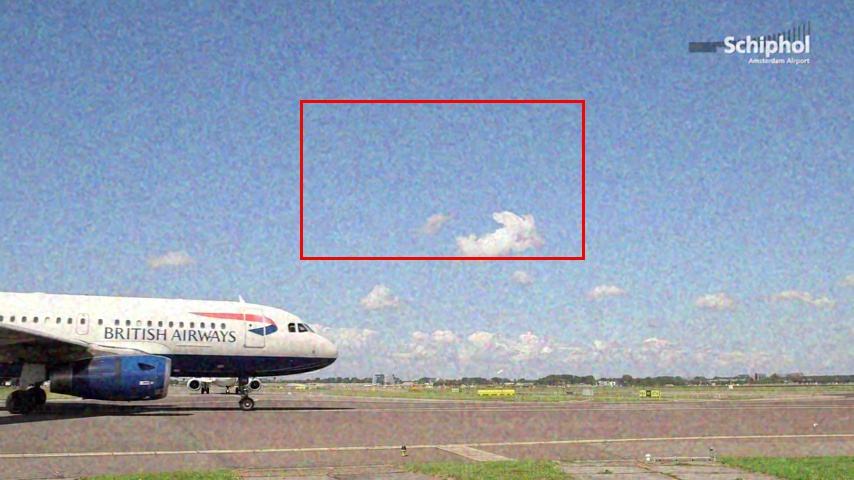}
		\caption{B-DnCNN \\ 30.55 / 0.677}
		\label{fig:davis_20_1_s10_k3:b_dncnn_rect}
	\end{subfigure}
	\begin{subfigure}{0.18\textwidth}
	    \captionsetup{justification=centering}
		\includegraphics[width=\textwidth]{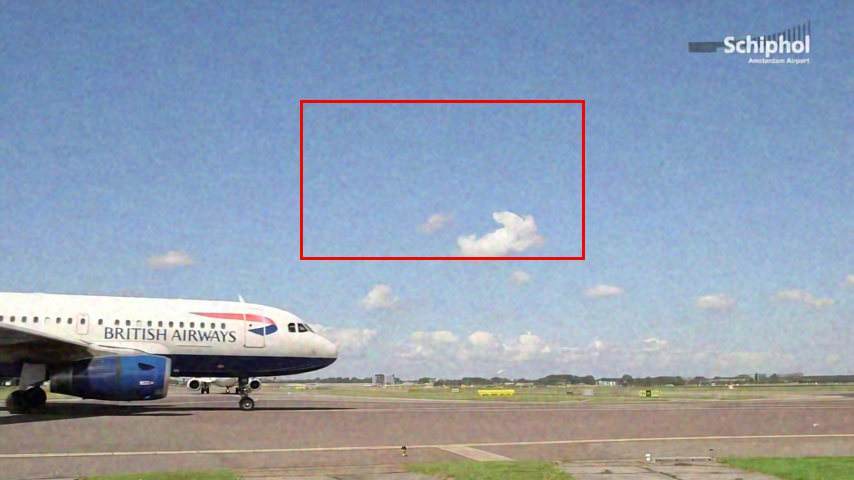}
		\caption{R2R \\ 34.05 / 0.836}
		\label{fig:davis_20_1_s10_k3:r2r_rect}
	\end{subfigure}
	\begin{subfigure}{0.18\textwidth}
	    \captionsetup{justification=centering}
		\includegraphics[width=\textwidth]{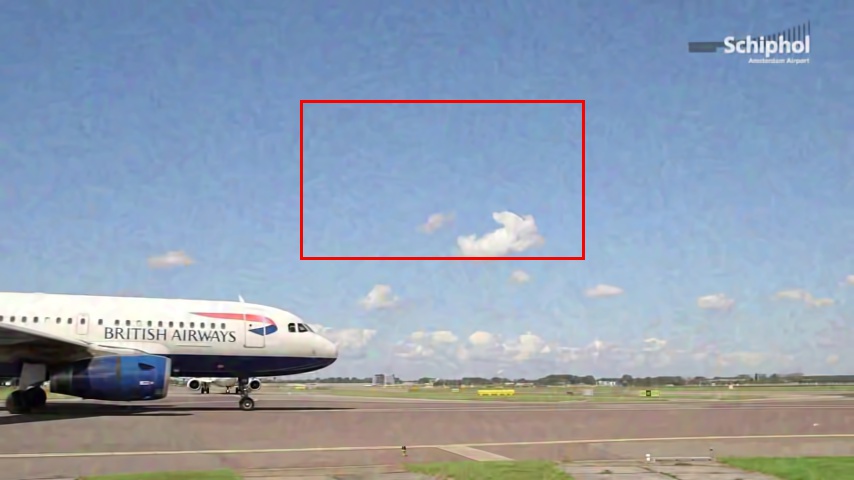}
		\caption{BM3D-O \\ 36.37 / 0.939}
		\label{fig:davis_20_1_s10_k3:bm3d_opt_rect}
	\end{subfigure}
	\begin{subfigure}{0.18\textwidth}
	    \captionsetup{justification=centering}
		\includegraphics[width=\textwidth]{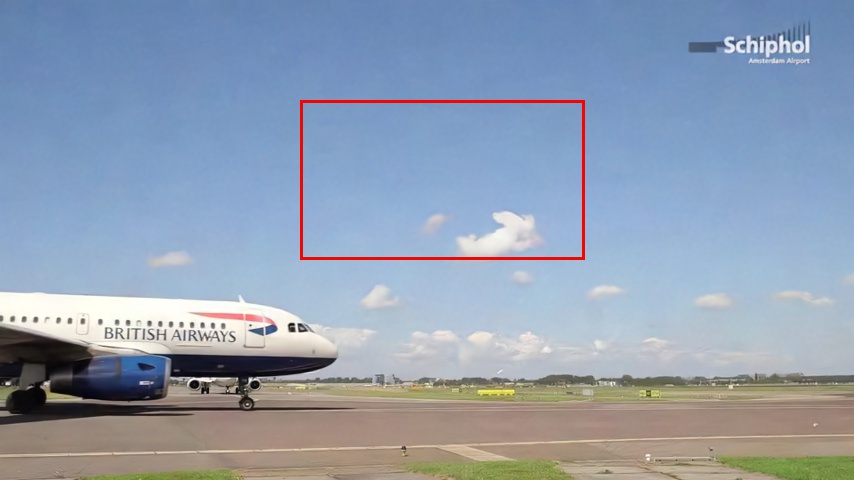}
		\caption{PC-UNet \\ 38.65 / 0.965}
		\label{fig:davis_20_1_s10_k3:pc_unet_rect}
	\end{subfigure}
	\begin{subfigure}{0.18\textwidth}
	    \captionsetup{justification=centering}
		\includegraphics[width=\textwidth]{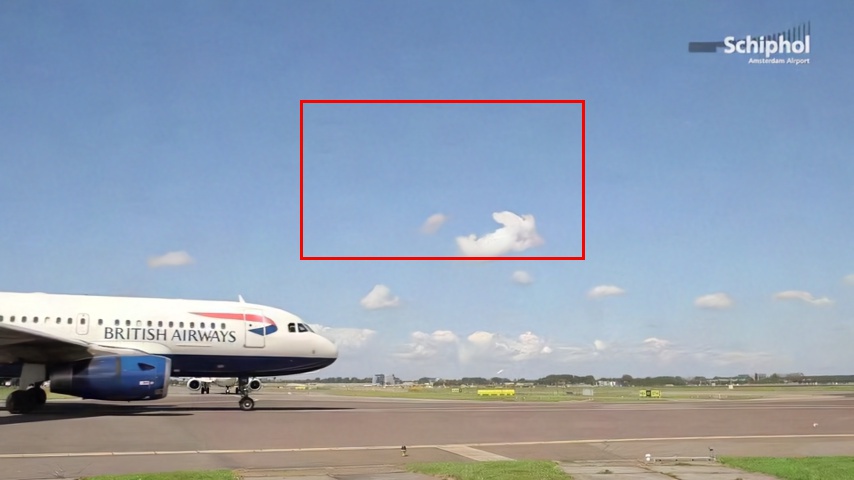}
		\caption{PC-DnCNN \\ 39.02 / 0.968}
		\label{fig:davis_20_1_s10_k3:pc_dncnn_rect}
	\end{subfigure}
	\begin{subfigure}{0.18\textwidth}
	    \captionsetup{justification=centering}
		\includegraphics[width=\textwidth]{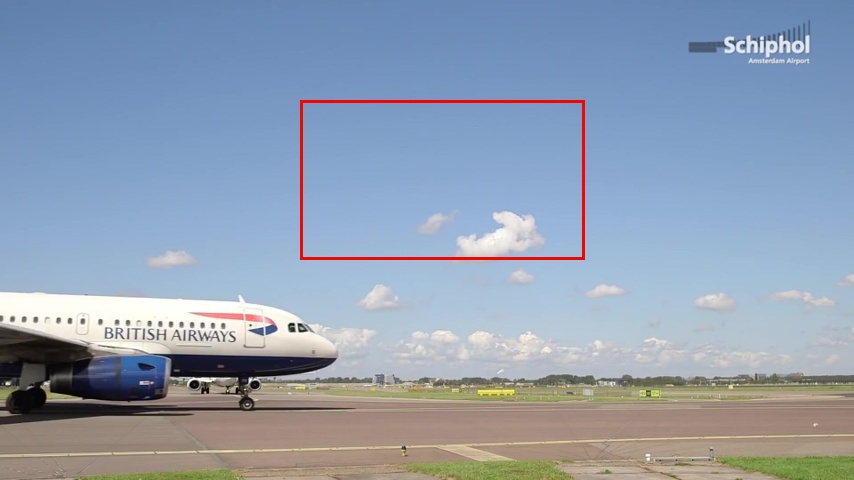}
		\caption{Clean \newline }
		\label{fig:davis_20_1_s10_k3:clean_rect}
	\end{subfigure}
	\begin{subfigure}{0.18\textwidth}
	    \captionsetup{justification=centering}
		\includegraphics[width=\textwidth]{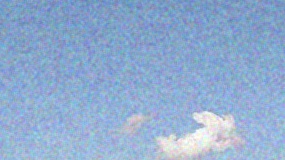}
		\caption{Noisy}
		\label{fig:davis_20_1_s10_k3:noisy_crop}
	\end{subfigure}
	\begin{subfigure}{0.18\textwidth}
	    \captionsetup{justification=centering}
		\includegraphics[width=\textwidth]{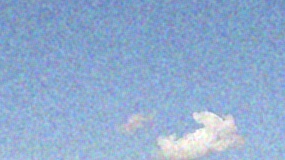}
		\caption{N2N}
		\label{fig:davis_20_1_s10_k3:n2n_crop}
	\end{subfigure}
	\begin{subfigure}{0.18\textwidth}
	    \captionsetup{justification=centering}
		\includegraphics[width=\textwidth]{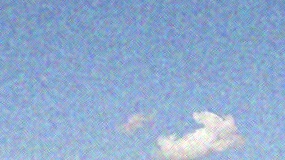}
		\caption{B2U}
		\label{fig:davis_20_1_s10_k3:b2u_crop}
	\end{subfigure}
	\begin{subfigure}{0.18\textwidth}
	    \captionsetup{justification=centering}
		\includegraphics[width=\textwidth]{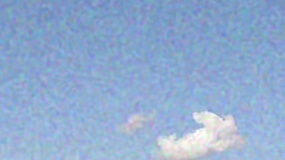}
		\caption{BM3D}
		\label{fig:davis_20_1_s10_k3:bm3d_crop}
	\end{subfigure}
	\begin{subfigure}{0.18\textwidth}
	    \captionsetup{justification=centering}
		\includegraphics[width=\textwidth]{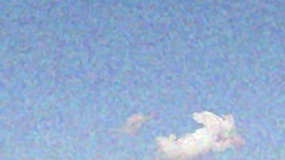}
		\caption{B-DnCNN}
		\label{fig:davis_20_1_s10_k3:b_dncnn_crop}
	\end{subfigure}
	\begin{subfigure}{0.18\textwidth}
	    \captionsetup{justification=centering}
		\includegraphics[width=\textwidth]{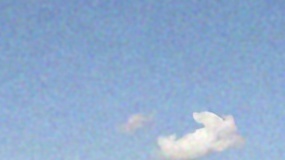}
		\caption{R2R}
		\label{fig:davis_20_1_s10_k3:r2r_crop}
	\end{subfigure}
	\begin{subfigure}{0.18\textwidth}
	    \captionsetup{justification=centering}
		\includegraphics[width=\textwidth]{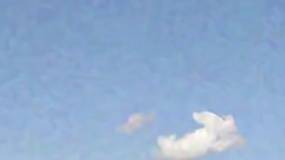}
		\caption{BM3D-O}
		\label{fig:davis_20_1_s10_k3:bm3d_opt_crop}
	\end{subfigure}
	\begin{subfigure}{0.18\textwidth}
	    \captionsetup{justification=centering}
		\includegraphics[width=\textwidth]{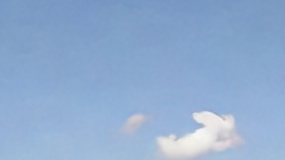}
		\caption{PC-UNet}
		\label{fig:davis_20_1_s10_k3:pc_unet_crop}
	\end{subfigure}
	\begin{subfigure}{0.18\textwidth}
	    \captionsetup{justification=centering}
		\includegraphics[width=\textwidth]{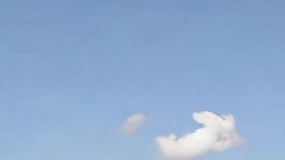}
		\caption{PC-DnCNN}
		\label{fig:davis_20_1_s10_k3:pc_dncnn_crop}
	\end{subfigure}
	\begin{subfigure}{0.18\textwidth}
	    \captionsetup{justification=centering}
		\includegraphics[width=\textwidth]{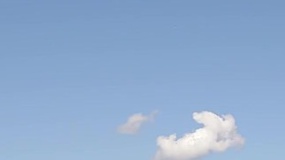}
		\caption{Clean}
		\label{fig:davis_20_1_s10_k3:clean_crop}
	\end{subfigure}
	\begin{subfigure}{0.18\textwidth}
	    \captionsetup{justification=centering}
		\includegraphics[width=\textwidth]{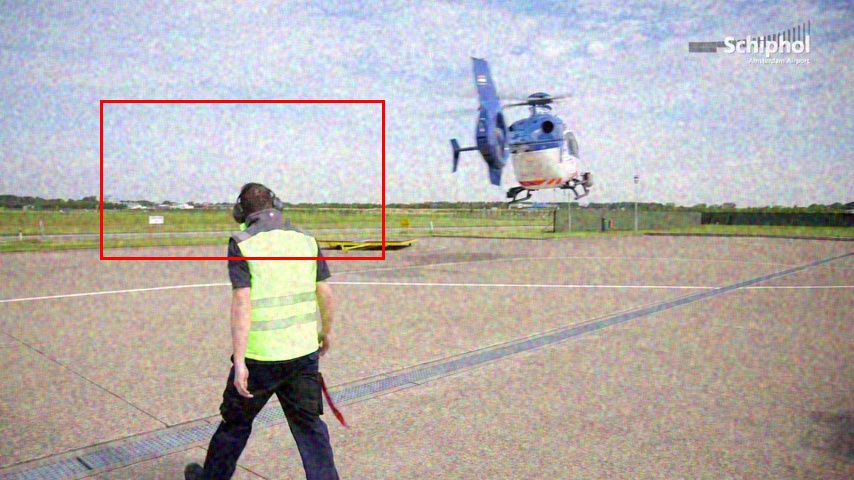}
		\caption{Noisy \\ 28.11 / 0.596}
		\label{fig:davis_11_0_s10_k4:noisy_rect}
	\end{subfigure}
	\begin{subfigure}{0.18\textwidth}
	    \captionsetup{justification=centering}
		\includegraphics[width=\textwidth]{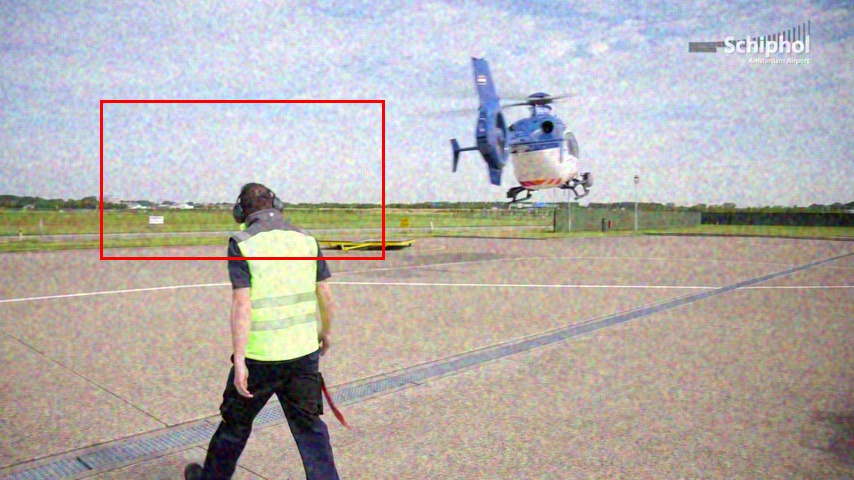}
		\caption{N2N \\ 28.52 / 0.620}
		\label{fig:davis_11_0_s10_k4:n2n_rect}
	\end{subfigure}
	\begin{subfigure}{0.18\textwidth}
	    \captionsetup{justification=centering}
		\includegraphics[width=\textwidth]{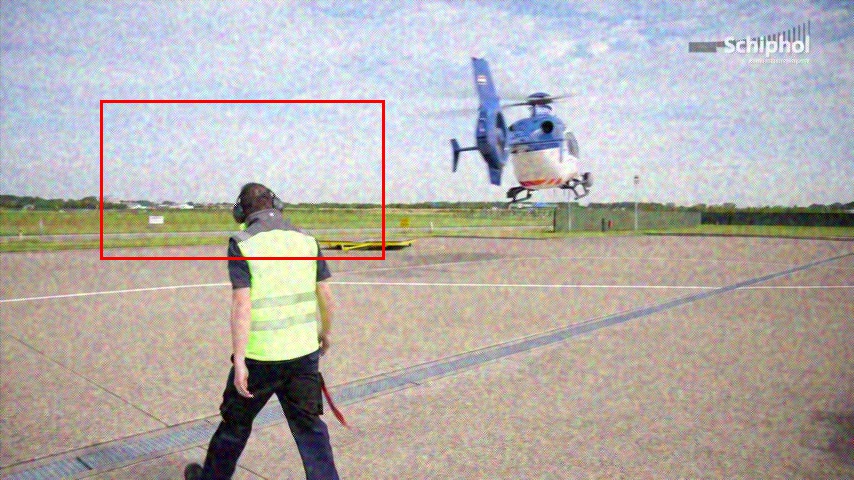}
		\caption{B2U \\ 24.09 / 0.363}
		\label{fig:davis_11_0_s10_k4:b2u_rect}
	\end{subfigure}
	\begin{subfigure}{0.18\textwidth}
	    \captionsetup{justification=centering}
		\includegraphics[width=\textwidth]{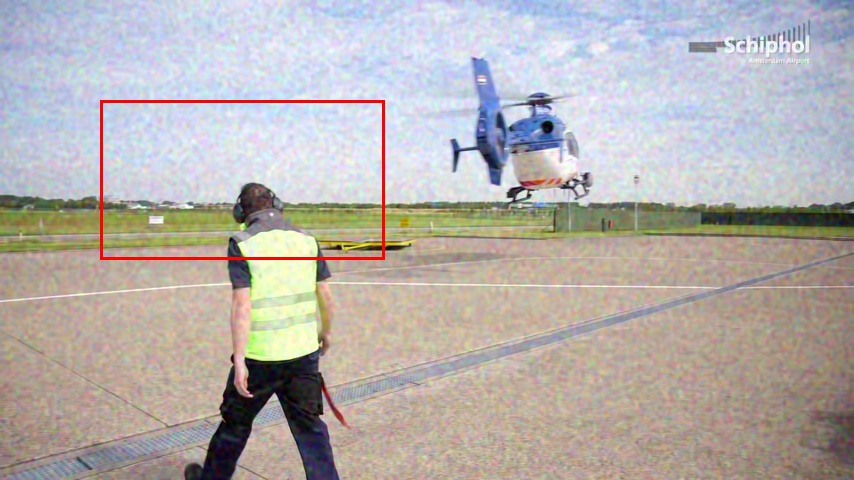}
		\caption{BM3D \\ 30.26 / 0.725}
		\label{fig:davis_11_0_s10_k4:bm3d_rect}
	\end{subfigure}
	\begin{subfigure}{0.18\textwidth}
	    \captionsetup{justification=centering}
		\includegraphics[width=\textwidth]{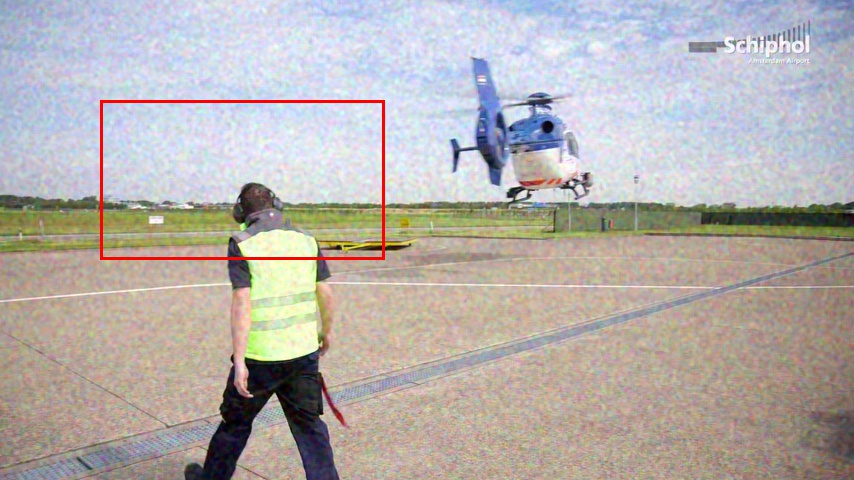}
		\caption{B-DnCNN \\ 29.48 / 0.686}
		\label{fig:davis_11_0_s10_k4:b_dncnn_rect}
	\end{subfigure}
	\begin{subfigure}{0.18\textwidth}
	    \captionsetup{justification=centering}
		\includegraphics[width=\textwidth]{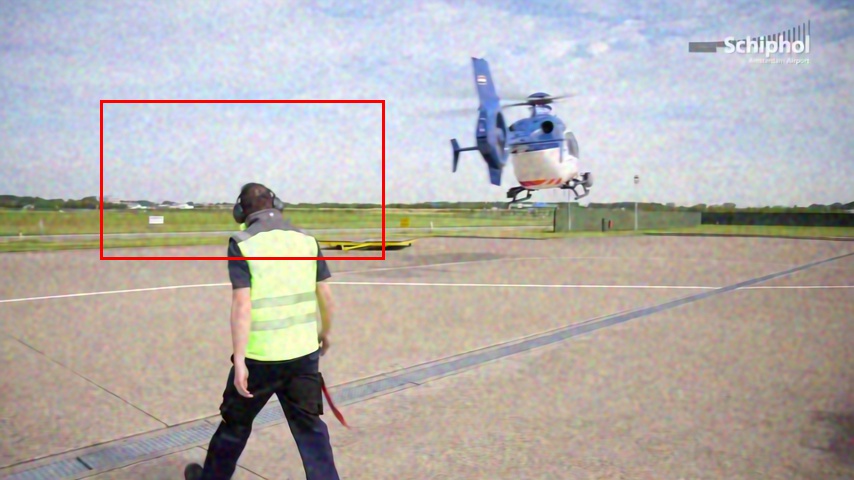}
		\caption{R2R \\ 31.83 / 0.781}
		\label{fig:davis_11_0_s10_k4:r2r_rect}
	\end{subfigure}
	\begin{subfigure}{0.18\textwidth}
	    \captionsetup{justification=centering}
		\includegraphics[width=\textwidth]{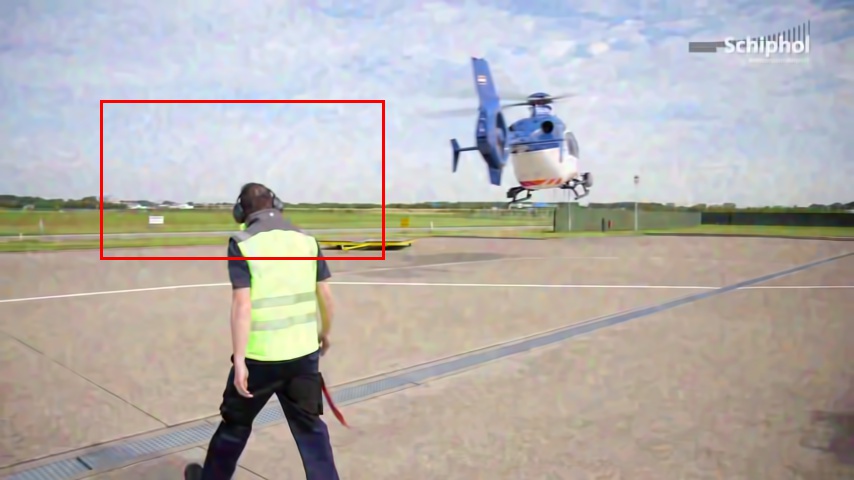}
		\caption{BM3D-O \\ 33.84 / 0.876}
		\label{fig:davis_11_0_s10_k4:bm3d_opt_rect}
	\end{subfigure}
	\begin{subfigure}{0.18\textwidth}
	    \captionsetup{justification=centering}
		\includegraphics[width=\textwidth]{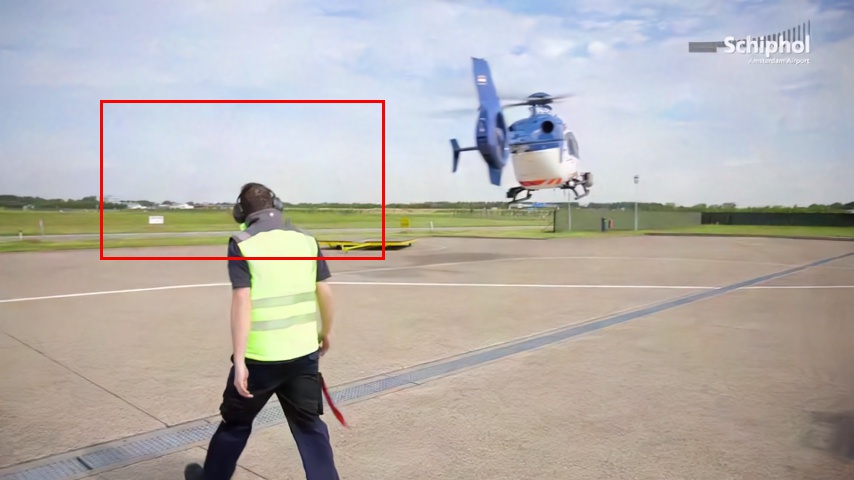}
		\caption{PC-UNet \\ 36.26 / 0.934}
		\label{fig:davis_11_0_s10_k4:pc_unet_rect}
	\end{subfigure}
	\begin{subfigure}{0.18\textwidth}
	    \captionsetup{justification=centering}
		\includegraphics[width=\textwidth]{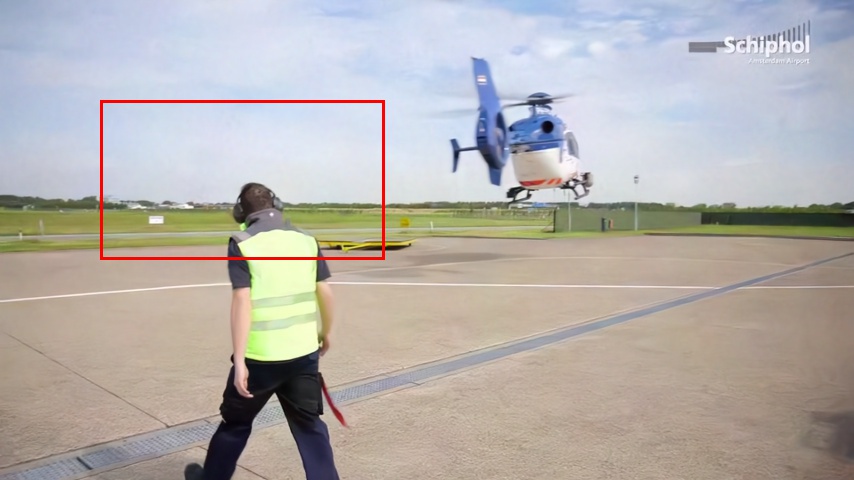}
		\caption{PC-DnCNN \\ 36.42 / 0.936}
		\label{fig:davis_11_0_s10_k4:pc_dncnn_rect}
	\end{subfigure}
	\begin{subfigure}{0.18\textwidth}
	    \captionsetup{justification=centering}
		\includegraphics[width=\textwidth]{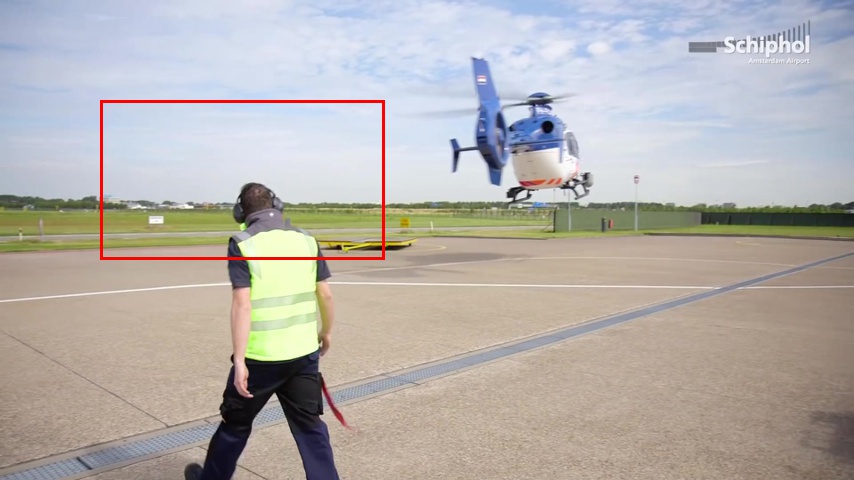}
		\caption{Clean \newline }
		\label{fig:davis_11_0_s10_k4:clean_rect}
	\end{subfigure}
	\begin{subfigure}{0.18\textwidth}
	    \captionsetup{justification=centering}
		\includegraphics[width=\textwidth]{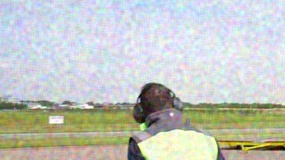}
		\caption{Noisy}
		\label{fig:davis_11_0_s10_k4:noisy_crop}
	\end{subfigure}
	\begin{subfigure}{0.18\textwidth}
	    \captionsetup{justification=centering}
		\includegraphics[width=\textwidth]{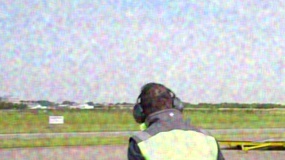}
		\caption{N2N}
		\label{fig:davis_11_0_s10_k4:n2n_crop}
	\end{subfigure}
	\begin{subfigure}{0.18\textwidth}
	    \captionsetup{justification=centering}
		\includegraphics[width=\textwidth]{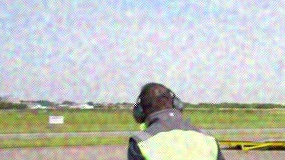}
		\caption{B2U}
		\label{fig:davis_11_0_s10_k4:b2u_crop}
	\end{subfigure}
	\begin{subfigure}{0.18\textwidth}
	    \captionsetup{justification=centering}
		\includegraphics[width=\textwidth]{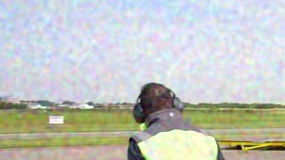}
		\caption{BM3D}
		\label{fig:davis_11_0_s10_k4:bm3d_crop}
	\end{subfigure}
	\begin{subfigure}{0.18\textwidth}
	    \captionsetup{justification=centering}
		\includegraphics[width=\textwidth]{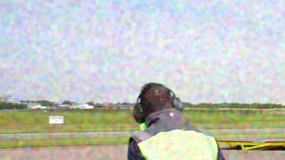}
		\caption{B-DnCNN}
		\label{fig:davis_11_0_s10_k4:b_dncnn_crop}
	\end{subfigure}
	\begin{subfigure}{0.18\textwidth}
	    \captionsetup{justification=centering}
		\includegraphics[width=\textwidth]{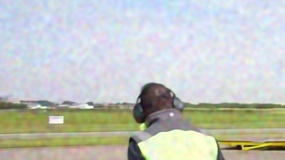}
		\caption{R2R}
		\label{fig:davis_11_0_s10_k4:r2r_crop}
	\end{subfigure}
	\begin{subfigure}{0.18\textwidth}
	    \captionsetup{justification=centering}
		\includegraphics[width=\textwidth]{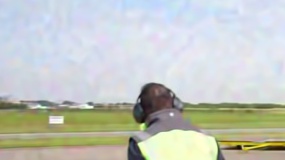}
		\caption{BM3D-O}
		\label{fig:davis_11_0_s10_k4:bm3d_opt_crop}
	\end{subfigure}
	\begin{subfigure}{0.18\textwidth}
	    \captionsetup{justification=centering}
		\includegraphics[width=\textwidth]{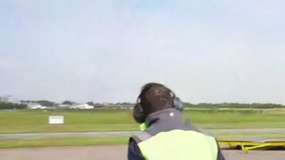}
		\caption{PC-UNet}
		\label{fig:davis_11_0_s10_k4:pc_unet_crop}
	\end{subfigure}
	\begin{subfigure}{0.18\textwidth}
	    \captionsetup{justification=centering}
		\includegraphics[width=\textwidth]{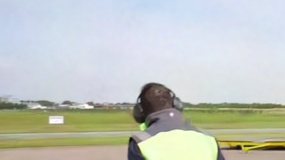}
		\caption{PC-DnCNN}
		\label{fig:davis_11_0_s10_k4:pc_dncnn_crop}
	\end{subfigure}
	\begin{subfigure}{0.18\textwidth}
	    \captionsetup{justification=centering}
		\includegraphics[width=\textwidth]{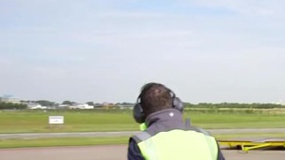}
		\caption{Clean}
		\label{fig:davis_11_0_s10_k4:clean_crop}
	\end{subfigure}
	\caption{Denoising examples with correlated Gaussian noise. The first four rows show frame 13 of the sequence \emph{planes-crossing} with $\sigma = 10$ and $k = 3$. The last four rows present frame 5 of the sequence \emph{helicopter} with $\sigma = 10$ and $k = 4$. As can be seen, oracle BM3D leaves a substantial amount of low-frequency noise unfiltered, while other algorithms, except ours (PC-UNet and PC-DnCNN), do not succeed in removing the noise.}
	\label{fig:davis_20_1_s10_k3_11_0_s10_k4}
\end{figure*}

\begin{figure*}
    \centering
	\begin{subfigure}{0.18\textwidth}
	    \captionsetup{justification=centering}
		\includegraphics[width=\textwidth]{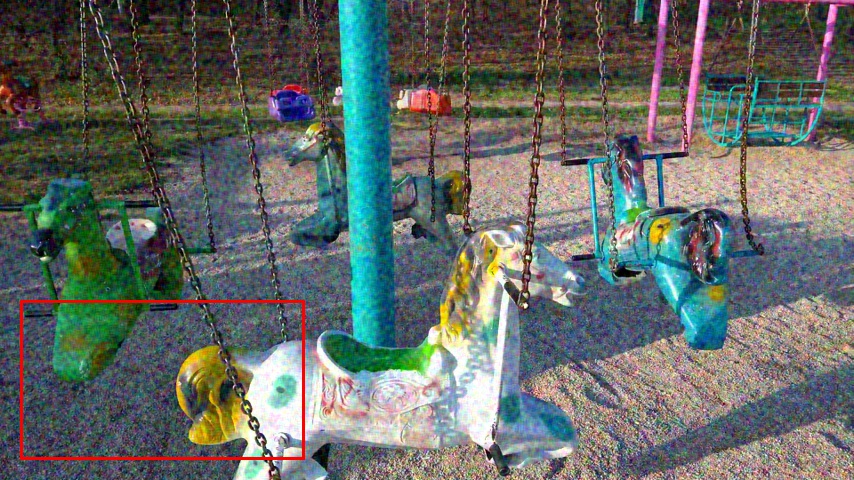}
		\caption{Noisy \\ 24.60 / 0.710}
		\label{fig:davis_2_0_s15_k4:noisy_rect}
	\end{subfigure}
	\begin{subfigure}{0.18\textwidth}
	    \captionsetup{justification=centering}
		\includegraphics[width=\textwidth]{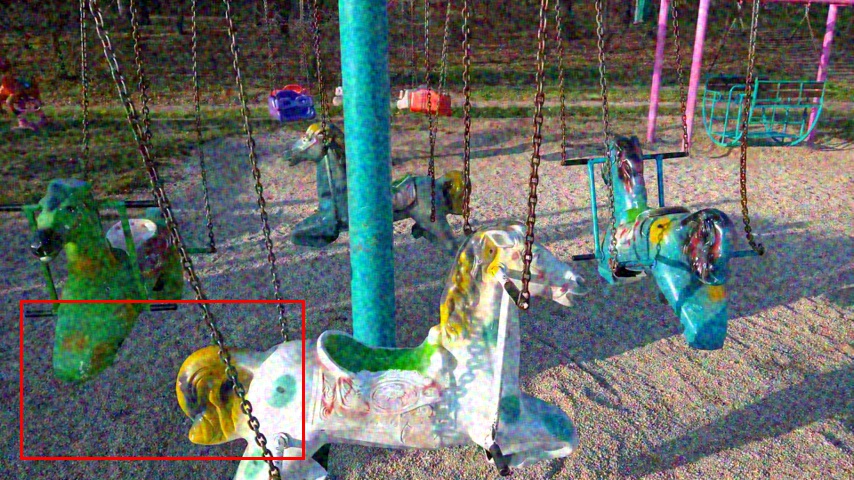}
		\caption{N2N \\ 25.00 / 0.725}
		\label{fig:davis_2_0_s15_k4:n2n_rect}
	\end{subfigure}
	\begin{subfigure}{0.18\textwidth}
	    \captionsetup{justification=centering}
		\includegraphics[width=\textwidth]{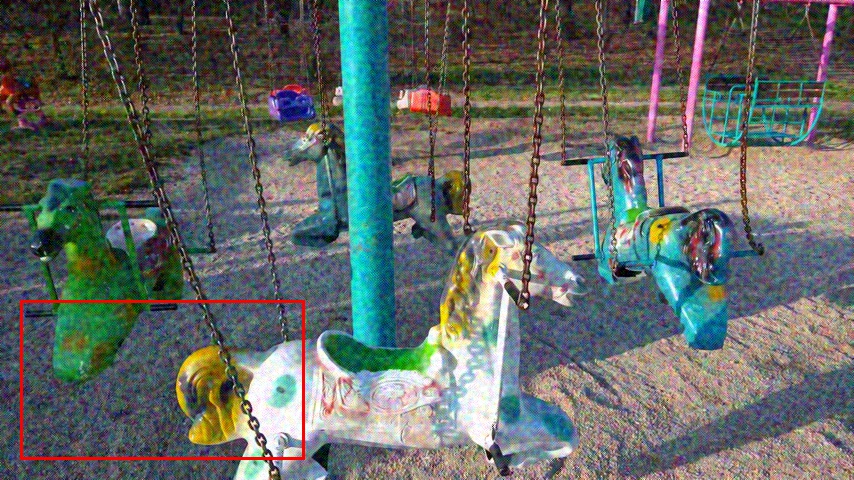}
		\caption{B2U \\ 21.59 / 0.531}
		\label{fig:davis_2_0_s15_k4:b2u_rect}
	\end{subfigure}
	\begin{subfigure}{0.18\textwidth}
	    \captionsetup{justification=centering}
		\includegraphics[width=\textwidth]{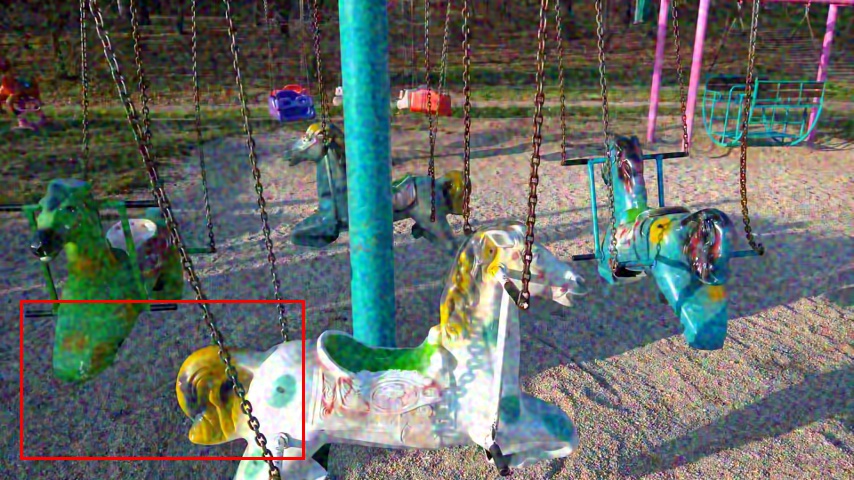}
		\caption{BM3D \\ 26.07 / 0.757}
		\label{fig:davis_2_0_s15_k4:bm3d_rect}
	\end{subfigure}
	\begin{subfigure}{0.18\textwidth}
	    \captionsetup{justification=centering}
		\includegraphics[width=\textwidth]{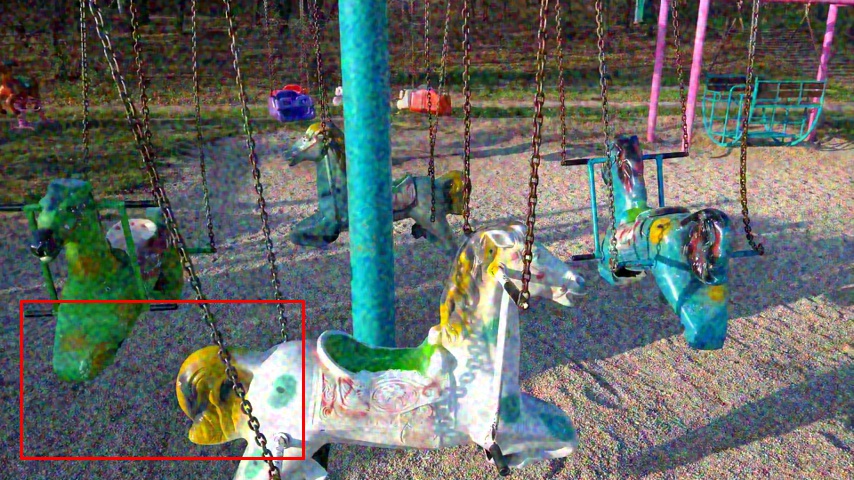}
		\caption{B-DnCNN \\ 25.73 / 0.762}
		\label{fig:davis_2_0_s15_k4:b_dncnn_rect}
	\end{subfigure}
	\begin{subfigure}{0.18\textwidth}
	    \captionsetup{justification=centering}
		\includegraphics[width=\textwidth]{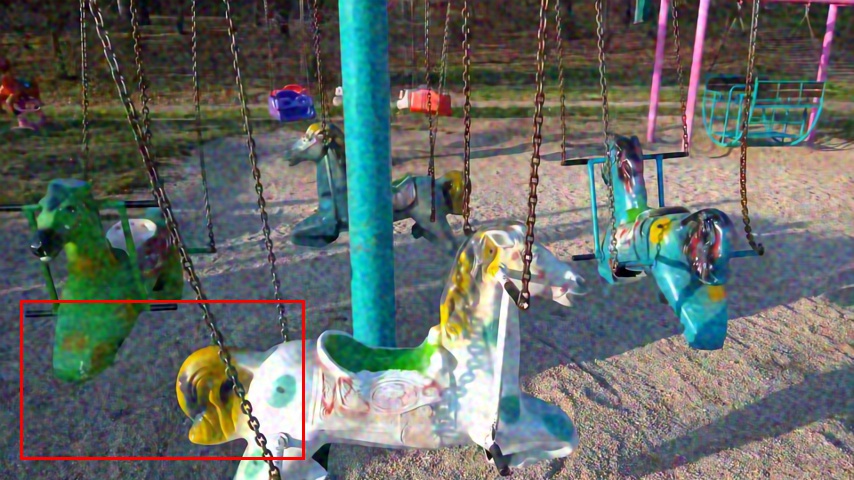}
		\caption{R2R \\ 26.69 / 0.756}
		\label{fig:davis_2_0_s15_k4:r2r_rect}
	\end{subfigure}
	\begin{subfigure}{0.18\textwidth}
	    \captionsetup{justification=centering}
		\includegraphics[width=\textwidth]{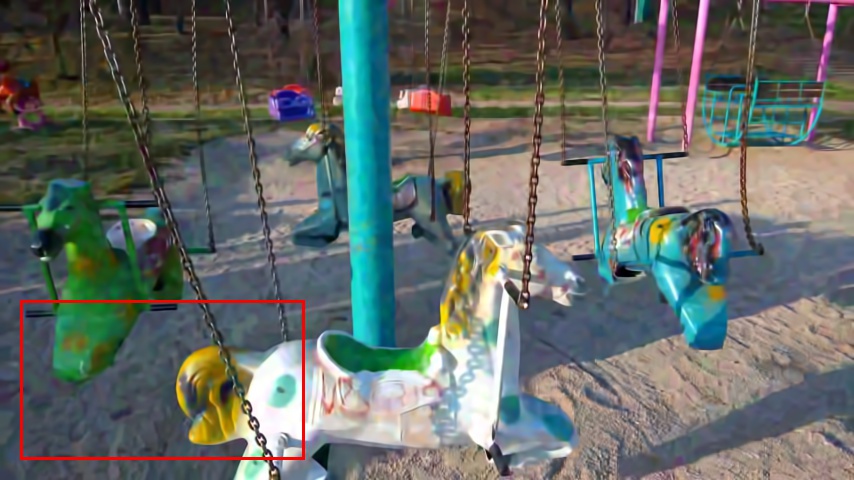}
		\caption{BM3D-O \\ 25.85 / 0.703}
		\label{fig:davis_2_0_s15_k4:bm3d_opt_rect}
	\end{subfigure}
	\begin{subfigure}{0.18\textwidth}
	    \captionsetup{justification=centering}
		\includegraphics[width=\textwidth]{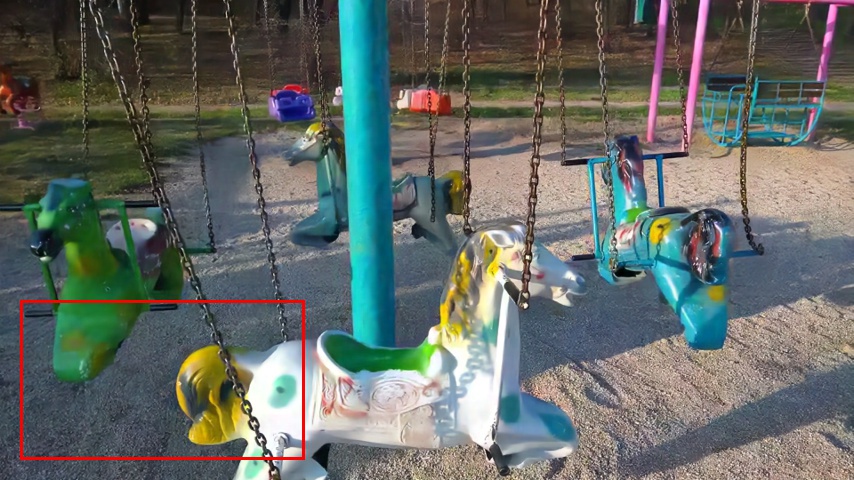}
		\caption{PC-UNet \\ 29.30 / 0.877}
		\label{fig:davis_2_0_s15_k4:pc_unet_rect}
	\end{subfigure}
	\begin{subfigure}{0.18\textwidth}
	    \captionsetup{justification=centering}
		\includegraphics[width=\textwidth]{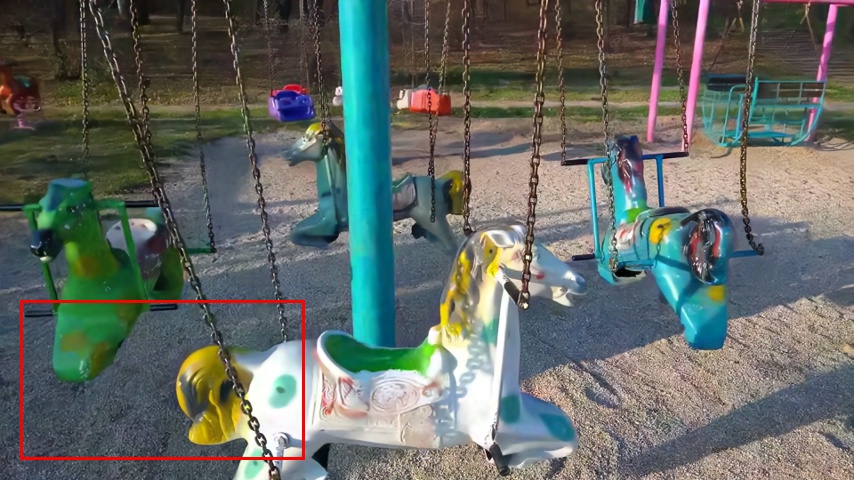}
		\caption{PC-DnCNN \\ 29.29 / 0.881}
		\label{fig:davis_2_0_s15_k4:pc_dncnn_rect}
	\end{subfigure}
	\begin{subfigure}{0.18\textwidth}
	    \captionsetup{justification=centering}
		\includegraphics[width=\textwidth]{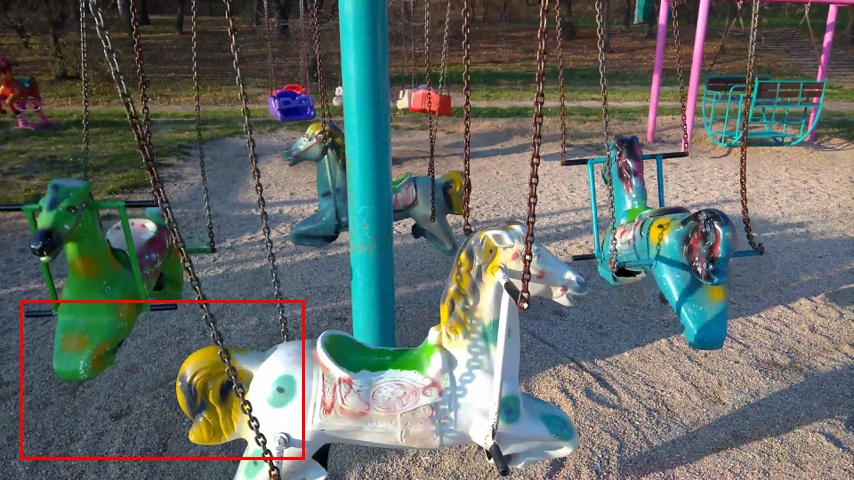}
		\caption{Clean \newline }
		\label{fig:davis_2_0_s15_k4:clean_rect}
	\end{subfigure}
	\begin{subfigure}{0.18\textwidth}
	    \captionsetup{justification=centering}
		\includegraphics[width=\textwidth]{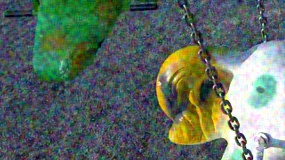}
		\caption{Noisy}
		\label{fig:davis_2_0_s15_k4:noisy_crop}
	\end{subfigure}
	\begin{subfigure}{0.18\textwidth}
	    \captionsetup{justification=centering}
		\includegraphics[width=\textwidth]{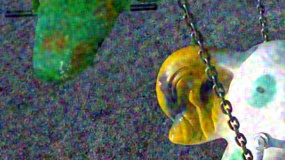}
		\caption{N2N}
		\label{fig:davis_2_0_s15_k4:n2n_crop}
	\end{subfigure}
	\begin{subfigure}{0.18\textwidth}
	    \captionsetup{justification=centering}
		\includegraphics[width=\textwidth]{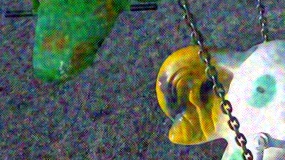}
		\caption{B2U}
		\label{fig:davis_2_0_s15_k4:b2u_crop}
	\end{subfigure}
	\begin{subfigure}{0.18\textwidth}
	    \captionsetup{justification=centering}
		\includegraphics[width=\textwidth]{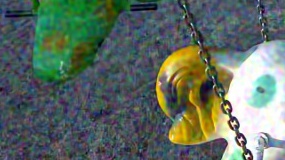}
		\caption{BM3D}
		\label{fig:davis_2_0_s15_k4:bm3d_crop}
	\end{subfigure}
	\begin{subfigure}{0.18\textwidth}
	    \captionsetup{justification=centering}
		\includegraphics[width=\textwidth]{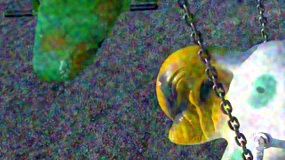}
		\caption{B-DnCNN}
		\label{fig:davis_2_0_s15_k4:b_dncnn_crop}
	\end{subfigure}
	\begin{subfigure}{0.18\textwidth}
	    \captionsetup{justification=centering}
		\includegraphics[width=\textwidth]{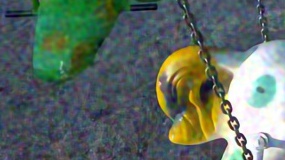}
		\caption{R2R}
		\label{fig:davis_2_0_s15_k4:r2r_crop}
	\end{subfigure}
	\begin{subfigure}{0.18\textwidth}
	    \captionsetup{justification=centering}
		\includegraphics[width=\textwidth]{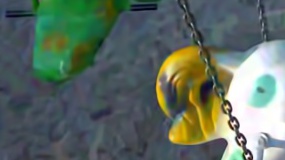}
		\caption{BM3D-O}
		\label{fig:davis_2_0_s15_k4:bm3d_opt_crop}
	\end{subfigure}
	\begin{subfigure}{0.18\textwidth}
	    \captionsetup{justification=centering}
		\includegraphics[width=\textwidth]{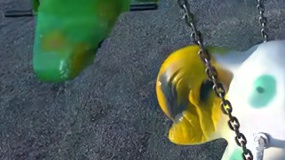}
		\caption{PC-UNet}
		\label{fig:davis_2_0_s15_k4:pc_unet_crop}
	\end{subfigure}
	\begin{subfigure}{0.18\textwidth}
	    \captionsetup{justification=centering}
		\includegraphics[width=\textwidth]{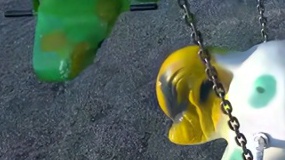}
		\caption{PC-DnCNN}
		\label{fig:davis_2_0_s15_k4:pc_dncnn_crop}
	\end{subfigure}
	\begin{subfigure}{0.18\textwidth}
	    \captionsetup{justification=centering}
		\includegraphics[width=\textwidth]{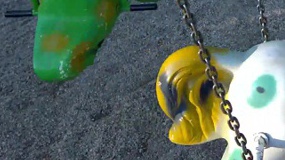}
		\caption{Clean}
		\label{fig:davis_2_0_s15_k4:clean_crop}
	\end{subfigure}
	\begin{subfigure}{0.18\textwidth}
	    \captionsetup{justification=centering}
		\includegraphics[width=\textwidth]{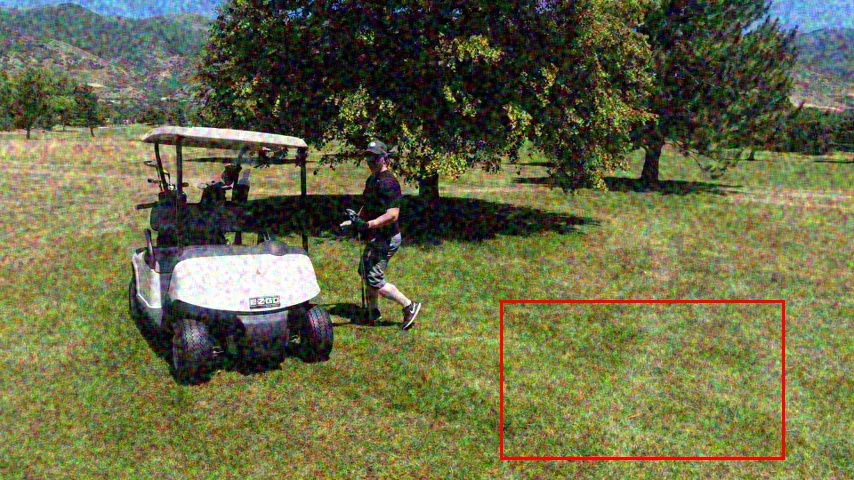}
		\caption{Noisy \\ 22.10 / 0.543}
		\label{fig:davis_8_1_s20_k4:noisy_rect}
	\end{subfigure}
	\begin{subfigure}{0.18\textwidth}
	    \captionsetup{justification=centering}
		\includegraphics[width=\textwidth]{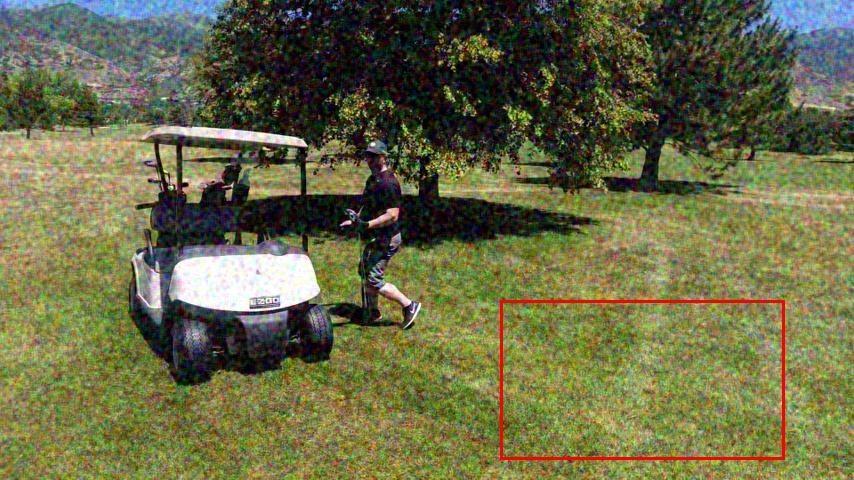}
		\caption{N2N \\ 22.73 / 0.569}
		\label{fig:davis_8_1_s20_k4:n2n_rect}
	\end{subfigure}
	\begin{subfigure}{0.18\textwidth}
	    \captionsetup{justification=centering}
		\includegraphics[width=\textwidth]{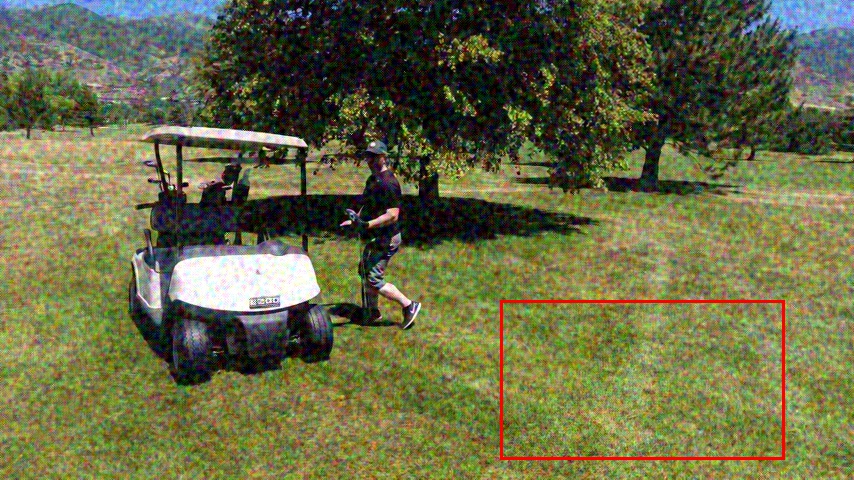}
		\caption{B2U \\ 21.98 / 0.482}
		\label{fig:davis_8_1_s20_k4:b2u_rect}
	\end{subfigure}
	\begin{subfigure}{0.18\textwidth}
	    \captionsetup{justification=centering}
		\includegraphics[width=\textwidth]{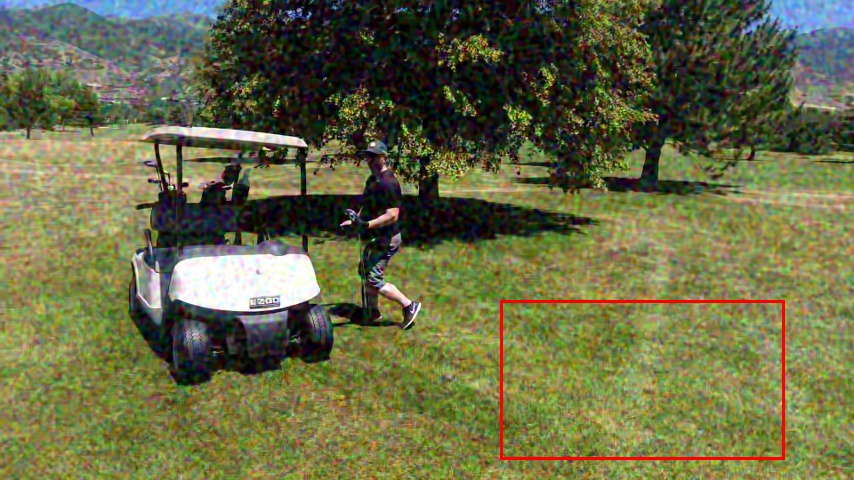}
		\caption{BM3D \\ 23.82 / 0.582}
		\label{fig:davis_8_1_s20_k4:bm3d_rect}
	\end{subfigure}
	\begin{subfigure}{0.18\textwidth}
	    \captionsetup{justification=centering}
		\includegraphics[width=\textwidth]{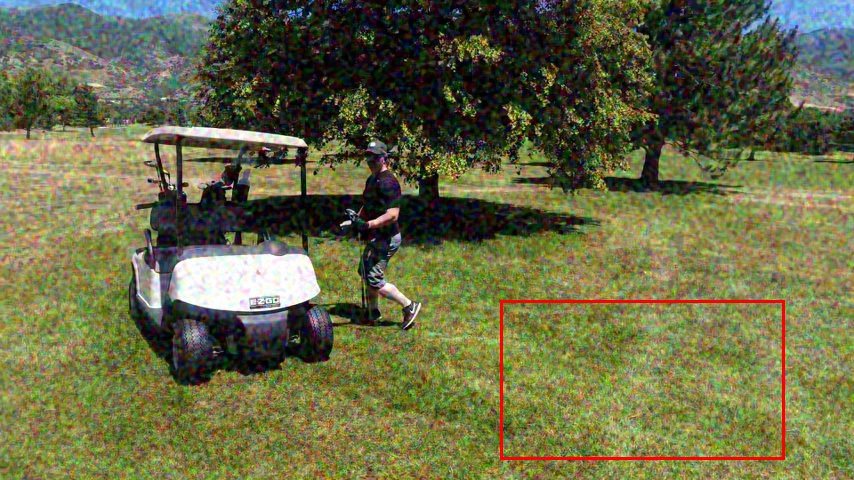}
		\caption{B-DnCNN \\ 23.36 / 0.608}
		\label{fig:davis_8_1_s20_k4:b_dncnn_rect}
	\end{subfigure}
	\begin{subfigure}{0.18\textwidth}
	    \captionsetup{justification=centering}
		\includegraphics[width=\textwidth]{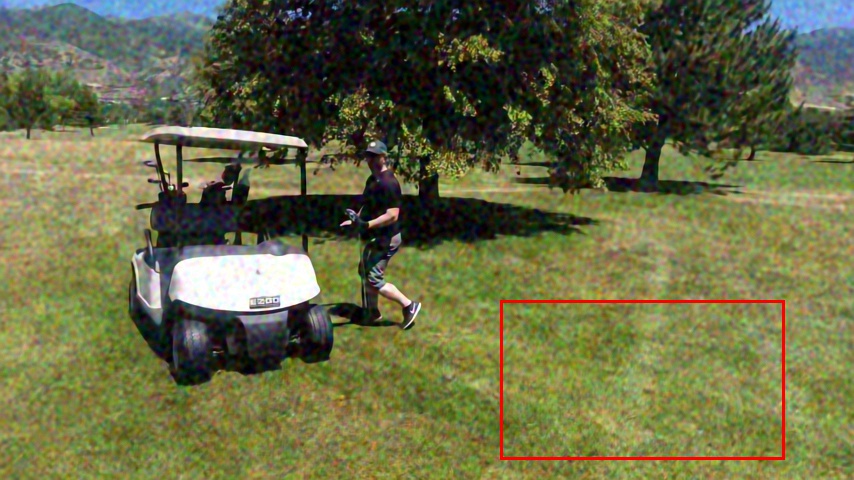}
		\caption{R2R \\ 24.64 / 0.562}
		\label{fig:davis_8_1_s20_k4:r2r_rect}
	\end{subfigure}
	\begin{subfigure}{0.18\textwidth}
	    \captionsetup{justification=centering}
		\includegraphics[width=\textwidth]{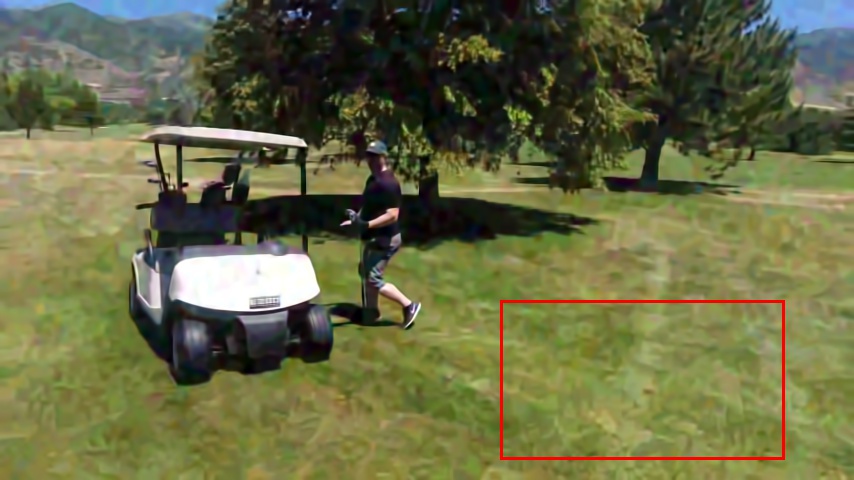}
		\caption{BM3D-O \\ 25.44 / 0.583}
		\label{fig:davis_8_1_s20_k4:bm3d_opt_rect}
	\end{subfigure}
	\begin{subfigure}{0.18\textwidth}
	    \captionsetup{justification=centering}
		\includegraphics[width=\textwidth]{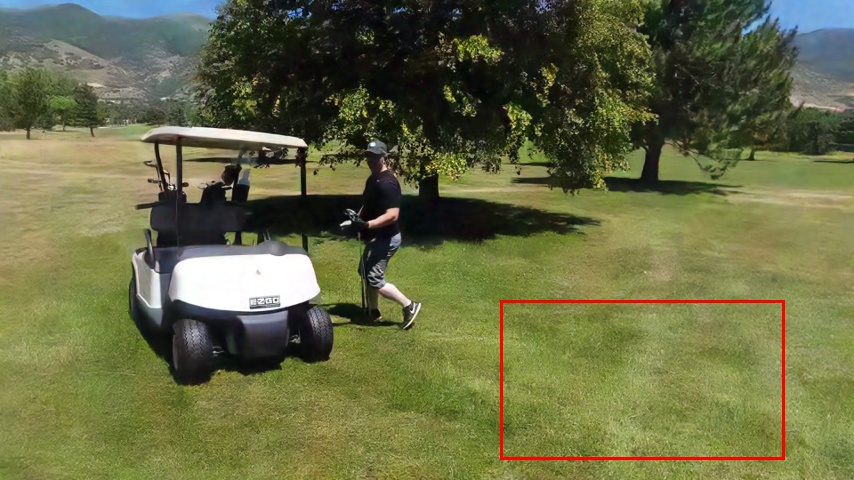}
		\caption{PC-UNet \\ 29.26 / 0.842}
		\label{fig:davis_8_1_s20_k4:pc_unet_rect}
	\end{subfigure}
	\begin{subfigure}{0.18\textwidth}
	    \captionsetup{justification=centering}
		\includegraphics[width=\textwidth]{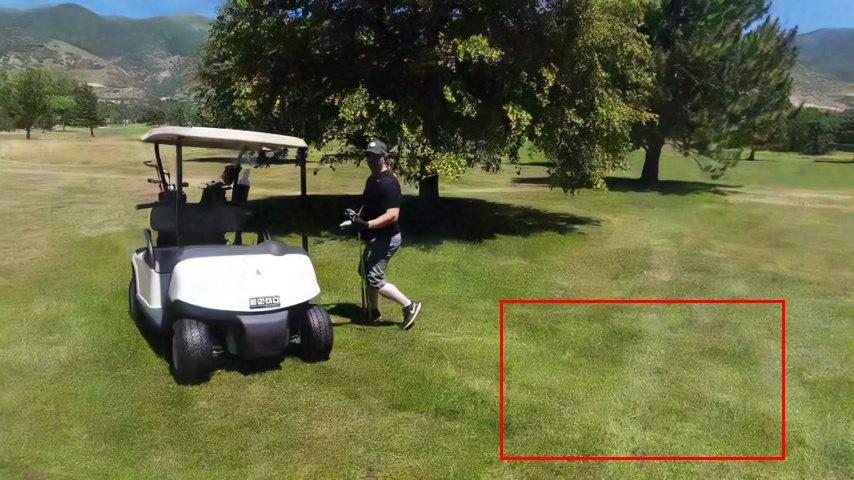}
		\caption{PC-DnCNN \\ 29.37 / 0.844}
		\label{fig:davis_8_1_s20_k4:pc_dncnn_rect}
	\end{subfigure}
	\begin{subfigure}{0.18\textwidth}
	    \captionsetup{justification=centering}
		\includegraphics[width=\textwidth]{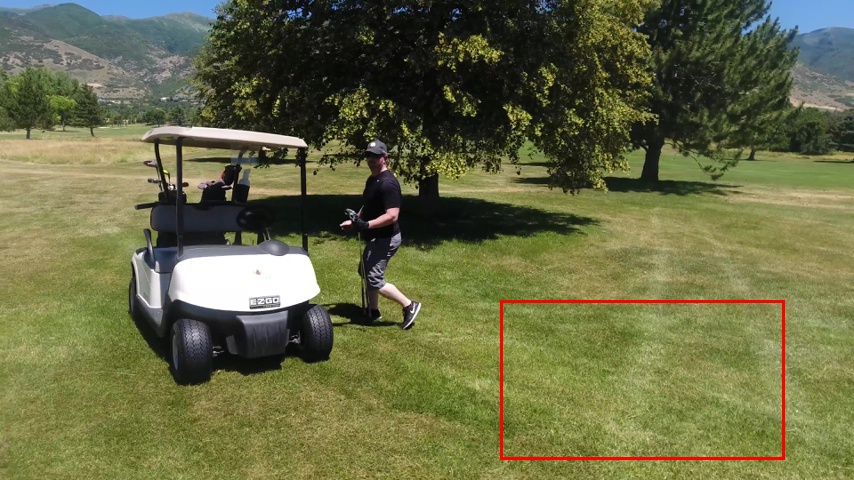}
		\caption{Clean \newline }
		\label{fig:davis_8_1_s20_k4:clean_rect}
	\end{subfigure}
	\begin{subfigure}{0.18\textwidth}
	    \captionsetup{justification=centering}
		\includegraphics[width=\textwidth]{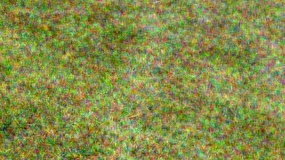}
		\caption{Noisy}
		\label{fig:davis_8_1_s20_k4:noisy_crop}
	\end{subfigure}
	\begin{subfigure}{0.18\textwidth}
	    \captionsetup{justification=centering}
		\includegraphics[width=\textwidth]{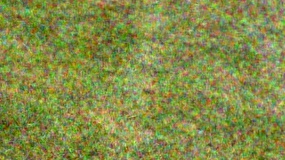}
		\caption{N2N}
		\label{fig:davis_8_1_s20_k4:n2n_crop}
	\end{subfigure}
	\begin{subfigure}{0.18\textwidth}
	    \captionsetup{justification=centering}
		\includegraphics[width=\textwidth]{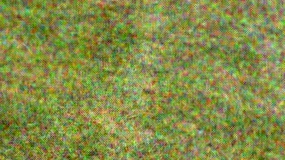}
		\caption{B2U}
		\label{fig:davis_8_1_s20_k4:b2u_crop}
	\end{subfigure}
	\begin{subfigure}{0.18\textwidth}
	    \captionsetup{justification=centering}
		\includegraphics[width=\textwidth]{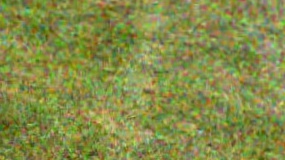}
		\caption{BM3D}
		\label{fig:davis_8_1_s20_k4:bm3d_crop}
	\end{subfigure}
	\begin{subfigure}{0.18\textwidth}
	    \captionsetup{justification=centering}
		\includegraphics[width=\textwidth]{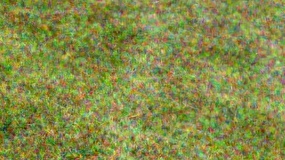}
		\caption{B-DnCNN}
		\label{fig:davis_8_1_s20_k4:b_dncnn_crop}
	\end{subfigure}
	\begin{subfigure}{0.18\textwidth}
	    \captionsetup{justification=centering}
		\includegraphics[width=\textwidth]{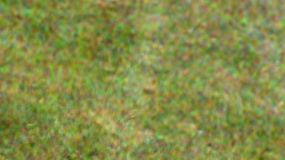}
		\caption{R2R}
		\label{fig:davis_8_1_s20_k4:r2r_crop}
	\end{subfigure}
	\begin{subfigure}{0.18\textwidth}
	    \captionsetup{justification=centering}
		\includegraphics[width=\textwidth]{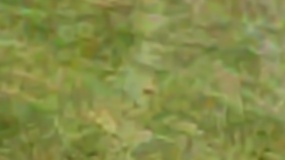}
		\caption{BM3D-O}
		\label{fig:davis_8_1_s20_k4:bm3d_opt_crop}
	\end{subfigure}
	\begin{subfigure}{0.18\textwidth}
	    \captionsetup{justification=centering}
		\includegraphics[width=\textwidth]{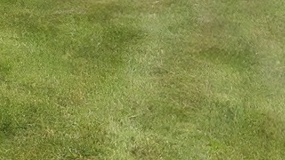}
		\caption{PC-UNet}
		\label{fig:davis_8_1_s20_k4:pc_unet_crop}
	\end{subfigure}
	\begin{subfigure}{0.18\textwidth}
	    \captionsetup{justification=centering}
		\includegraphics[width=\textwidth]{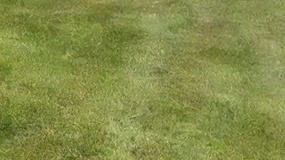}
		\caption{PC-DnCNN}
		\label{fig:davis_8_1_s20_k4:pc_dncnn_crop}
	\end{subfigure}
	\begin{subfigure}{0.18\textwidth}
	    \captionsetup{justification=centering}
		\includegraphics[width=\textwidth]{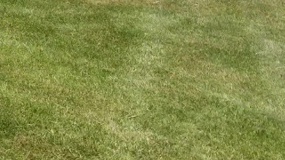}
		\caption{Clean}
		\label{fig:davis_8_1_s20_k4:clean_crop}
	\end{subfigure}
	\caption{Denoising examples with correlated Gaussian noise. The first four rows show frame 5 of the sequence \emph{carousel} with $\sigma = 15$ and $k = 4$. The last four rows present frame 13 of the sequence \emph{golf} with $\sigma = 20$ and $k = 4$. As can be seen, oracle BM3D produces blurred images while leaving a substantial amount of low-frequency noise unfiltered. Other algorithms, except ours (PC-UNet and PC-DnCNN), do not succeed in removing the noise.}
	\label{fig:davis_2_0_s15_k4_8_1_s20_k4}
\end{figure*}

\begin{figure*}
    \centering
	\begin{subfigure}{0.18\textwidth}
	    \captionsetup{justification=centering}
		\includegraphics[width=\textwidth]{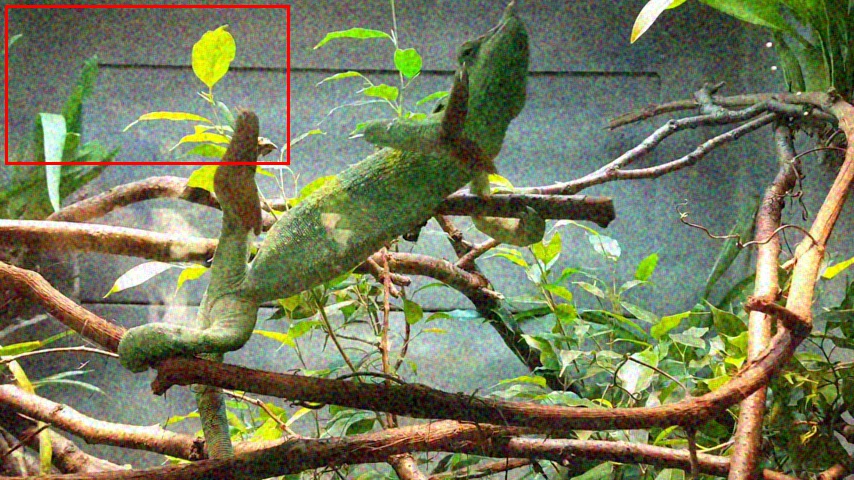}
		\caption{Noisy \\ 24.59 / 0.611}
		\label{fig:davis_4_2_s15_k3:noisy_rect}
	\end{subfigure}
	\begin{subfigure}{0.18\textwidth}
	    \captionsetup{justification=centering}
		\includegraphics[width=\textwidth]{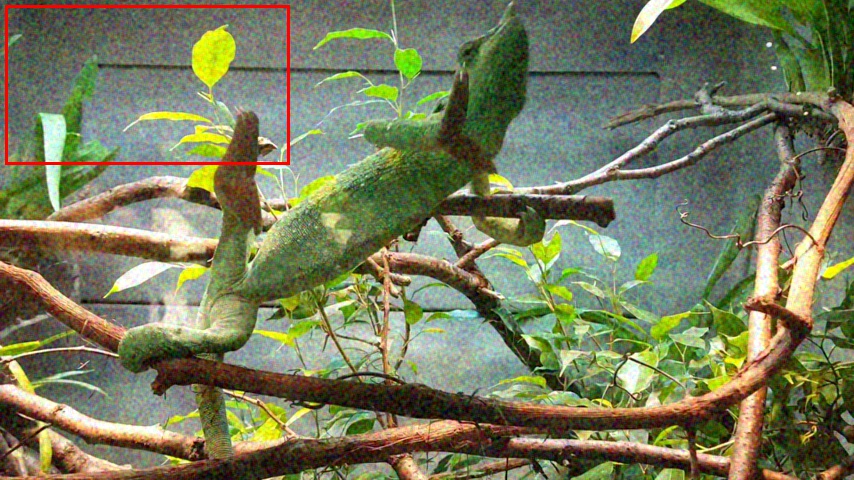}
		\caption{N2N \\ 25.22 / 0.635}
		\label{fig:davis_4_2_s15_k3:n2n_rect}
	\end{subfigure}
	\begin{subfigure}{0.18\textwidth}
	    \captionsetup{justification=centering}
		\includegraphics[width=\textwidth]{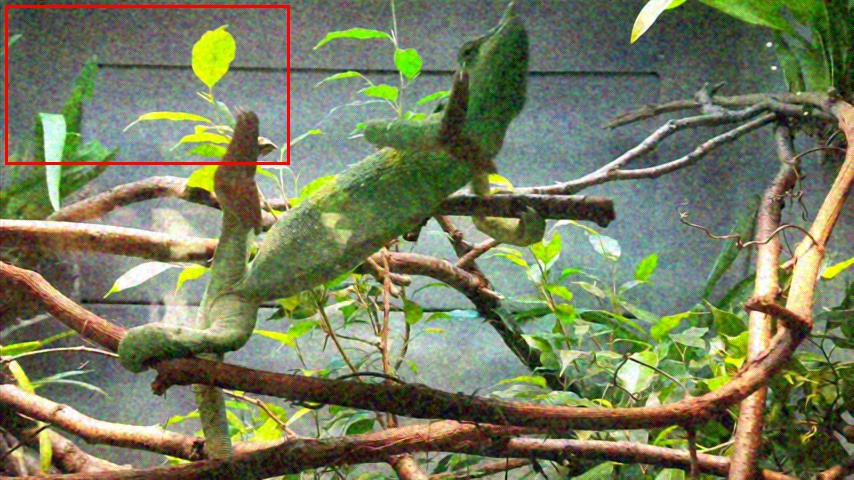}
		\caption{B2U \\ 24.35 / 0.599}
		\label{fig:davis_4_2_s15_k3:b2u_rect}
	\end{subfigure}
	\begin{subfigure}{0.18\textwidth}
	    \captionsetup{justification=centering}
		\includegraphics[width=\textwidth]{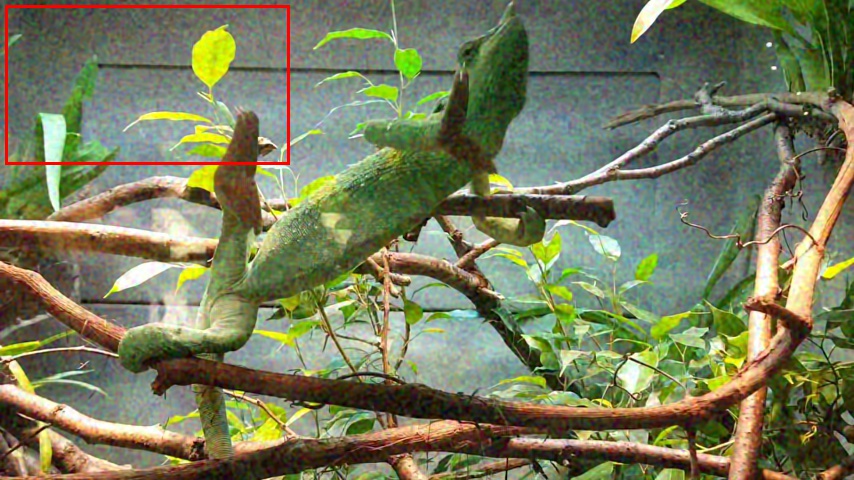}
		\caption{BM3D \\ 27.28 / 0.738}
		\label{fig:davis_4_2_s15_k3:bm3d_rect}
	\end{subfigure}
	\begin{subfigure}{0.18\textwidth}
	    \captionsetup{justification=centering}
		\includegraphics[width=\textwidth]{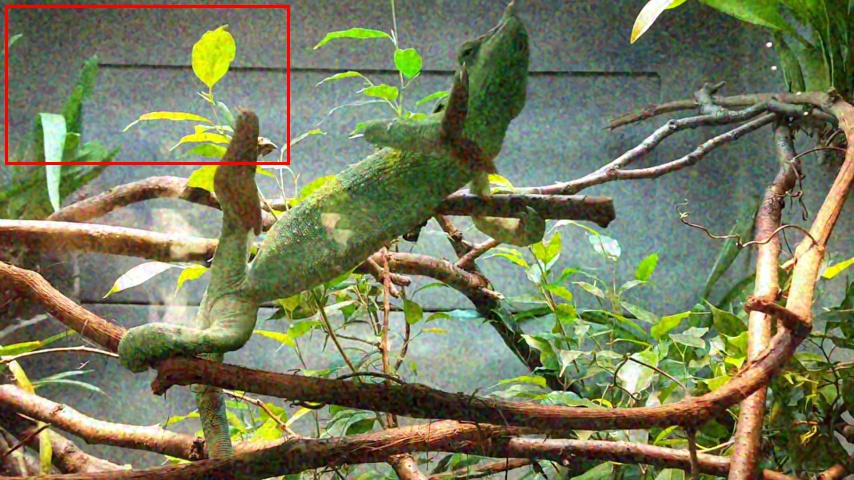}
		\caption{B-DnCNN \\ 26.79 / 0.713}
		\label{fig:davis_4_2_s15_k3:b_dncnn_rect}
	\end{subfigure}
	\begin{subfigure}{0.18\textwidth}
	    \captionsetup{justification=centering}
		\includegraphics[width=\textwidth]{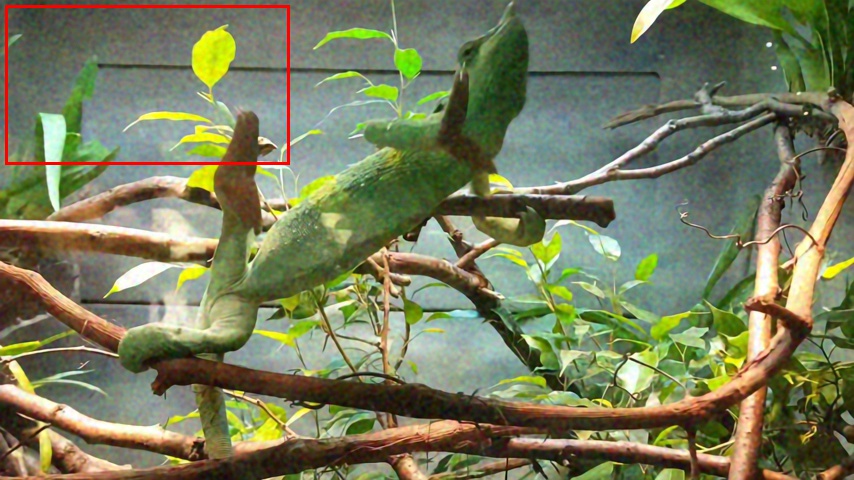}
		\caption{R2R \\ 28.86 / 0.807}
		\label{fig:davis_4_2_s15_k3:r2r_rect}
	\end{subfigure}
	\begin{subfigure}{0.18\textwidth}
	    \captionsetup{justification=centering}
		\includegraphics[width=\textwidth]{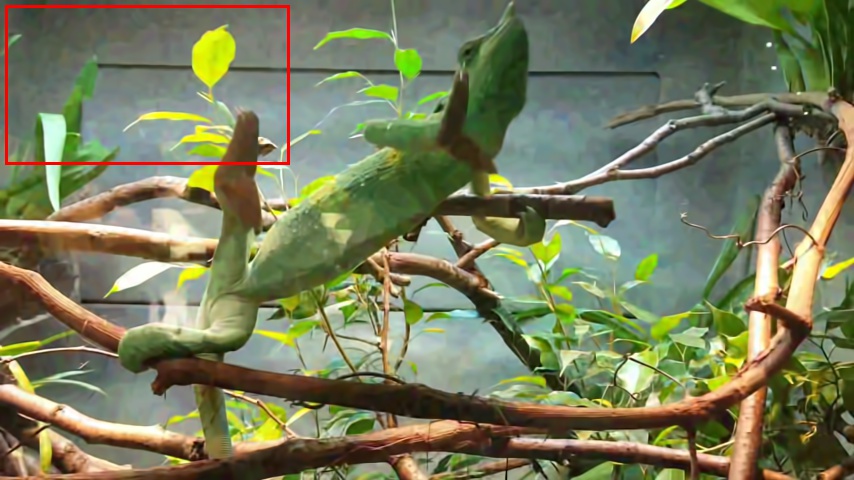}
		\caption{BM3D-O \\ 29.52 / 0.884}
		\label{fig:davis_4_2_s15_k3:bm3d_opt_rect}
	\end{subfigure}
	\begin{subfigure}{0.18\textwidth}
	    \captionsetup{justification=centering}
		\includegraphics[width=\textwidth]{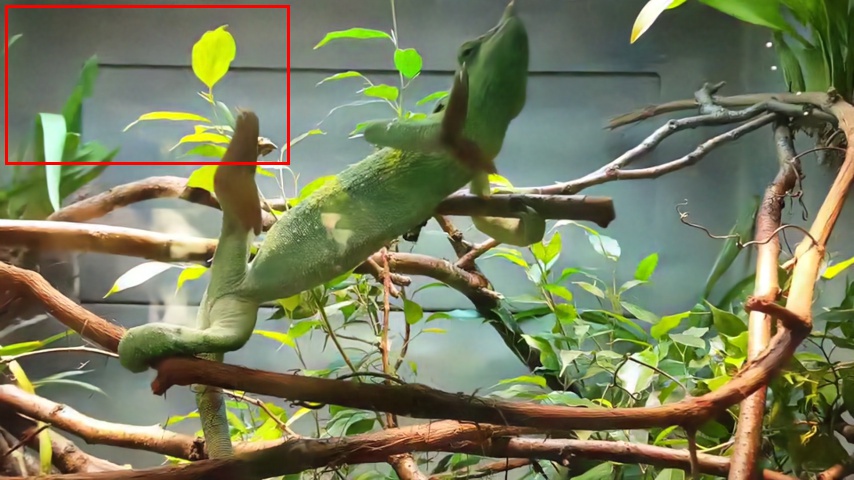}
		\caption{PC-UNet \\ 31.21 / 0.923}
		\label{fig:davis_4_2_s15_k3:pc_unet_rect}
	\end{subfigure}
	\begin{subfigure}{0.18\textwidth}
	    \captionsetup{justification=centering}
		\includegraphics[width=\textwidth]{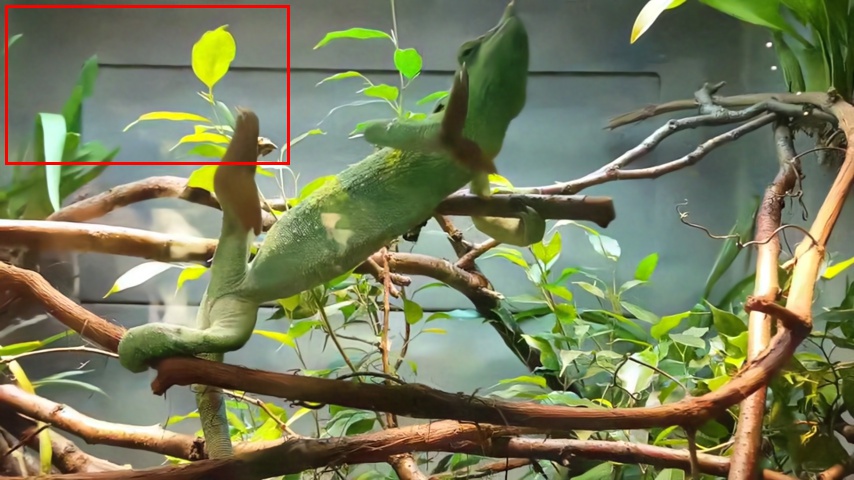}
		\caption{PC-DnCNN \\ 31.45 / 0.926}
		\label{fig:davis_4_2_s15_k3:pc_dncnn_rect}
	\end{subfigure}
	\begin{subfigure}{0.18\textwidth}
	    \captionsetup{justification=centering}
		\includegraphics[width=\textwidth]{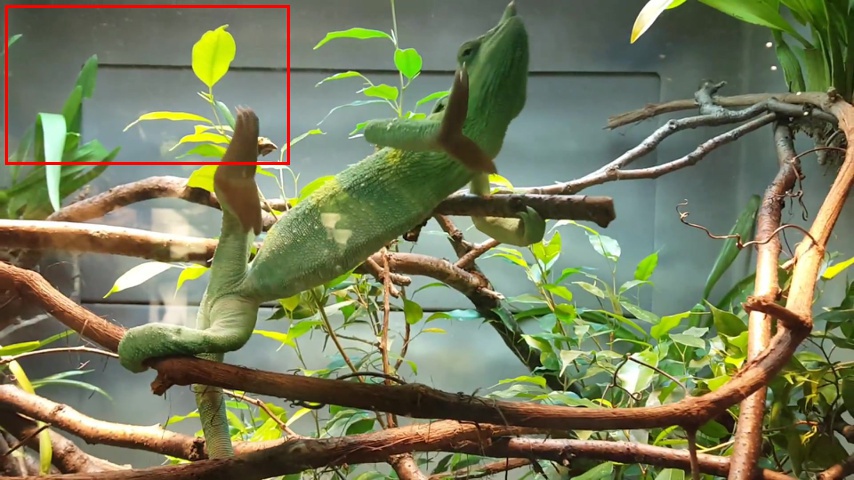}
		\caption{Clean \newline }
		\label{fig:davis_4_2_s15_k3:clean_rect}
	\end{subfigure}
	\begin{subfigure}{0.18\textwidth}
	    \captionsetup{justification=centering}
		\includegraphics[width=\textwidth]{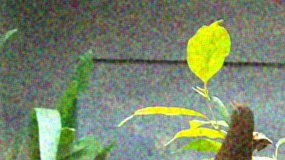}
		\caption{Noisy}
		\label{fig:davis_4_2_s15_k3:noisy_crop}
	\end{subfigure}
	\begin{subfigure}{0.18\textwidth}
	    \captionsetup{justification=centering}
		\includegraphics[width=\textwidth]{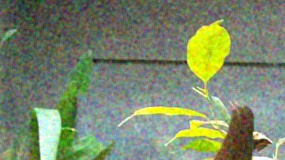}
		\caption{N2N}
		\label{fig:davis_4_2_s15_k3:n2n_crop}
	\end{subfigure}
	\begin{subfigure}{0.18\textwidth}
	    \captionsetup{justification=centering}
		\includegraphics[width=\textwidth]{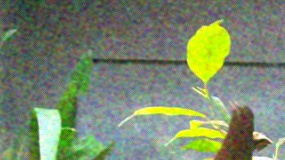}
		\caption{B2U}
		\label{fig:davis_4_2_s15_k3:b2u_crop}
	\end{subfigure}
	\begin{subfigure}{0.18\textwidth}
	    \captionsetup{justification=centering}
		\includegraphics[width=\textwidth]{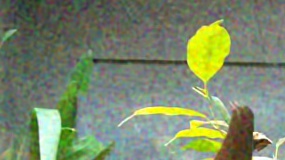}
		\caption{BM3D}
		\label{fig:davis_4_2_s15_k3:bm3d_crop}
	\end{subfigure}
	\begin{subfigure}{0.18\textwidth}
	    \captionsetup{justification=centering}
		\includegraphics[width=\textwidth]{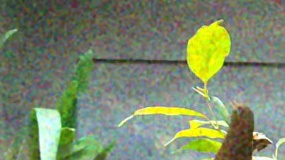}
		\caption{B-DnCNN}
		\label{fig:davis_4_2_s15_k3:b_dncnn_crop}
	\end{subfigure}
	\begin{subfigure}{0.18\textwidth}
	    \captionsetup{justification=centering}
		\includegraphics[width=\textwidth]{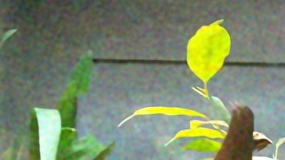}
		\caption{R2R}
		\label{fig:davis_4_2_s15_k3:r2r_crop}
	\end{subfigure}
	\begin{subfigure}{0.18\textwidth}
	    \captionsetup{justification=centering}
		\includegraphics[width=\textwidth]{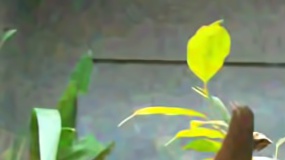}
		\caption{BM3D-O}
		\label{fig:davis_4_2_s15_k3:bm3d_opt_crop}
	\end{subfigure}
	\begin{subfigure}{0.18\textwidth}
	    \captionsetup{justification=centering}
		\includegraphics[width=\textwidth]{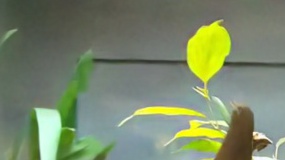}
		\caption{PC-UNet}
		\label{fig:davis_4_2_s15_k3:pc_unet_crop}
	\end{subfigure}
	\begin{subfigure}{0.18\textwidth}
	    \captionsetup{justification=centering}
		\includegraphics[width=\textwidth]{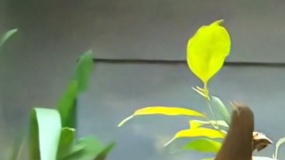}
		\caption{PC-DnCNN}
		\label{fig:davis_4_2_s15_k3:pc_dncnn_crop}
	\end{subfigure}
	\begin{subfigure}{0.18\textwidth}
	    \captionsetup{justification=centering}
		\includegraphics[width=\textwidth]{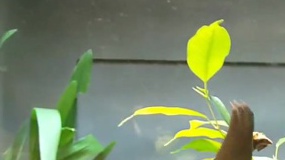}
		\caption{Clean}
		\label{fig:davis_4_2_s15_k3:clean_crop}
	\end{subfigure}
	\begin{subfigure}{0.18\textwidth}
	    \captionsetup{justification=centering}
		\includegraphics[width=\textwidth]{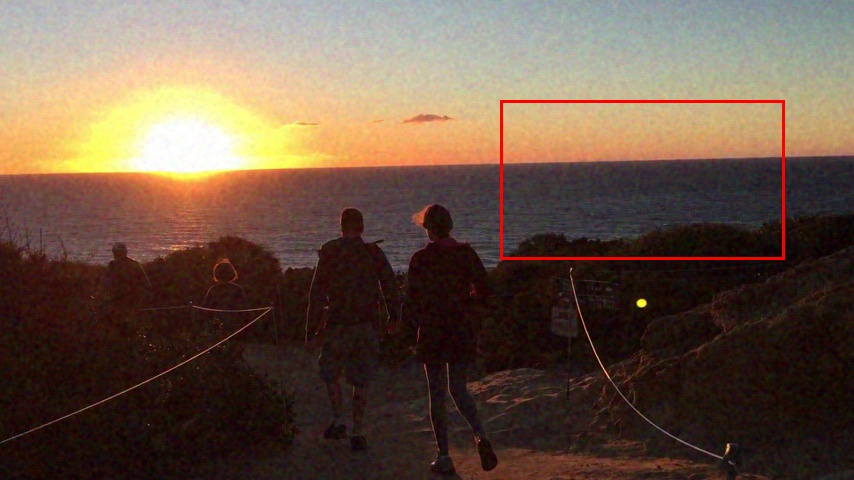}
		\caption{Noisy \\ 34.16 / 0.796}
		\label{fig:davis_19_0_s5_k4:noisy_rect}
	\end{subfigure}
	\begin{subfigure}{0.18\textwidth}
	    \captionsetup{justification=centering}
		\includegraphics[width=\textwidth]{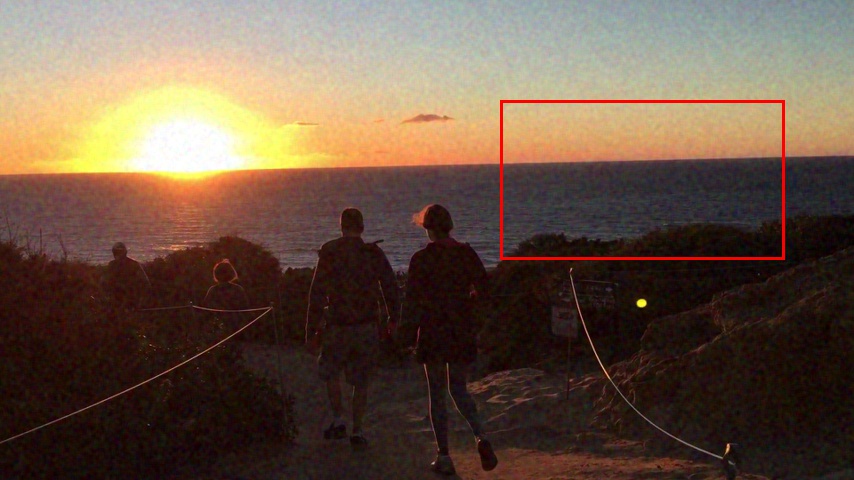}
		\caption{N2N \\ 34.54 / 0.813}
		\label{fig:davis_19_0_s5_k4:n2n_rect}
	\end{subfigure}
	\begin{subfigure}{0.18\textwidth}
	    \captionsetup{justification=centering}
		\includegraphics[width=\textwidth]{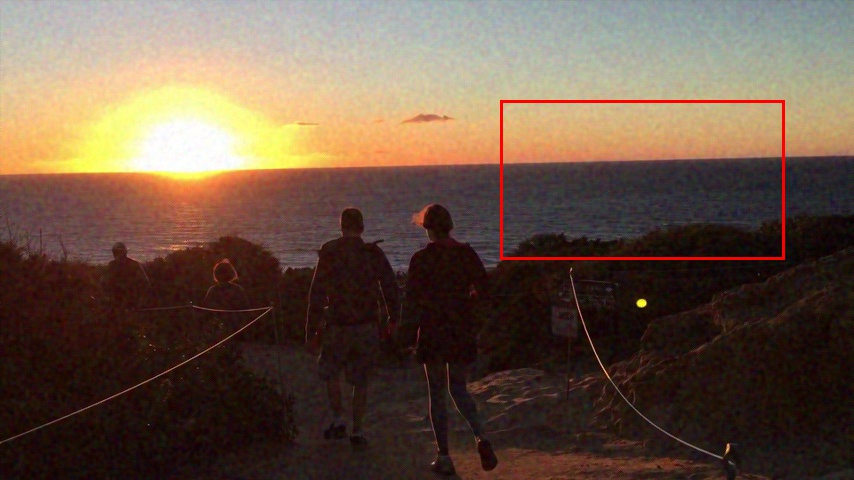}
		\caption{B2U \\ 31.79 / 0.674}
		\label{fig:davis_19_0_s5_k4:b2u_rect}
	\end{subfigure}
	\begin{subfigure}{0.18\textwidth}
	    \captionsetup{justification=centering}
		\includegraphics[width=\textwidth]{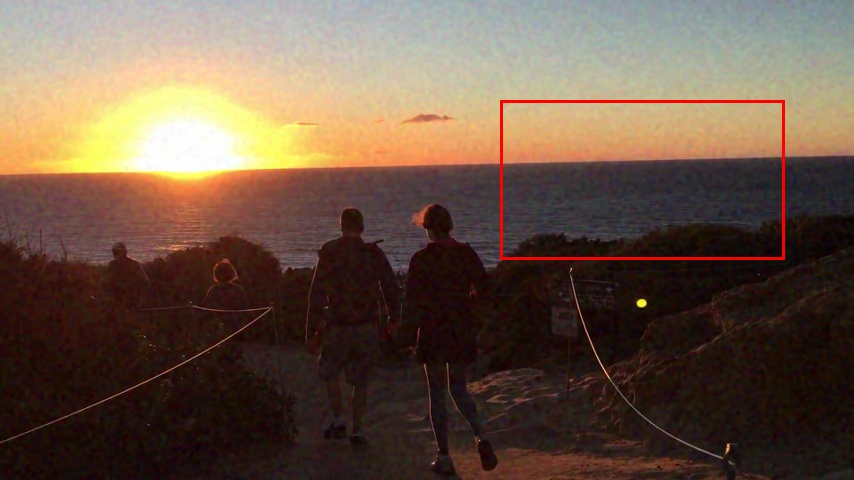}
		\caption{BM3D \\ 36.21 / 0.876}
		\label{fig:davis_19_0_s5_k4:bm3d_rect}
	\end{subfigure}
	\begin{subfigure}{0.18\textwidth}
	    \captionsetup{justification=centering}
		\includegraphics[width=\textwidth]{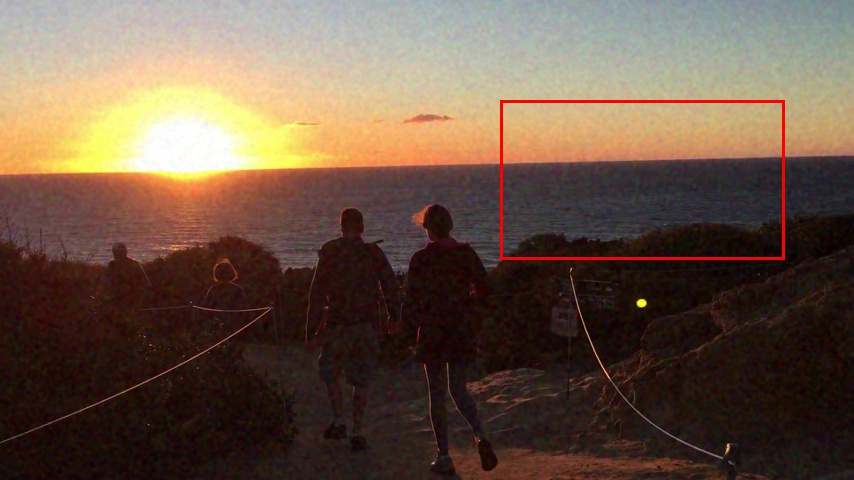}
		\caption{B-DnCNN \\ 35.50 / 0.852}
		\label{fig:davis_19_0_s5_k4:b_dncnn_rect}
	\end{subfigure}
	\begin{subfigure}{0.18\textwidth}
	    \captionsetup{justification=centering}
		\includegraphics[width=\textwidth]{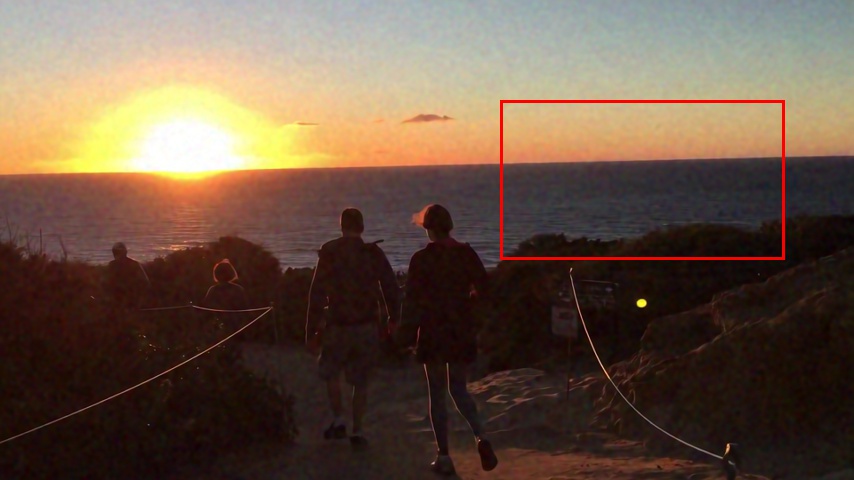}
		\caption{R2R \\ 37.60 / 0.910}
		\label{fig:davis_19_0_s5_k4:r2r_rect}
	\end{subfigure}
	\begin{subfigure}{0.18\textwidth}
	    \captionsetup{justification=centering}
		\includegraphics[width=\textwidth]{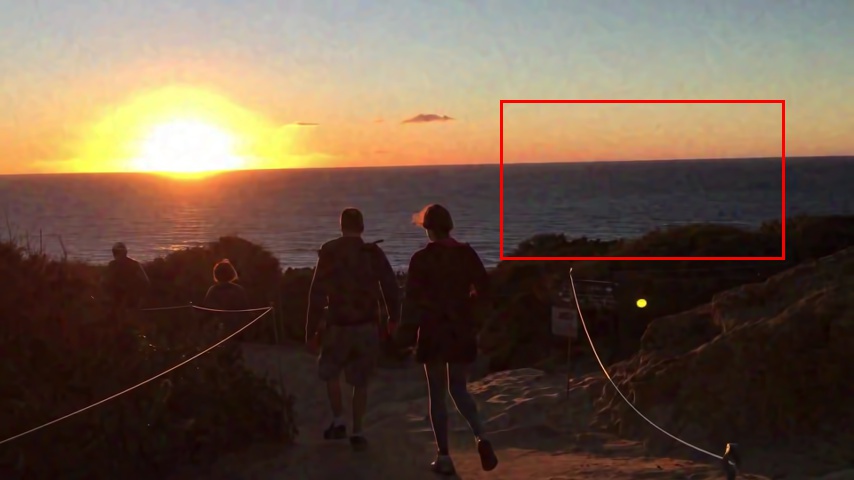}
		\caption{BM3D-O \\ 39.10 / 0.944}
		\label{fig:davis_19_0_s5_k4:bm3d_opt_rect}
	\end{subfigure}
	\begin{subfigure}{0.18\textwidth}
	    \captionsetup{justification=centering}
		\includegraphics[width=\textwidth]{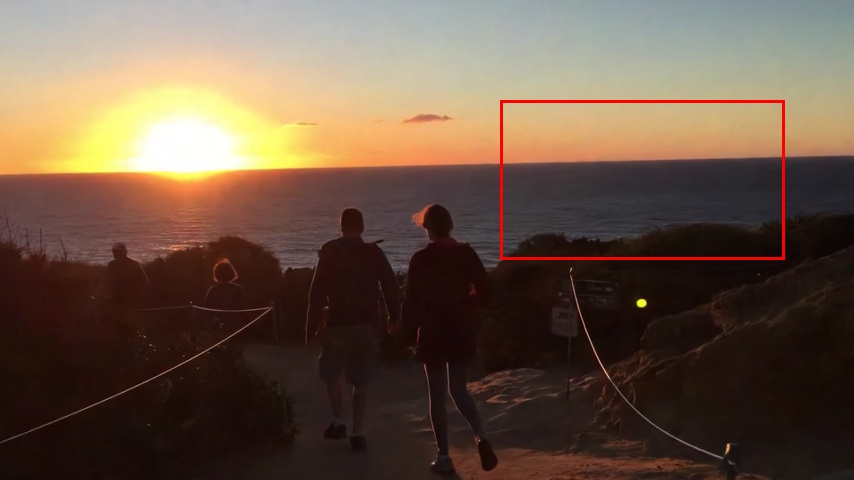}
		\caption{PC-UNet \\ 40.93 / 0.964}
		\label{fig:davis_19_0_s5_k4:pc_unet_rect}
	\end{subfigure}
	\begin{subfigure}{0.18\textwidth}
	    \captionsetup{justification=centering}
		\includegraphics[width=\textwidth]{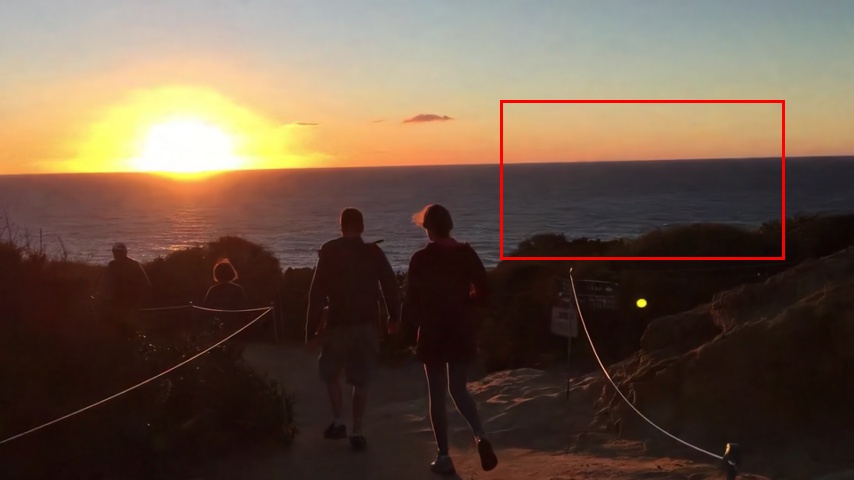}
		\caption{PC-DnCNN \\ 41.10 / 0.965}
		\label{fig:davis_19_0_s5_k4:pc_dncnn_rect}
	\end{subfigure}
	\begin{subfigure}{0.18\textwidth}
	    \captionsetup{justification=centering}
		\includegraphics[width=\textwidth]{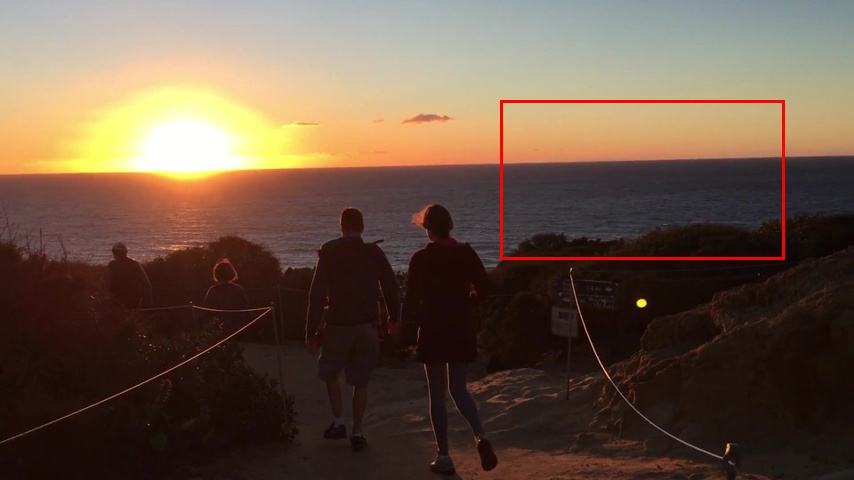}
		\caption{Clean \newline }
		\label{fig:davis_19_0_s5_k4:clean_rect}
	\end{subfigure}
	\begin{subfigure}{0.18\textwidth}
	    \captionsetup{justification=centering}
		\includegraphics[width=\textwidth]{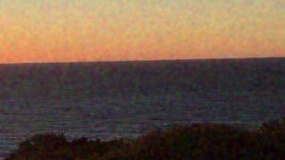}
		\caption{Noisy}
		\label{fig:davis_19_0_s5_k4:noisy_crop}
	\end{subfigure}
	\begin{subfigure}{0.18\textwidth}
	    \captionsetup{justification=centering}
		\includegraphics[width=\textwidth]{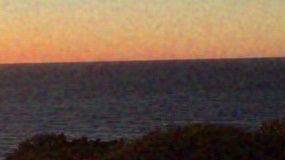}
		\caption{N2N}
		\label{fig:davis_19_0_s5_k4:n2n_crop}
	\end{subfigure}
	\begin{subfigure}{0.18\textwidth}
	    \captionsetup{justification=centering}
		\includegraphics[width=\textwidth]{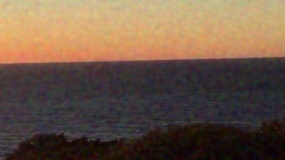}
		\caption{B2U}
		\label{fig:davis_19_0_s5_k4:b2u_crop}
	\end{subfigure}
	\begin{subfigure}{0.18\textwidth}
	    \captionsetup{justification=centering}
		\includegraphics[width=\textwidth]{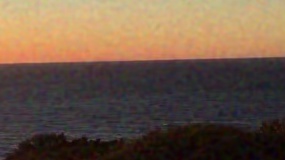}
		\caption{BM3D}
		\label{fig:davis_19_0_s5_k4:bm3d_crop}
	\end{subfigure}
	\begin{subfigure}{0.18\textwidth}
	    \captionsetup{justification=centering}
		\includegraphics[width=\textwidth]{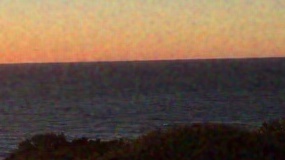}
		\caption{B-DnCNN}
		\label{fig:davis_19_0_s5_k4:b_dncnn_crop}
	\end{subfigure}
	\begin{subfigure}{0.18\textwidth}
	    \captionsetup{justification=centering}
		\includegraphics[width=\textwidth]{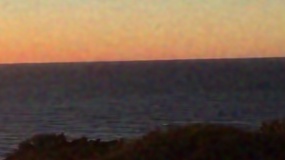}
		\caption{R2R}
		\label{fig:davis_19_0_s5_k4:rt2_crop}
	\end{subfigure}
	\begin{subfigure}{0.18\textwidth}
	    \captionsetup{justification=centering}
		\includegraphics[width=\textwidth]{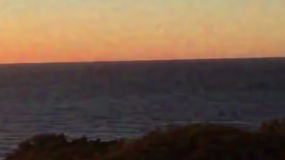}
		\caption{BM3D-O}
		\label{fig:davis_19_0_s5_k4:bm3d_opt_crop}
	\end{subfigure}
	\begin{subfigure}{0.18\textwidth}
	    \captionsetup{justification=centering}
		\includegraphics[width=\textwidth]{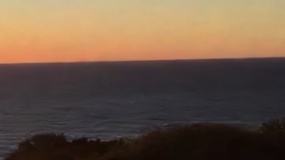}
		\caption{PC-UNet}
		\label{fig:davis_19_0_s5_k4:pc_unet_crop}
	\end{subfigure}
	\begin{subfigure}{0.18\textwidth}
	    \captionsetup{justification=centering}
		\includegraphics[width=\textwidth]{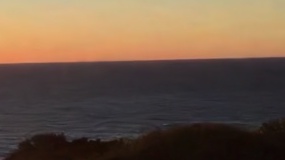}
		\caption{PC-DnCNN}
		\label{fig:davis_19_0_s5_k4:pc_dncnn_crop}
	\end{subfigure}
	\begin{subfigure}{0.18\textwidth}
	    \captionsetup{justification=centering}
		\includegraphics[width=\textwidth]{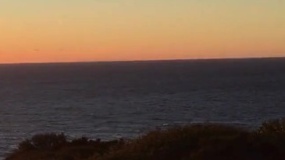}
		\caption{Clean}
		\label{fig:davis_19_0_s5_k4:clean_crop}
	\end{subfigure}
	\caption{Denoising examples with correlated Gaussian noise. The first four rows show frame 24 of the sequence \emph{chameleon} with $\sigma = 15$ and $k = 3$. The last four rows present frame 5 of the sequence \emph{people-sunset} with $\sigma = 5$ and $k = 4$. As can be seen, oracle BM3D leaves a substantial amount of low-frequency noise unfiltered, while other algorithms, except ours (PC-UNet and PC-DnCNN), do not succeed in removing the noise.}
	\label{fig:davis_4_2_s15_k3_19_0_s5_k4}
\end{figure*}

\begin{figure*}
    \centering
	\begin{subfigure}{0.18\textwidth}
	    \captionsetup{justification=centering}
		\includegraphics[width=\textwidth]{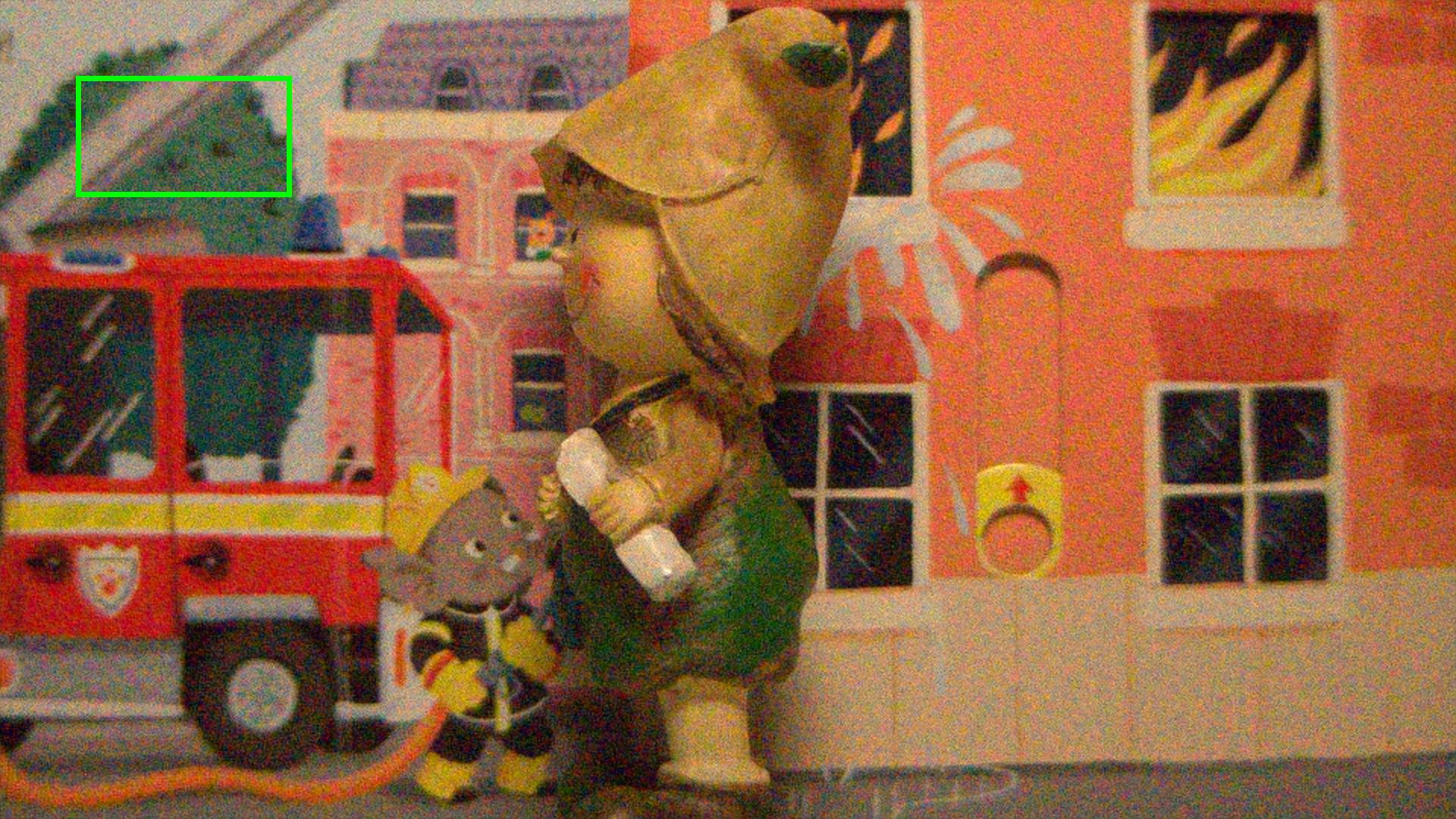}
		\caption{Noisy \\ 25.01 / 0.442}
		\label{fig:crvd_8_5_iso25600:noisy_rect}
	\end{subfigure}
	\begin{subfigure}{0.18\textwidth}
	    \captionsetup{justification=centering}
		\includegraphics[width=\textwidth]{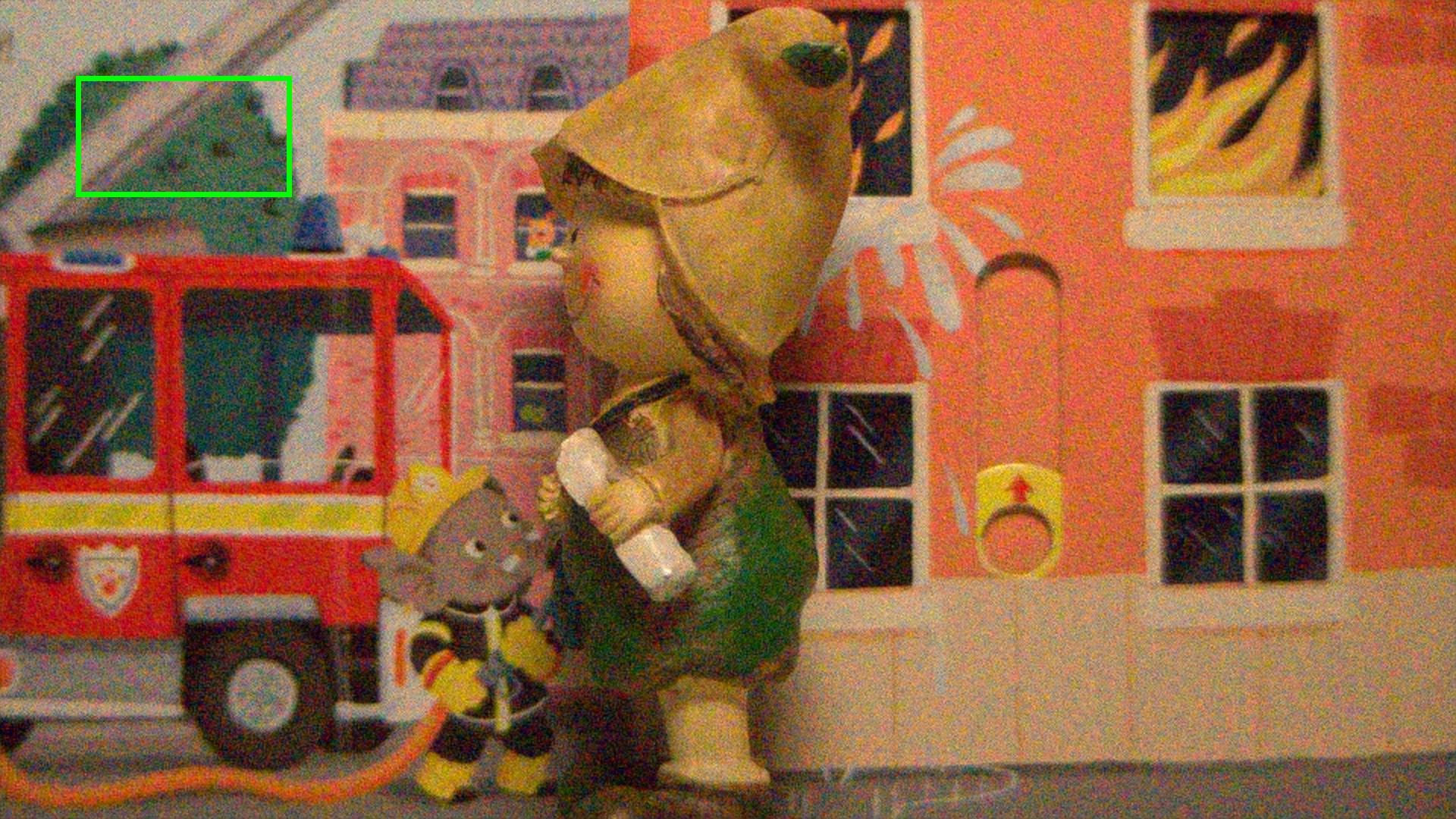}
		\caption{N2N \\ 25.09 / 0.449}
		\label{fig:crvd_8_5_iso25600:n2n2_rect}
	\end{subfigure}
	\begin{subfigure}{0.18\textwidth}
	    \captionsetup{justification=centering}
		\includegraphics[width=\textwidth]{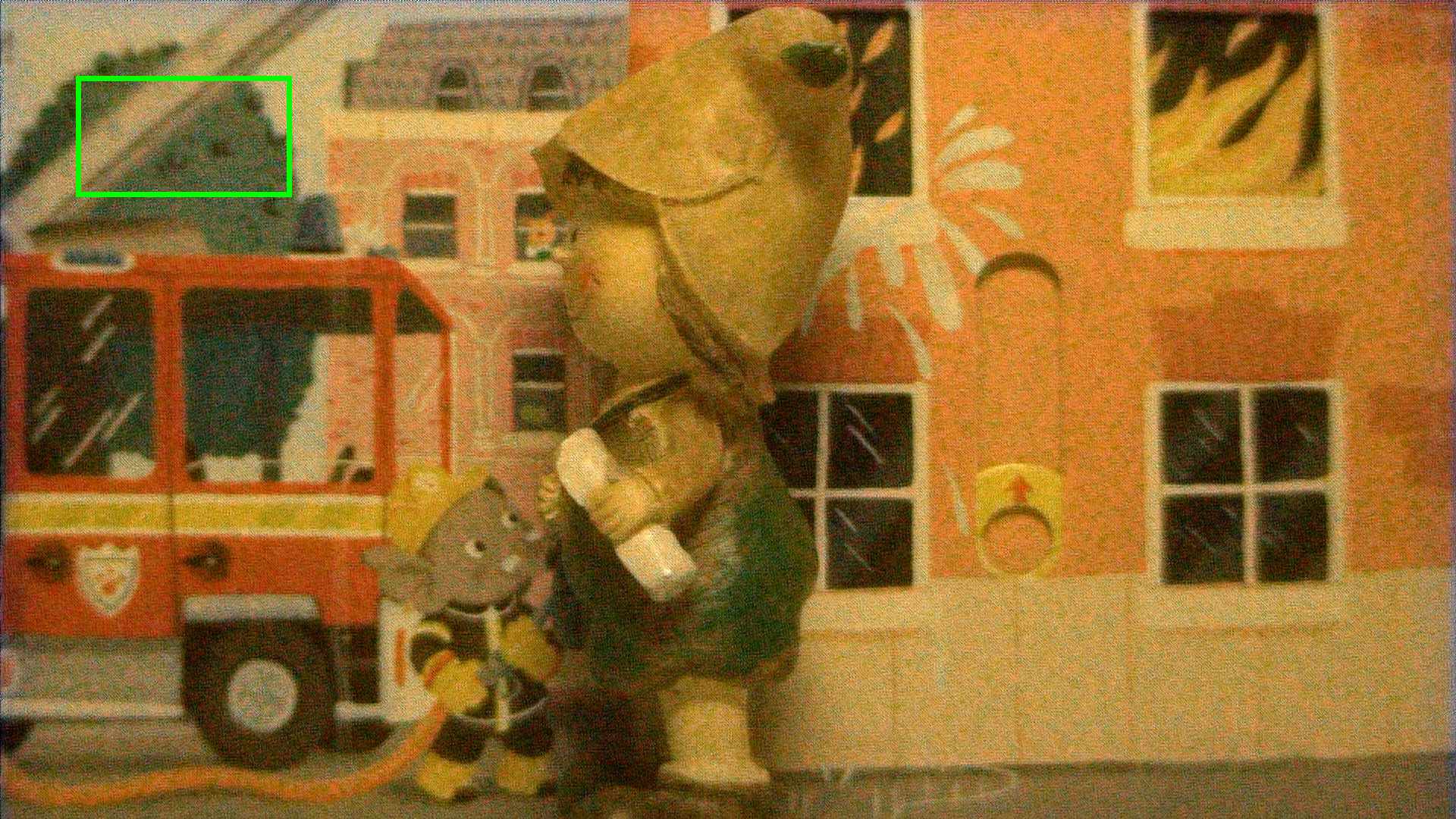}
		\caption{B2U \\ 20.34 / 0.281}
		\label{fig:crvd_8_5_iso25600:b2u_rect}
	\end{subfigure}
	\begin{subfigure}{0.18\textwidth}
	    \captionsetup{justification=centering}
		\includegraphics[width=\textwidth]{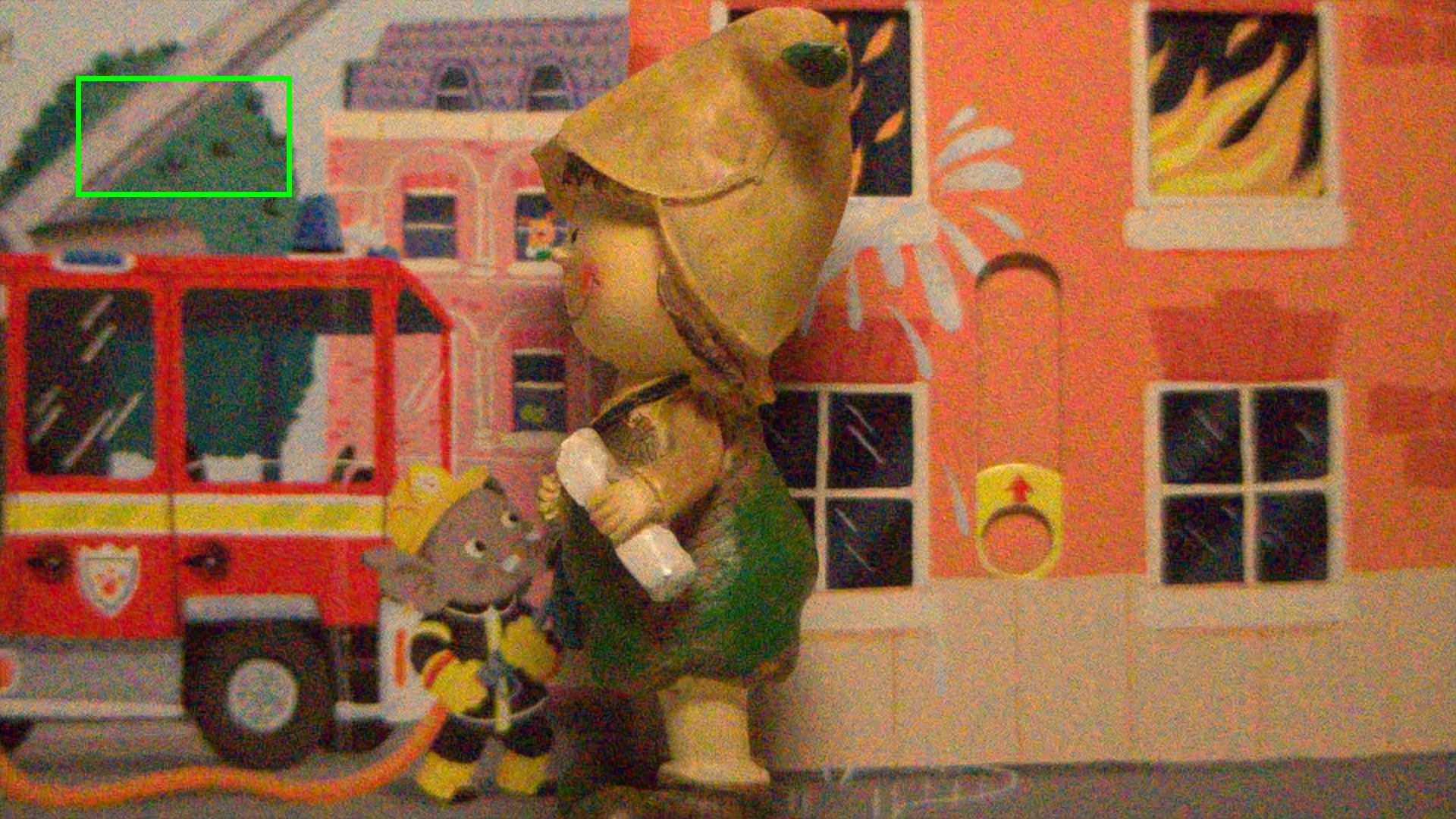}
		\caption{BM3D \\ 26.21 / 0.550}
		\label{fig:crvd_8_5_iso25600:bm3d_rect}
	\end{subfigure}
	\begin{subfigure}{0.18\textwidth}
	    \captionsetup{justification=centering}
		\includegraphics[width=\textwidth]{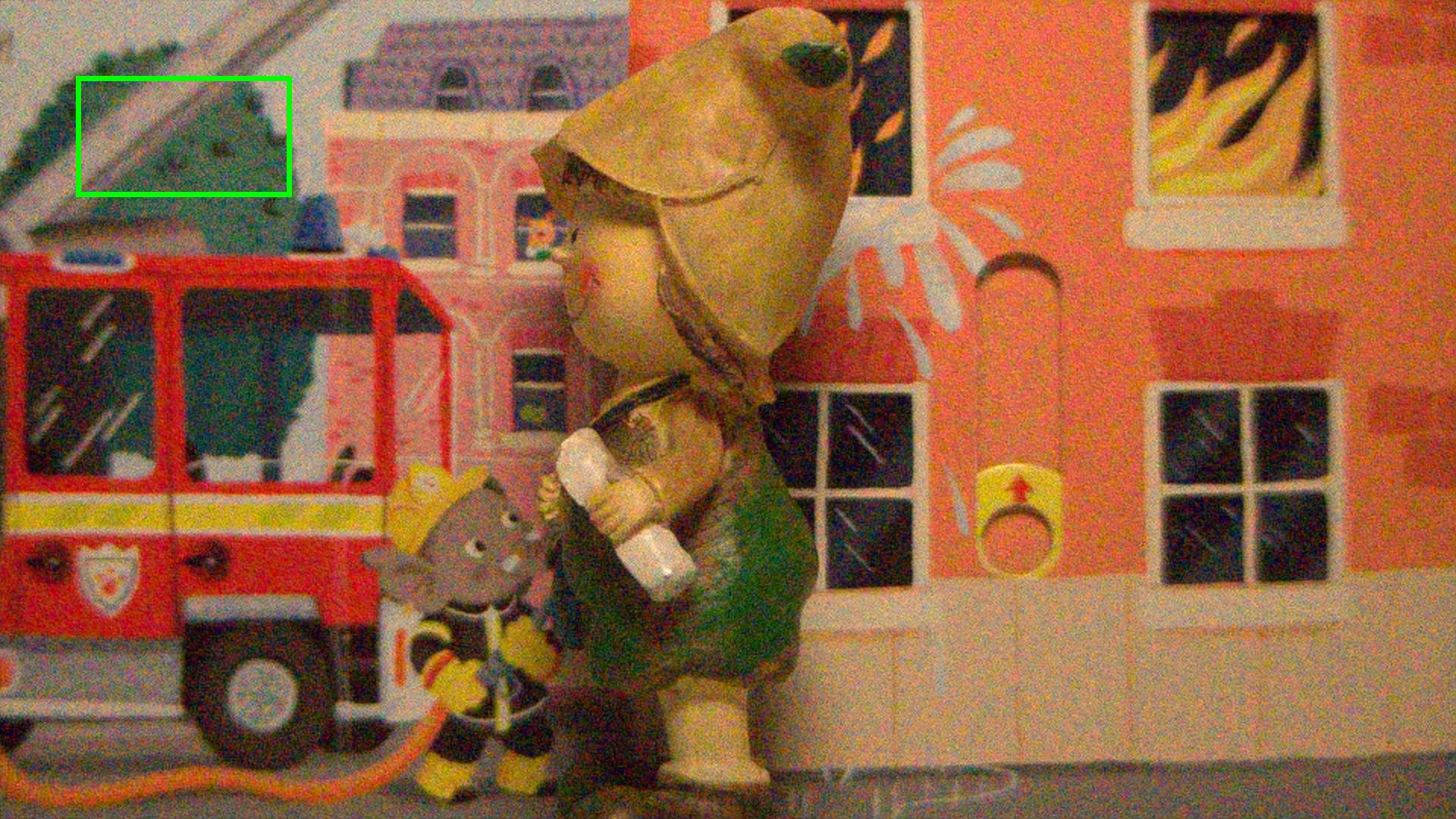}
		\caption{B-DnCNN \\ 25.08 / 0.446}
		\label{fig:crvd_8_5_iso25600:b_dncnn_rect}
	\end{subfigure}
	\begin{subfigure}{0.18\textwidth}
	    \captionsetup{justification=centering}
		\includegraphics[width=\textwidth]{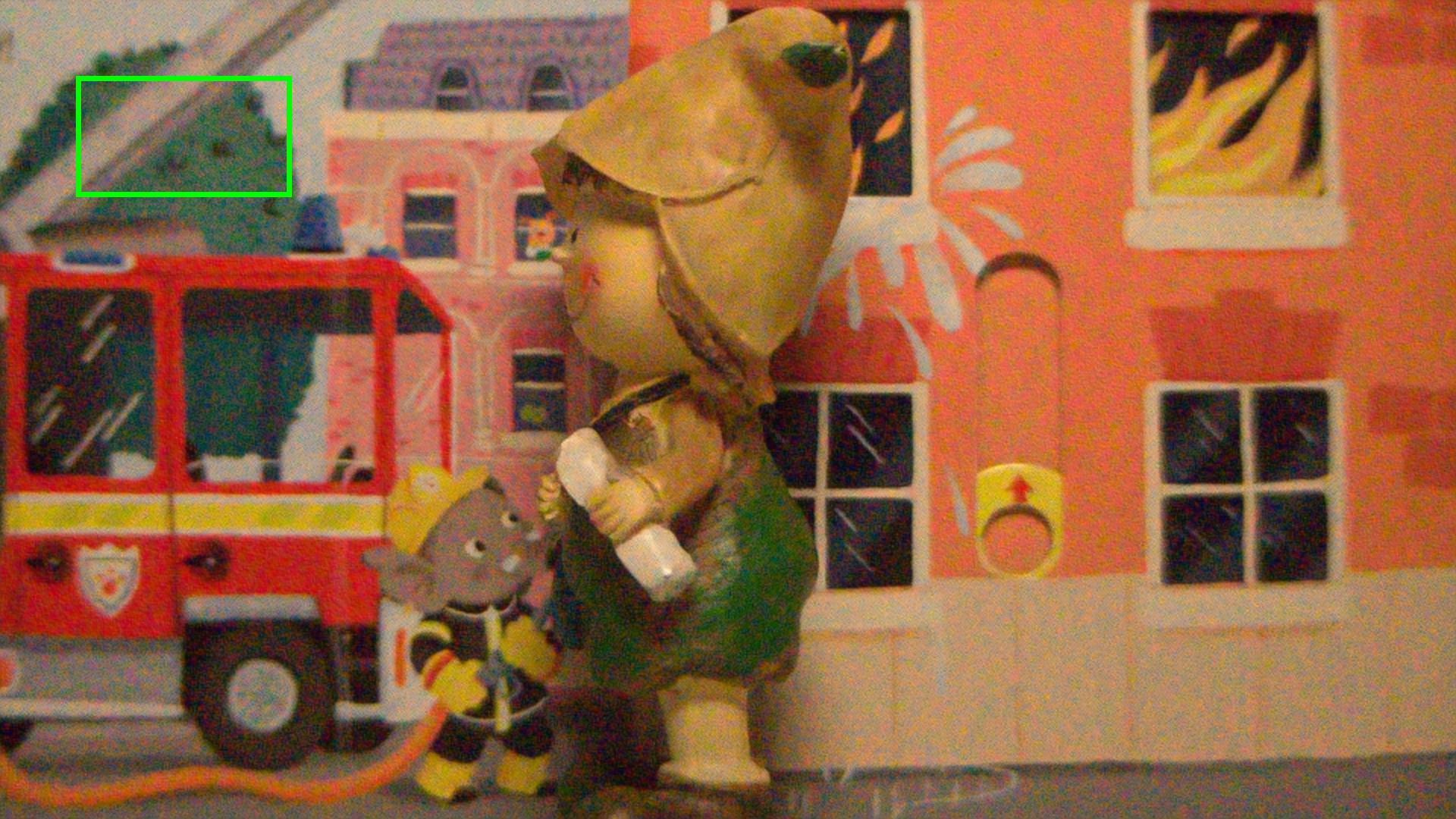}
		\caption{R2R \\ 27.38 / 0.645}
		\label{fig:crvd_8_5_iso25600:r2r_rect}
	\end{subfigure}
	\begin{subfigure}{0.18\textwidth}
	    \captionsetup{justification=centering}
		\includegraphics[width=\textwidth]{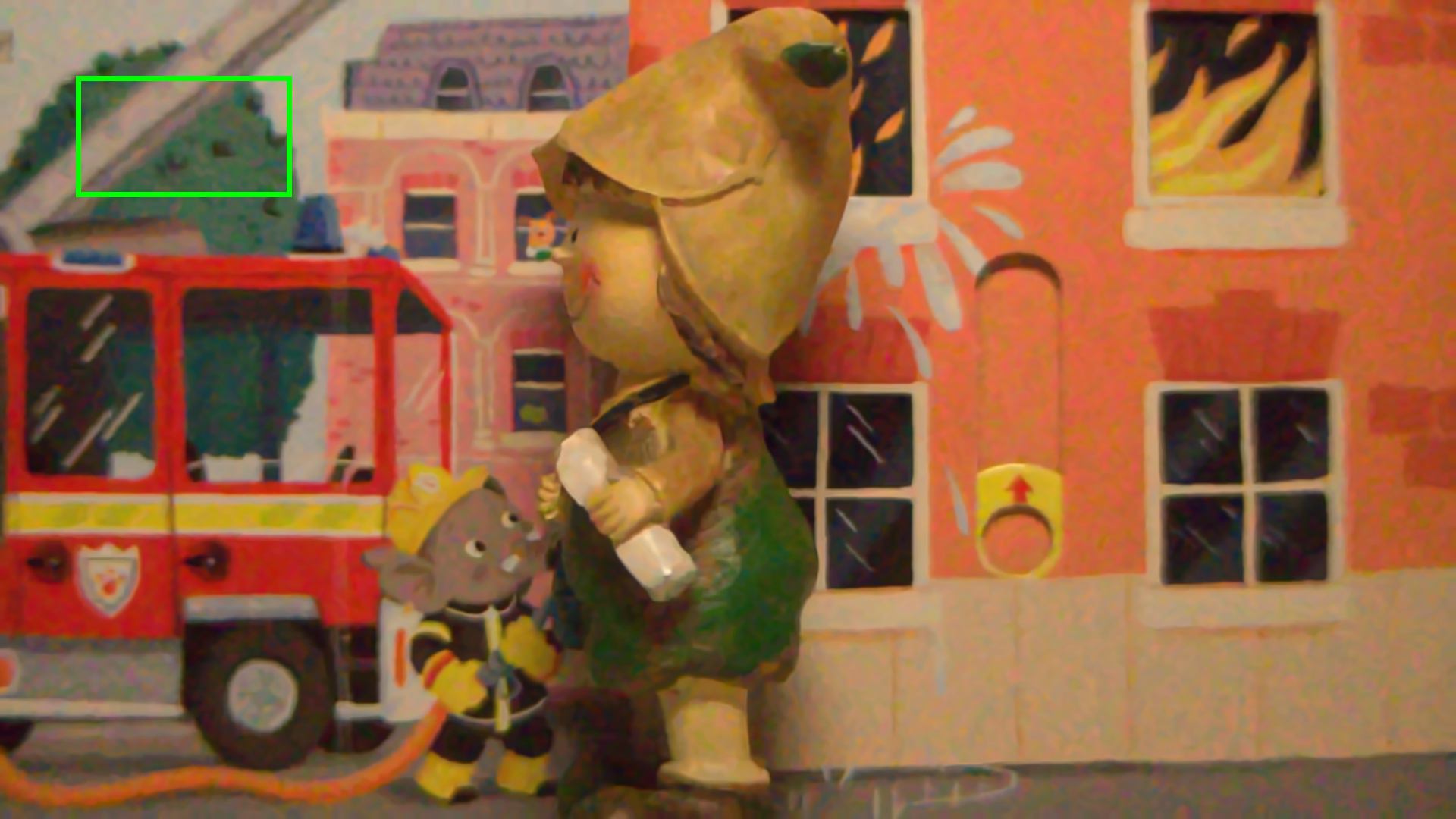}
		\caption{BM3D-O \\ 30.71 / 0.902}
		\label{fig:crvd_8_5_iso25600:bm3d_opt_rect}
	\end{subfigure}
	\begin{subfigure}{0.18\textwidth}
	    \captionsetup{justification=centering}
		\includegraphics[width=\textwidth]{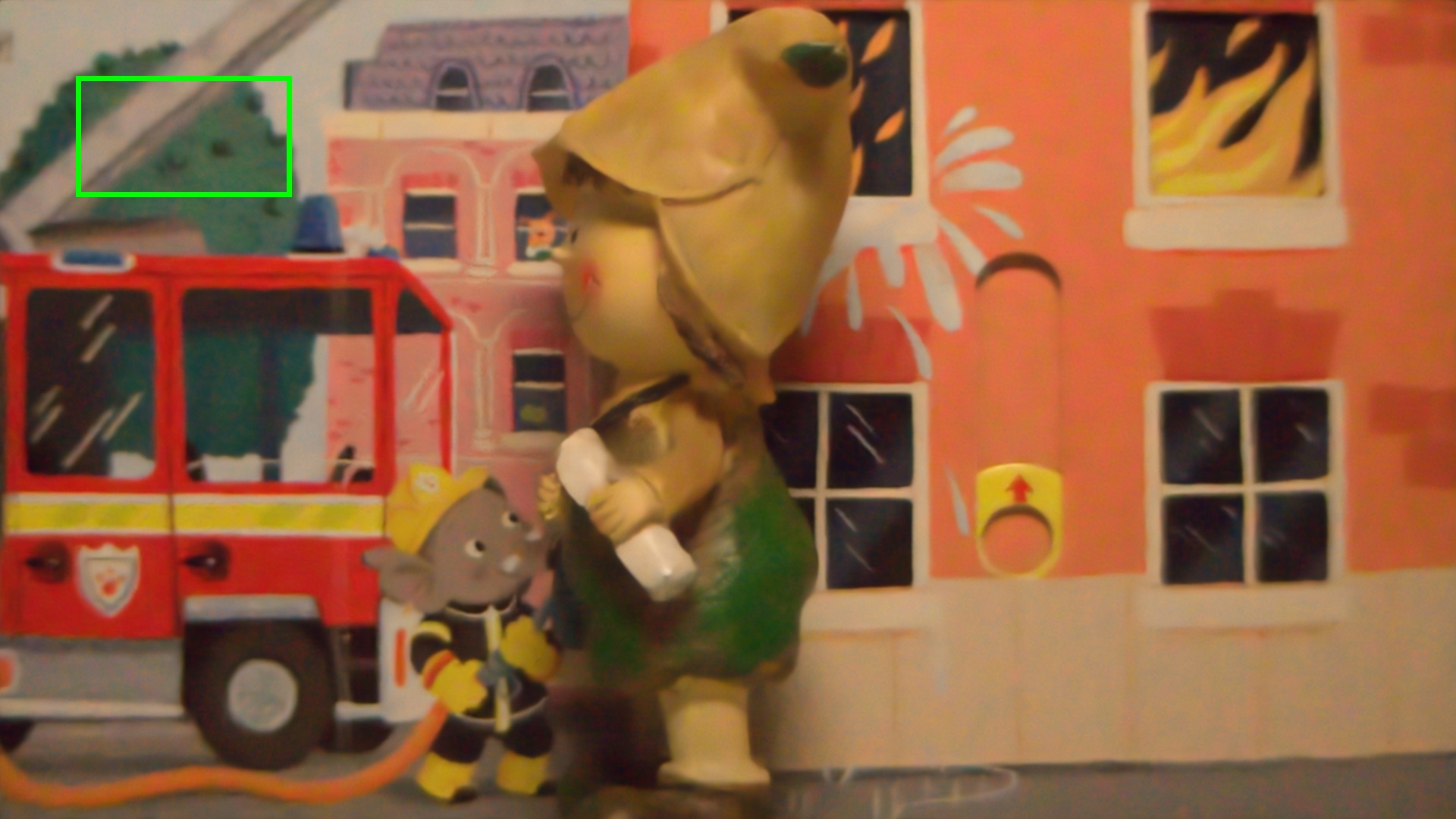}
		\caption{PC-UNet \\ 32.05 / 0.933}
		\label{fig:crvd_8_5_iso25600:pc_unet_rect}
	\end{subfigure}
	\begin{subfigure}{0.18\textwidth}
	    \captionsetup{justification=centering}
		\includegraphics[width=\textwidth]{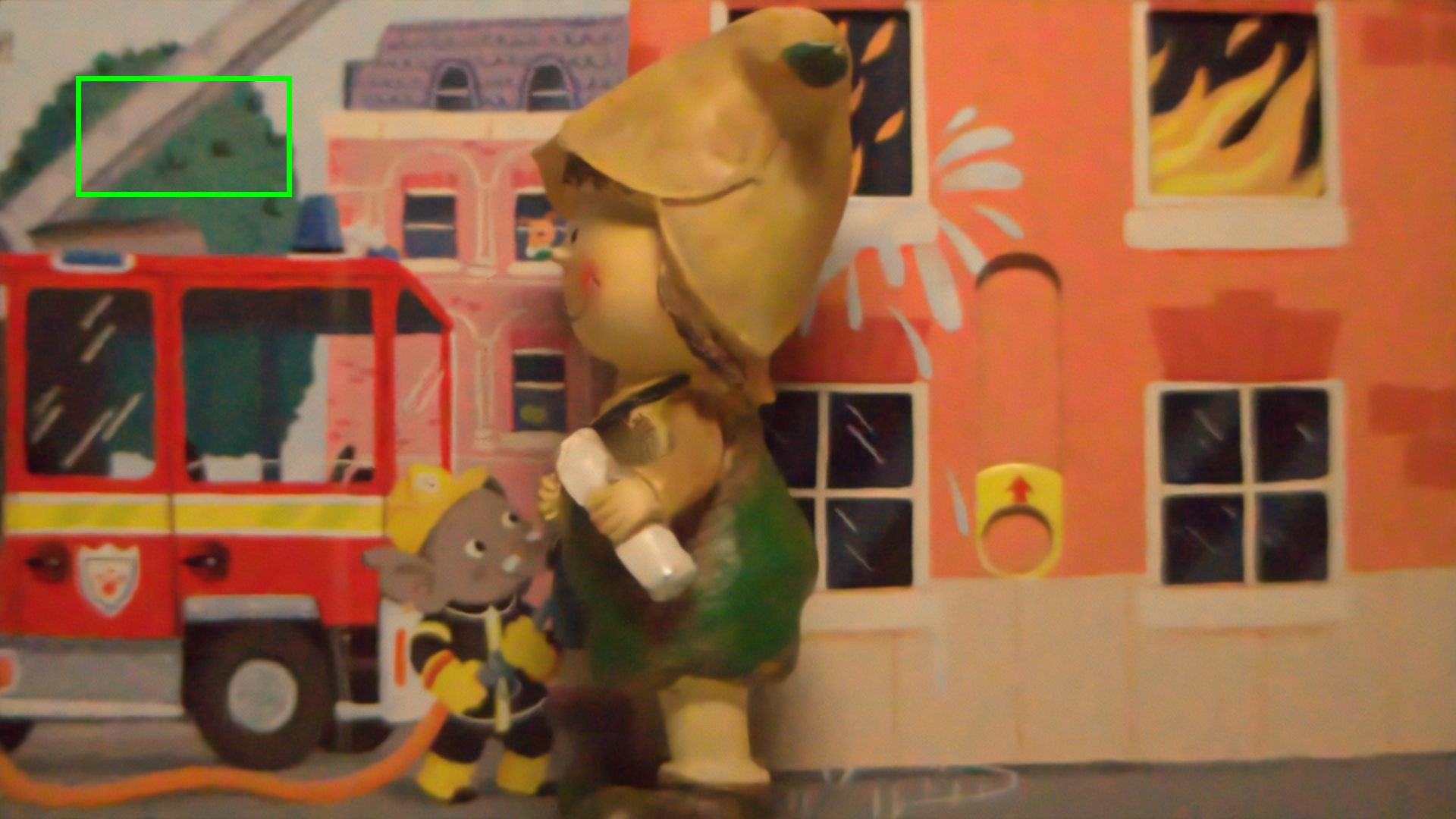}
		\caption{PC-DnCNN \\ 32.00 / 0.932}
		\label{fig:crvd_8_5_iso25600:pc_dncnn_rect}
	\end{subfigure}
	\begin{subfigure}{0.18\textwidth}
	    \captionsetup{justification=centering}
		\includegraphics[width=\textwidth]{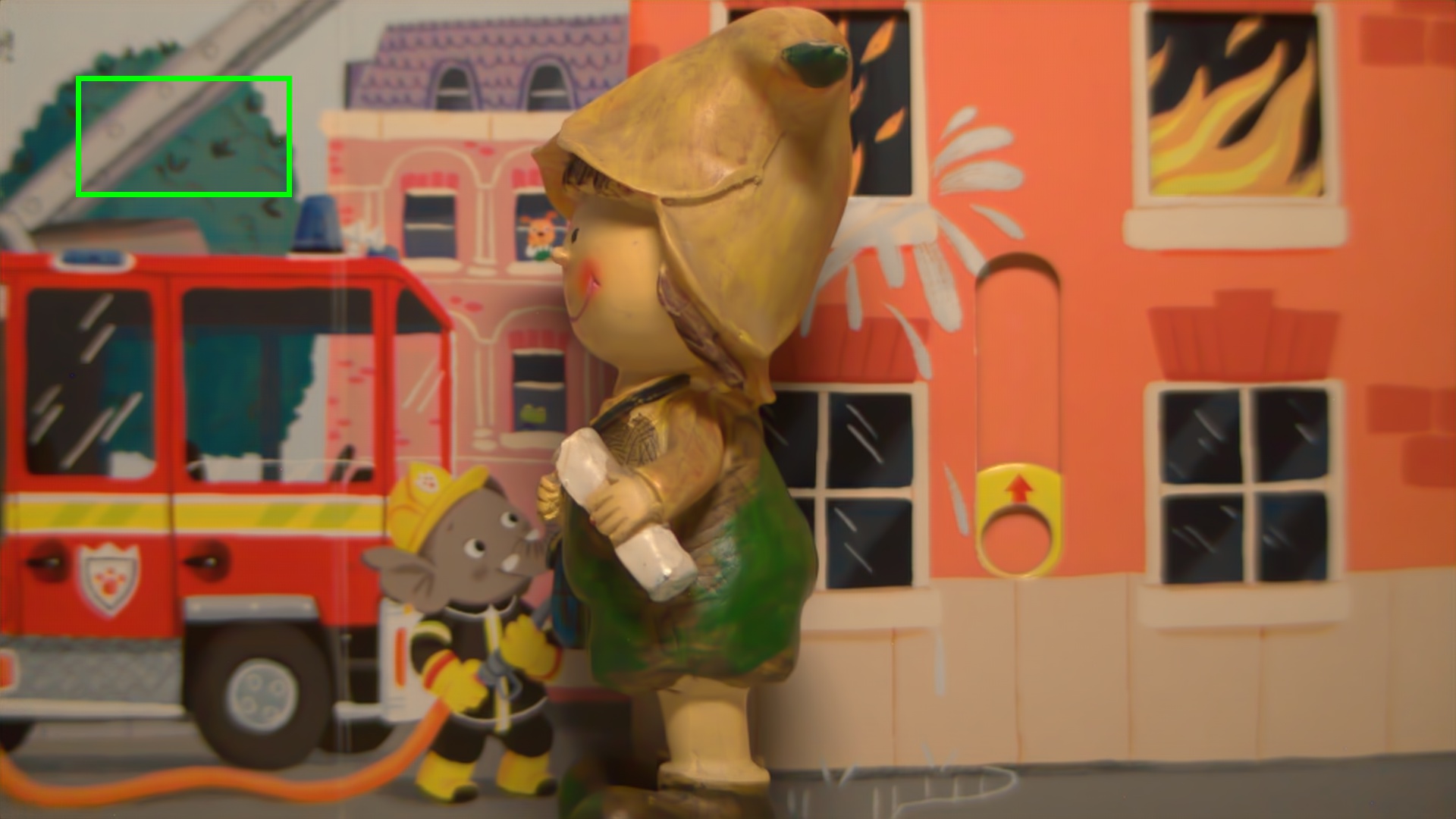}
		\caption{Clean \newline }
		\label{fig:crvd_8_5_iso25600:clean_rect}
	\end{subfigure}
	\begin{subfigure}{0.18\textwidth}
	    \captionsetup{justification=centering}
		\includegraphics[width=\textwidth]{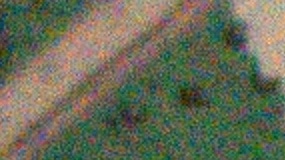}
		\caption{Noisy}
		\label{fig:crvd_8_5_iso25600:noisy_crop}
	\end{subfigure}
	\begin{subfigure}{0.18\textwidth}
	    \captionsetup{justification=centering}
		\includegraphics[width=\textwidth]{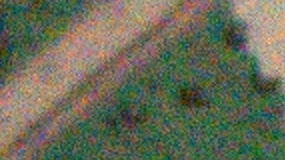}
		\caption{N2N}
		\label{fig:crvd_8_5_iso25600:n2n_crop}
	\end{subfigure}
	\begin{subfigure}{0.18\textwidth}
	    \captionsetup{justification=centering}
		\includegraphics[width=\textwidth]{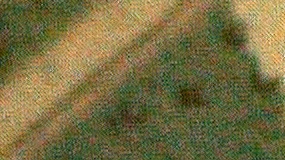}
		\caption{B2U}
		\label{fig:crvd_8_5_iso25600:b2u_crop}
	\end{subfigure}
	\begin{subfigure}{0.18\textwidth}
	    \captionsetup{justification=centering}
		\includegraphics[width=\textwidth]{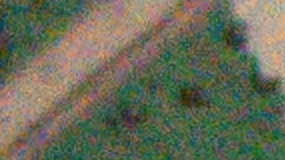}
		\caption{BM3D}
		\label{fig:crvd_8_5_iso25600:bm3d_crop}
	\end{subfigure}
	\begin{subfigure}{0.18\textwidth}
	    \captionsetup{justification=centering}
		\includegraphics[width=\textwidth]{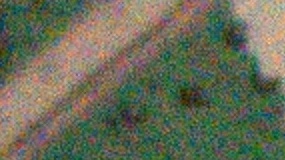}
		\caption{B-DnCNN}
		\label{fig:crvd_8_5_iso25600:b_dncnn_crop}
	\end{subfigure}
	\begin{subfigure}{0.18\textwidth}
	    \captionsetup{justification=centering}
		\includegraphics[width=\textwidth]{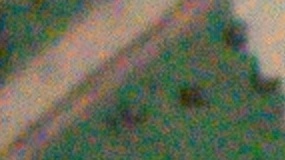}
		\caption{R2R}
		\label{fig:crvd_8_5_iso25600:r2r_crop}
	\end{subfigure}
	\begin{subfigure}{0.18\textwidth}
	    \captionsetup{justification=centering}
		\includegraphics[width=\textwidth]{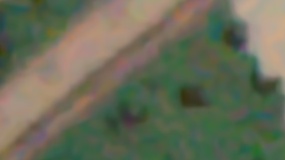}
		\caption{BM3D-O}
		\label{fig:crvd_8_5_iso25600:bm3d_opt_crop}
	\end{subfigure}
	\begin{subfigure}{0.18\textwidth}
	    \captionsetup{justification=centering}
		\includegraphics[width=\textwidth]{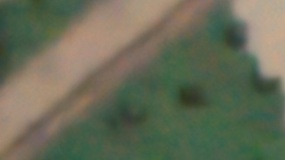}
		\caption{PC-UNet}
		\label{fig:crvd_8_5_iso25600:pc_unet_crop}
	\end{subfigure}
	\begin{subfigure}{0.18\textwidth}
	    \captionsetup{justification=centering}
		\includegraphics[width=\textwidth]{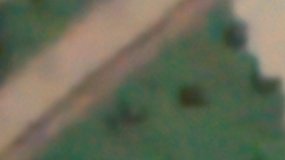}
		\caption{PC-DnCNN}
		\label{fig:crvd_8_5_iso25600:pc_dncnn_crop}
	\end{subfigure}
	\begin{subfigure}{0.18\textwidth}
	    \captionsetup{justification=centering}
		\includegraphics[width=\textwidth]{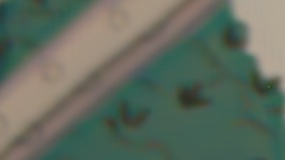}
		\caption{Clean}
		\label{fig:crvd_8_5_iso25600:clean_crop}
	\end{subfigure}
	\begin{subfigure}{0.18\textwidth}
	    \captionsetup{justification=centering}
		\includegraphics[width=\textwidth]{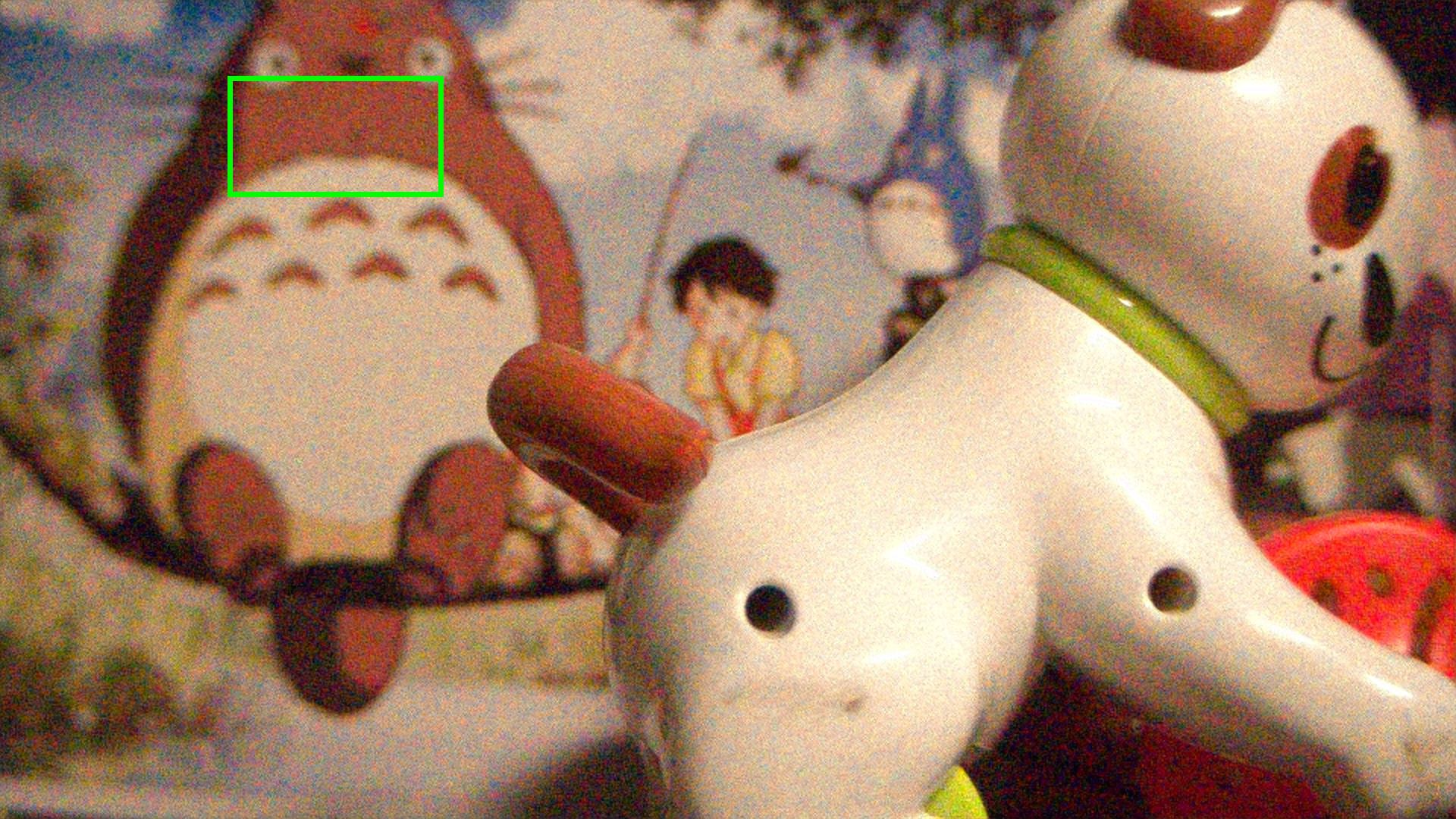}
		\caption{Noisy \\ 25.91 / 0.464}
		\label{fig:crvd_2_3_iso25600:noisy_rect}
	\end{subfigure}
	\begin{subfigure}{0.18\textwidth}
	    \captionsetup{justification=centering}
		\includegraphics[width=\textwidth]{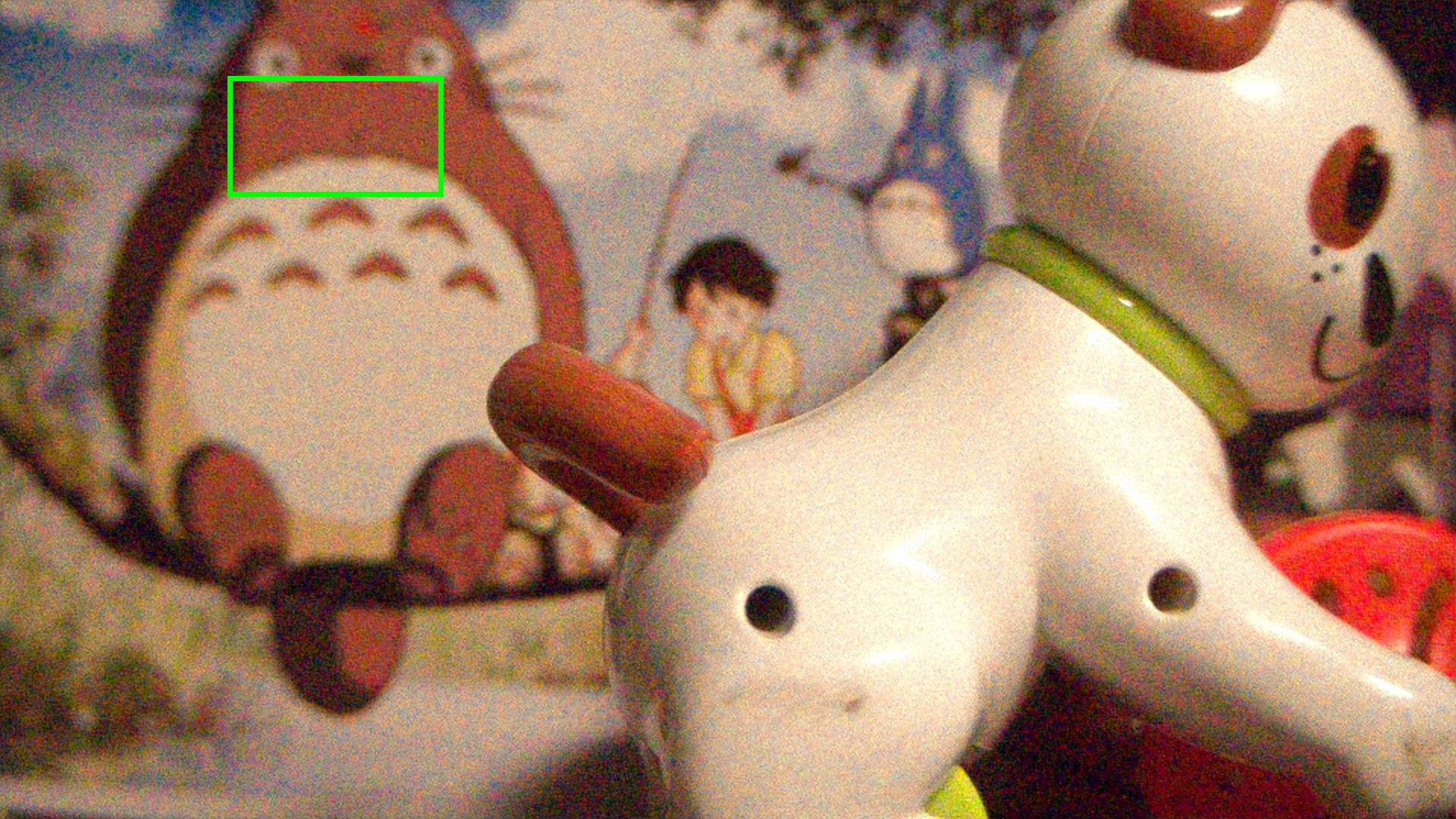}
		\caption{N2N \\ 26.00 / 0.471}
		\label{fig:crvd_2_3_iso25600:n2n_rect}
	\end{subfigure}
	\begin{subfigure}{0.18\textwidth}
	    \captionsetup{justification=centering}
		\includegraphics[width=\textwidth]{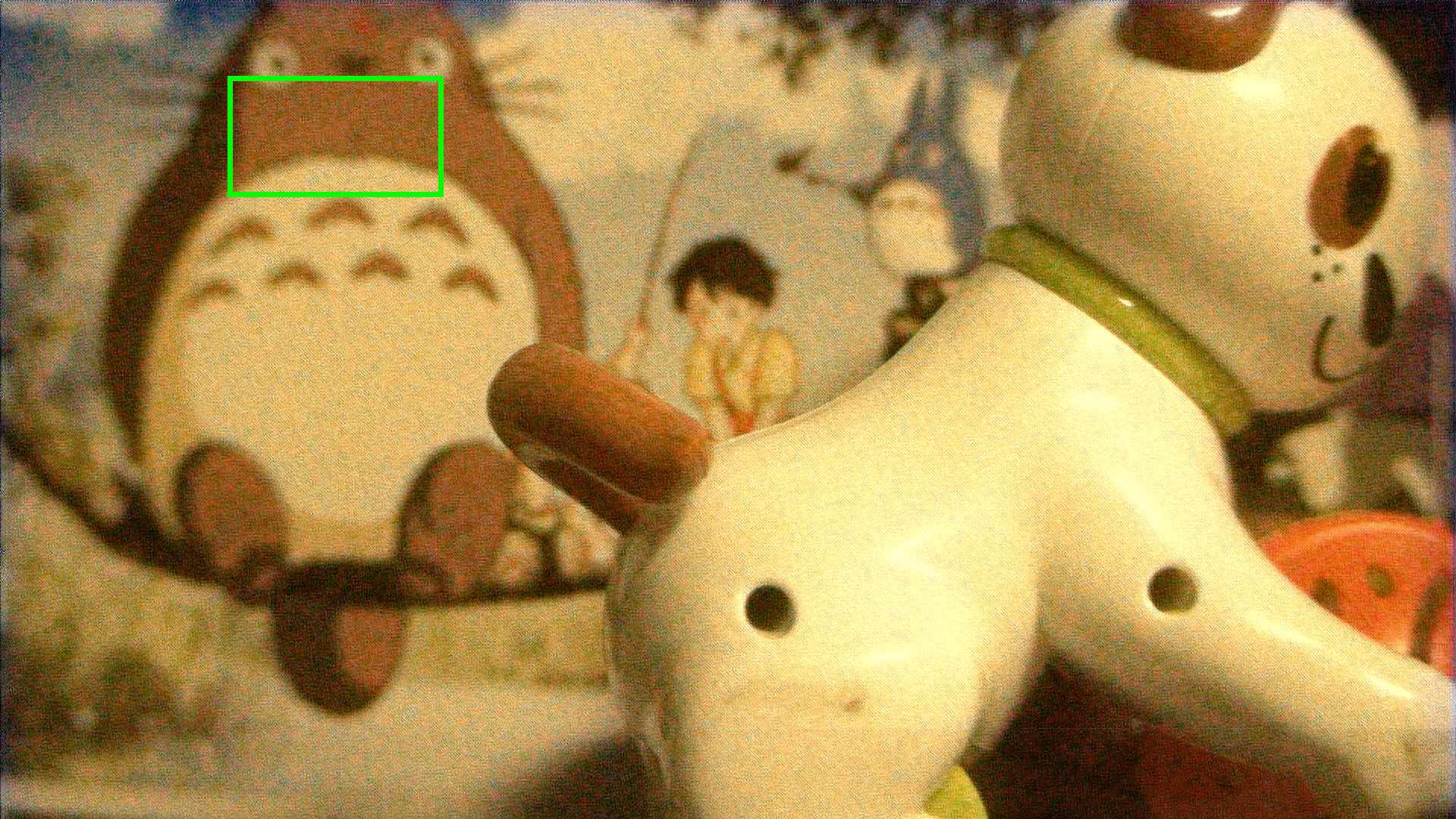}
		\caption{B2U \\ 19.61 / 0.283}
		\label{fig:crvd_2_3_iso25600:b2u_rect}
	\end{subfigure}
	\begin{subfigure}{0.18\textwidth}
	    \captionsetup{justification=centering}
		\includegraphics[width=\textwidth]{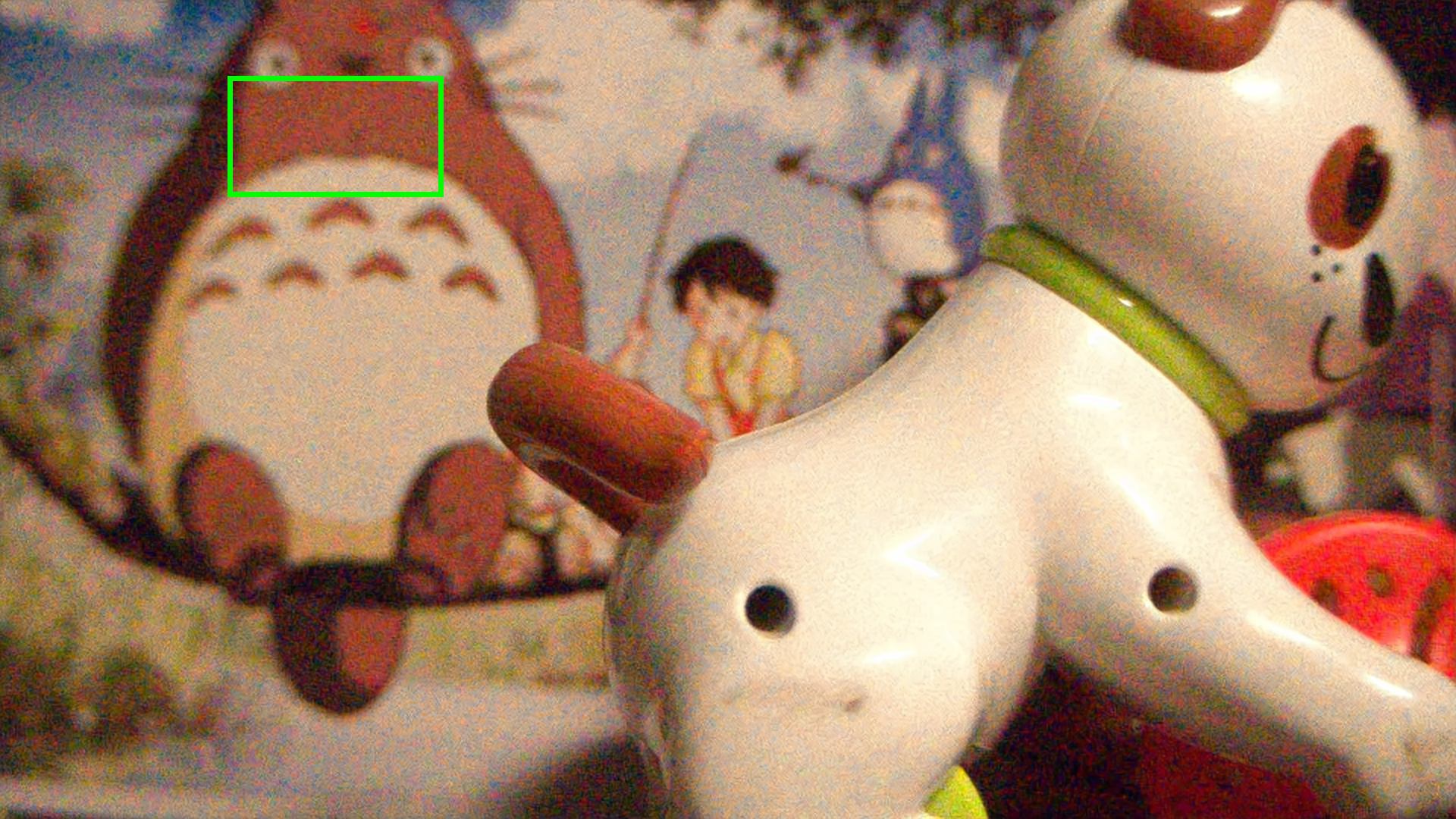}
		\caption{BM3D \\ 27.39 / 0.603}
		\label{fig:crvd_2_3_iso25600:bm3d_rect}
	\end{subfigure}
	\begin{subfigure}{0.18\textwidth}
	    \captionsetup{justification=centering}
		\includegraphics[width=\textwidth]{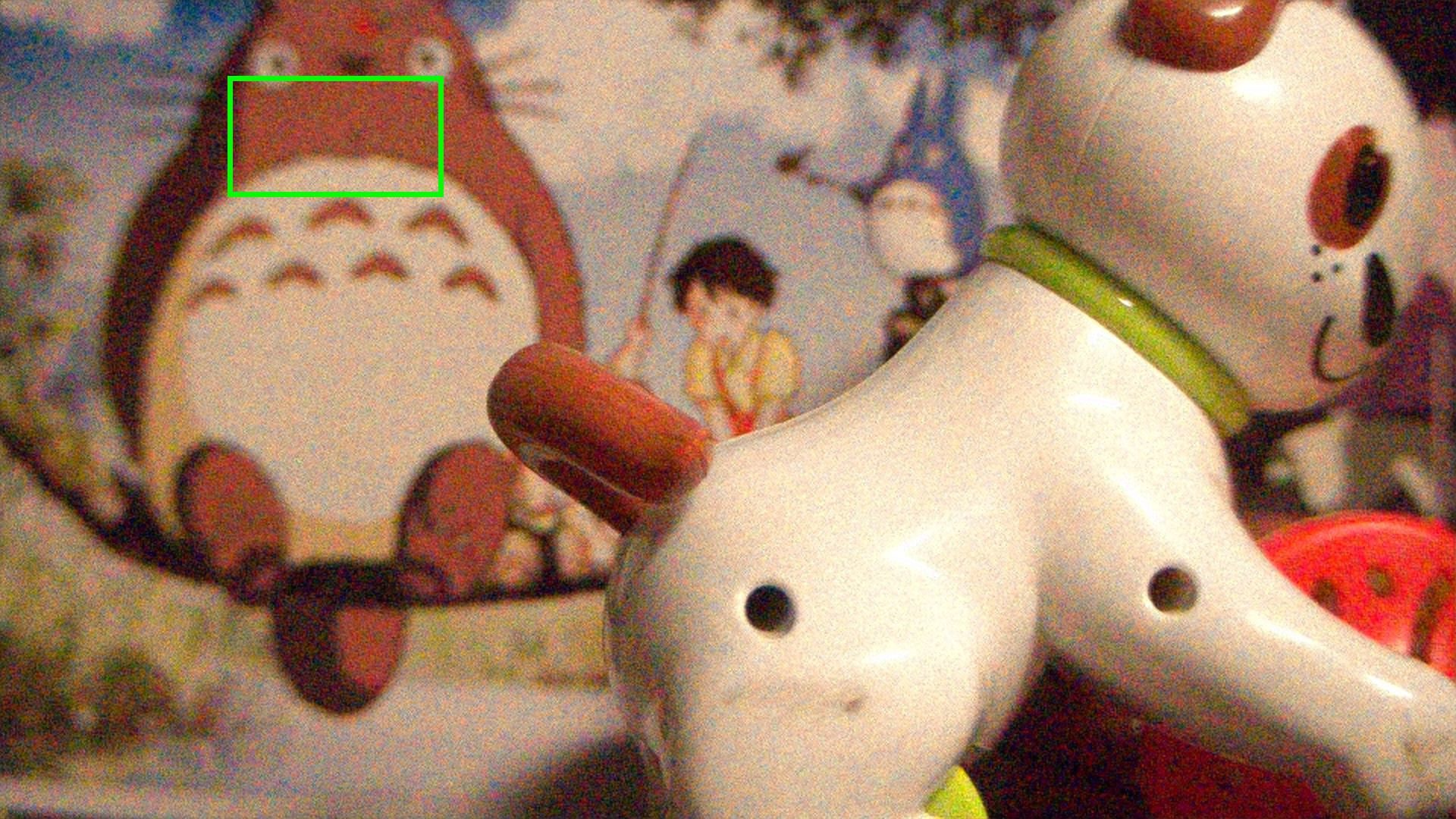}
		\caption{B-DnCNN \\ 25.98 / 0.469}
		\label{fig:crvd_2_3_iso25600:b_dncnn_rect}
	\end{subfigure}
	\begin{subfigure}{0.18\textwidth}
	    \captionsetup{justification=centering}
		\includegraphics[width=\textwidth]{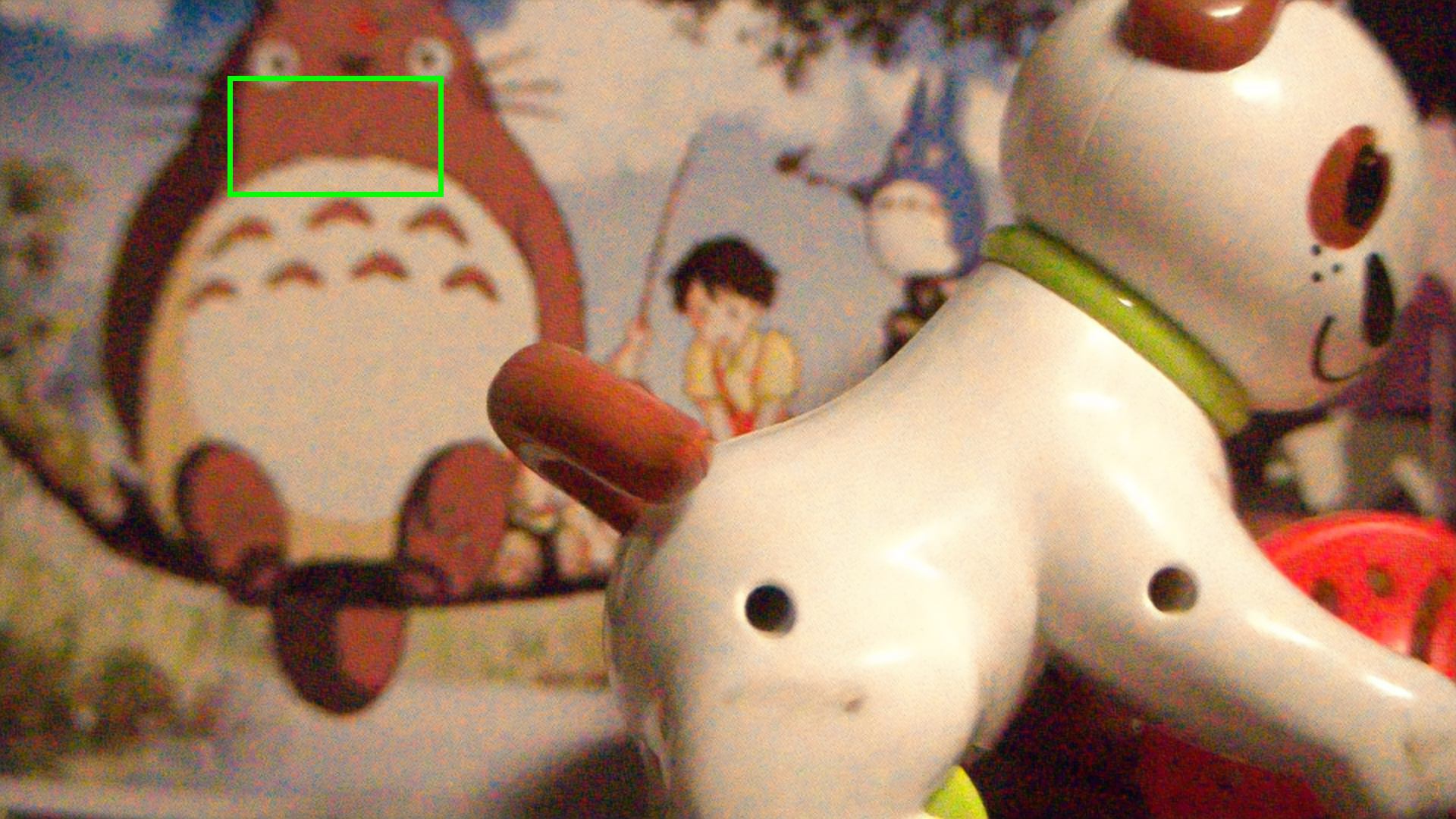}
		\caption{R2R \\ 28.64 / 0.702}
		\label{fig:crvd_2_3_iso25600:r2r_rect}
	\end{subfigure}
	\begin{subfigure}{0.18\textwidth}
	    \captionsetup{justification=centering}
		\includegraphics[width=\textwidth]{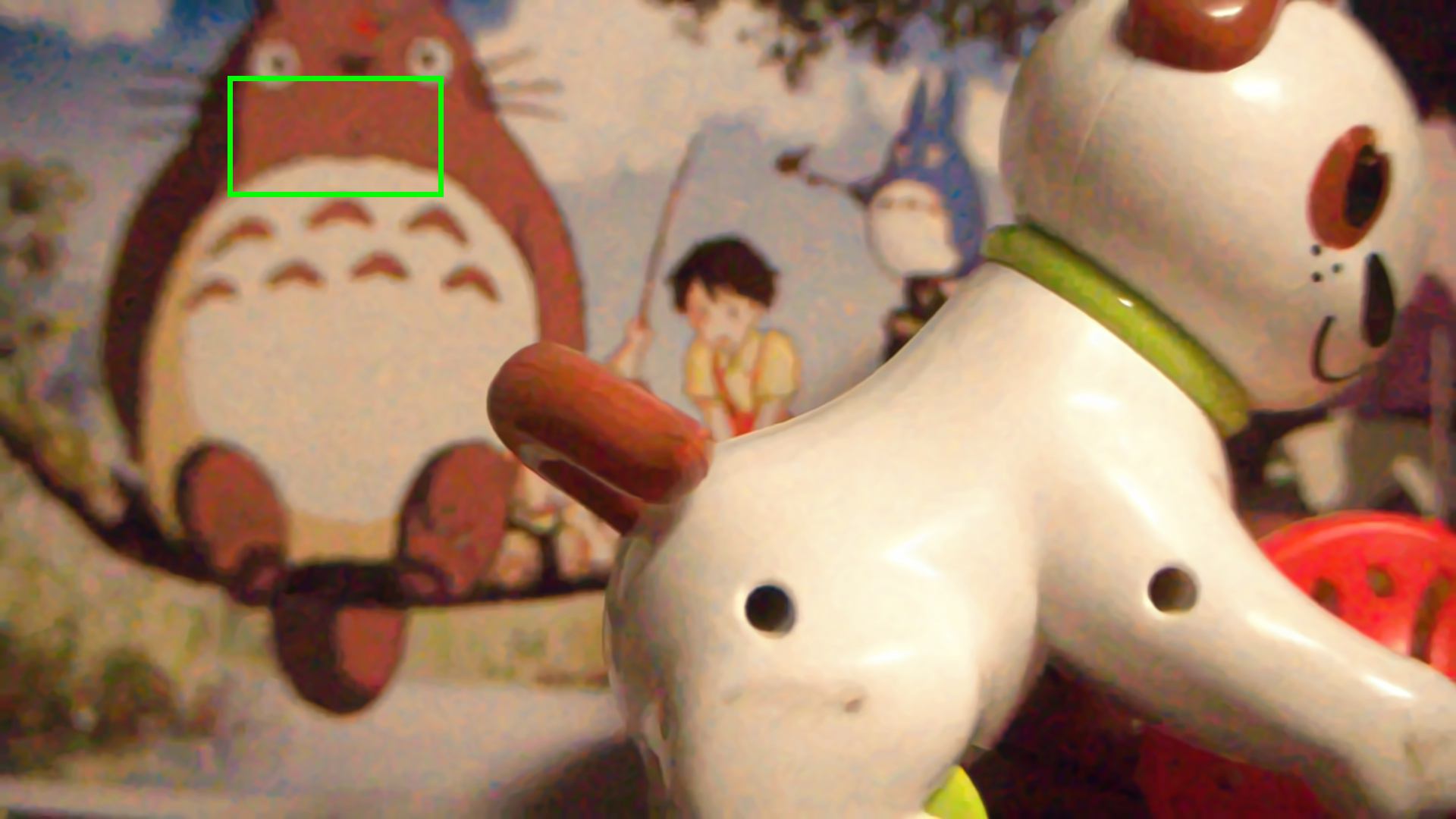}
		\caption{BM3D-O \\ 32.27 / 0.938}
		\label{fig:crvd_2_3_iso25600:bm3d_opt_rect}
	\end{subfigure}
	\begin{subfigure}{0.18\textwidth}
	    \captionsetup{justification=centering}
		\includegraphics[width=\textwidth]{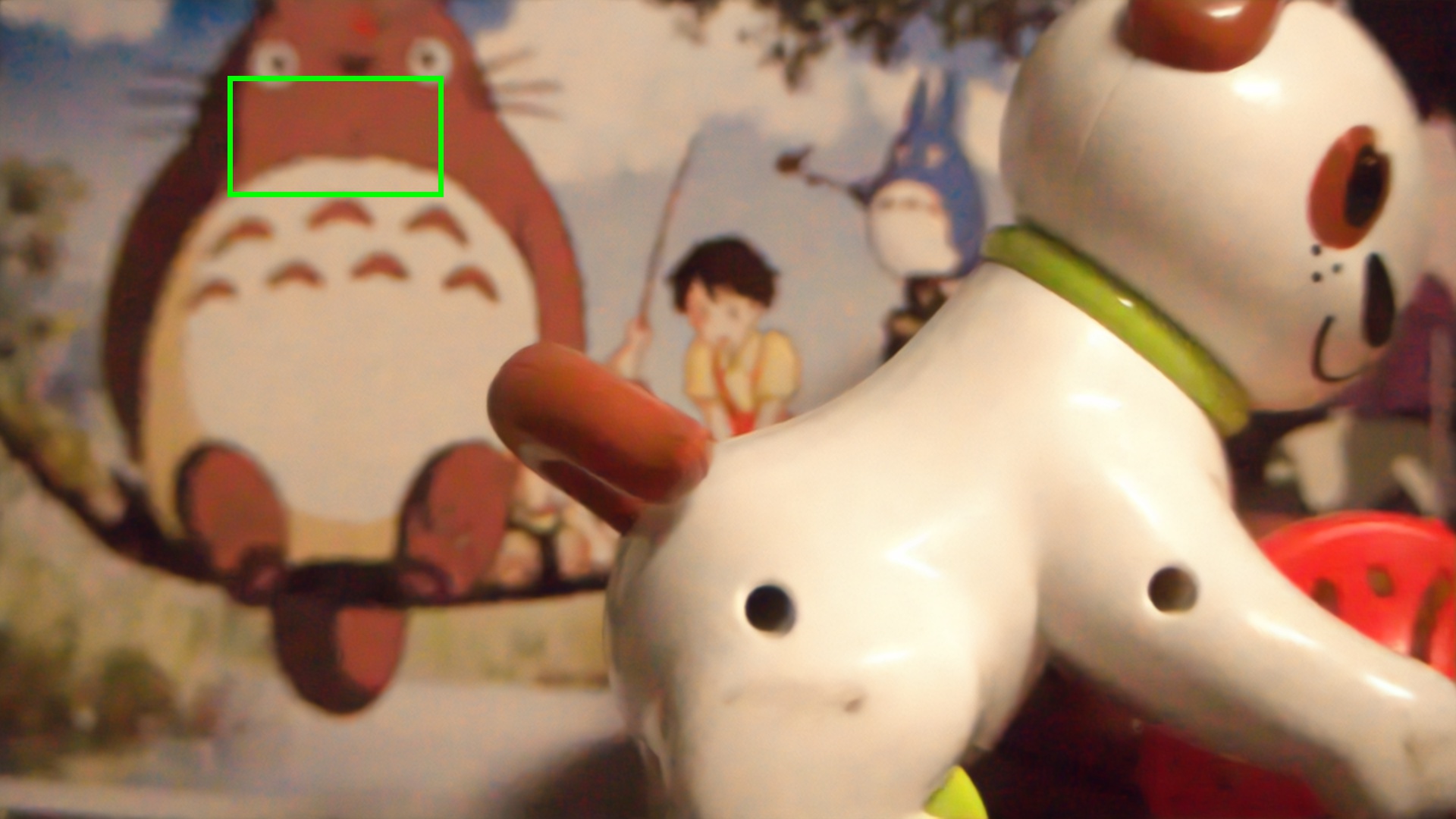}
		\caption{PC-UNet \\ 33.66 / 0.958}
		\label{fig:crvd_2_3_iso25600:pc_unet_rect}
	\end{subfigure}
	\begin{subfigure}{0.18\textwidth}
	    \captionsetup{justification=centering}
		\includegraphics[width=\textwidth]{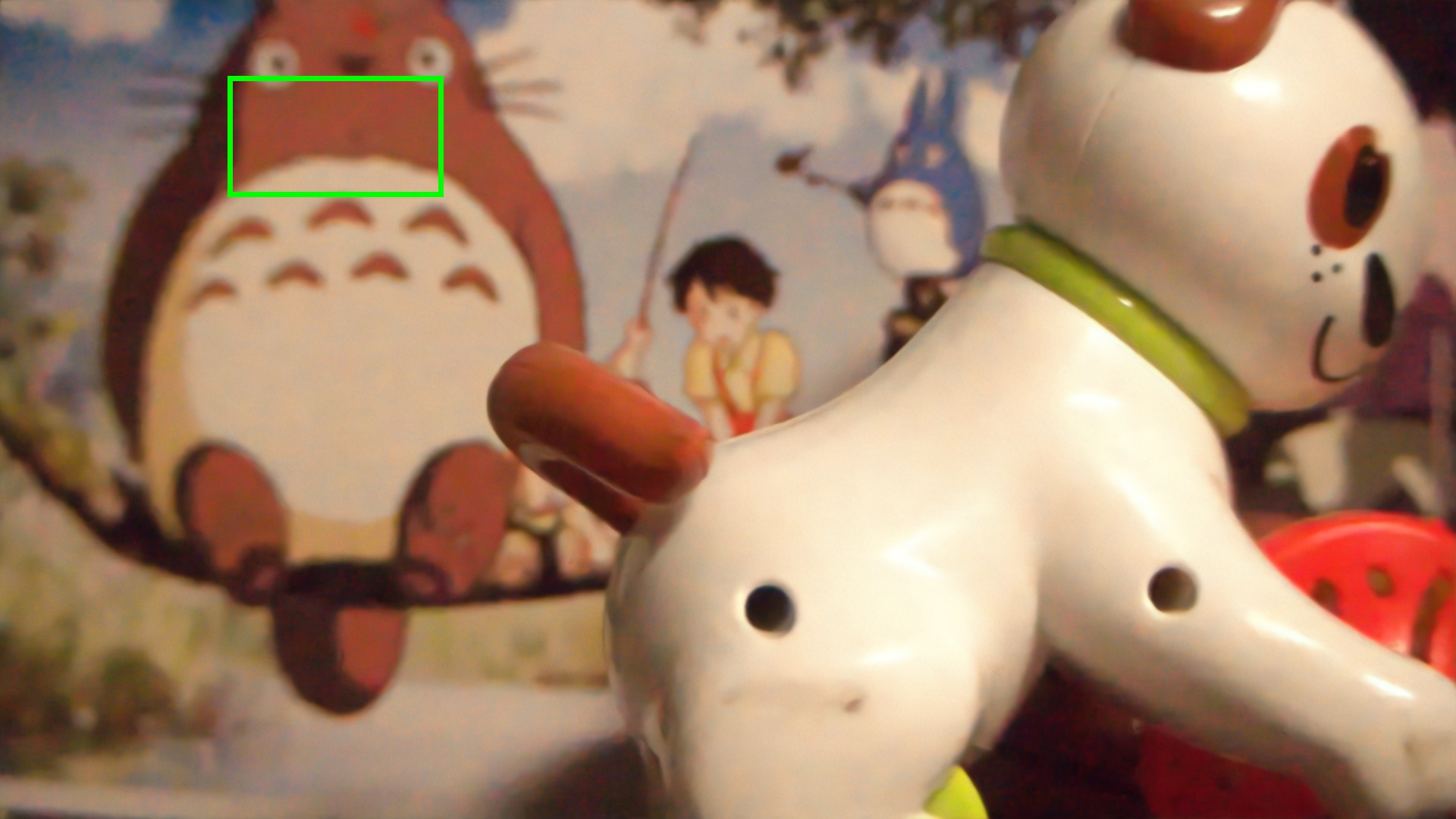}
		\caption{PC-DnCNN \\ 33.50 / 0.957}
		\label{fig:crvd_2_3_iso25600:pc_dncnn_rect}
	\end{subfigure}
	\begin{subfigure}{0.18\textwidth}
	    \captionsetup{justification=centering}
		\includegraphics[width=\textwidth]{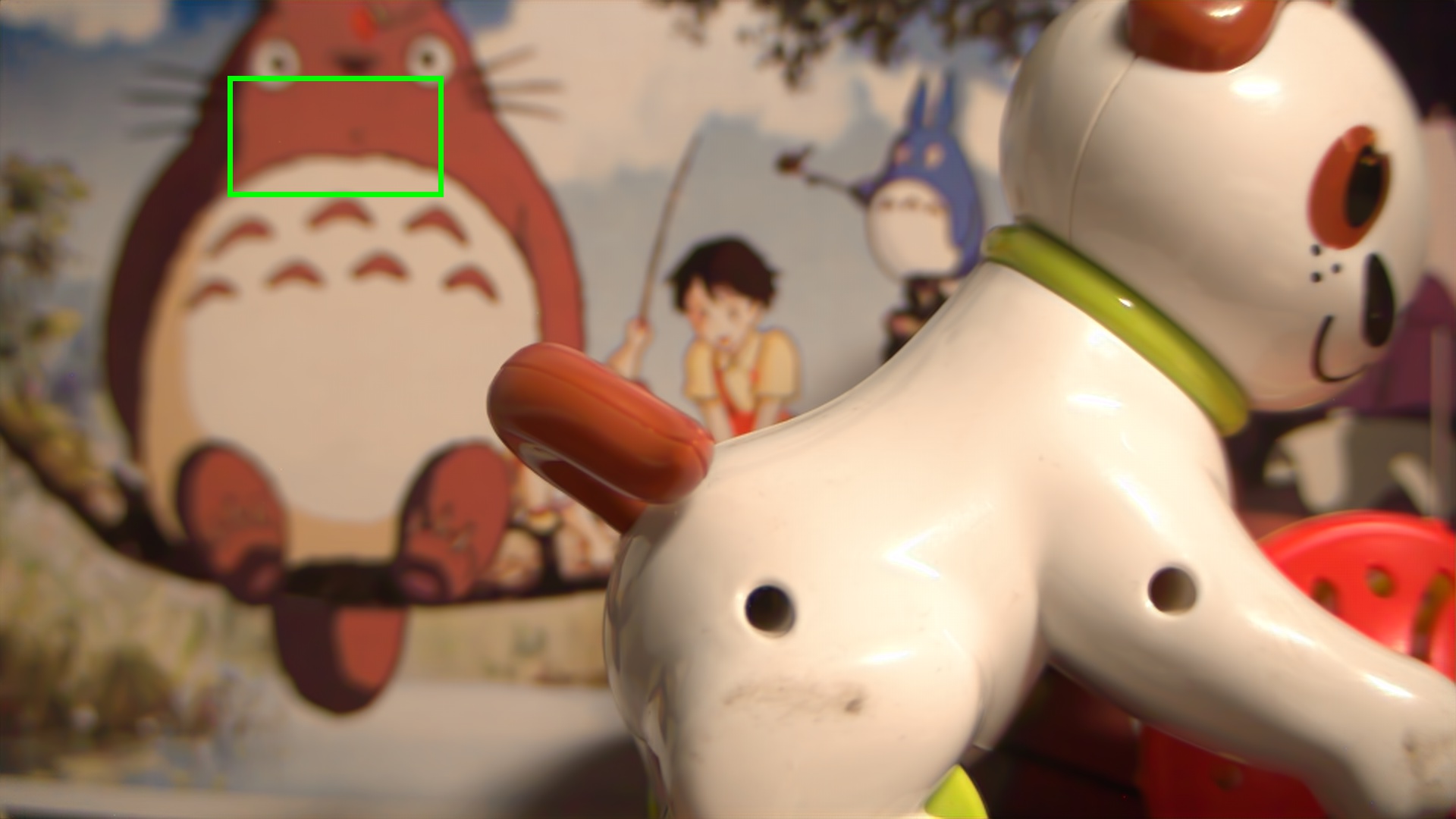}
		\caption{Clean \newline }
		\label{fig:crvd_2_3_iso25600:clean_rect}
	\end{subfigure}
	\begin{subfigure}{0.18\textwidth}
	    \captionsetup{justification=centering}
		\includegraphics[width=\textwidth]{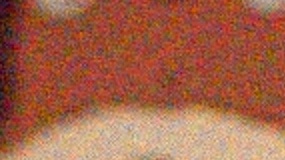}
		\caption{Noisy}
		\label{fig:crvd_2_3_iso25600:noisy_crop}
	\end{subfigure}
	\begin{subfigure}{0.18\textwidth}
	    \captionsetup{justification=centering}
		\includegraphics[width=\textwidth]{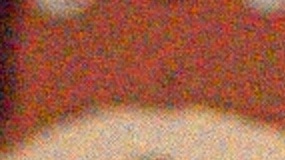}
		\caption{N2N}
		\label{fig:crvd_2_3_iso25600:n2n_crop}
	\end{subfigure}
	\begin{subfigure}{0.18\textwidth}
	    \captionsetup{justification=centering}
		\includegraphics[width=\textwidth]{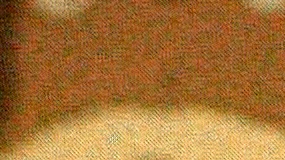}
		\caption{B2U}
		\label{fig:crvd_2_3_iso25600:b2u_crop}
	\end{subfigure}
	\begin{subfigure}{0.18\textwidth}
	    \captionsetup{justification=centering}
		\includegraphics[width=\textwidth]{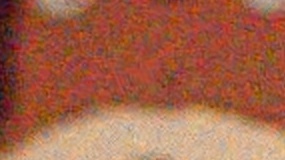}
		\caption{BM3D}
		\label{fig:crvd_2_3_iso25600:bm3d_crop}
	\end{subfigure}
	\begin{subfigure}{0.18\textwidth}
	    \captionsetup{justification=centering}
		\includegraphics[width=\textwidth]{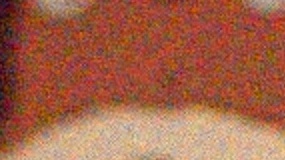}
		\caption{B-DnCNN}
		\label{fig:crvd_2_3_iso25600:b_dncnn_crop}
	\end{subfigure}
	\begin{subfigure}{0.18\textwidth}
	    \captionsetup{justification=centering}
		\includegraphics[width=\textwidth]{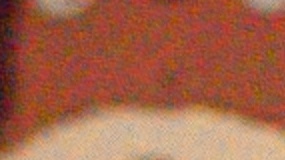}
		\caption{R2R}
		\label{fig:crvd_2_3_iso25600:r2r_crop}
	\end{subfigure}
	\begin{subfigure}{0.18\textwidth}
	    \captionsetup{justification=centering}
		\includegraphics[width=\textwidth]{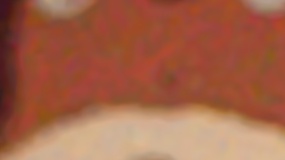}
		\caption{BM3D-O}
		\label{fig:crvd_2_3_iso25600:bm3d_opt_crop}
	\end{subfigure}
	\begin{subfigure}{0.18\textwidth}
	    \captionsetup{justification=centering}
		\includegraphics[width=\textwidth]{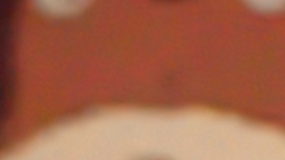}
		\caption{PC-UNet}
		\label{fig:crvd_2_3_iso25600:pc_unet_crop}
	\end{subfigure}
	\begin{subfigure}{0.18\textwidth}
	    \captionsetup{justification=centering}
		\includegraphics[width=\textwidth]{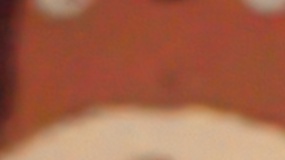}
		\caption{PC-DnCNN}
		\label{fig:crvd_2_3_iso25600:pc_dncnn_crop}
	\end{subfigure}
	\begin{subfigure}{0.18\textwidth}
	    \captionsetup{justification=centering}
		\includegraphics[width=\textwidth]{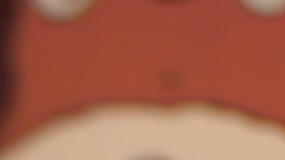}
		\caption{Clean}
		\label{fig:crvd_2_3_iso25600:clean_crop}
	\end{subfigure}
	\caption{Denoising examples with real-world noise. The first four rows show frame 6 of scene 9. The last four rows present frame 4 of scene 2. Both images are captured with ISO 25600. As can be seen, oracle BM3D leaves a substantial amount of low-frequency noise unfiltered, while other algorithms, except ours (PC-UNet and PC-DnCNN), do not succeed in removing the noise.}
	\label{fig:crvd_8_5_iso25600_2_3_iso25600}
\end{figure*}

\begin{figure*}
    \centering
	\begin{subfigure}{0.18\textwidth}
	    \captionsetup{justification=centering}
		\includegraphics[width=\textwidth]{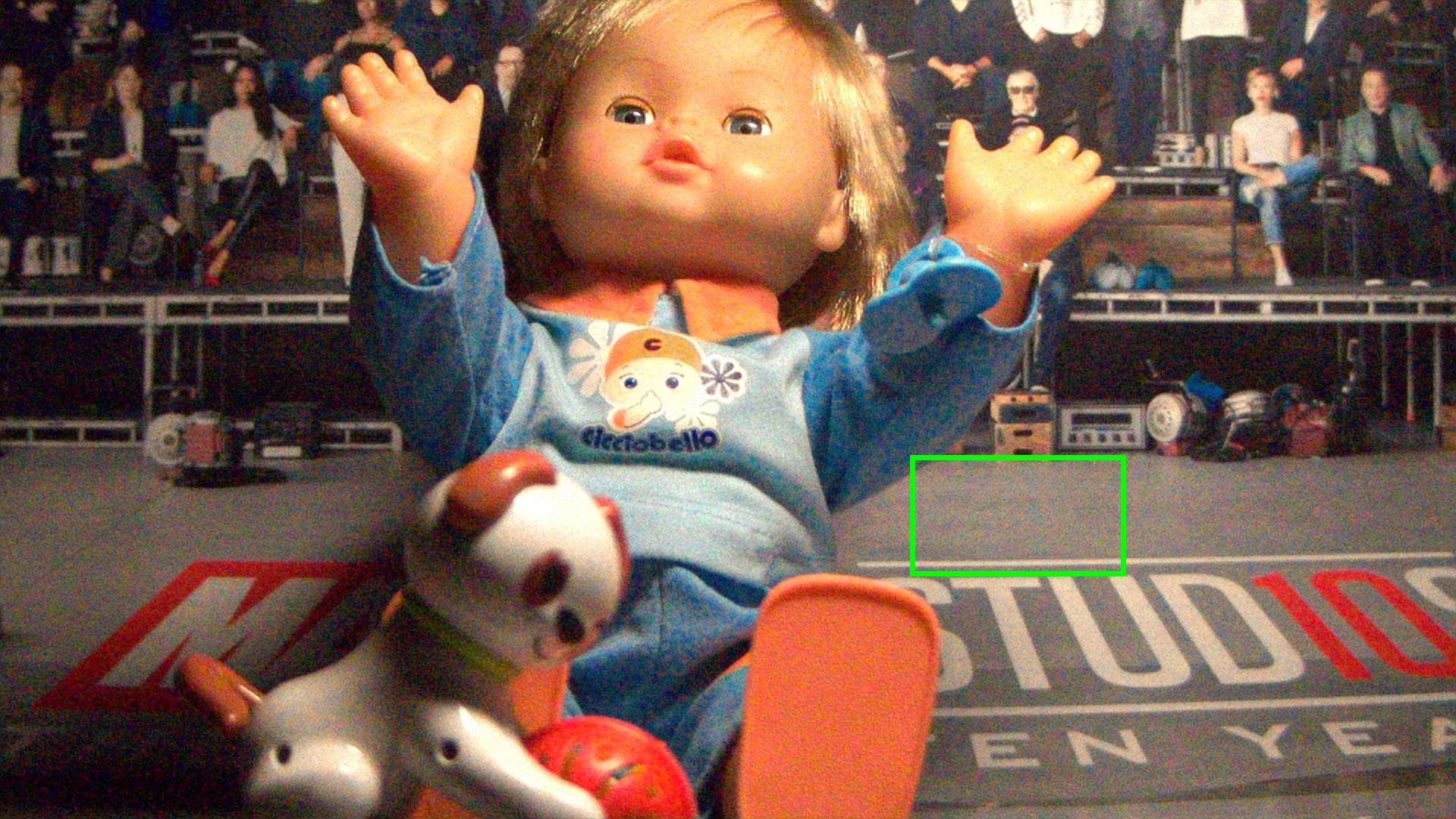}
		\caption{Noisy \\ 28.29 / 0.651}
		\label{fig:crvd_5_2_iso12800:noisy_rect}
	\end{subfigure}
	\begin{subfigure}{0.18\textwidth}
	    \captionsetup{justification=centering}
		\includegraphics[width=\textwidth]{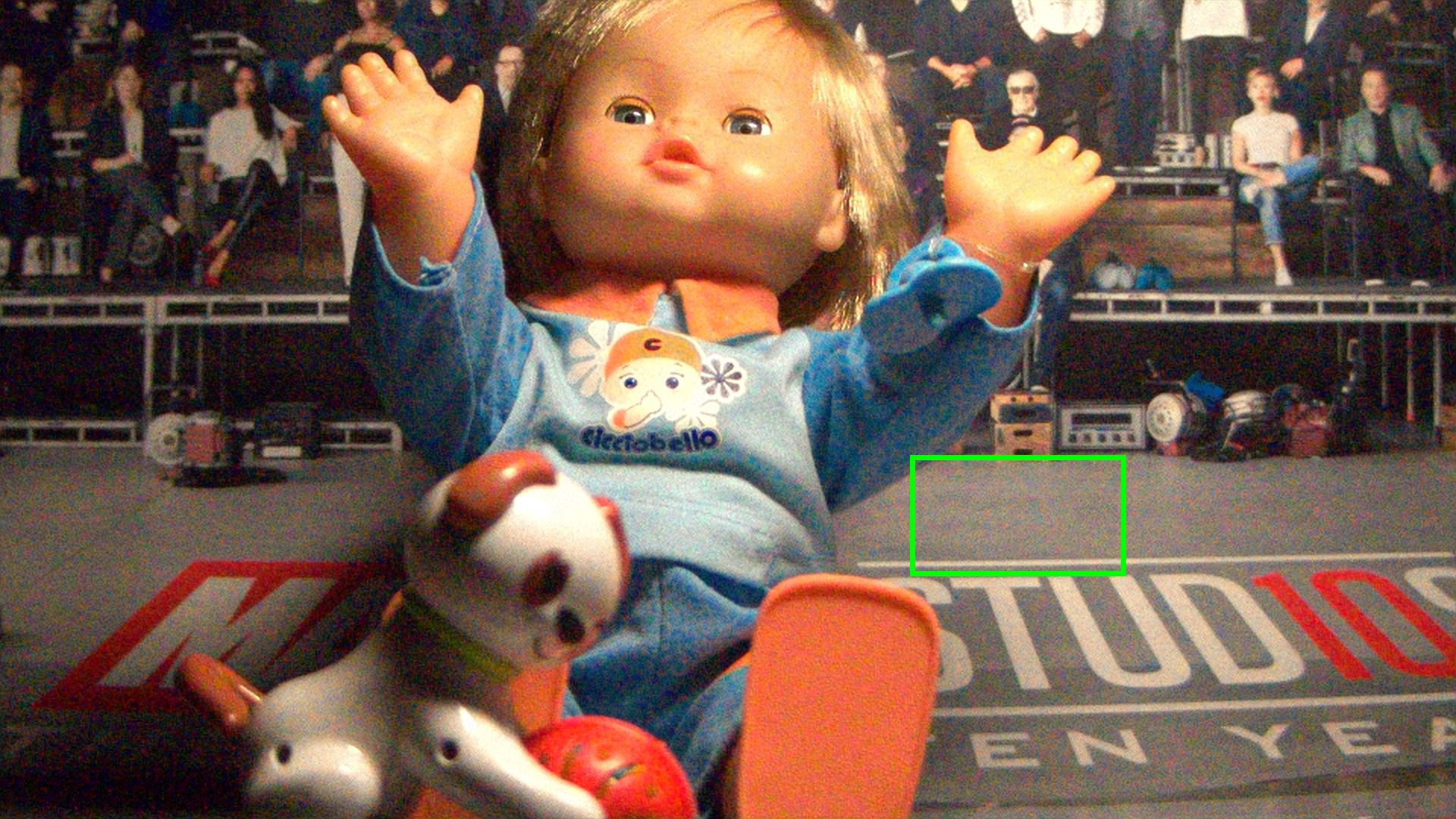}
		\caption{N2N \\ 28.37 / 0.657}
		\label{fig:crvd_5_2_iso12800:n2n_rect}
	\end{subfigure}
	\begin{subfigure}{0.18\textwidth}
	    \captionsetup{justification=centering}
		\includegraphics[width=\textwidth]{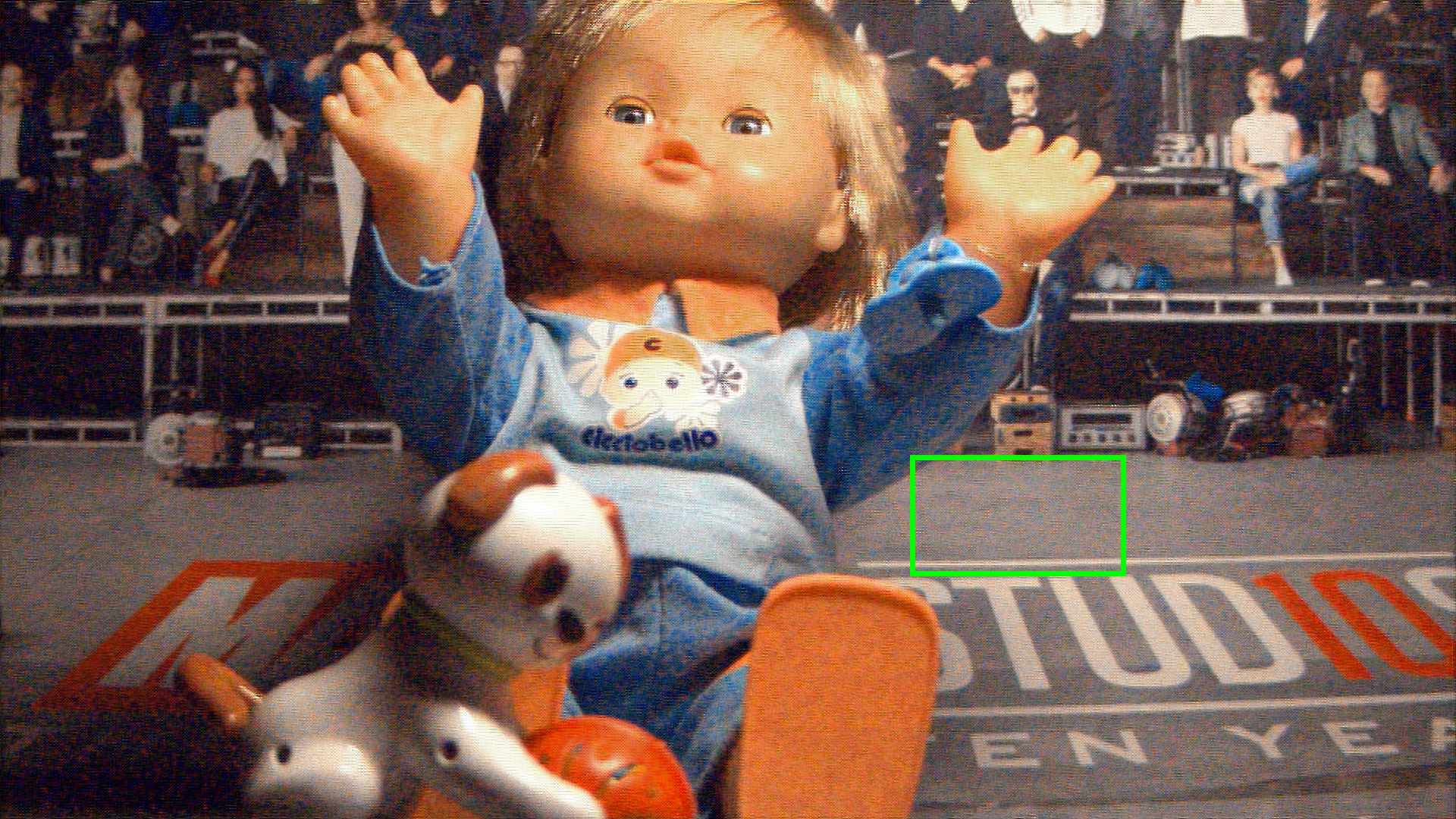}
		\caption{B2U \\ 23.96 / 0.446}
		\label{fig:crvd_5_2_iso12800:b2u_rect}
	\end{subfigure}
	\begin{subfigure}{0.18\textwidth}
	    \captionsetup{justification=centering}
		\includegraphics[width=\textwidth]{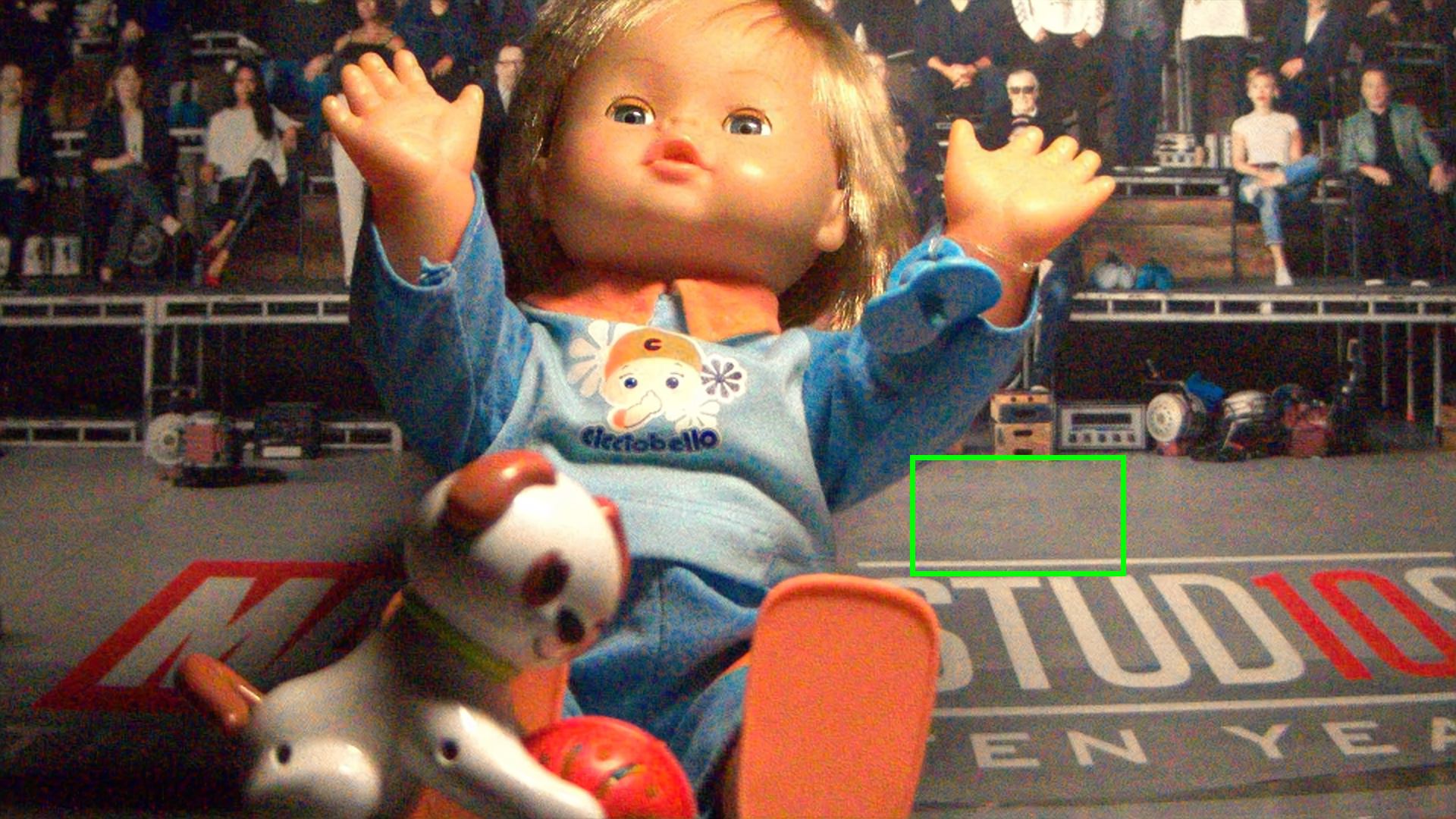}
		\caption{BM3D \\ 29.30 / 0.725}
		\label{fig:crvd_5_2_iso12800:bm3d_rect}
	\end{subfigure}
	\begin{subfigure}{0.18\textwidth}
	    \captionsetup{justification=centering}
		\includegraphics[width=\textwidth]{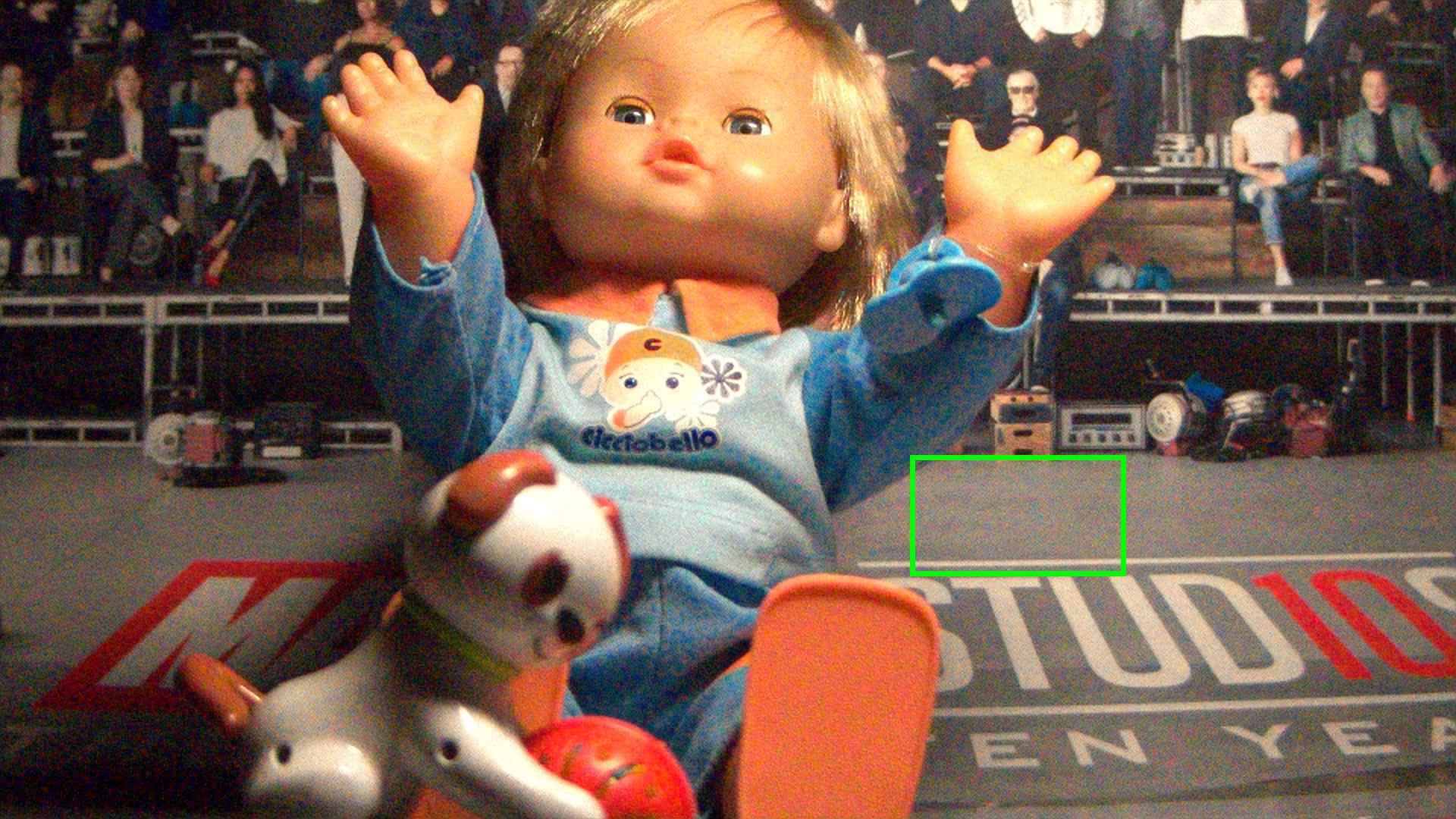}
		\caption{B-DnCNN \\ 28.36 / 0.655}
		\label{fig:crvd_5_2_iso12800:b_dncnn_rect}
	\end{subfigure}
	\begin{subfigure}{0.18\textwidth}
	    \captionsetup{justification=centering}
		\includegraphics[width=\textwidth]{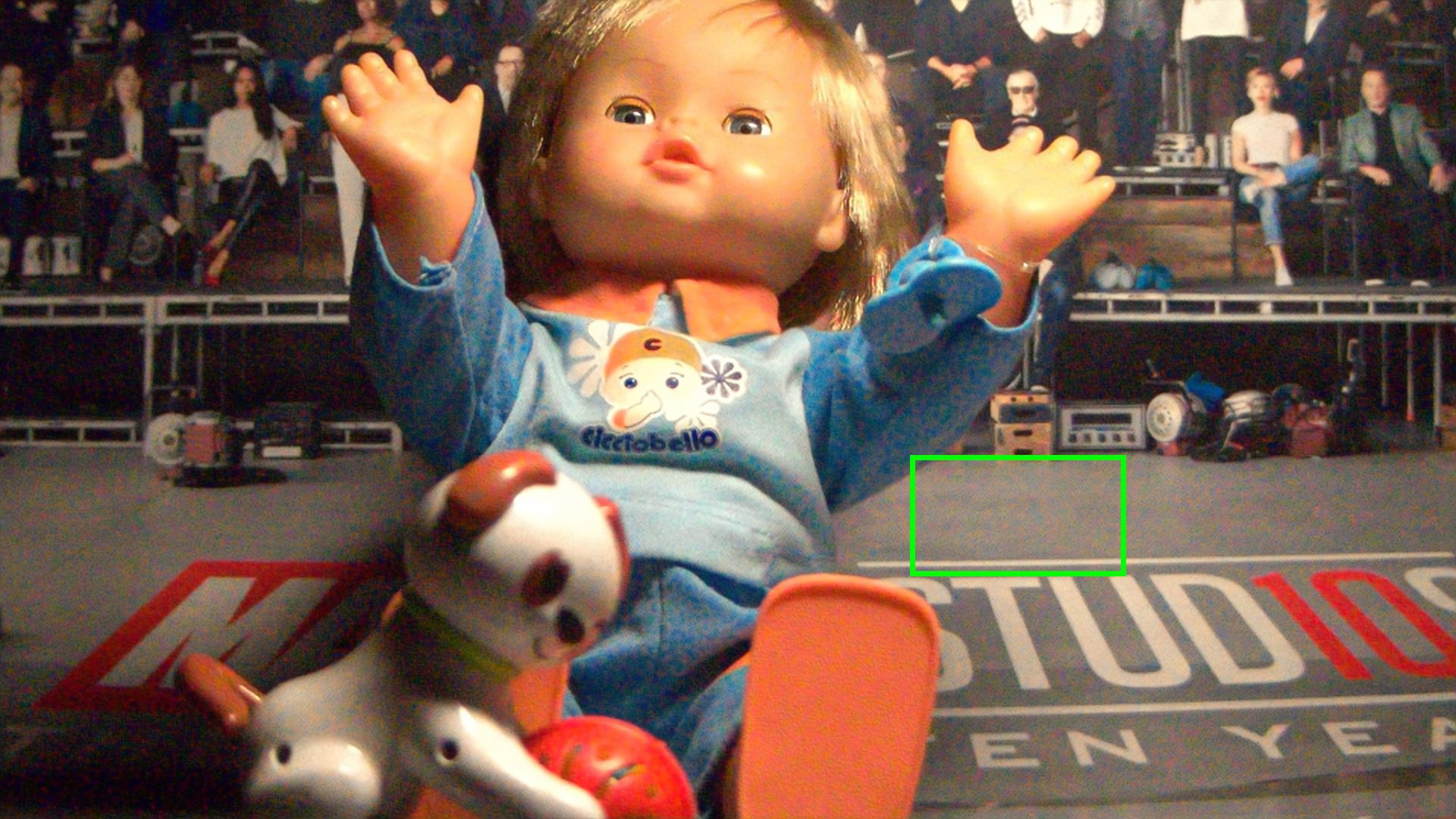}
		\caption{R2R \\ 30.43 / 0.788}
		\label{fig:crvd_5_2_iso12800:r2r_rect}
	\end{subfigure}
	\begin{subfigure}{0.18\textwidth}
	    \captionsetup{justification=centering}
		\includegraphics[width=\textwidth]{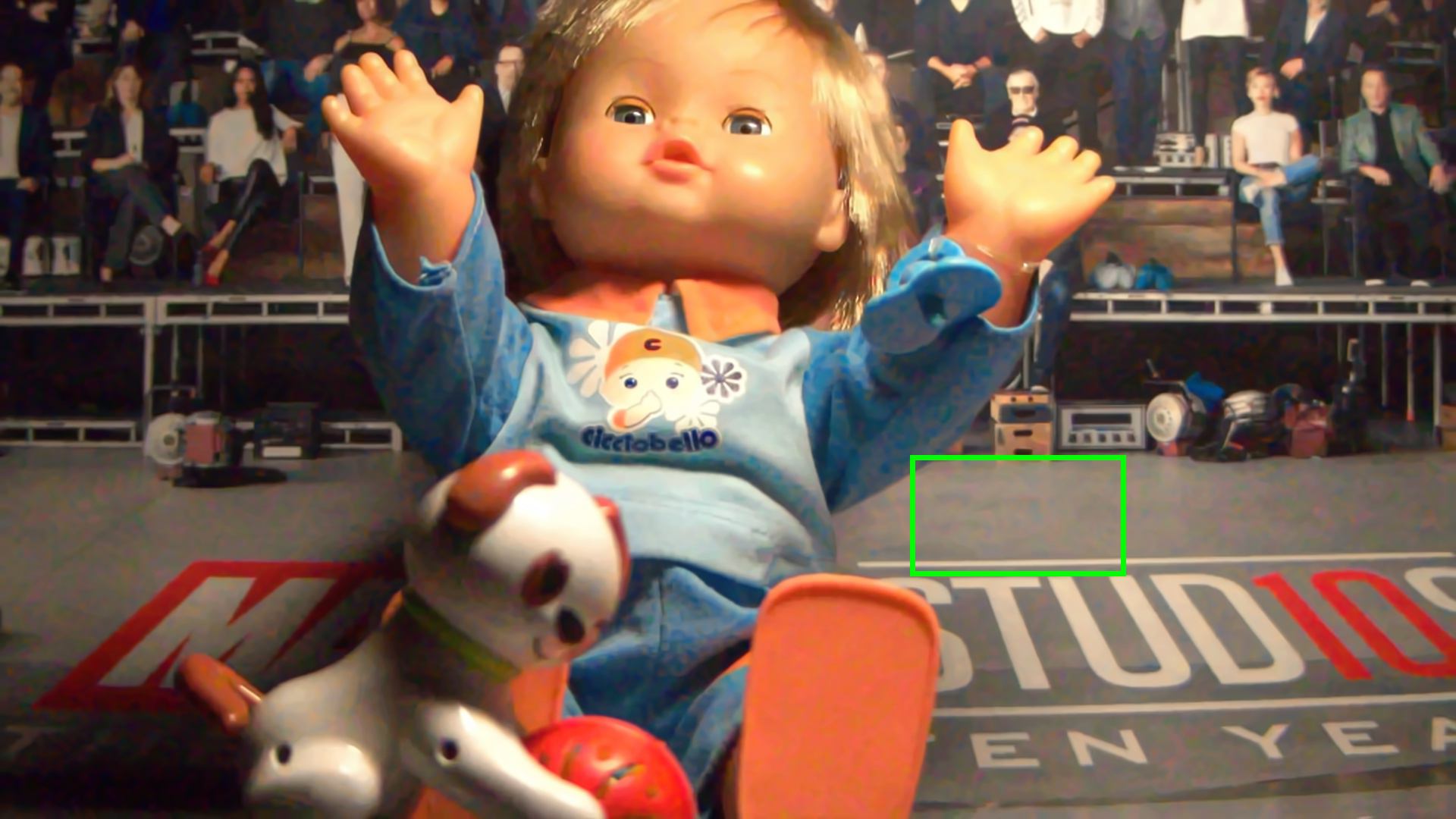}
		\caption{BM3D-O \\ 33.40 / 0.933}
		\label{fig:crvd_5_2_iso12800:bm3d_opt_rect}
	\end{subfigure}
	\begin{subfigure}{0.18\textwidth}
	    \captionsetup{justification=centering}
		\includegraphics[width=\textwidth]{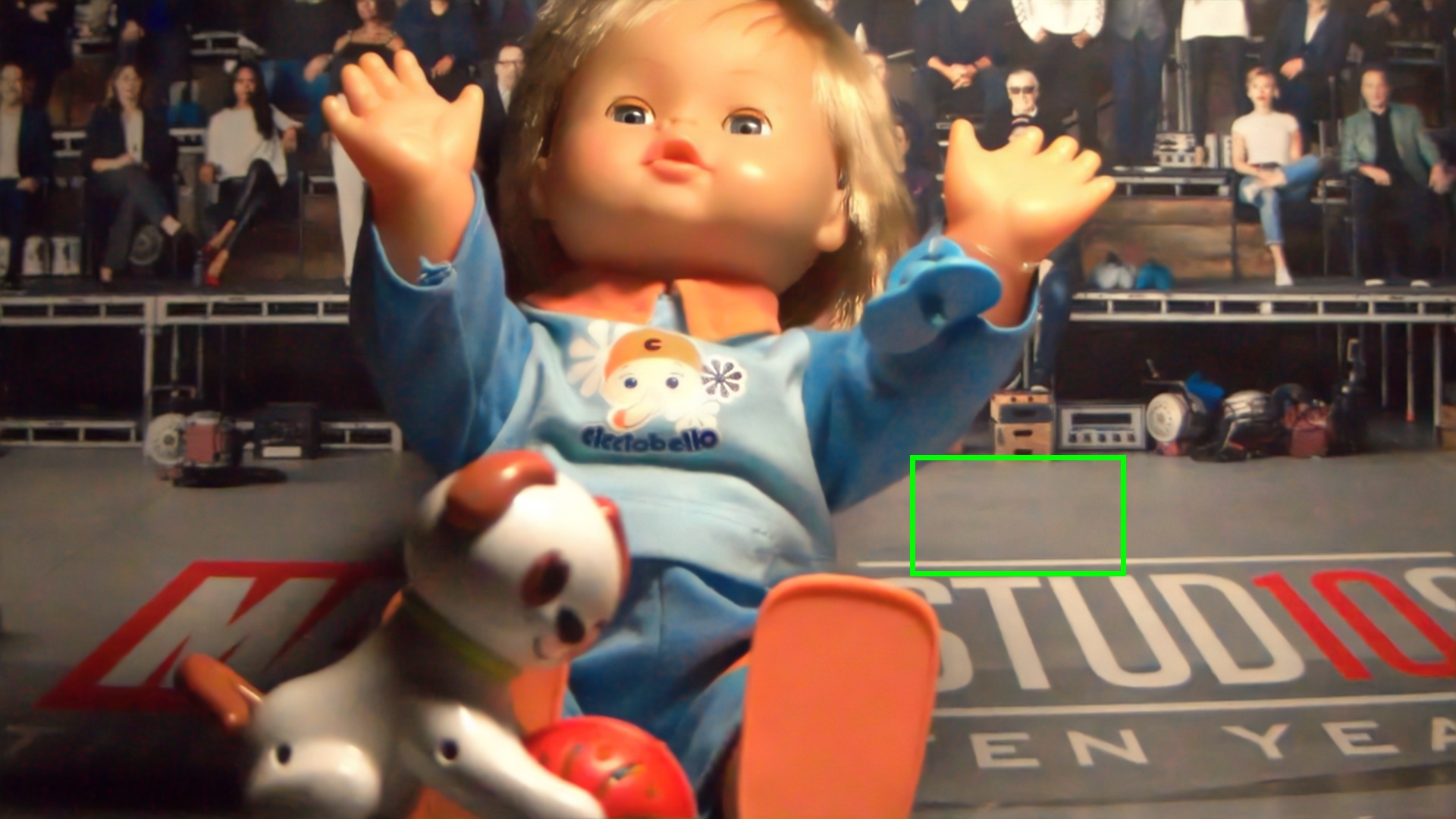}
		\caption{PC-UNet \\ 33.99 / 0.946}
		\label{fig:crvd_5_2_iso12800:pc_unet_rect}
	\end{subfigure}
	\begin{subfigure}{0.18\textwidth}
	    \captionsetup{justification=centering}
		\includegraphics[width=\textwidth]{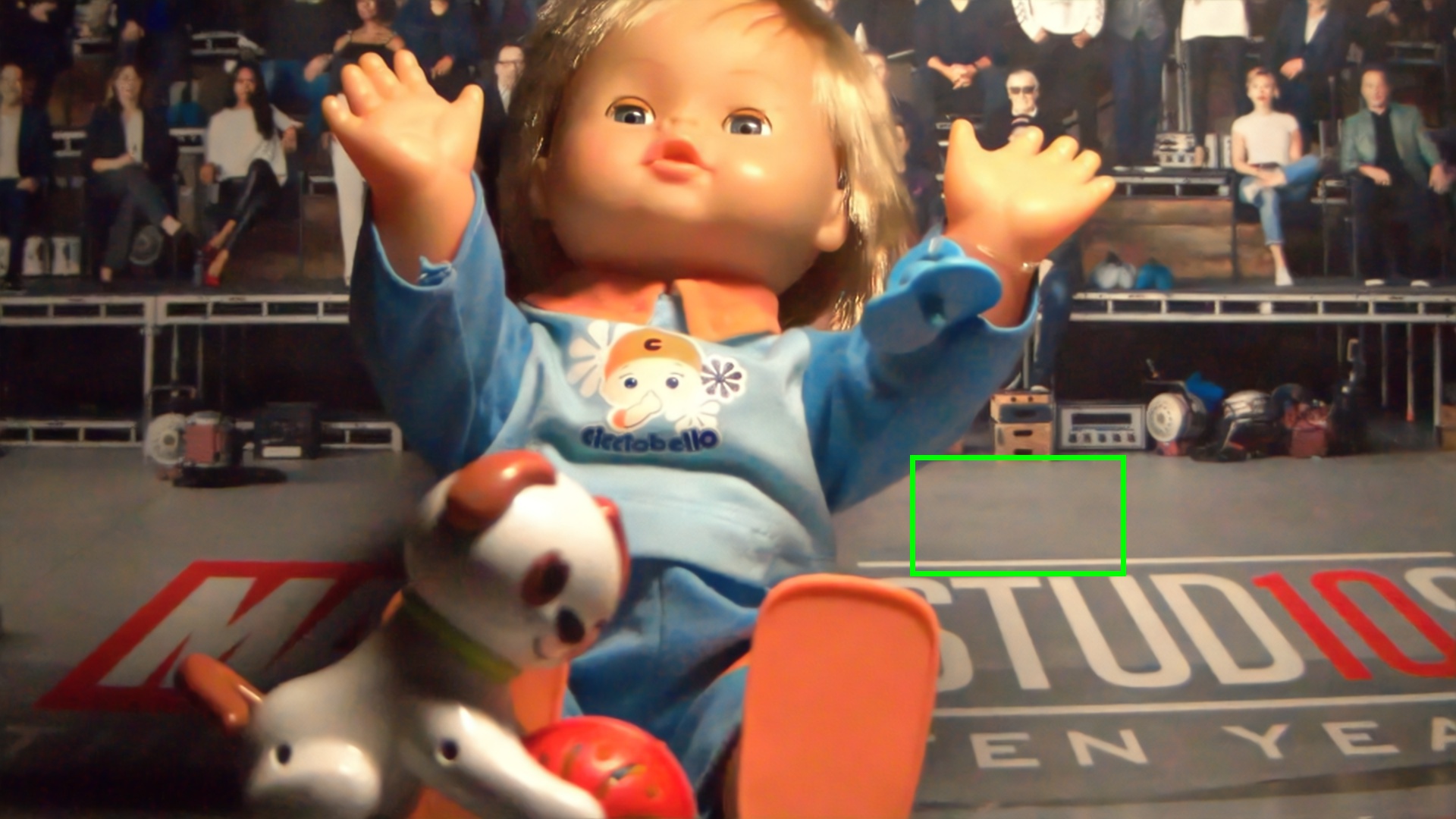}
		\caption{PC-DnCNN \\ 34.37 / 0.948}
		\label{fig:crvd_5_2_iso12800:pc_dncnn_rect}
	\end{subfigure}
	\begin{subfigure}{0.18\textwidth}
	    \captionsetup{justification=centering}
		\includegraphics[width=\textwidth]{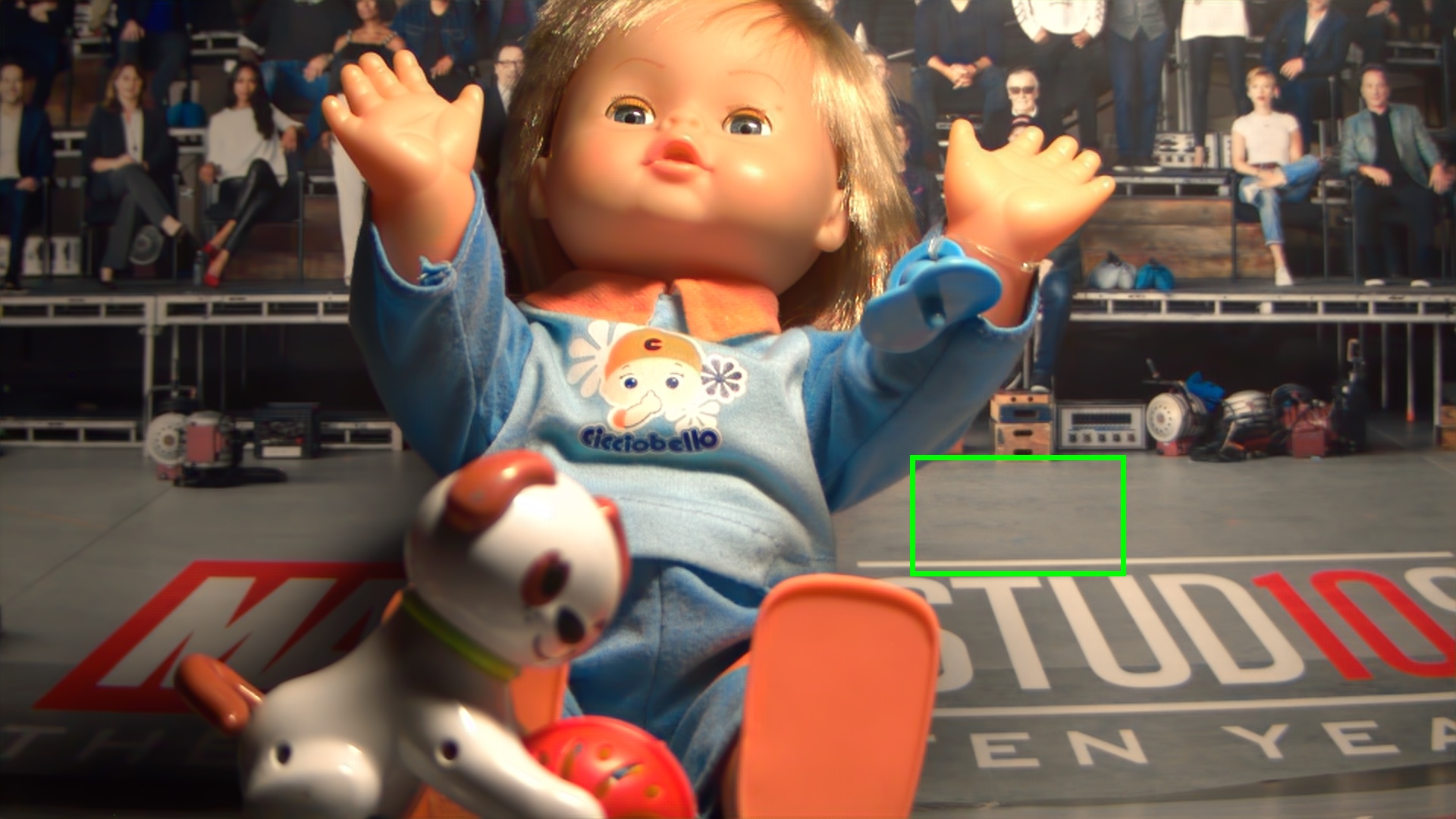}
		\caption{Clean \newline }
		\label{fig:crvd_5_2_iso12800:clean_rect}
	\end{subfigure}
	\begin{subfigure}{0.18\textwidth}
	    \captionsetup{justification=centering}
		\includegraphics[width=\textwidth]{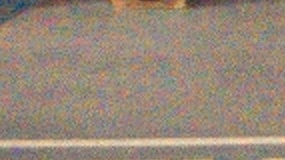}
		\caption{Noisy}
		\label{fig:crvd_5_2_iso12800:noisy_crop}
	\end{subfigure}
	\begin{subfigure}{0.18\textwidth}
	    \captionsetup{justification=centering}
		\includegraphics[width=\textwidth]{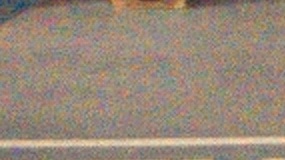}
		\caption{N2N}
		\label{fig:crvd_5_2_iso12800:n2n_crop}
	\end{subfigure}
	\begin{subfigure}{0.18\textwidth}
	    \captionsetup{justification=centering}
		\includegraphics[width=\textwidth]{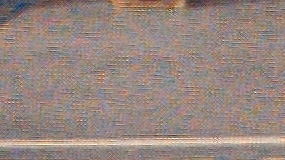}
		\caption{B2U}
		\label{fig:crvd_5_2_iso12800:b2u_crop}
	\end{subfigure}
	\begin{subfigure}{0.18\textwidth}
	    \captionsetup{justification=centering}
		\includegraphics[width=\textwidth]{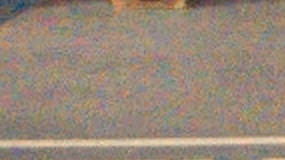}
		\caption{BM3D}
		\label{fig:crvd_5_2_iso12800:bm3d_crop}
	\end{subfigure}
	\begin{subfigure}{0.18\textwidth}
	    \captionsetup{justification=centering}
		\includegraphics[width=\textwidth]{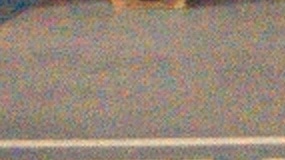}
		\caption{B-DnCNN}
		\label{fig:crvd_5_2_iso12800:b_dncnn_crop}
	\end{subfigure}
	\begin{subfigure}{0.18\textwidth}
	    \captionsetup{justification=centering}
		\includegraphics[width=\textwidth]{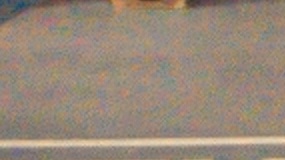}
		\caption{R2R}
		\label{fig:crvd_5_2_iso12800:r2r_crop}
	\end{subfigure}
	\begin{subfigure}{0.18\textwidth}
	    \captionsetup{justification=centering}
		\includegraphics[width=\textwidth]{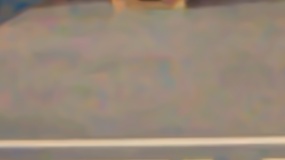}
		\caption{BM3D-O}
		\label{fig:crvd_5_2_iso12800:bm3d_opt_crop}
	\end{subfigure}
	\begin{subfigure}{0.18\textwidth}
	    \captionsetup{justification=centering}
		\includegraphics[width=\textwidth]{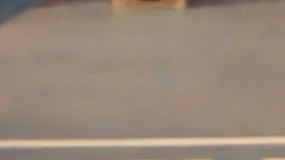}
		\caption{PC-UNet}
		\label{fig:crvd_5_2_iso12800:pc_unet_crop}
	\end{subfigure}
	\begin{subfigure}{0.18\textwidth}
	    \captionsetup{justification=centering}
		\includegraphics[width=\textwidth]{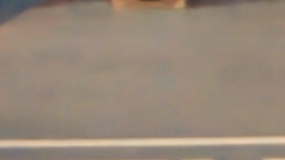}
		\caption{PC-DnCNN}
		\label{fig:crvd_5_2_iso12800:pc_dncnn_crop}
	\end{subfigure}
	\begin{subfigure}{0.18\textwidth}
	    \captionsetup{justification=centering}
		\includegraphics[width=\textwidth]{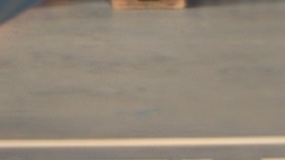}
		\caption{Clean}
		\label{fig:crvd_5_2_iso12800:clean_crop}
	\end{subfigure}
	\begin{subfigure}{0.18\textwidth}
	    \captionsetup{justification=centering}
		\includegraphics[width=\textwidth]{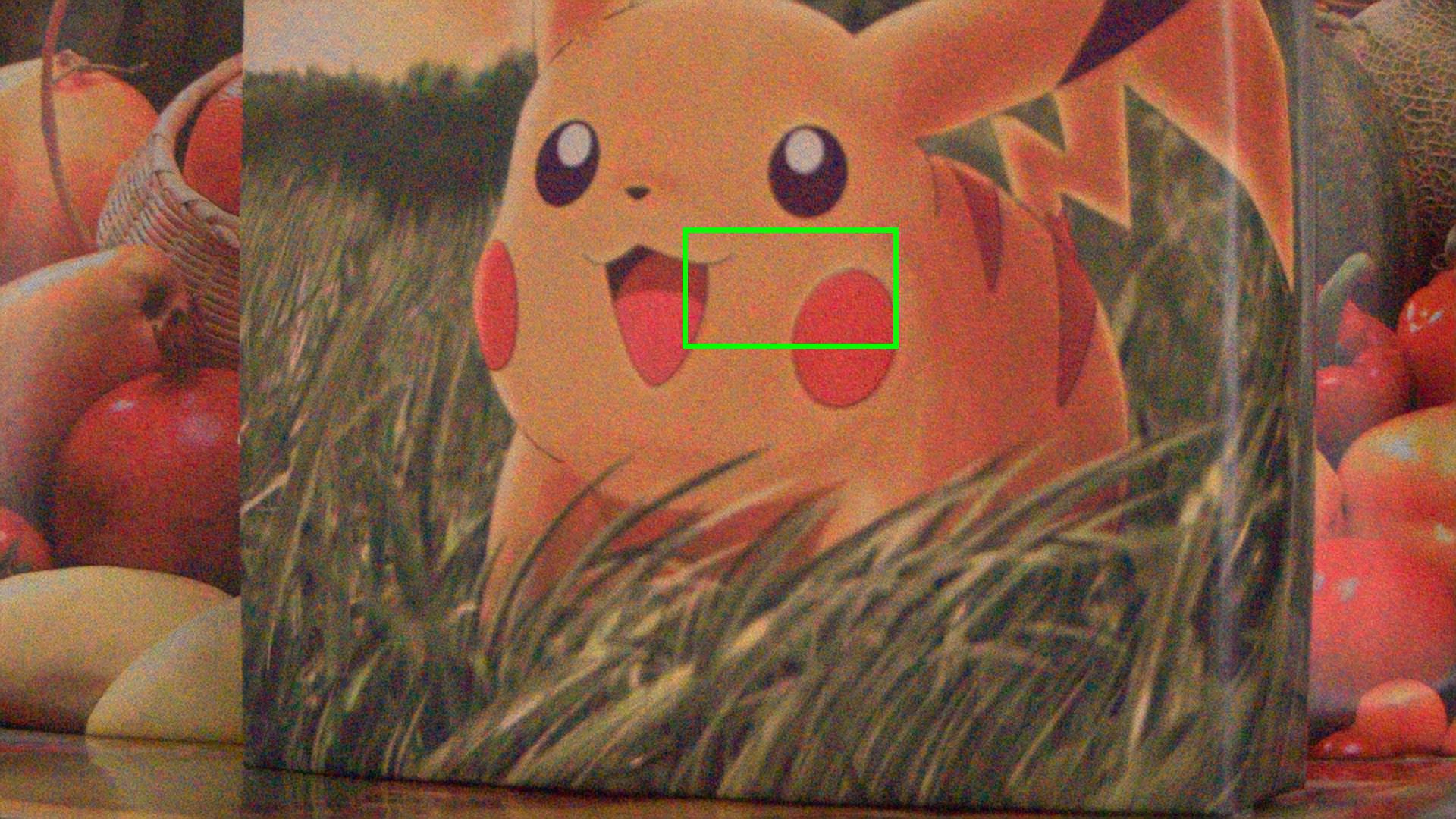}
		\caption{Noisy \\ 27.99 / 0.589}
		\label{fig:crvd_10_5_iso12800:noisy_rect}
	\end{subfigure}
	\begin{subfigure}{0.18\textwidth}
	    \captionsetup{justification=centering}
		\includegraphics[width=\textwidth]{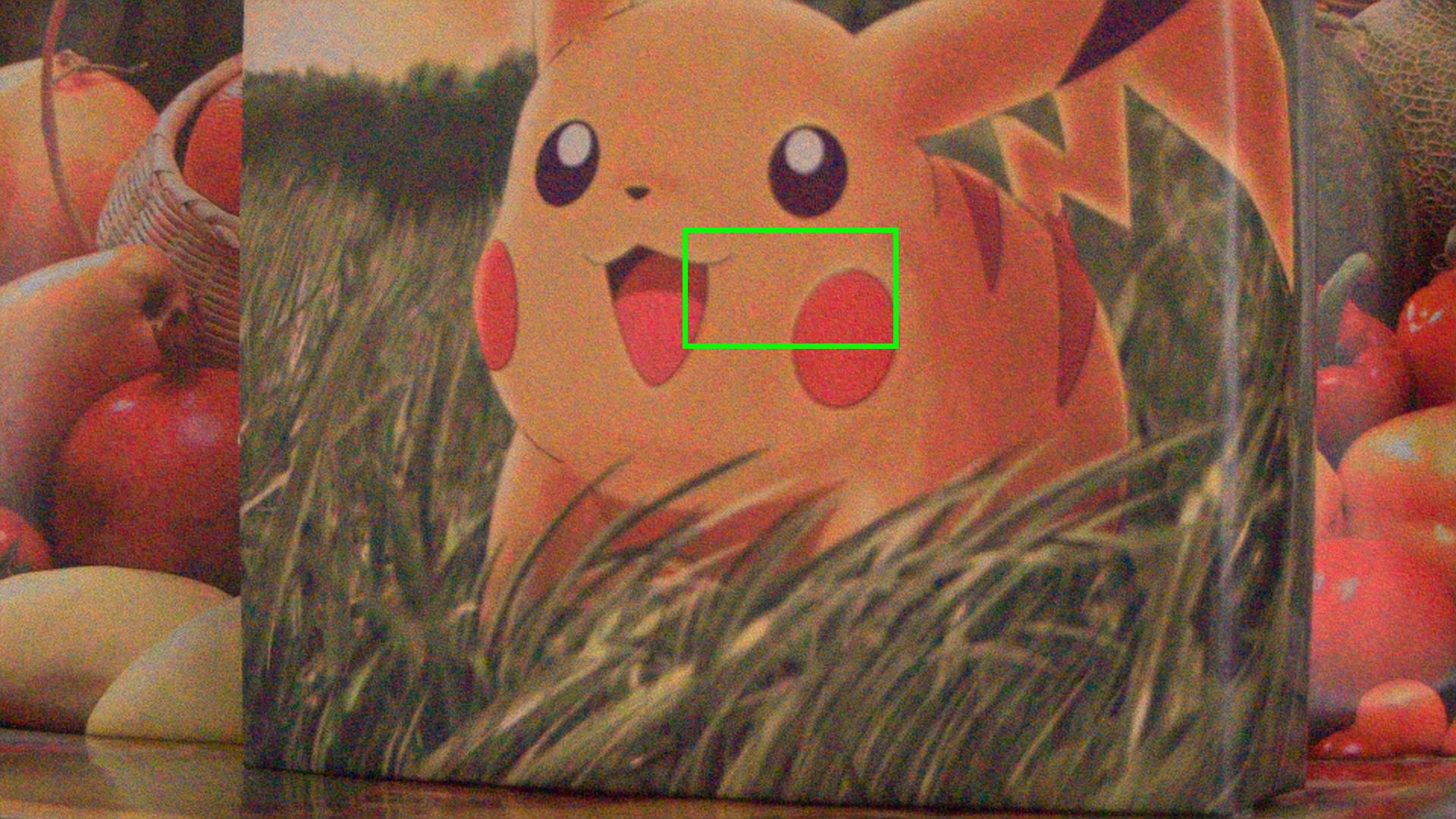}
		\caption{N2N \\ 28.05 / 0.594}
		\label{fig:crvd_10_5_iso12800:n2n_rect}
	\end{subfigure}
	\begin{subfigure}{0.18\textwidth}
	    \captionsetup{justification=centering}
		\includegraphics[width=\textwidth]{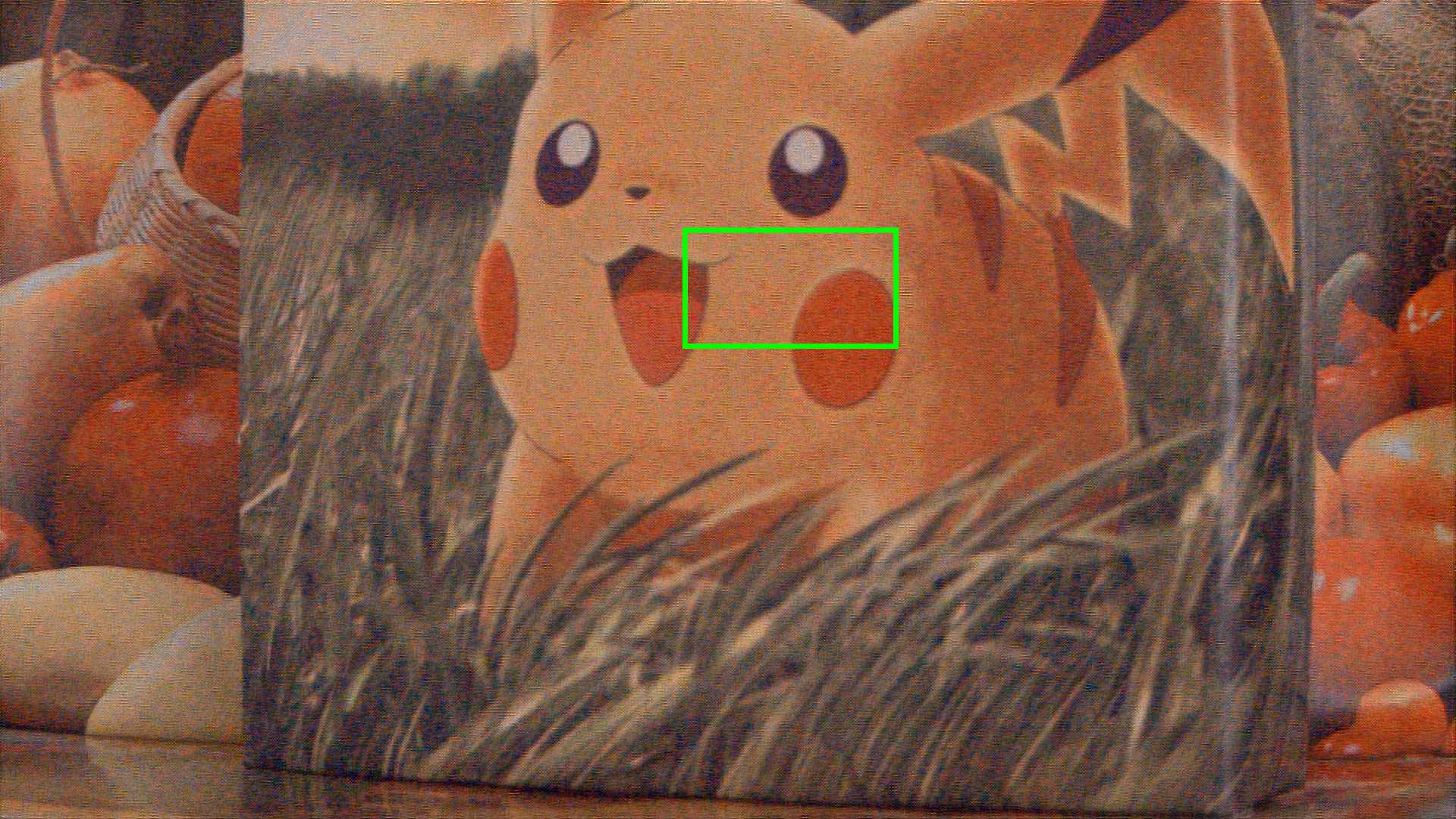}
		\caption{B2U \\ 24.46 / 0.403}
		\label{fig:crvd_10_5_iso12800:b2u_rect}
	\end{subfigure}
	\begin{subfigure}{0.18\textwidth}
	    \captionsetup{justification=centering}
		\includegraphics[width=\textwidth]{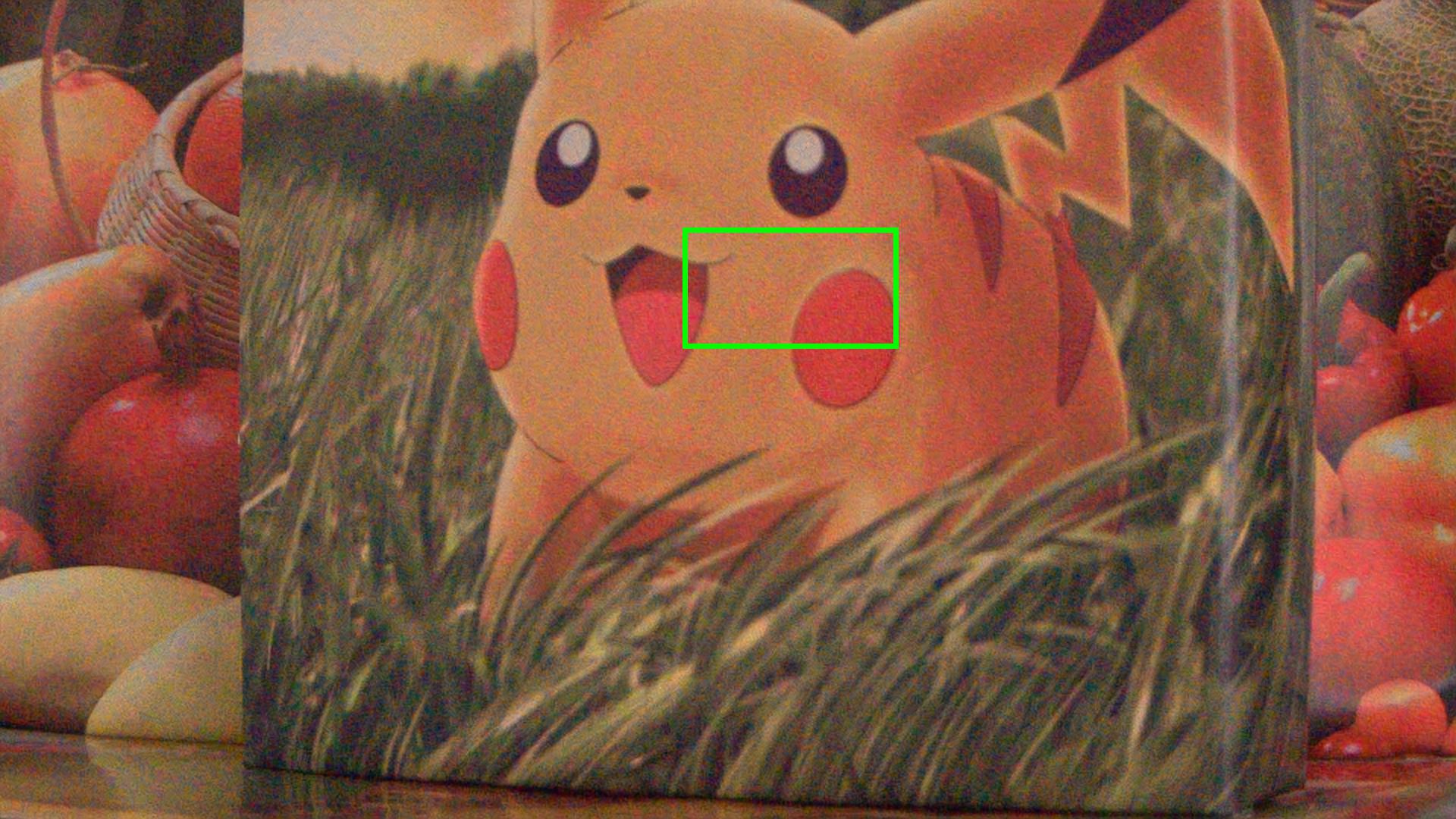}
		\caption{BM3D \\ 29.03 / 0.678}
		\label{fig:crvd_10_5_iso12800:bm3d_rect}
	\end{subfigure}
	\begin{subfigure}{0.18\textwidth}
	    \captionsetup{justification=centering}
		\includegraphics[width=\textwidth]{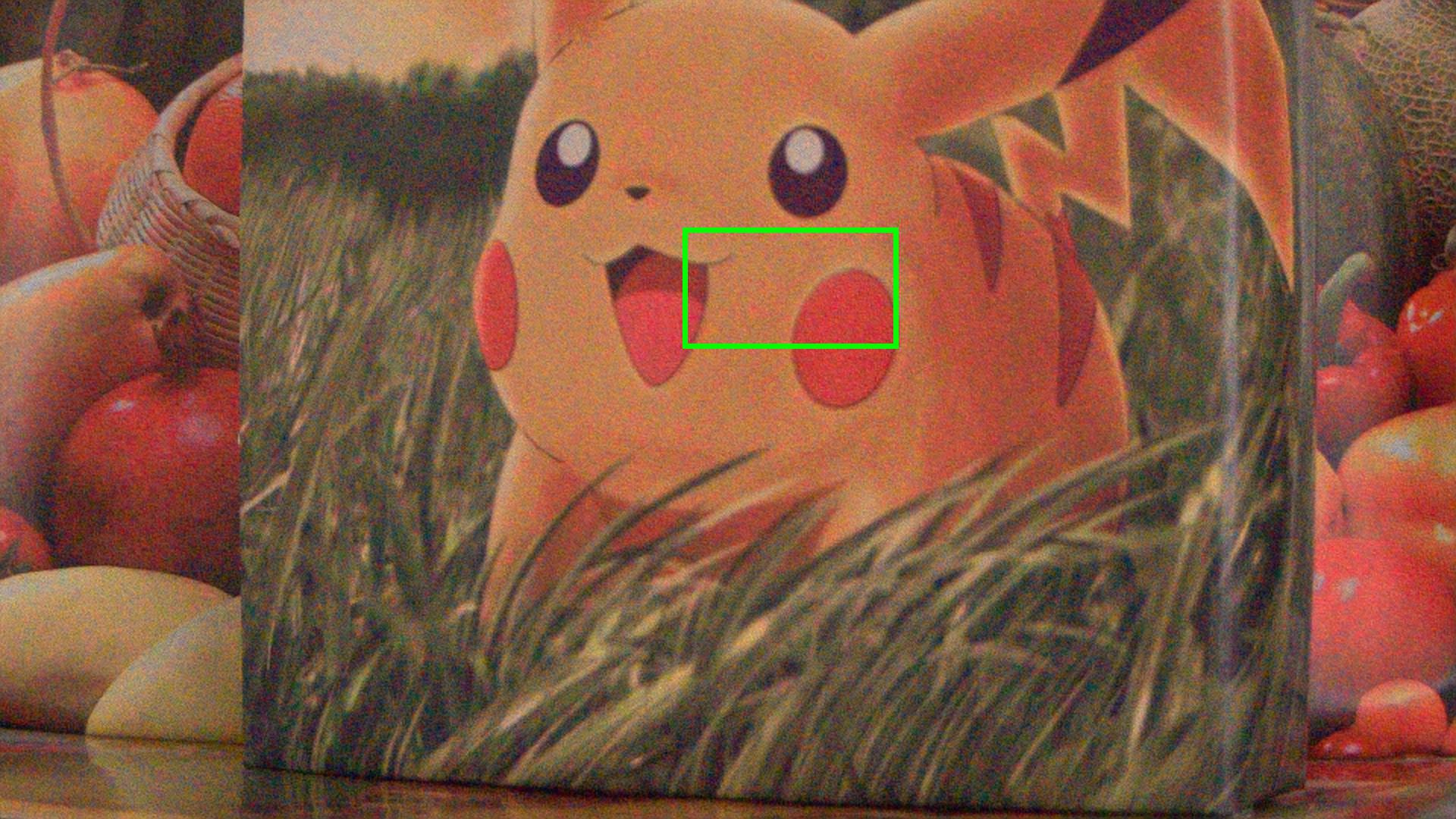}
		\caption{B-DnCNN \\ 28.07 / 0.593}
		\label{fig:crvd_10_5_iso12800:b_dncnn_rect}
	\end{subfigure}
	\begin{subfigure}{0.18\textwidth}
	    \captionsetup{justification=centering}
		\includegraphics[width=\textwidth]{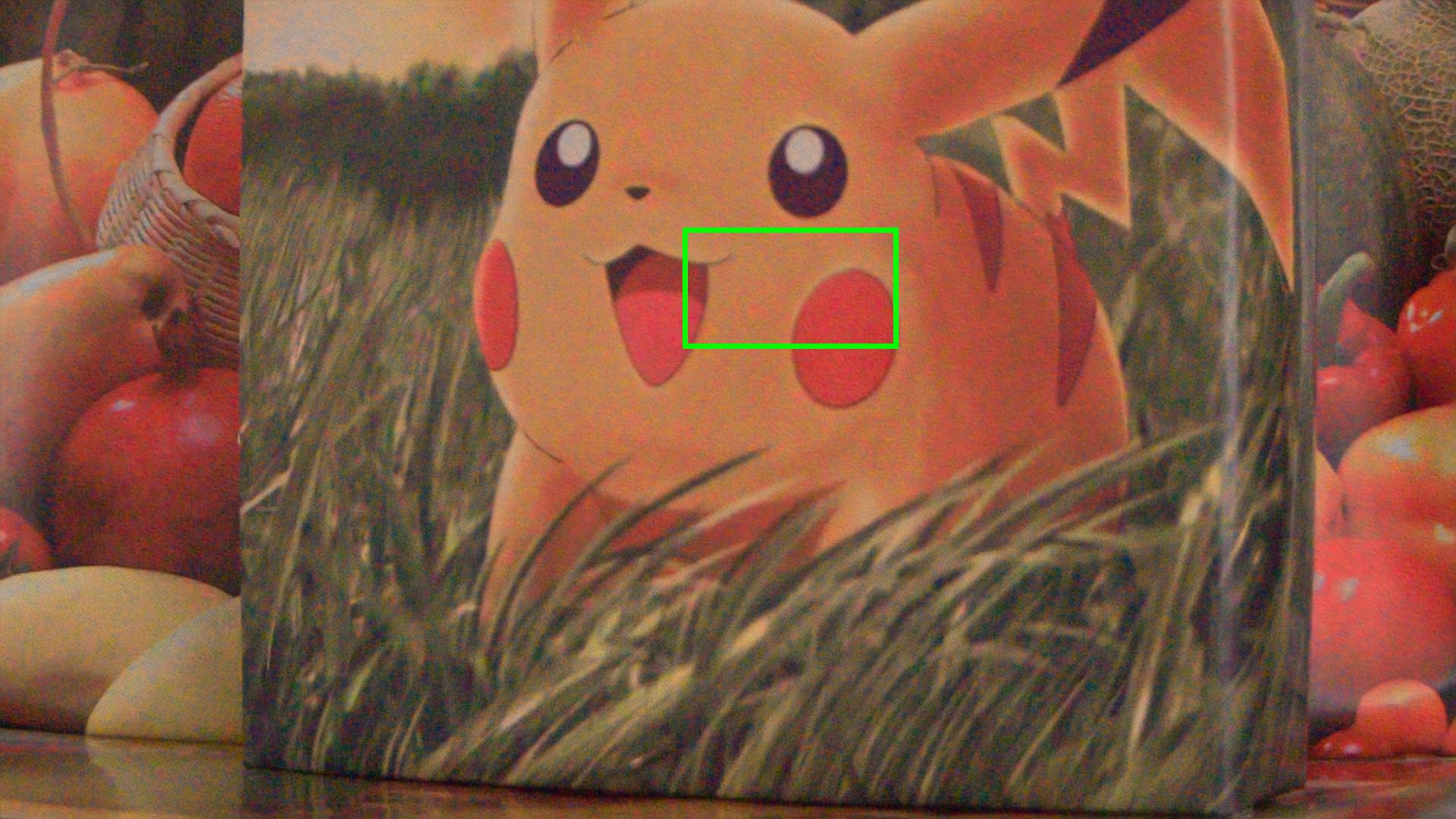}
		\caption{R2R \\ 30.27 / 0.762}
		\label{fig:crvd_10_5_iso12800:r2r_rect}
	\end{subfigure}
	\begin{subfigure}{0.18\textwidth}
	    \captionsetup{justification=centering}
		\includegraphics[width=\textwidth]{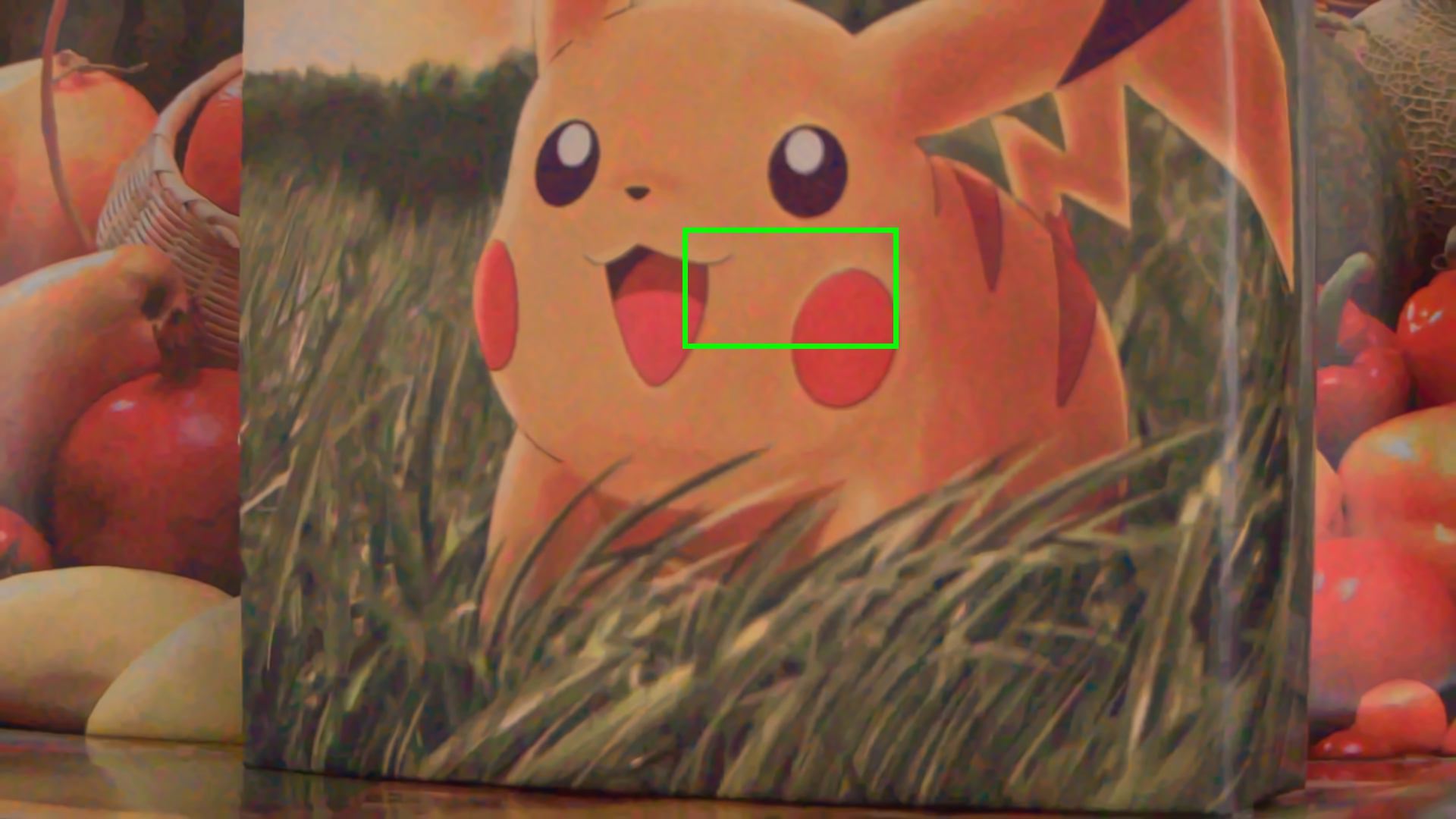}
		\caption{BM3D-O \\ 33.52 / 0.937}
		\label{fig:crvd_10_5_iso12800:bm3d_opt_rect}
	\end{subfigure}
	\begin{subfigure}{0.18\textwidth}
	    \captionsetup{justification=centering}
		\includegraphics[width=\textwidth]{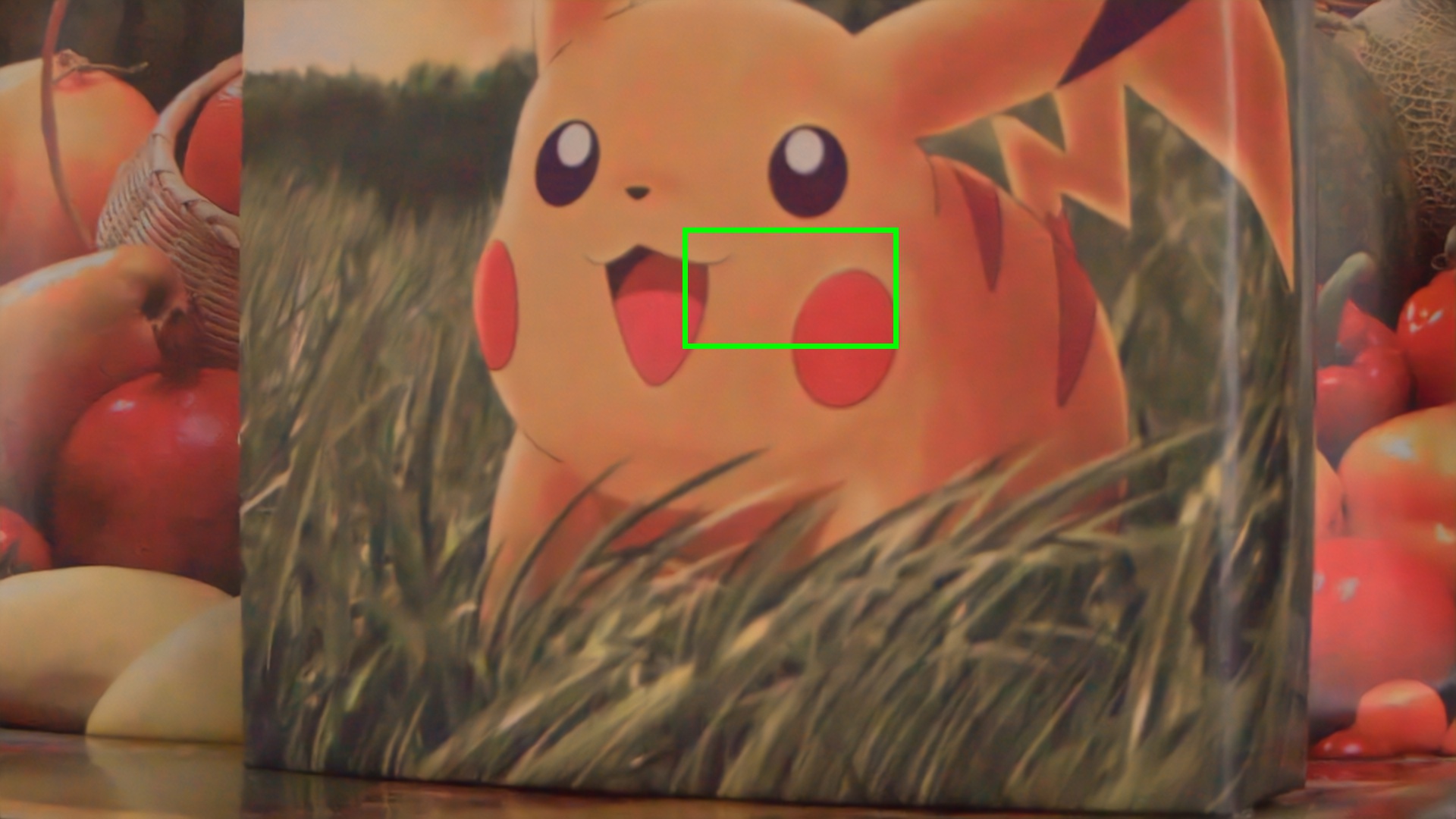}
		\caption{PC-UNet \\ 34.97 / 0.959}
		\label{fig:crvd_10_5_iso12800:pc_unet_rect}
	\end{subfigure}
	\begin{subfigure}{0.18\textwidth}
	    \captionsetup{justification=centering}
		\includegraphics[width=\textwidth]{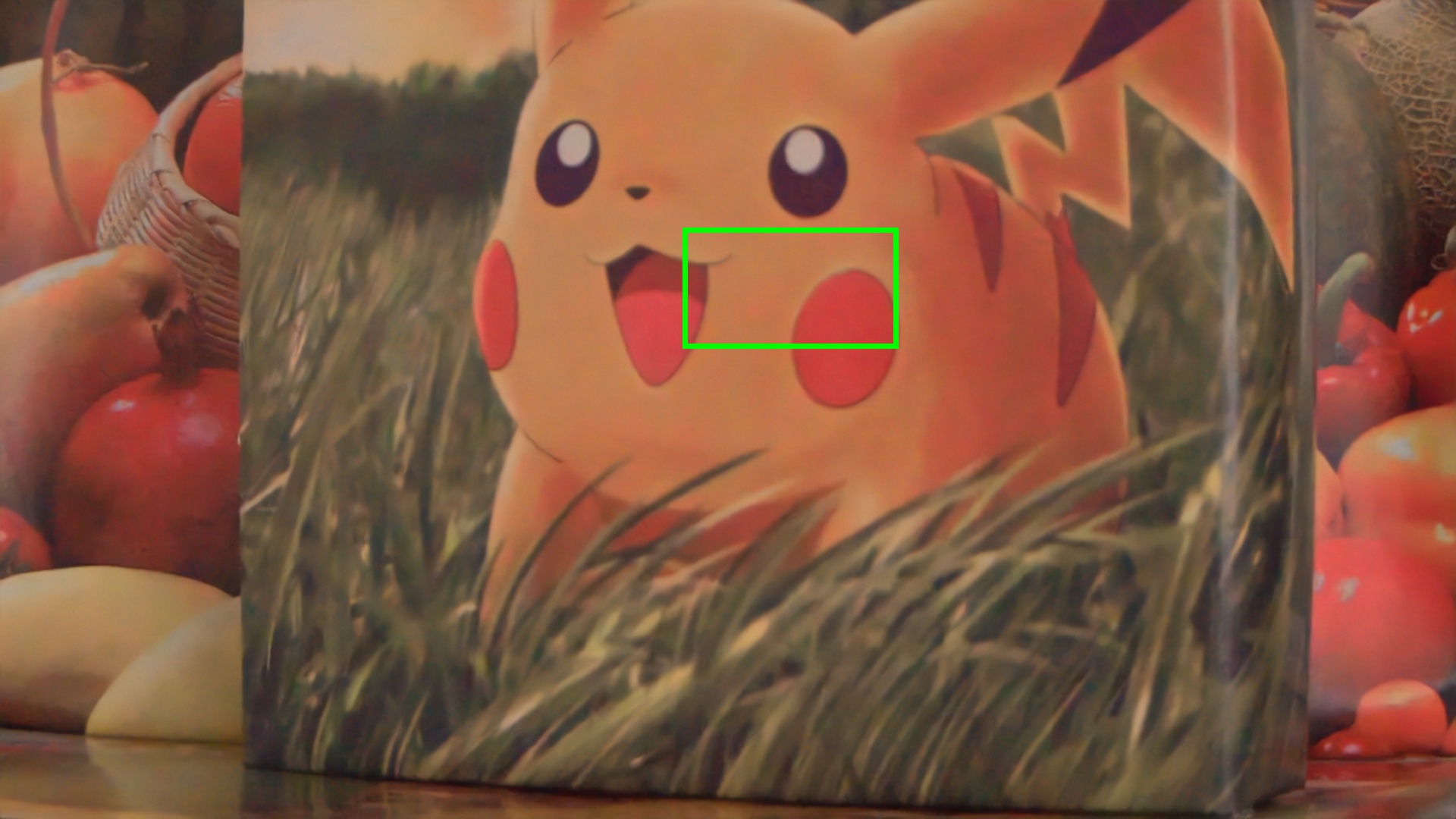}
		\caption{PC-DnCNN \\ 34.90 / 0.956}
		\label{fig:crvd_10_5_iso12800:pc_dncnn_rect}
	\end{subfigure}
	\begin{subfigure}{0.18\textwidth}
	    \captionsetup{justification=centering}
		\includegraphics[width=\textwidth]{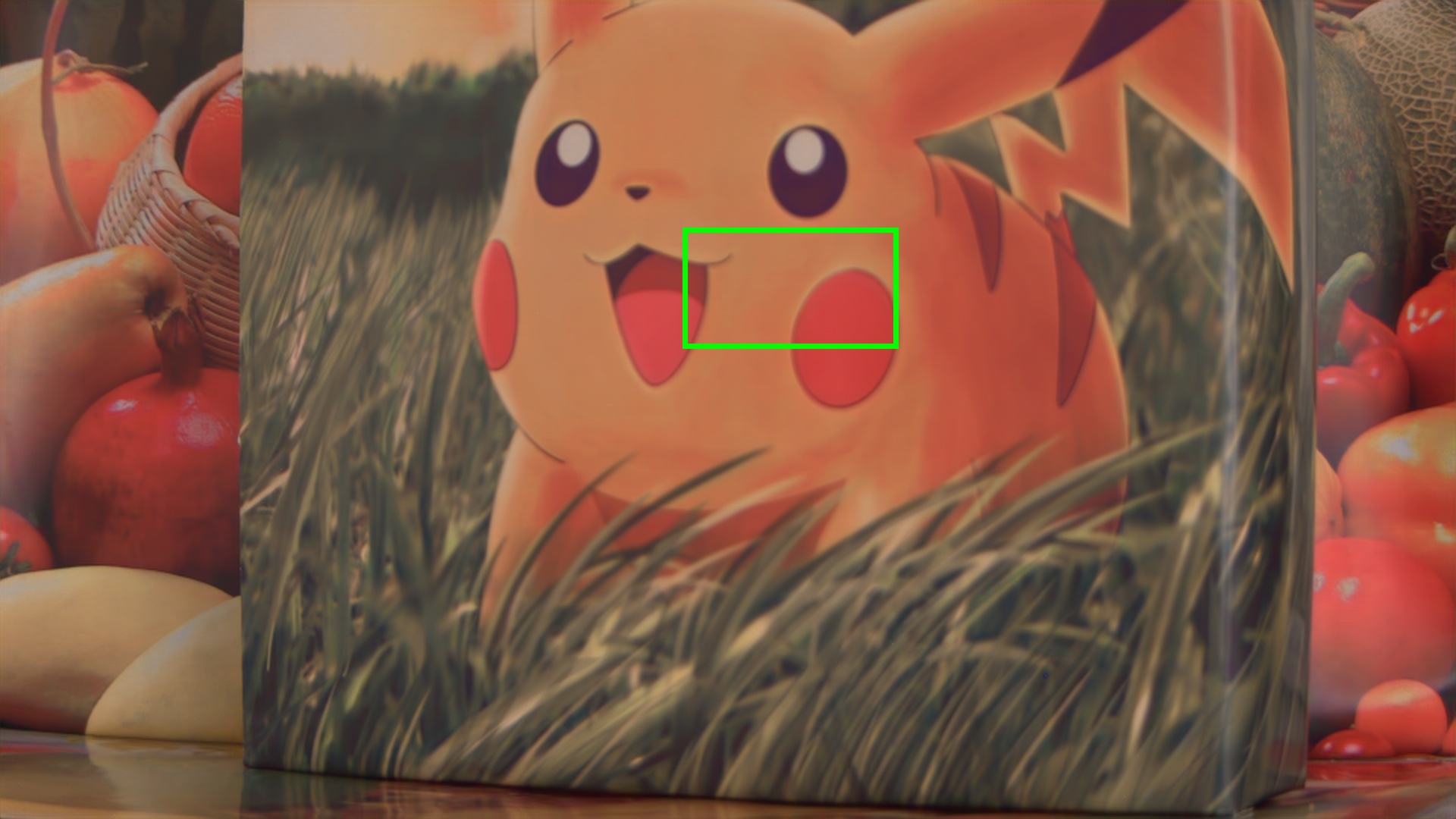}
		\caption{Clean \newline }
		\label{fig:crvd_10_5_iso12800:clean_rect}
	\end{subfigure}
	\begin{subfigure}{0.18\textwidth}
	    \captionsetup{justification=centering}
		\includegraphics[width=\textwidth]{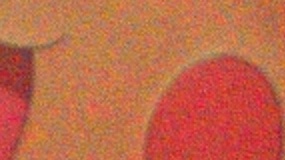}
		\caption{Noisy}
		\label{fig:crvd_10_5_iso12800:noisy_crop}
	\end{subfigure}
	\begin{subfigure}{0.18\textwidth}
	    \captionsetup{justification=centering}
		\includegraphics[width=\textwidth]{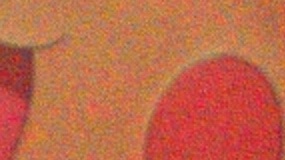}
		\caption{N2N}
		\label{fig:crvd_10_5_iso12800:n2n_crop}
	\end{subfigure}
	\begin{subfigure}{0.18\textwidth}
	    \captionsetup{justification=centering}
		\includegraphics[width=\textwidth]{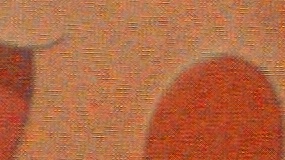}
		\caption{B2U}
		\label{fig:crvd_10_5_iso12800:b2u_crop}
	\end{subfigure}
	\begin{subfigure}{0.18\textwidth}
	    \captionsetup{justification=centering}
		\includegraphics[width=\textwidth]{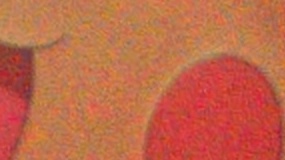}
		\caption{BM3D}
		\label{fig:crvd_10_5_iso12800:bm3d_crop}
	\end{subfigure}
	\begin{subfigure}{0.18\textwidth}
	    \captionsetup{justification=centering}
		\includegraphics[width=\textwidth]{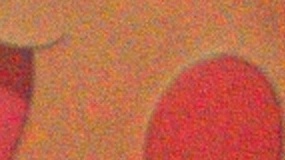}
		\caption{B-DnCNN}
		\label{fig:crvd_10_5_iso12800:b_dncnn_crop}
	\end{subfigure}
	\begin{subfigure}{0.18\textwidth}
	    \captionsetup{justification=centering}
		\includegraphics[width=\textwidth]{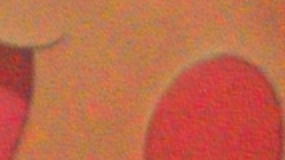}
		\caption{R2R}
		\label{fig:crvd_10_5_iso12800:r2r_crop}
	\end{subfigure}
	\begin{subfigure}{0.18\textwidth}
	    \captionsetup{justification=centering}
		\includegraphics[width=\textwidth]{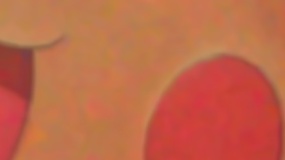}
		\caption{BM3D-O}
		\label{fig:crvd_10_5_iso12800:bm3d_opt_crop}
	\end{subfigure}
	\begin{subfigure}{0.18\textwidth}
	    \captionsetup{justification=centering}
		\includegraphics[width=\textwidth]{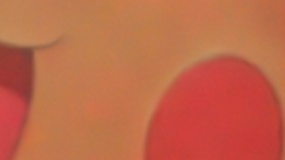}
		\caption{PC-UNet}
		\label{fig:crvd_10_5_iso12800:pc_unet_crop}
	\end{subfigure}
	\begin{subfigure}{0.18\textwidth}
	    \captionsetup{justification=centering}
		\includegraphics[width=\textwidth]{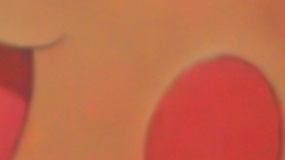}
		\caption{PC-DnCNN}
		\label{fig:crvd_10_5_iso12800:pc_dncnn_crop}
	\end{subfigure}
	\begin{subfigure}{0.18\textwidth}
	    \captionsetup{justification=centering}
		\includegraphics[width=\textwidth]{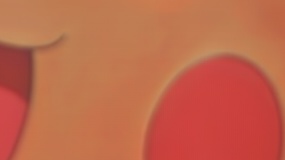}
		\caption{Clean}
		\label{fig:crvd_10_5_iso12800:clean_crop}
	\end{subfigure}
	\caption{Denoising examples with real-world noise. The first four rows show frame 3 of scene 6. The last four rows present frame 6 of scene 11. Both images are captured with ISO 12800. As can be seen, oracle BM3D leaves a substantial amount of low-frequency noise unfiltered, while other algorithms, except ours (PC-UNet and PC-DnCNN), do not succeed in removing the noise.}
	\label{fig:crvd_5_2_iso12800_10_5_iso12800}
\end{figure*}

\begin{figure*}
    \centering
	\begin{subfigure}{0.18\textwidth}
	    \captionsetup{justification=centering}
		\includegraphics[width=\textwidth]{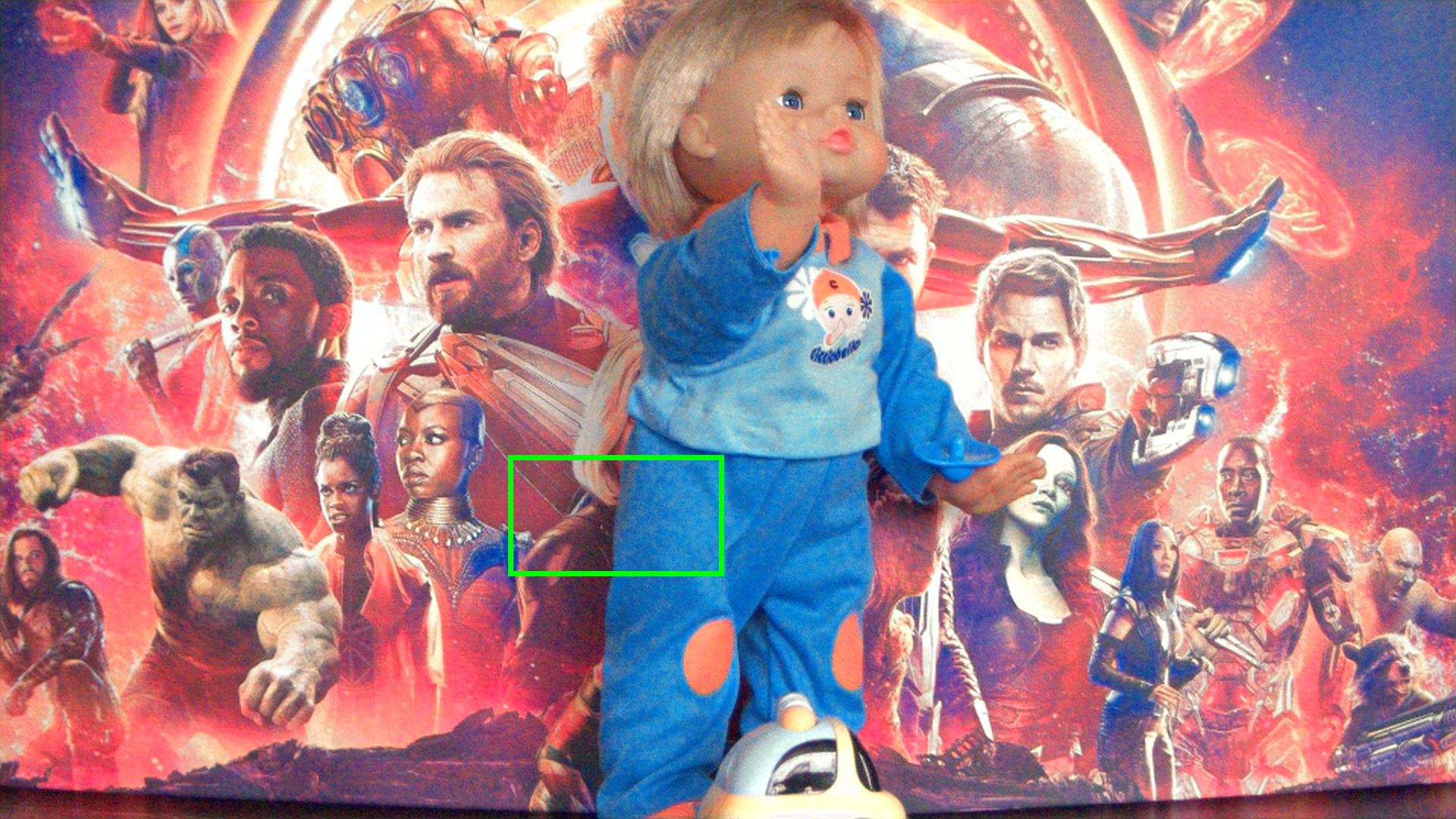}
		\caption{Noisy \\ 32.88 / 0.864}
		\label{fig:crvd_1_5_iso6400:noisy_rect}
	\end{subfigure}
	\begin{subfigure}{0.18\textwidth}
	    \captionsetup{justification=centering}
		\includegraphics[width=\textwidth]{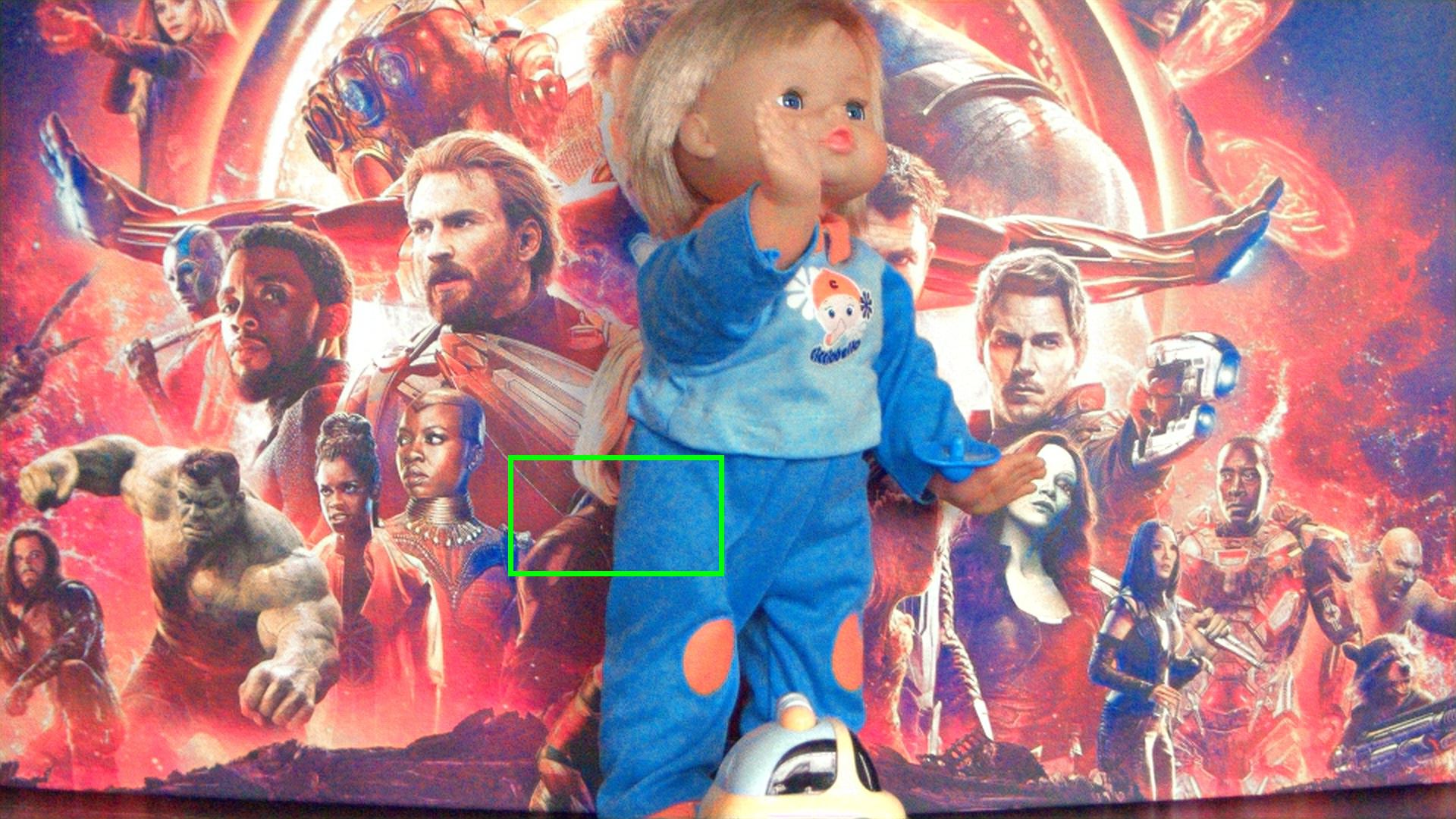}
		\caption{N2N \\ 32.96 / 0.866}
		\label{fig:crvd_1_5_iso6400:n2n_rect}
	\end{subfigure}
	\begin{subfigure}{0.18\textwidth}
	    \captionsetup{justification=centering}
		\includegraphics[width=\textwidth]{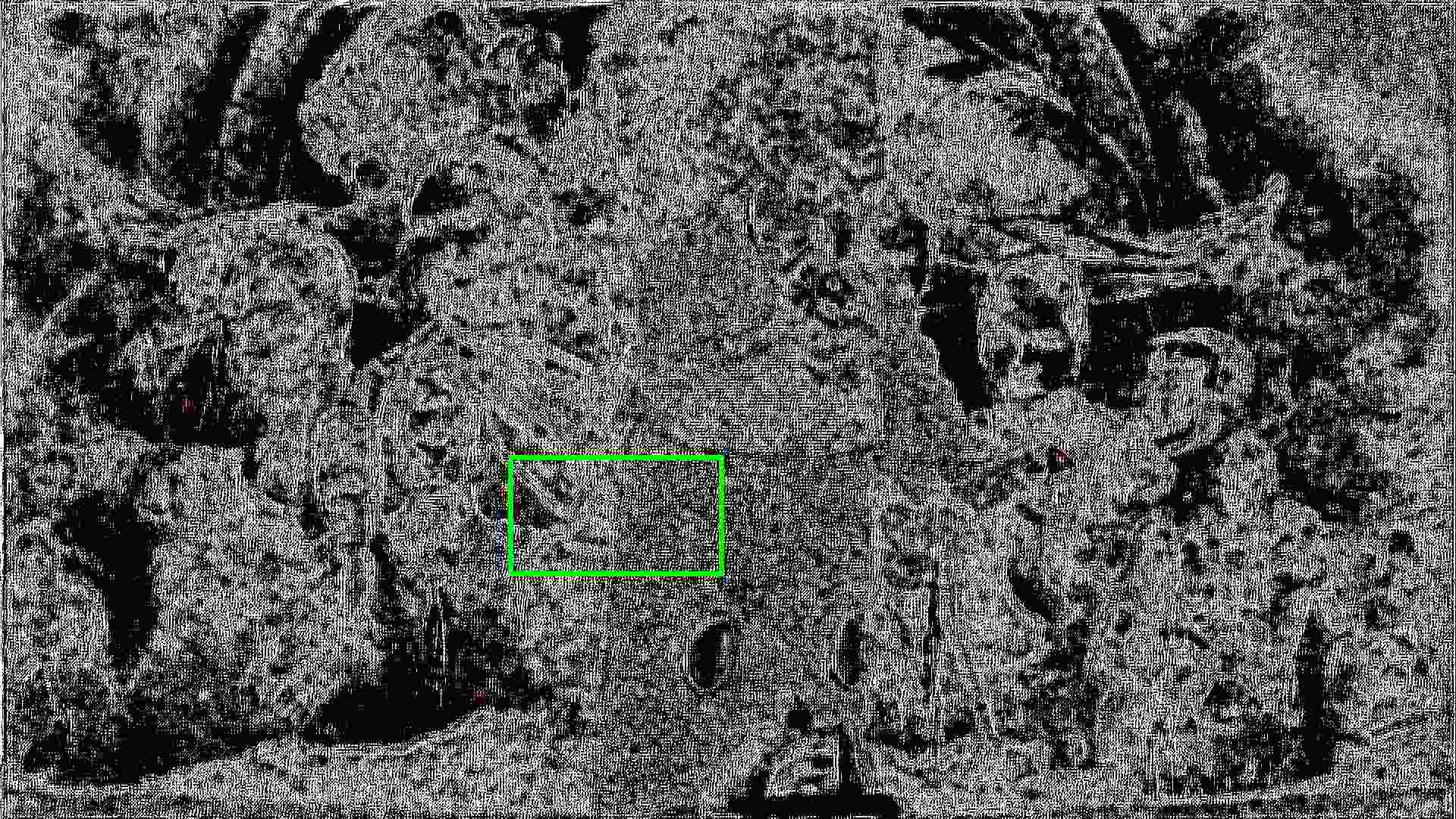}
		\caption{B2U \\ 4.36 / -0.001}
		\label{fig:crvd_1_5_iso6400:b2u_rect}
	\end{subfigure}
	\begin{subfigure}{0.18\textwidth}
	    \captionsetup{justification=centering}
		\includegraphics[width=\textwidth]{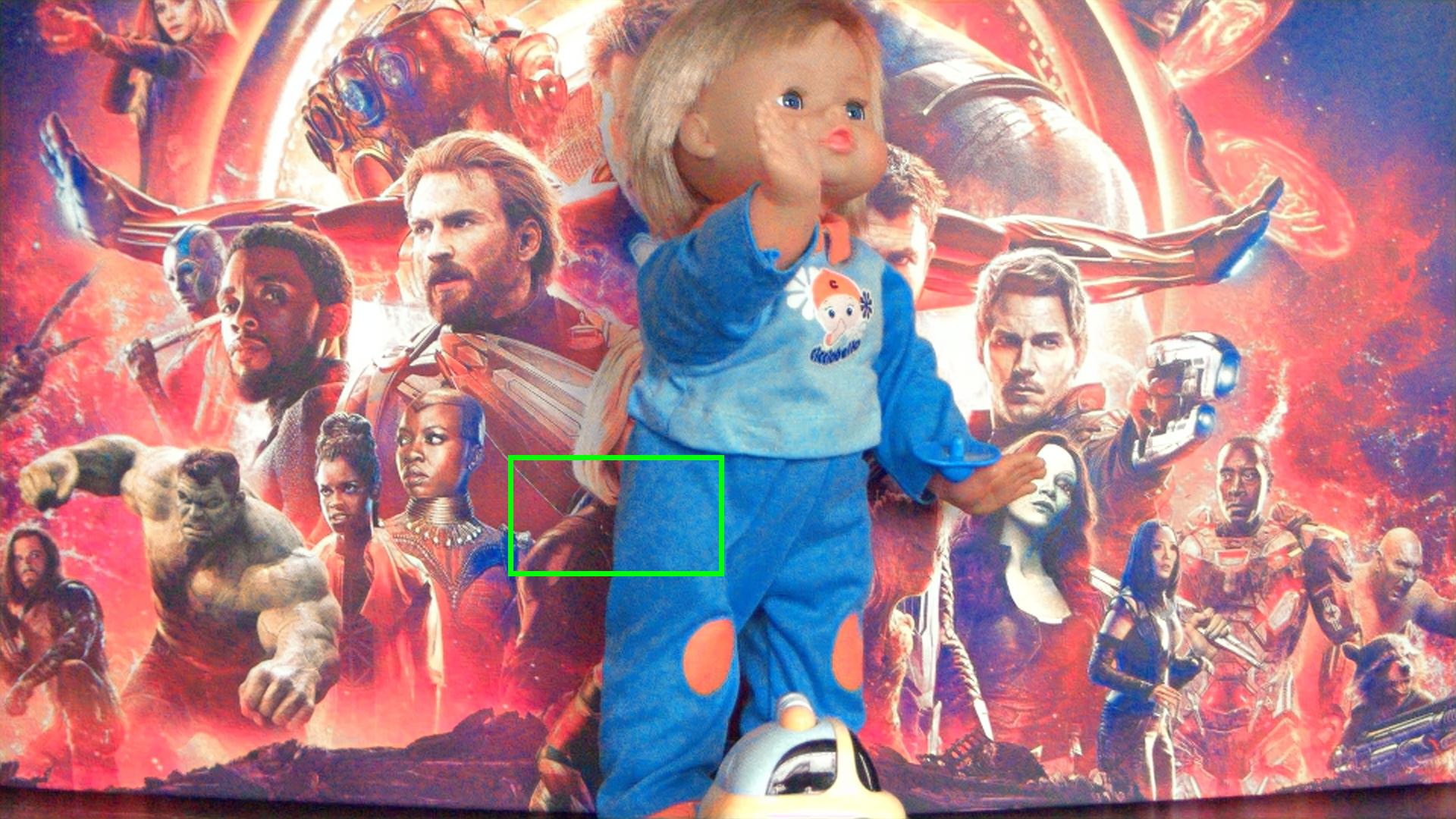}
		\caption{BM3D \\ 33.68 / 0.894}
		\label{fig:crvd_1_5_iso6400:bm3d_rect}
	\end{subfigure}
	\begin{subfigure}{0.18\textwidth}
	    \captionsetup{justification=centering}
		\includegraphics[width=\textwidth]{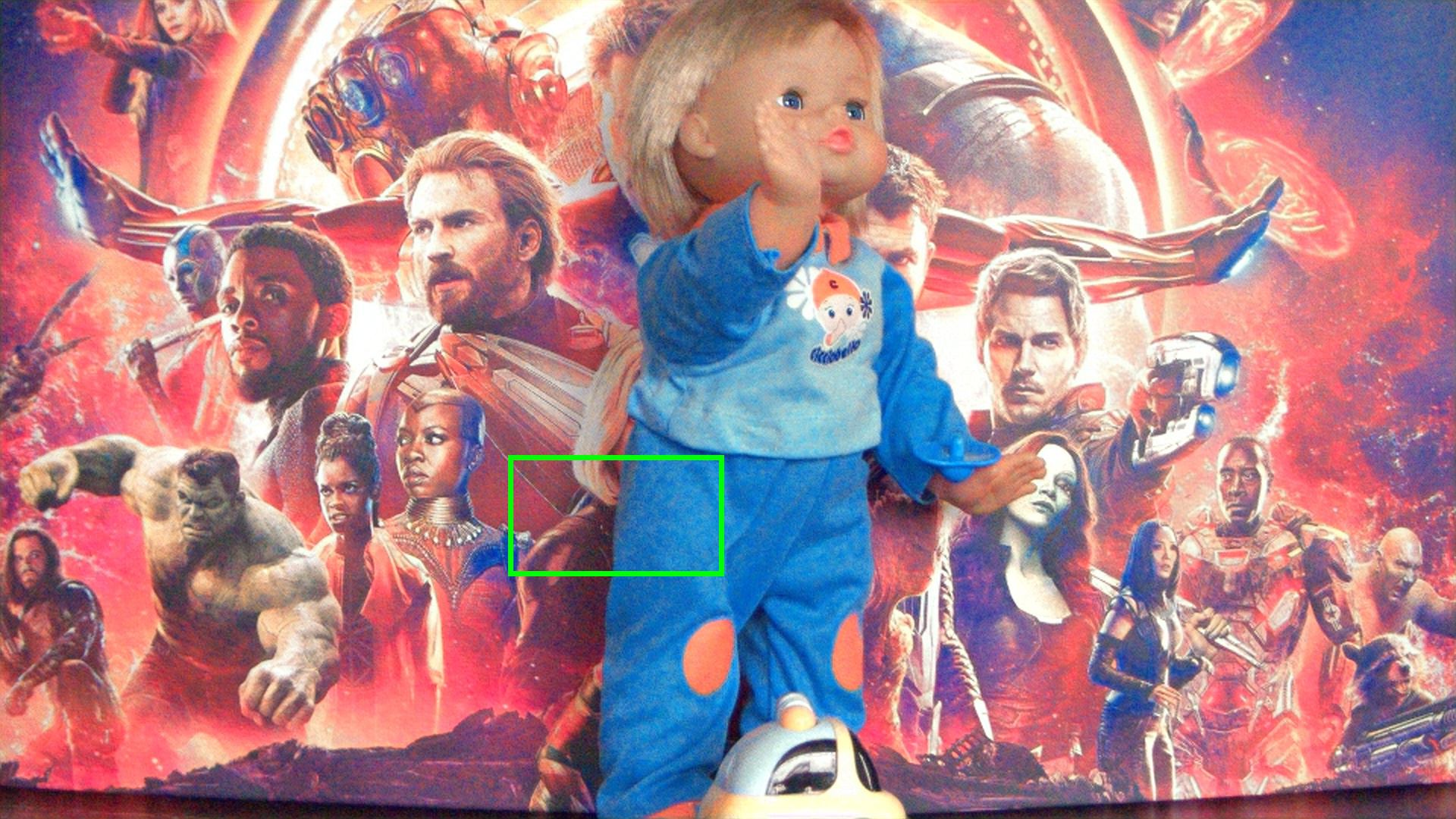}
		\caption{B-DnCNN \\ 32.95 / 0.866}
		\label{fig:crvd_1_5_iso6400:b_dncnn_rect}
	\end{subfigure}
	\begin{subfigure}{0.18\textwidth}
	    \captionsetup{justification=centering}
		\includegraphics[width=\textwidth]{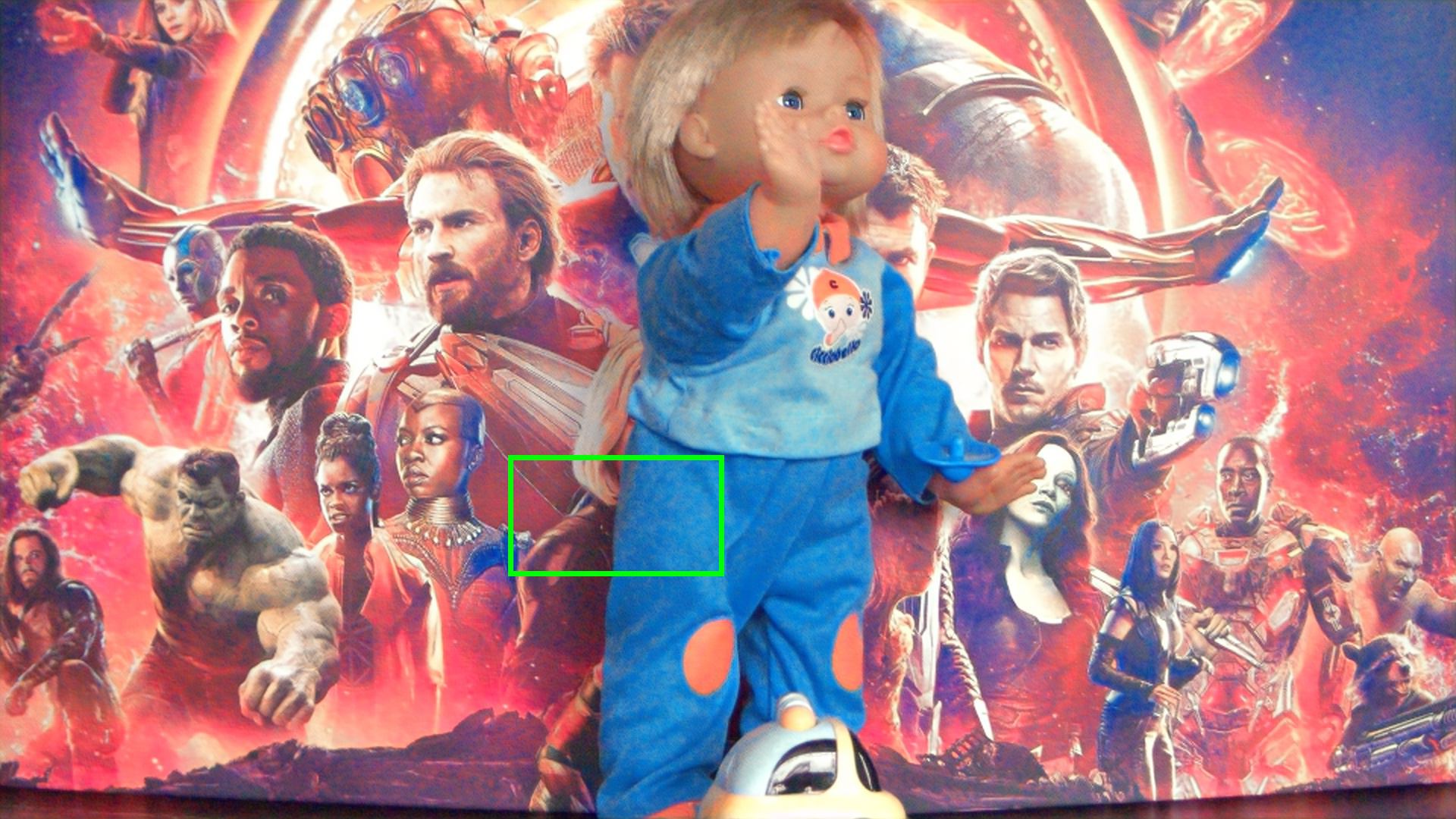}
		\caption{R2R \\ 34.65 / 0.921}
		\label{fig:crvd_1_5_iso6400:r2r_rect}
	\end{subfigure}
	\begin{subfigure}{0.18\textwidth}
	    \captionsetup{justification=centering}
		\includegraphics[width=\textwidth]{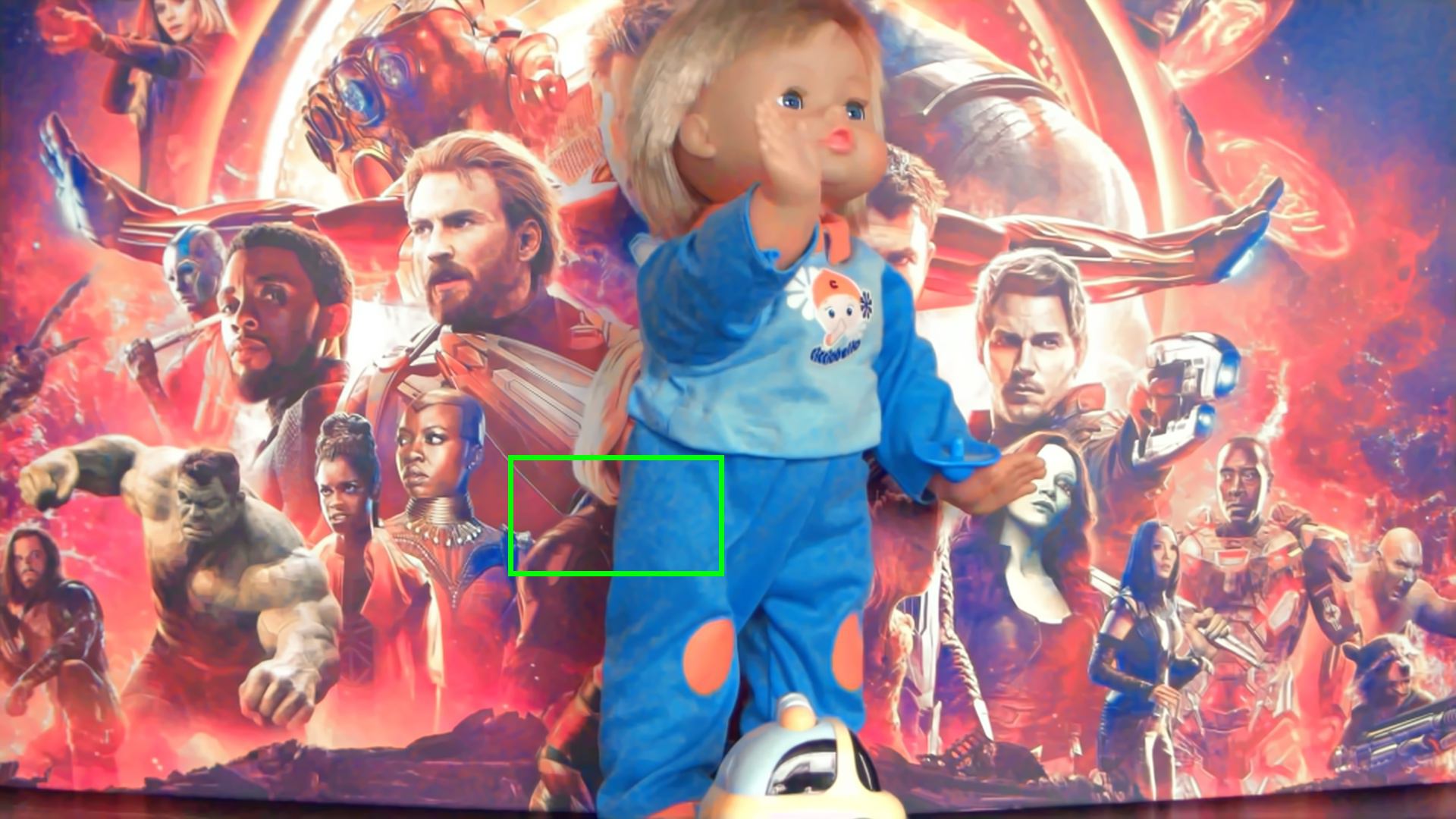}
		\caption{BM3D-O \\ 34.59 / 0.925}
		\label{fig:crvd_1_5_iso6400:bm3d_opt_rect}
	\end{subfigure}
	\begin{subfigure}{0.18\textwidth}
	    \captionsetup{justification=centering}
		\includegraphics[width=\textwidth]{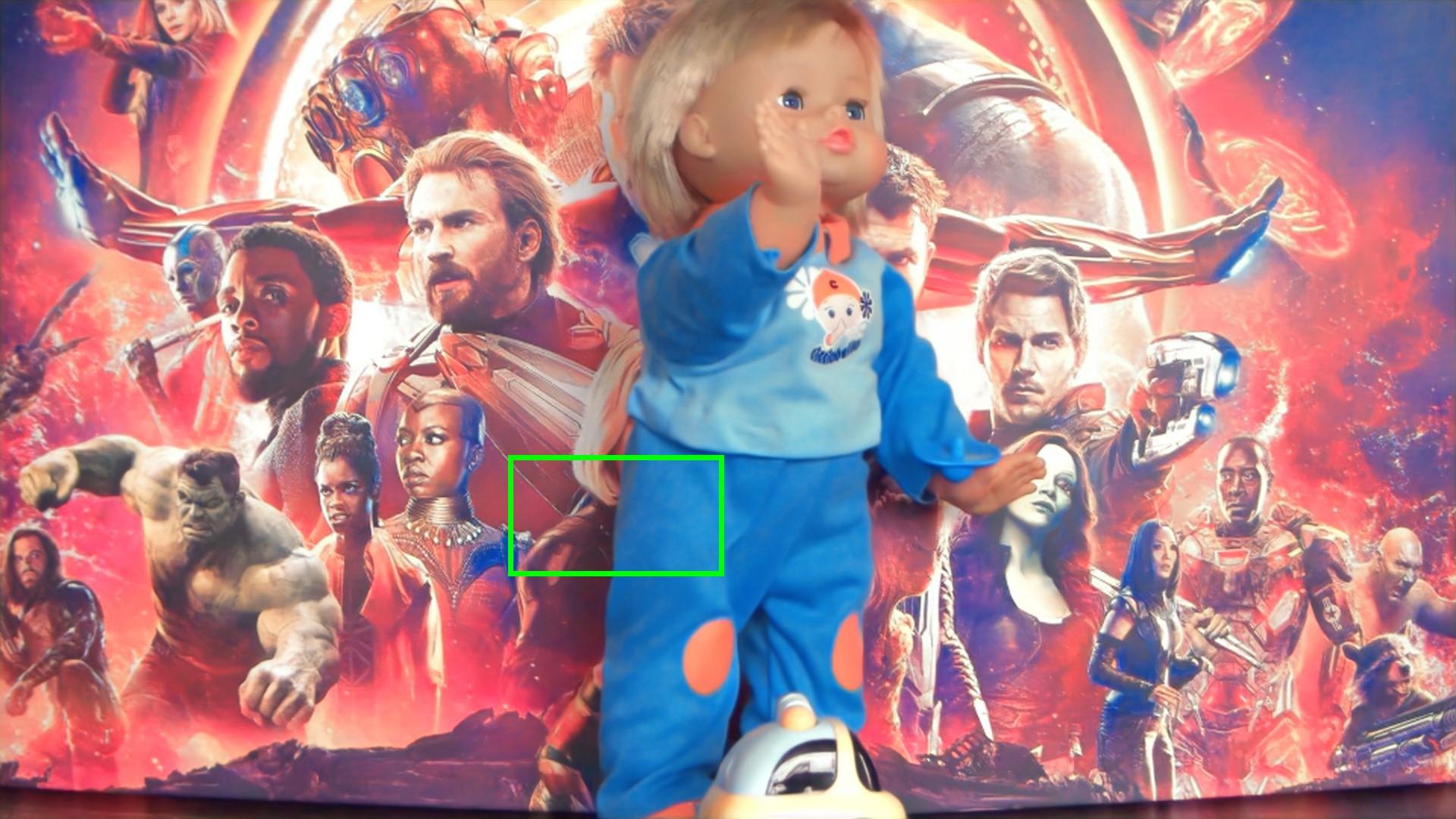}
		\caption{PC-UNet \\ 35.82 / 0.948}
		\label{fig:crvd_1_5_iso6400:pc_unet_rect}
	\end{subfigure}
	\begin{subfigure}{0.18\textwidth}
	    \captionsetup{justification=centering}
		\includegraphics[width=\textwidth]{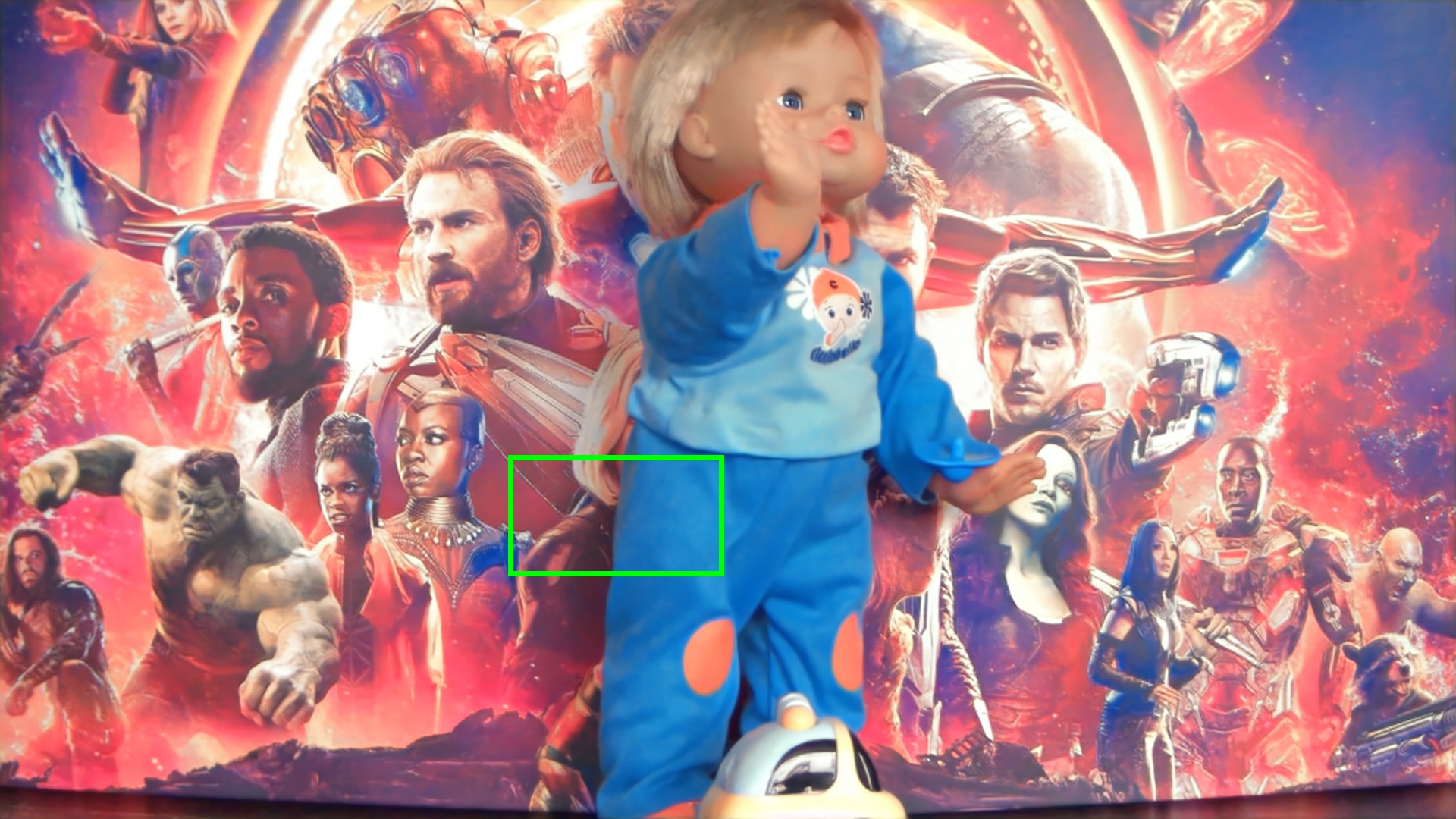}
		\caption{PC-DnCNN \\ 35.80 / 0.947}
		\label{fig:crvd_1_5_iso6400:pc_dncnn_rect}
	\end{subfigure}
	\begin{subfigure}{0.18\textwidth}
	    \captionsetup{justification=centering}
		\includegraphics[width=\textwidth]{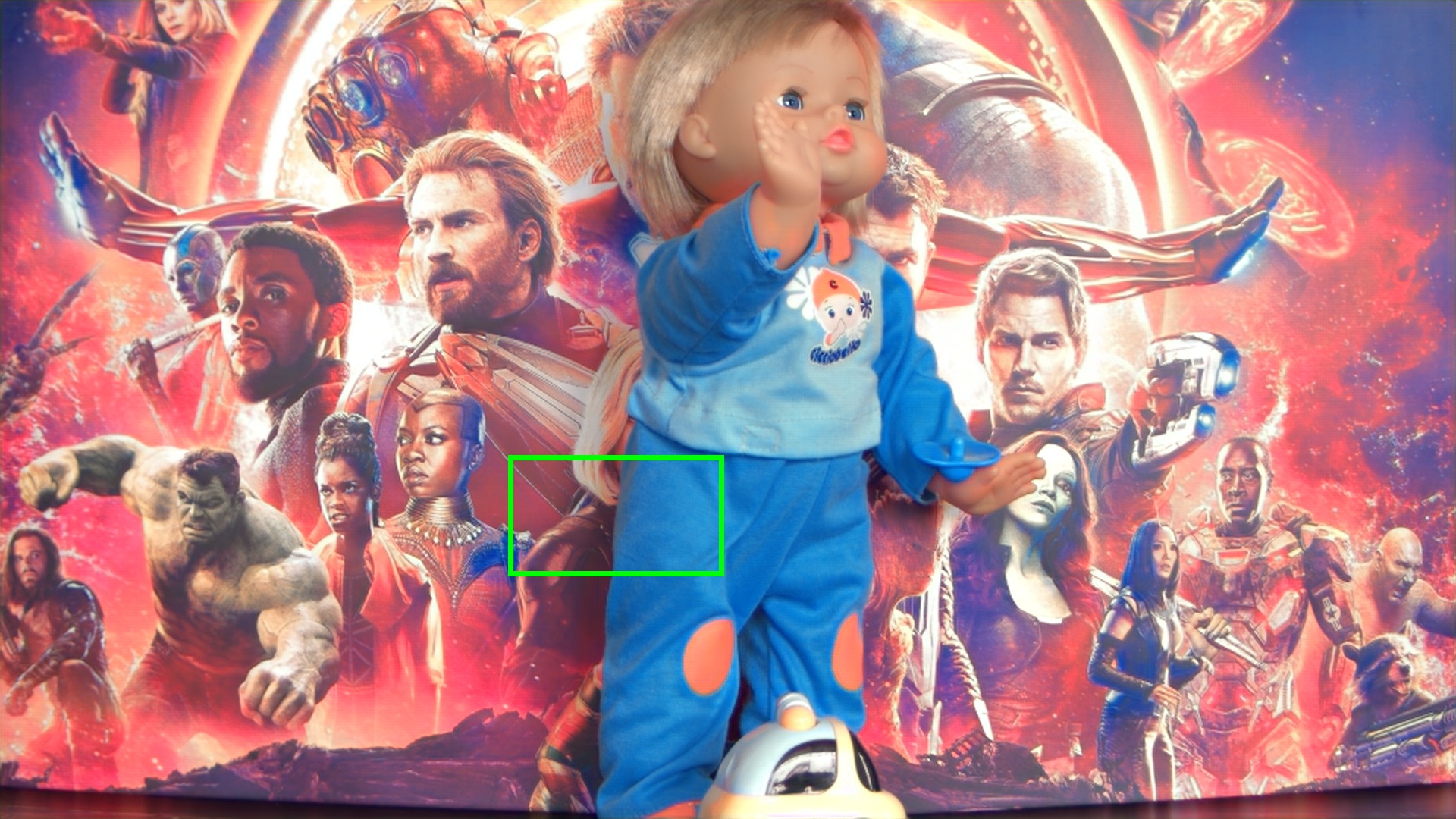}
		\caption{Clean \newline }
		\label{fig:crvd_1_5_iso6400:clean_rect}
	\end{subfigure}
	\begin{subfigure}{0.18\textwidth}
	    \captionsetup{justification=centering}
		\includegraphics[width=\textwidth]{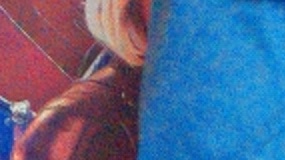}
		\caption{Noisy}
		\label{fig:crvd_1_5_iso6400:noisy_crop}
	\end{subfigure}
	\begin{subfigure}{0.18\textwidth}
	    \captionsetup{justification=centering}
		\includegraphics[width=\textwidth]{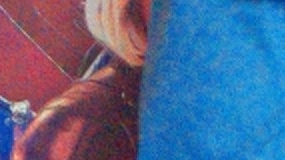}
		\caption{N2N}
		\label{fig:crvd_1_5_iso6400:n2n_crop}
	\end{subfigure}
	\begin{subfigure}{0.18\textwidth}
	    \captionsetup{justification=centering}
		\includegraphics[width=\textwidth]{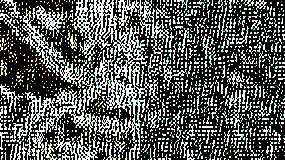}
		\caption{B2U}
		\label{fig:crvd_1_5_iso6400:b2u_crop}
	\end{subfigure}
	\begin{subfigure}{0.18\textwidth}
	    \captionsetup{justification=centering}
		\includegraphics[width=\textwidth]{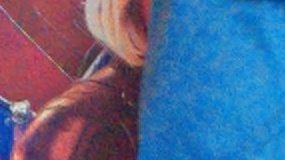}
		\caption{BM3D}
		\label{fig:crvd_1_5_iso6400:bm3d_crop}
	\end{subfigure}
	\begin{subfigure}{0.18\textwidth}
	    \captionsetup{justification=centering}
		\includegraphics[width=\textwidth]{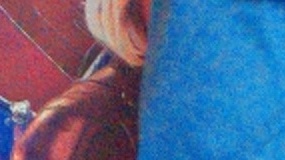}
		\caption{B-DnCNN}
		\label{fig:crvd_1_5_iso6400:b_dncnn_crop}
	\end{subfigure}
	\begin{subfigure}{0.18\textwidth}
	    \captionsetup{justification=centering}
		\includegraphics[width=\textwidth]{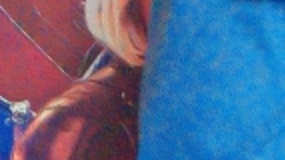}
		\caption{R2R}
		\label{fig:crvd_1_5_iso6400:r2r_crop}
	\end{subfigure}
	\begin{subfigure}{0.18\textwidth}
	    \captionsetup{justification=centering}
		\includegraphics[width=\textwidth]{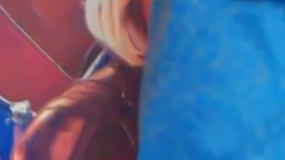}
		\caption{BM3D-O}
		\label{fig:crvd_1_5_iso6400:bm3d_opt_crop}
	\end{subfigure}
	\begin{subfigure}{0.18\textwidth}
	    \captionsetup{justification=centering}
		\includegraphics[width=\textwidth]{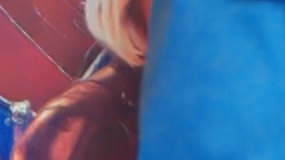}
		\caption{PC-UNet}
		\label{fig:crvd_1_5_iso6400:pc_unet_crop}
	\end{subfigure}
	\begin{subfigure}{0.18\textwidth}
	    \captionsetup{justification=centering}
		\includegraphics[width=\textwidth]{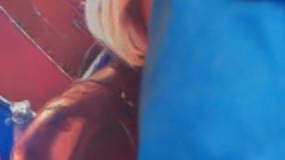}
		\caption{PC-DnCNN}
		\label{fig:crvd_1_5_iso6400:pc_dncnn_crop}
	\end{subfigure}
	\begin{subfigure}{0.18\textwidth}
	    \captionsetup{justification=centering}
		\includegraphics[width=\textwidth]{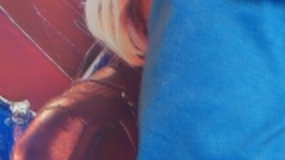}
		\caption{Clean}
		\label{fig:crvd_1_5_iso6400:clean_crop}
	\end{subfigure}
	\begin{subfigure}{0.18\textwidth}
	    \captionsetup{justification=centering}
		\includegraphics[width=\textwidth]{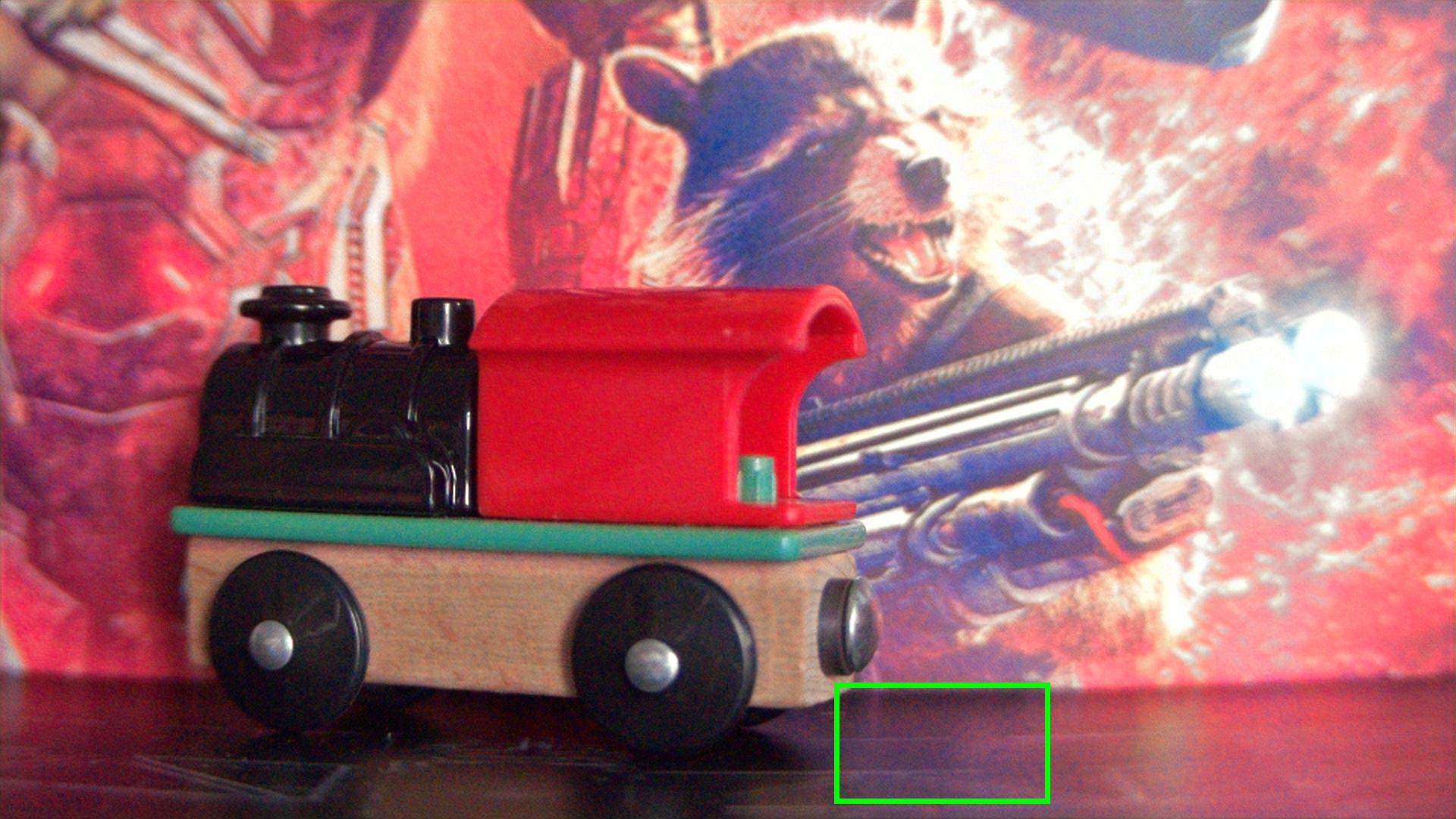}
		\caption{Noisy \\ 35.56 / 0.886}
		\label{fig:crvd_0_4_iso3200:noisy_rect}
	\end{subfigure}
	\begin{subfigure}{0.18\textwidth}
	    \captionsetup{justification=centering}
		\includegraphics[width=\textwidth]{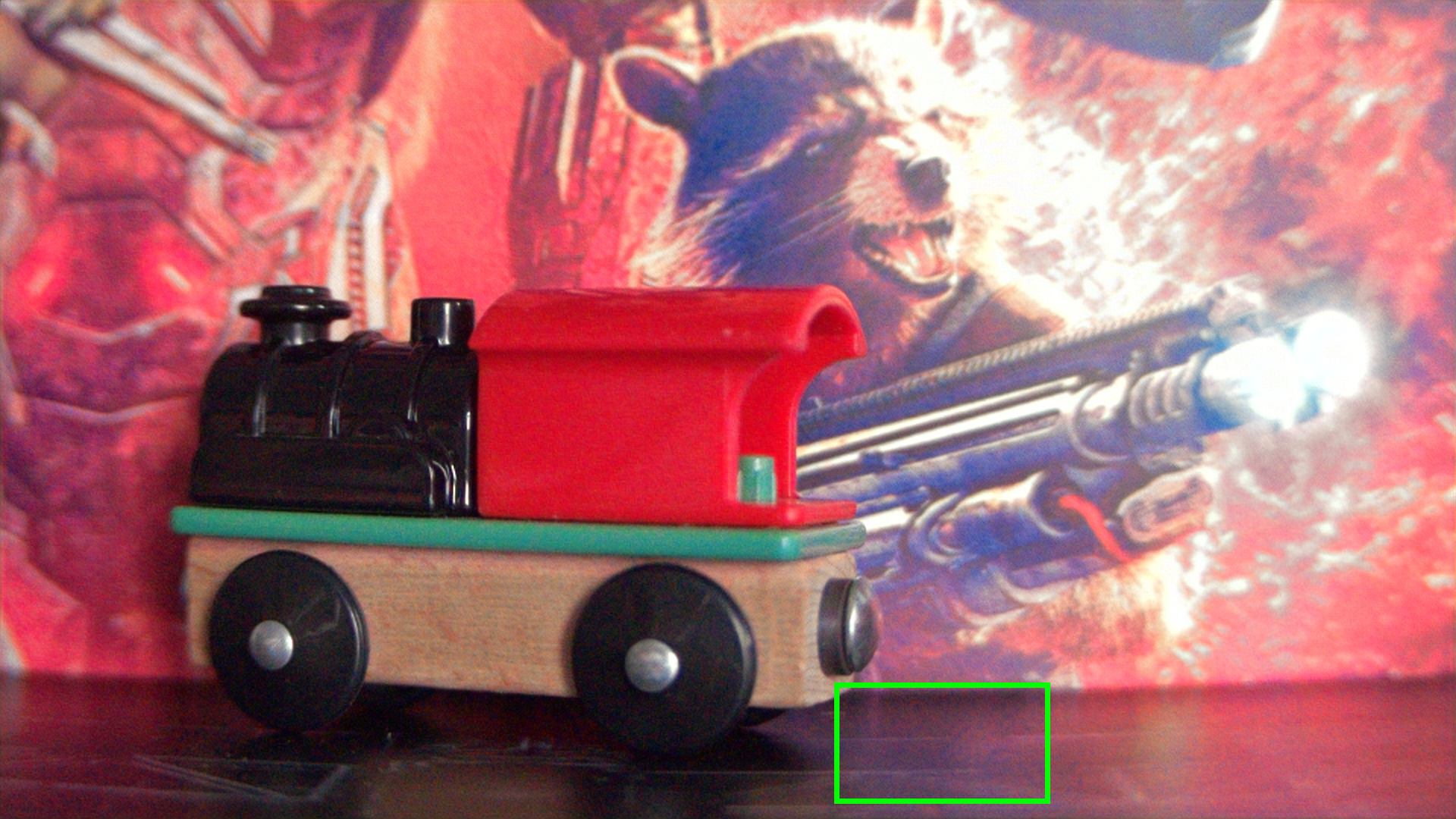}
		\caption{N2N \\ 35.54 / 0.886}
		\label{fig:crvd_0_4_iso3200:n2n_rect}
	\end{subfigure}
	\begin{subfigure}{0.18\textwidth}
	    \captionsetup{justification=centering}
		\includegraphics[width=\textwidth]{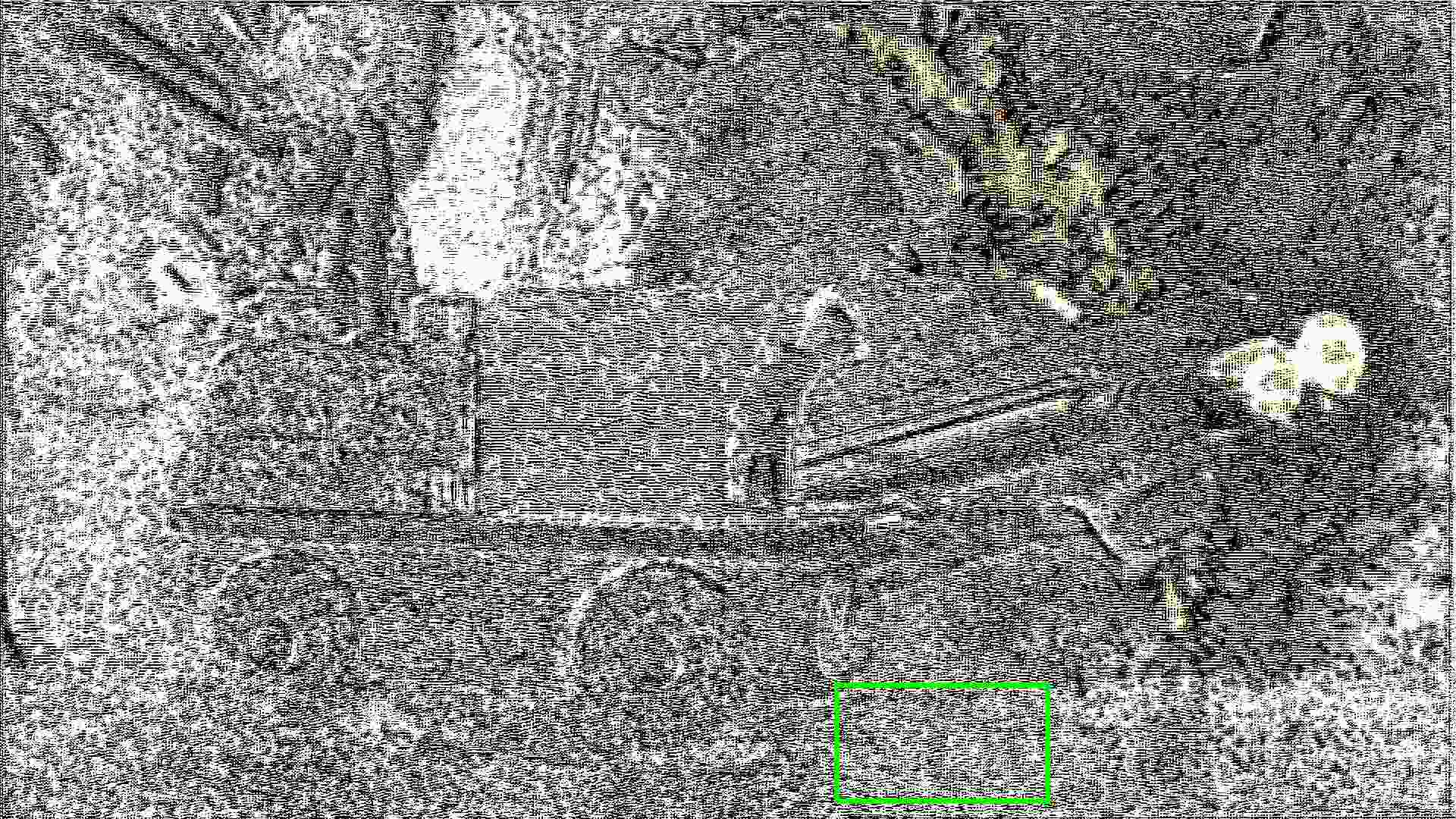}
		\caption{B2U \\ 4.66 / 0.012}
		\label{fig:crvd_0_4_iso3200:b2u_rect}
	\end{subfigure}
	\begin{subfigure}{0.18\textwidth}
	    \captionsetup{justification=centering}
		\includegraphics[width=\textwidth]{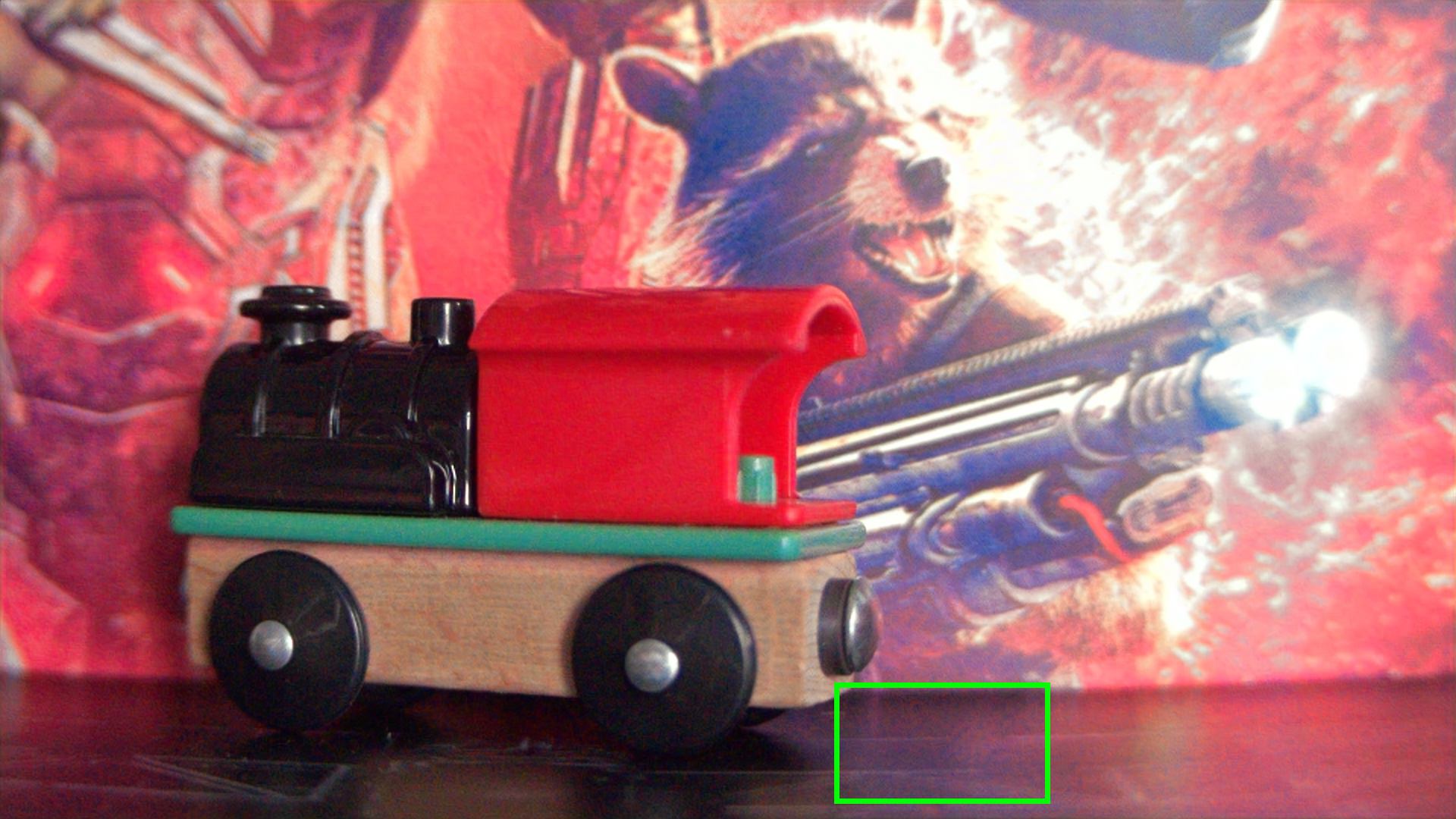}
		\caption{BM3D \\ 36.57 / 0.916}
		\label{fig:crvd_0_4_iso3200:bm3d_rect}
	\end{subfigure}
	\begin{subfigure}{0.18\textwidth}
	    \captionsetup{justification=centering}
		\includegraphics[width=\textwidth]{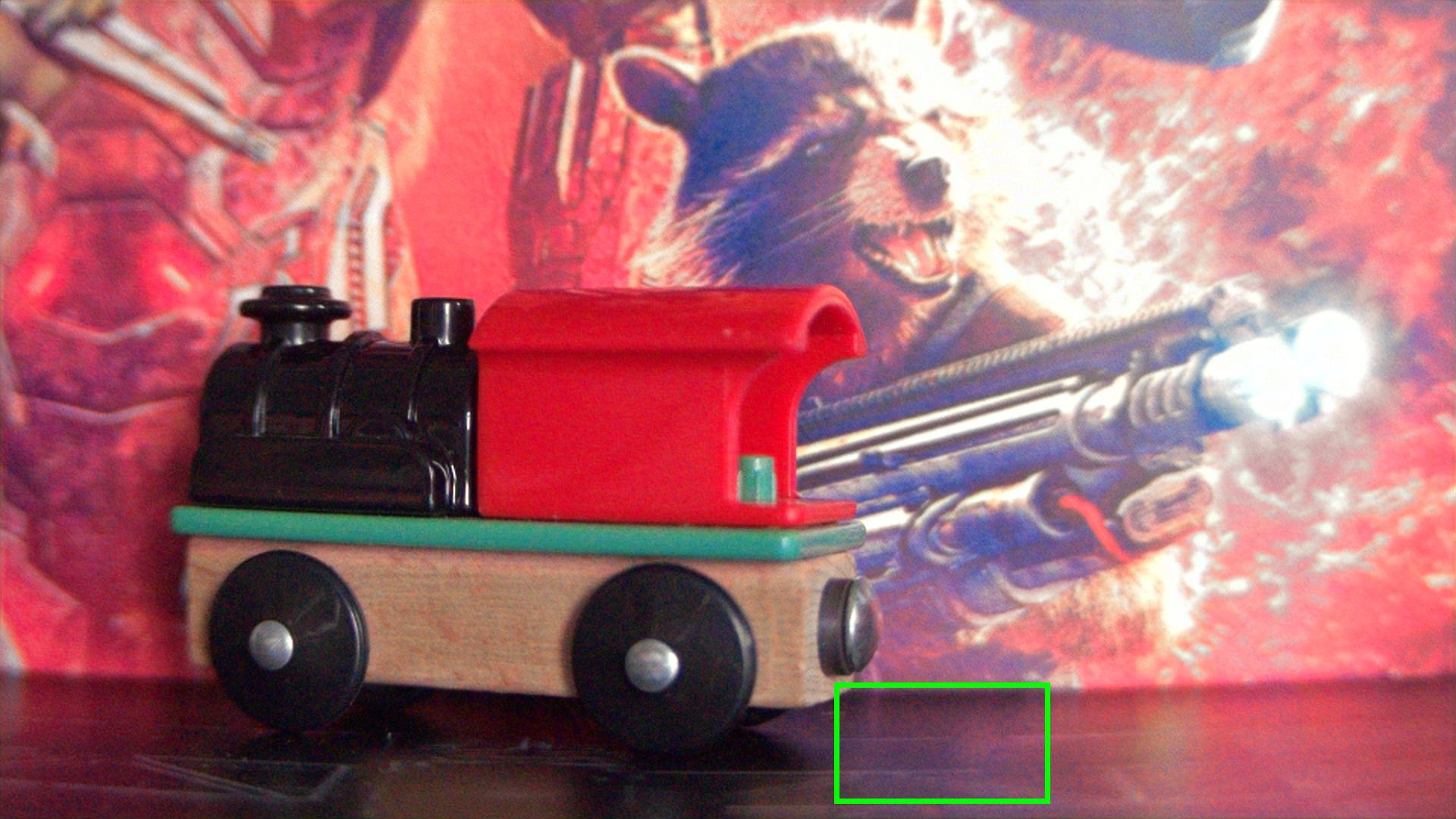}
		\caption{B-DnCNN \\ 35.67 / 0.888}
		\label{fig:crvd_0_4_iso3200:b_dncnn_rect}
	\end{subfigure}
	\begin{subfigure}{0.18\textwidth}
	    \captionsetup{justification=centering}
		\includegraphics[width=\textwidth]{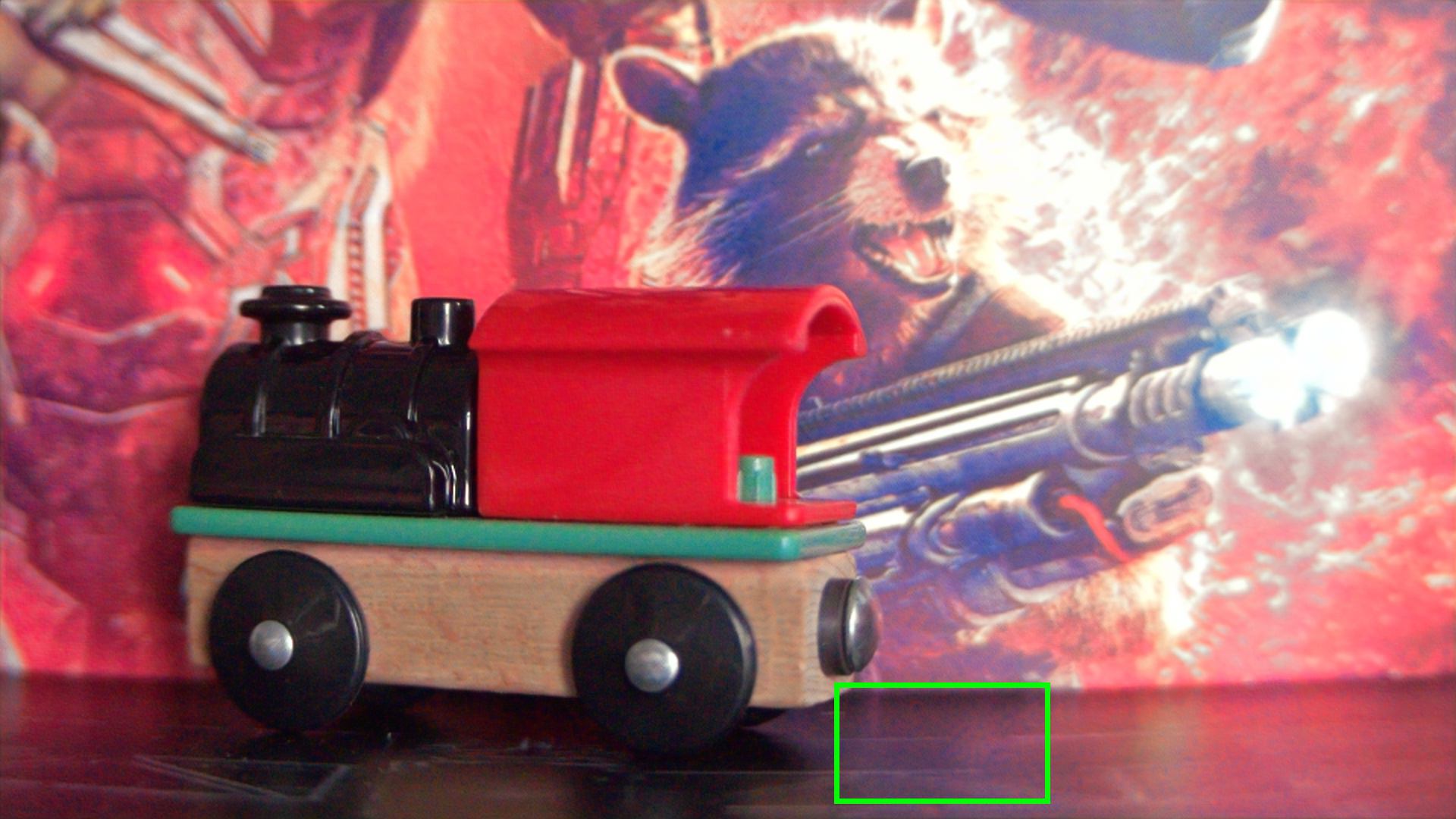}
		\caption{R2R \\ 37.65 / 0.941}
		\label{fig:crvd_0_4_iso3200:r2r_rect}
	\end{subfigure}
	\begin{subfigure}{0.18\textwidth}
	    \captionsetup{justification=centering}
		\includegraphics[width=\textwidth]{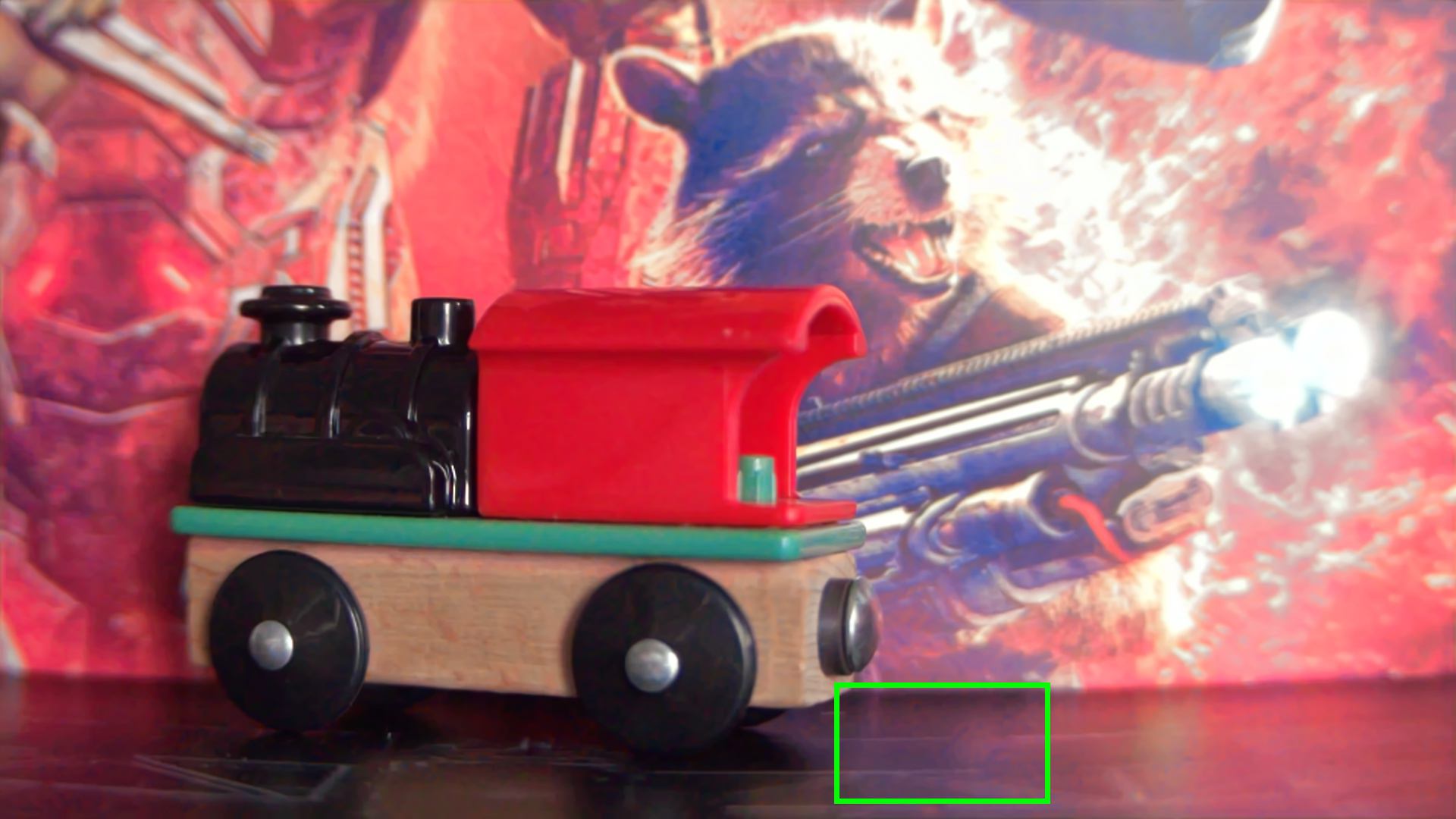}
		\caption{BM3D-O \\ 39.17 / 0.966}
		\label{fig:crvd_0_4_iso3200:bm3d_opt_rect}
	\end{subfigure}
	\begin{subfigure}{0.18\textwidth}
	    \captionsetup{justification=centering}
		\includegraphics[width=\textwidth]{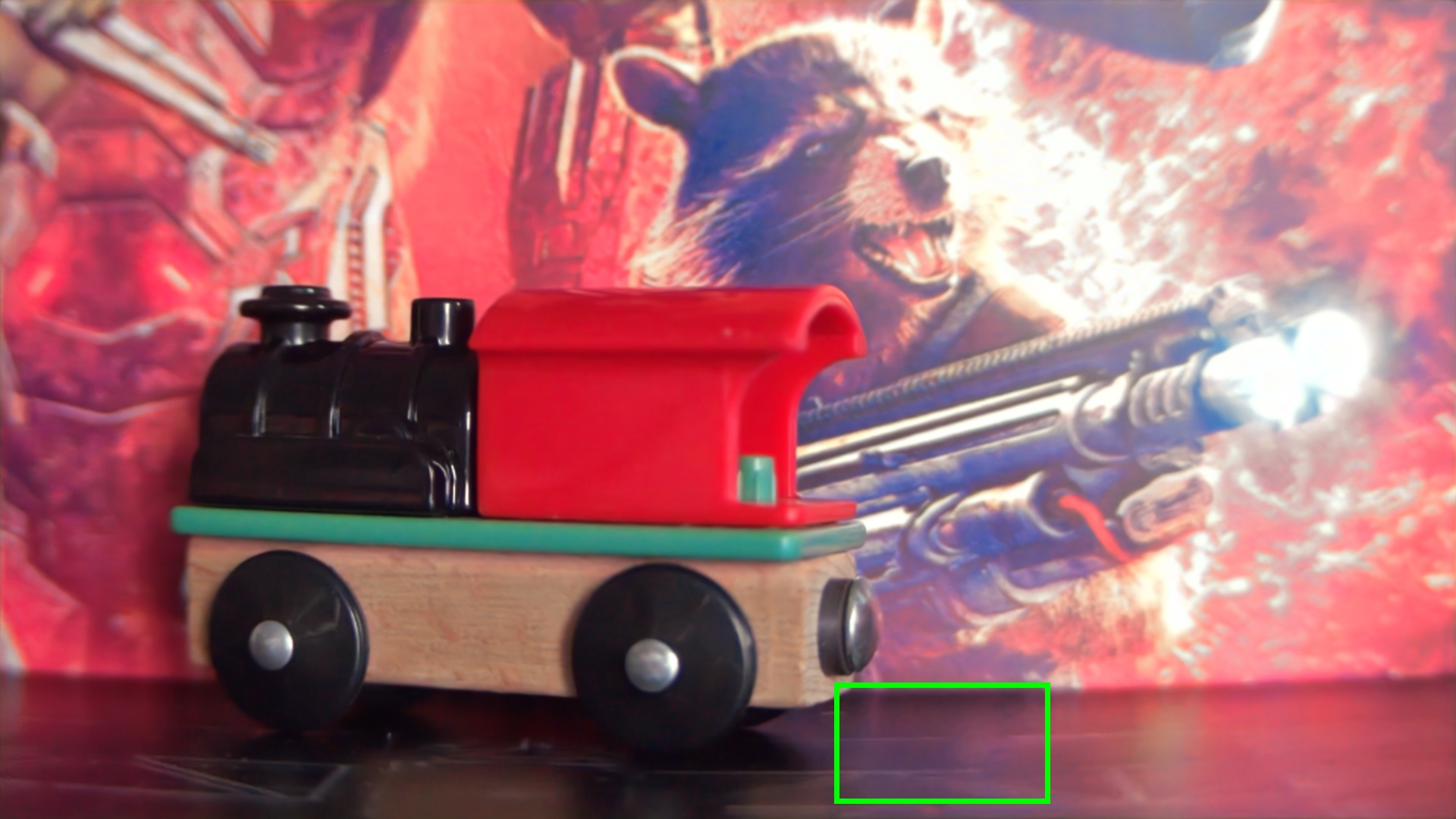}
		\caption{PC-UNet \\ 40.14 / 0.976}
		\label{fig:crvd_0_4_iso3200:pc_unet_rect}
	\end{subfigure}
	\begin{subfigure}{0.18\textwidth}
	    \captionsetup{justification=centering}
		\includegraphics[width=\textwidth]{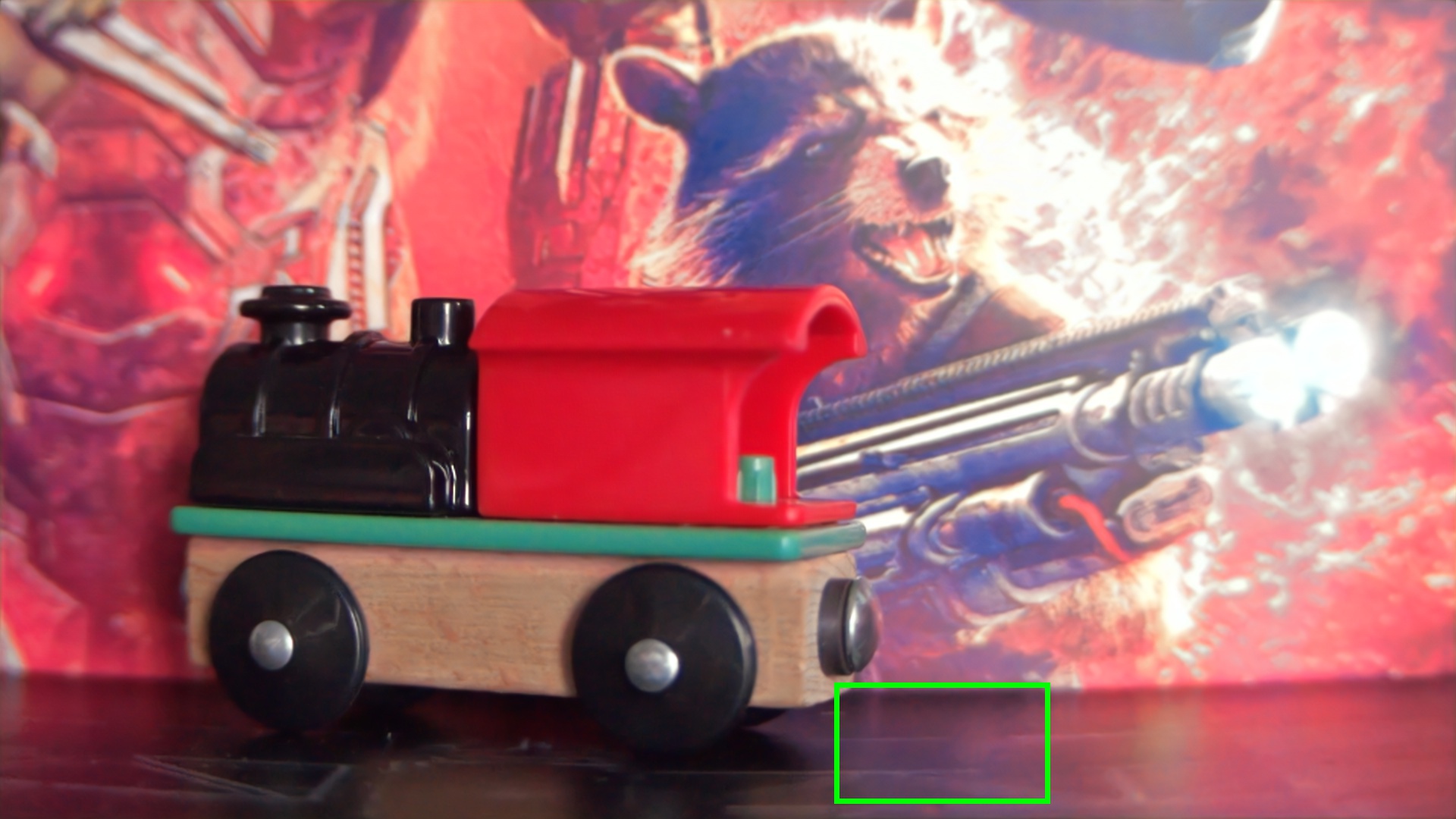}
		\caption{PC-DnCNN \\ 40.16 / 0.976}
		\label{fig:crvd_0_4_iso3200:pc_dncnn_rect}
	\end{subfigure}
	\begin{subfigure}{0.18\textwidth}
	    \captionsetup{justification=centering}
		\includegraphics[width=\textwidth]{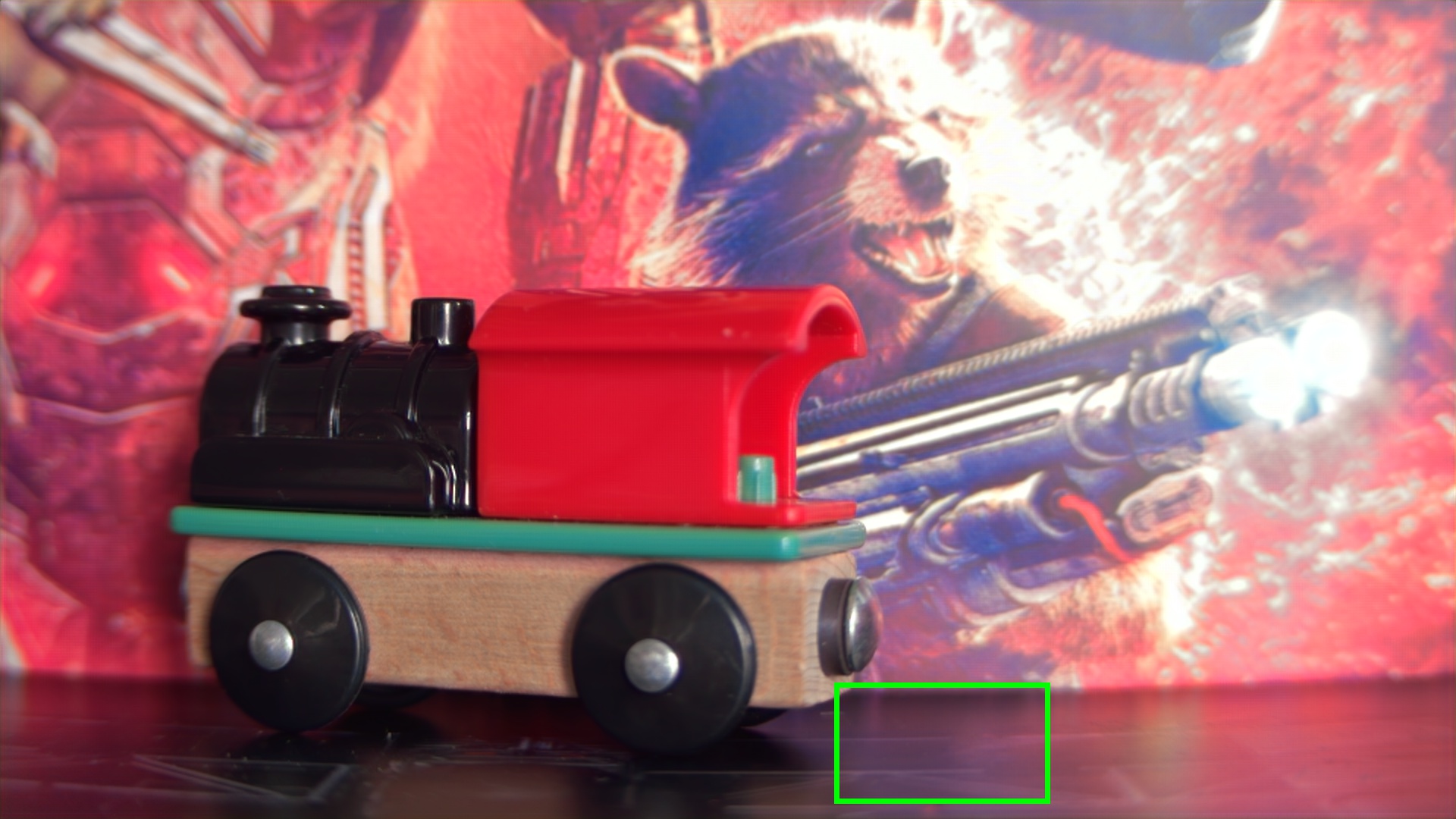}
		\caption{Clean \newline }
		\label{fig:crvd_0_4_iso3200:clean_rect}
	\end{subfigure}
	\begin{subfigure}{0.18\textwidth}
	    \captionsetup{justification=centering}
		\includegraphics[width=\textwidth]{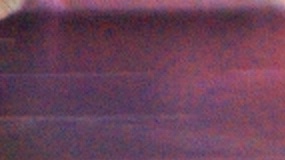}
		\caption{Noisy}
		\label{fig:crvd_0_4_iso3200:noisy_crop}
	\end{subfigure}
	\begin{subfigure}{0.18\textwidth}
	    \captionsetup{justification=centering}
		\includegraphics[width=\textwidth]{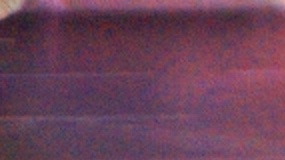}
		\caption{N2N}
		\label{fig:crvd_0_4_iso3200:n2n_crop}
	\end{subfigure}
	\begin{subfigure}{0.18\textwidth}
	    \captionsetup{justification=centering}
		\includegraphics[width=\textwidth]{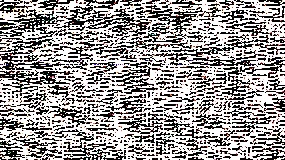}
		\caption{B2U}
		\label{fig:crvd_0_4_iso3200:b2u_crop}
	\end{subfigure}
	\begin{subfigure}{0.18\textwidth}
	    \captionsetup{justification=centering}
		\includegraphics[width=\textwidth]{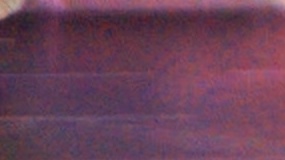}
		\caption{BM3D}
		\label{fig:crvd_0_4_iso3200:bm3d_crop}
	\end{subfigure}
	\begin{subfigure}{0.18\textwidth}
	    \captionsetup{justification=centering}
		\includegraphics[width=\textwidth]{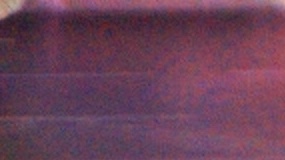}
		\caption{B-DnCNN}
		\label{fig:crvd_0_4_iso3200:b_dncnn_crop}
	\end{subfigure}
	\begin{subfigure}{0.18\textwidth}
	    \captionsetup{justification=centering}
		\includegraphics[width=\textwidth]{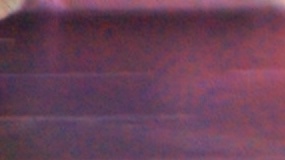}
		\caption{R2R}
		\label{fig:crvd_0_4_iso3200:r2r_crop}
	\end{subfigure}
	\begin{subfigure}{0.18\textwidth}
	    \captionsetup{justification=centering}
		\includegraphics[width=\textwidth]{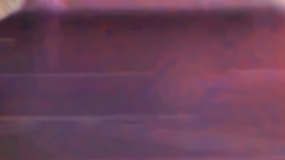}
		\caption{BM3D-O}
		\label{fig:crvd_0_4_iso3200:bm3d_opt_crop}
	\end{subfigure}
	\begin{subfigure}{0.18\textwidth}
	    \captionsetup{justification=centering}
		\includegraphics[width=\textwidth]{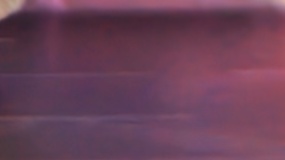}
		\caption{PC-UNet}
		\label{fig:crvd_0_4_iso3200:pc_unet_crop}
	\end{subfigure}
	\begin{subfigure}{0.18\textwidth}
	    \captionsetup{justification=centering}
		\includegraphics[width=\textwidth]{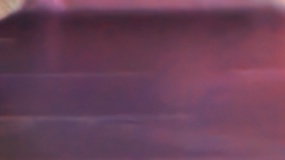}
		\caption{PC-DnCNN}
		\label{fig:crvd_0_4_iso3200:pc_dncnn_crop}
	\end{subfigure}
	\begin{subfigure}{0.18\textwidth}
	    \captionsetup{justification=centering}
		\includegraphics[width=\textwidth]{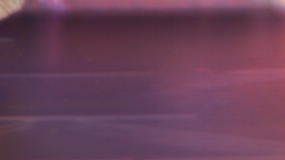}
		\caption{Clean}
		\label{fig:crvd_0_4_iso3200:clean_crop}
	\end{subfigure}
	\caption{Denoising examples with real-world noise. The first four rows show frame 6 of scene 2 captured with ISO 6400. The last four rows present frame 5 of scene 1 taken with ISO 3200. As can be seen, oracle BM3D leaves a noticeable amount of low-frequency noise unfiltered, while other algorithms, except ours (PC-Unet and PC-DnCNN), do not succeed in removing the noise.}
	\label{fig:crvd_1_5_iso6400_0_4_iso3200}
\end{figure*}

\end{document}